\DeclarePairedDelimiter\ceil{\lceil}{\rceil}
\newcommand{\RHS}{\mathrm{RHS}}
\newcommand{\LHS}{\mathrm{LHS}}
\newcommand{\rw}{\mathrm{rw}}
\newcommand{\prob}{\mathrm{prob}}
\newcommand{\str}{\mathrm{str}}
\newcommand{\opt}{\mathrm{opt}}
\newcommand{\bindp}{\ensuremath{\text{ind-}b}}
\newcommand{\baggp}{\ensuremath{\text{agg-}b}}
\newcommand{\qagglb}{\underline{\ensuremath{\text{agg-}Q}}}
\newcommand{\qaggub}{\overline{\ensuremath{\text{agg-}Q}}}
\newcommand{\qindlb}{\underline{\ensuremath{\text{ind-}Q}}}
\newcommand{\qindub}{\overline{\ensuremath{\text{ind-}Q}}}
\newcommand{\qlb}{\underline{\ensuremath{Q}}}
\newcommand{\qub}{\overline{\ensuremath{Q}}}
\newcommand{\vlb}{\underline{\ensuremath{V}}}
\newcommand{\vub}{\overline{\ensuremath{V}}}
\newcommand{\gap}{\mathrm{gap}}
\newcommand{\Reg}{\mathrm{Reg}}
\newcommand{\Alg}{\mathrm{Alg}}
\newcommand{\fut}{\mathrm{fut}}
\newcommand{\lead}{\mathrm{lead}}
\newcommand{\indiv}{\mathrm{ind}}
\newcommand{\aggre}{\mathrm{agg}}
\newcommand{\val}{\mathrm{val}}
\newcommand{\varia}{\mathrm{var}}
\newcommand{\samp}{\mathrm{sample}}
\DeclareMathOperator{\clip}{clip}
\DeclareMathOperator{\KL}{KL}
\DeclareMathOperator{\Ber}{Ber}
\DeclareMathOperator{\Unif}{Unif}
\newcommand{\hide}[1]{}
\newcommand{\mteuler}{\ensuremath{\textsc{Multi-task-Euler}}\xspace}
\newcommand{\euler}{\ensuremath{\textsc{Euler}}\xspace}
\newcommand{\strongeuler}{\ensuremath{\textsc{Strong-Euler}}\xspace}
\renewcommand{\VV}{\mathrm{var}}
\newcommand{\vast}{\bBigg@{4}}
\newcommand{\Vast}{\bBigg@{5}}
\title{Provably Efficient Multi-Task Reinforcement Learning with Model Transfer}
\author{%
  Chicheng Zhang \\
  University of Arizona \\
  \texttt{chichengz@cs.arizona.edu} \\
  \And
  Zhi Wang \\
  University of California San Diego \\
  \texttt{zhiwang@eng.ucsd.edu}
}
\begin{document}

\maketitle

\begin{abstract}
We study multi-task reinforcement learning (RL) in tabular episodic Markov decision processes (MDPs). We formulate a heterogeneous multi-player RL problem, in which a group of players concurrently face similar but not necessarily identical MDPs, with a goal of improving their collective performance through inter-player information sharing. We design and analyze an algorithm based on the idea of model transfer, and provide gap-dependent and gap-independent upper and lower bounds that characterize the intrinsic complexity of the problem.
\end{abstract}

\section{Introduction}
\label{sec:intro}
In many real-world applications, reinforcement learning (RL) agents can be deployed as a group to complete similar tasks at the same time.
For example, in healthcare robotics, robots are paired with people with dementia to perform personalized cognitive training activities by learning their preferences \cite{tam16,kubota2020jessie}; in autonomous driving, a set of autonomous vehicles learn how to navigate and avoid obstacles in various environments~\cite{liang2019federated}.
In these settings, each learning agent alone may only be able to acquire a limited amount of data, while the agents as a group have the potential to collectively learn faster through sharing knowledge among themselves. Multi-task learning~\cite{caruana1997multitask} is a practical framework that can be used to model such settings, where a set of learning agents share/transfer knowledge to improve their collective performance.

Despite many empirical successes of multi-task RL (see, e.g.,~\cite{zhuo2019federated,liu2019lifelong,liang2019federated}) and transfer learning for RL (see, e.g., \cite{lrb08,tjs08}), a theoretical understanding of when and how information sharing or knowledge transfer can provide benefits remains limited. Exceptions include~\cite{guo2015concurrent,brunskill2013sample,D'Eramo2020Sharing,hu2021near,pp16,lr11}, which study multi-task learning from parameter- or representation-transfer perspectives. However, these works still do not provide a completely satisfying answer: for example, in many application scenarios, the reward structures and the environment dynamics are only slightly different for each task---this is, however, not captured by representation transfer~\cite{D'Eramo2020Sharing,hu2021near} or existing works on clustering-based parameter transfer~\cite{guo2015concurrent,brunskill2013sample}.
In such settings, is it possible to design provably efficient multi-task RL algorithms that have guarantees never worse than agents learning individually, while outperforming the individual agents in favorable situations? 


In this work, we formulate an online multi-task RL problem that is applicable to the aforementioned settings.
Specifically, inspired by a recent study on multi-task multi-armed bandits~\cite{wzsrc21}, we formulate the $\epsilon$-Multi-Player Episodic Reinforcement Learning (abbreviated as $\epsilon$-MPERL) problem, in which all tasks share the same state and action spaces, and the tasks are assumed to be similar---i.e., the dissimilarities between the environments of different tasks (specifically, the reward distributions and transition dynamics associated with the players/tasks) are bounded in terms of a dissimilarity parameter $\epsilon \geq 0$.
This problem not only models concurrent RL~\cite{silver2013concurrent,guo2015concurrent} as a special case by taking $\epsilon = 0$, but also captures richer multi-task RL settings when $\epsilon$ is nonzero. We study regret minimization for the $\epsilon$-MPERL problem, specifically:

\begin{enumerate}

    \item We identify a problem complexity notion named {\em subpar} state-action pairs, which captures the amenability to information sharing among tasks in $\epsilon$-MPERL problem instances. As shown in the multi-task bandits literature (e.g., \cite{wzsrc21}), inter-task information sharing is {\em not} always helpful in reducing the players' collective regret. Subpar state-action pairs, intuitively speaking, are clearly suboptimal for all tasks, for which we can robustly take advantage of (possibly biased) data collected for other tasks to achieve a lower regret in a certain task.
    
    \item In the setting where the dissimilarity parameter $\epsilon$ is known, we design a model-based algorithm \mteuler (Algorithm~\ref{alg:multi-task_euler}), which is built upon state-of-the-art algorithms for learning single-task Markov decision processes (MDPs)~\cite{azar2017minimax,zanette2019tighter,simchowitz2019non}, as well as algorithmic ideas of model transfer in RL~\cite{tjs08}. \mteuler crucially utilizes the dissimilarity assumption to robustly take advantage of information sharing among tasks, and achieves regret upper bounds in terms of  subpar state-action pairs, in both (value function suboptimality) {gap-dependent} and {gap-independent} fashions. Specifically, compared with a baseline algorithm that does not utilize information sharing, \mteuler has a regret guarantee that: (1) is never worse, i.e., it avoids negative transfer \cite{rosenstein2005transfer}; (2) can be much superior when there are a large number of subpar state-action pairs.
    
    \item We also present gap-dependent and gap-independent regret lower bounds for the $\epsilon$-MPERL problem in terms of subpar state-action pairs. These lower bounds nearly match the upper bounds when the episode length of the MDP is a constant.
    Together, the upper and lower bounds can be used to characterize the intrinsic complexity of the $\epsilon$-MPERL problem. 
    
\end{enumerate}

\section{Preliminaries}
\label{sec:prelims}

Throughout this paper, we denote by $[n] := \cbr{1, \ldots, n}$. For a set $A$ in a universe $U$, we use $A^C = U \setminus A$ to denote its complement.
Denote by $\Delta(\Xcal)$ the set of probability distributions over $\Xcal$. For functions $f, g$, we use $f \lesssim g$ or $f = O(g)$ (resp. $f \gtrsim g$ or $f = \Omega(g)$) to denote that there exists some constant $c>0$, such that $f \leq c g$ (resp. $f \geq c g$), and use $f \eqsim g$ to denote $f \lesssim g$ and $f \gtrsim g$ simultaneously.
Define $a \vee b := \max(a,b)$, and $a \wedge b := \min(a,b)$. We use $\EE$ to denote the expectation operator, and use $\VV$ to denote the variance operator. 
Throughout, 
we use $\tilde{O}(\cdot)$ and $\tilde{\Omega}(\cdot)$ notation to hide polylogarithmic factors.

\paragraph{Multi-task RL in episodic MDPs.} 
We have a set of $M$ MDPs $\cbr{\Mcal_p = (H, \Scal, \Acal, p_0, \PP_p ,r_p)}_{p=1}^M$, each associated with a player $p \in [M]$. Each MDP $\Mcal_p$ is regarded as a task.
The MDPs share the same episode length $H \in \NN_+$, finite state space $\Scal$, finite action space $\Acal$, and initial state distribution $p_0 \in \Delta(\Scal)$.
Let $\bot$ be a default terminal state that is not contained in $\Scal$. The transition probabilities $\PP_p: \Scal \times \Acal \to \Delta(\Scal \cup \cbr{\bot})$
and reward distributions $r_p: \Scal \times \Acal \to \Delta([0,1])$ of the players are not necessarily identical. We assume that the MDPs are layered\footnote{This is a standard assumption (see, e.g., \cite{xmd21}). It is worth noting that any episodic MDP (with possibly nonstationary transition and reward) can be converted to a layered MDP with stationary transition and reward, with the state space size  being $H$ times the size of the original state space.}, in that the state space $\Scal$ can be partitioned into disjoint subsets $(\Scal_h)_{h=1}^H$, where $p_0$ is supported on $\Scal_1$, and for every $p \in [M]$, $h \in [H]$, and every $s \in \Scal_h, a \in \Acal$, $\PP_p(\cdot \mid s, a)$ is supported on $\Scal_{h+1}$; here, we define $\Scal_{H+1} = \cbr{\bot}$.
We denote by $S := \abr{\Scal}$ the size of the state space, and $A := \abr{\Acal}$ the size of the action space. 

\paragraph{Interaction process.} The interaction process between the players and the environment is as follows: 
at the beginning, both $(r_p)_{p=1}^M$ and $(\PP_p)_{p=1}^M$ are unknown to the players.
For each episode $k \in [K]$, conditioned on the interaction history up to episode $k-1$, each player $p \in [M]$ independently interacts with its respective MDP $\Mcal_p$; specifically, player $p$ starts at state $s_{1,p}^k \sim p_0$, and at every step (layer) $h \in [H]$, it chooses action $a_{h,p}^k$, transitions to next state $s_{h+1,p}^k \sim \PP_p(\cdot \mid s_{h,p}^k, a_{h,p}^k)$ and receives a stochastic immediate reward $r_{h,p}^k \sim r_p(\cdot \mid s_{h,p}^k, a_{h,p}^k)$; after all players have finished their  $k$-th episode, they can communicate and share their interaction history. The goal of the players is to maximize their expected collective reward $\EE \sbr{ \sum_{k=1}^K \sum_{p=1}^M \sum_{h=1}^H r_{h,p}^k }$.


\paragraph{Policy and value functions.} A deterministic, history-independent policy $\pi$ is a mapping from $\Scal$ to $\Acal$, which can be used by a player to make decisions in its respective MDP. For player $p$ and step $h$, we use $V_{h,p}^{\pi}: \Scal_h \to [0,H]$ and $Q_{h,p}^{\pi}: \Scal_h \times \Acal \to [0,H]$ to denote its respective value and action-value functions, respectively.
They satisfy the following recurrence known as the Bellman equation:
\[
\forall h \in [H]: \quad V_{h,p}^\pi(s) = Q_{h,p}^{\pi}(s, \pi(s)),
\;\;
Q_{h,p}^\pi(s, a) 
= 
R_p(s, a) 
+ (\PP_{p} V_{h+1,p}^\pi)( s, a),
\]
where we use the convention that $V_{H+1,p}^\pi(\bot) = 0$, and for $f: \Scal_{h+1} \to \RR$, $(\PP_p f)(s, a) := \sum_{s' \in \Scal_{h+1}} \PP_{p}(s' \mid s, a) f(s')$, and $R_p(s, a) := \EE_{\hat{r} \sim r_p(\cdot \mid s,a)}\sbr{\hat{r}}$ is the expected immediate reward of player $p$. For player $p$ and policy $\pi$, denote by $V_{0,p}^\pi = \EE_{s_1 \sim p_0}\sbr{ V_{1,p}^{\pi}(s_1) }$ its expected reward. 

For player $p$, we also define its optimal value function $V_{h,p}^\star: \Scal_h \to [0,H]$ and the optimal action-value function $Q_{h,p}^\star: \Scal_h \times \Acal \to [0,H]$ using the Bellman optimality equation:
\begin{equation}
\forall h \in [H]: \quad
V_{h,p}^\star(s) = \max_{a \in \Acal} Q_{h,p}^{\star}(s, a),
\;\;
Q_{h,p}^\star(s, a) 
= 
R_p(s, a) 
+
(\PP_p V_{h+1,p}^\star)(s,a)
,
\label{eqn:bellman-optimality-q-p}
\end{equation}
where we again use the convention that $V_{H+1,p}^\star(\bot) = 0$. For player $p$, denote by $V_{0,p}^\star = \EE_{s_1 \sim p_0}\sbr{ V_{1,p}^{\star}(s_1) }$ its optimal expected reward. 

Given a policy $\pi$, as $V_{h,p}^\pi$ for different $h$'s are only defined in the respective layer $\Scal_h$, we  ``collate'' the value functions $(V_{h,p}^\pi)_{h=1}^H$ and obtain a single value function $V_p^\pi: \Scal \cup \cbr{\bot} \to \RR$. Formally, for every $h \in [H+1]$ and  $s \in \Scal_h$,
\[
V_p^{\pi}(s) := V_{h, p}^{\pi}(s). 
\]
We define $Q_{p}^{\pi}, V_{p}^{\star}, Q_{p}^{\star}$ similarly. For player $p$, given its optimal action value function $Q_p^\star$, any of its greedy policies $\pi_p^\star(s) \in \argmax_{a \in \Acal} Q_p^\star(s,a)$ is optimal with respect to $\Mcal_p$.  


\paragraph{Suboptimality gap.} For player $p$, we define the suboptimality gap of state-action pair $(s,a)$ as 
$\gap_p(s,a) = V_p^\star(s) - Q_p^\star(s,a)$. We define the mininum suboptimality gap of player $p$ as $\gap_{p,\min} = \min_{(s,a): \gap_p(s,a) > 0} \gap_p(s,a)$, and the minimum suboptimality gap over all players as $\gap_{\min} = \min_{p \in [M]} \gap_{p,\min}$. 
For player $p \in [M]$, define
$Z_{p,\opt} := \cbr{ (s,a): \gap_p(s,a) = 0}$ as the set of optimal state-action pairs with respect to $p$. 

\paragraph{Performance metric.} We measure the performance of the players using their collective regret, i.e., over a total of $K$ episodes, how much extra reward they would have collected in expectation if they were executing their respective  optimal policies from the beginning. Formally, suppose for each episode $k$, player $p$ executes policy $\pi^k(p)$, then the collective regret of the players is defined as: 
\[
\Reg(K) = 
\sum_{p=1}^M \sum_{k=1}^K 
\rbr{  
V_{0,p}^\star 
-
V_{0,p}^{\pi^k(p)}
}.
\]

\paragraph{Baseline: individual \strongeuler.} A naive baseline for multi-task RL is to let each player run a separate RL algorithm
without communication. 
For concreteness, we choose to let each player run the state-of-the-art \strongeuler algorithm~\cite{simchowitz2019non} (see also its precursor \euler~\cite{zanette2019tighter}), which enjoys minimax gap-independent~\cite{azar2017minimax,dann2015sample} and gap-dependent regret guarantees, and we refer to this strategy as individual \strongeuler.
Specifically, as it is known that \strongeuler has a regret of $\tilde{O}(\sqrt{ H^2 S A K } + H^4 S^2 A$), individual \strongeuler has a collective regret of $\tilde{O}(M \sqrt{H^2 S A K} + M H^4 S^2 A)$.
In addition, by a union bound and summing up the gap-dependent regret guarantees of \strongeuler for the $M$ MDPs altogether, it can be checked that with probability $1-\delta$,
individual \strongeuler has a collective regret of order\footnote{The originally-stated gap-dependent regret bound of \strongeuler (\cite{simchowitz2019non}, Corollary 2.1) uses a slightly different notion of suboptimality gap, which takes an extra minimum over all steps. A close examination of their proof shows that \strongeuler has regret bound~\eqref{eqn:ind-strong-euler-gap-dept} in layered MDPs. See also Remark~\ref{rem:clipping} in Appendix~\ref{subsec:conclude-reg-bounds}.} 
\begin{align}
\begin{split}
\ln\del{\frac{MSAK}{\delta}} \vast( & \sum_{p \in [M]} \rbr{ 
\sum_{(s,a) \in Z_{p,\opt}} \frac{H^3}{\gap_{p,\min}}
+
\sum_{(s,a) \in Z_{p,\opt}^C} \frac{H^3}{\gap_p(s,a)}
}  + M H^4 S^2 A  \ln\frac{SA}{\gap_{\min}} \vast).
\end{split}
\label{eqn:ind-strong-euler-gap-dept}
\end{align}

Our goal is to design multi-task RL algorithms that can achieve collective regret strictly lower than this baseline in both gap-dependent and gap-independent fashions when the tasks are similar. 

\paragraph{Notion of similarity.} Throughout this paper, we will consider the following notion of similarity between MDPs in the multi-task episodic RL setting. 

\begin{definition}
A collection of MDPs $(\Mcal_p)_{p=1}^M$ is said to be $\epsilon$-dissimilar for $\epsilon \geq 0$, if for all $p, q \in [M]$, and $(s,a) \in \Scal \times \Acal$,
\[
 \abr{ R_p(s,a) - R_q(s,a) } \leq \epsilon, \;
 \| \PP_p(\cdot \mid s,a) - \PP_q(\cdot \mid s,a) \|_1 \leq \frac{\epsilon}{H}. 
\]
If this happens, we call  $(\Mcal_p)_{p=1}^M$ an $\epsilon$-Multi-Player Episodic Reinforcement Learning (abbrev. $\epsilon$-MPERL) problem instance.
\label{def:dissimilar}
\end{definition}

If the MDPs in $(\Mcal_p)_{p=1}^M$ are $0$-dissimilar, then they are identical by definition, and our interaction protocol degenerates to the concurrent RL protocol~\cite{silver2013concurrent}. Our dissimilarity notion is complementary to those of \cite{brunskill2013sample,guo2015concurrent}: they require the MDPs to be either identical, or have well-separated parameters for at least one state-action pair; in contrast, our dissimilarity notion allows the MDPs to be nonidentical and arbitrarily close. 

We have the following intuitive lemma that shows the closeness of optimal value functions of different MDPs, in terms of the dissimilarity parameter $\epsilon$: 
\begin{lemma}
\label{lem:q-dissimilar}
If $(\Mcal_p)_{p=1}^M$ are $\epsilon$-dissimilar, then for every $p, q \in [M]$, and $(s,a) \in \Scal \times \Acal$, 
$\abr{ Q_p^\star(s,a) - Q_q^\star(s,a) } \leq 2 H \epsilon$; consequently,
$
\abr{ \gap_p(s,a) - \gap_q(s,a) } \leq 4H\epsilon.
$
\end{lemma}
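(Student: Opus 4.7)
The plan is to prove the $Q$-value bound by backward induction on the layer index $h$, using the Bellman optimality equation~\eqref{eqn:bellman-optimality-q-p} together with the $\epsilon$-dissimilarity of rewards and transitions; the gap bound then follows from the triangle inequality.

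Fix $p, q \in [M]$ and, for each $h \in [H+1]$, let
\[
\Delta_h^Q := \max_{(s,a) \in \Scal_h \times \Acal} \abr{ Q_p^\star(s,a) - Q_q^\star(s,a) }, \qquad \Delta_h^V := \max_{s \in \Scal_h} \abr{ V_p^\star(s) - V_q^\star(s) }.
\]
For $(s,a) \in \Scal_h \times \Acal$, the Bellman optimality equation yields
\[
Q_p^\star(s,a) - Q_q^\star(s,a) = \bigl( R_p(s,a) - R_q(s,a) \bigr) + \bigl( (\PP_p V_{h+1,p}^\star)(s,a) - (\PP_q V_{h+1,q}^\star)(s,a) \bigr).
\]
I would add and subtract $(\PP_p V_{h+1,q}^\star)(s,a)$ to split the second term, so that
\[
\abr{ (\PP_p V_{h+1,p}^\star - \PP_q V_{h+1,q}^\star)(s,a) } \leq \abr{ (\PP_p (V_{h+1,p}^\star - V_{h+1,q}^\star))(s,a) } + \abr{ ((\PP_p - \PP_q) V_{h+1,q}^\star)(s,a) }.
\]
The first piece is at most $\Delta_{h+1}^V$ because $\PP_p(\cdot\mid s,a)$ is a probability distribution; the second piece is at most $\| V_{h+1,q}^\star \|_\infty \cdot \| \PP_p(\cdot\mid s,a) - \PP_q(\cdot\mid s,a) \|_1 \leq H \cdot \frac{\epsilon}{H} = \epsilon$ by Hölder and $\epsilon$-dissimilarity. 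Combined with $|R_p - R_q| \leq \epsilon$, this yields the one-step recursion $\Delta_h^Q \leq 2\epsilon + \Delta_{h+1}^V$.

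To close the induction I would observe that $V_p^\star(s) = \max_a Q_p^\star(s,a)$ (and similarly for $q$), so by the standard fact that the max is a $1$-Lipschitz operation, $\Delta_h^V \leq \Delta_h^Q$. Starting from the boundary condition $V_{H+1,p}^\star(\bot) = V_{H+1,q}^\star(\bot) = 0$, hence $\Delta_{H+1}^V = 0$, a straightforward backward induction gives $\Delta_h^Q \leq 2(H - h + 1)\epsilon$, and in particular $\Delta_h^Q \leq 2H\epsilon$ for every $h \in [H]$. This establishes the first claim.

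For the gap bound I would simply write
\[
\gap_p(s,a) - \gap_q(s,a) = \bigl( V_p^\star(s) - V_q^\star(s) \bigr) - \bigl( Q_p^\star(s,a) - Q_q^\star(s,a) \bigr),
\]
apply the triangle inequality, and use the already-established $\Delta_h^V \leq 2H\epsilon$ and $\Delta_h^Q \leq 2H\epsilon$ to conclude $|\gap_p(s,a) - \gap_q(s,a)| \leq 4H\epsilon$. No step is genuinely hard here; the only subtlety worth flagging is the add-and-subtract decomposition that separates the mismatch in transitions from the mismatch in future value functions, which is what allows the recursion to telescope cleanly across the $H$ layers.
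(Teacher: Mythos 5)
Your proposal is correct and follows essentially the same route as the paper: a backward induction on the layer, an add-and-subtract decomposition of the transition term (the paper puts the $(\PP_p-\PP_q)$ factor on $V_p^\star$ rather than $V_q^\star$, which is immaterial), a H\"older bound of $H\cdot\frac{\epsilon}{H}=\epsilon$, and the $1$-Lipschitzness of the max to pass from $Q$-differences to $V$-differences (which the paper proves by a short contradiction argument). The gap bound via the triangle inequality is identical to the paper's.
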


\section{Algorithm}

We now describe our main algorithm, \mteuler (Algorithm~\ref{alg:multi-task_euler}). Our model-based algorithm is built upon recent works on episodic RL
that provide algorithms with sharp instance-dependent guarantees in the single task setting~\cite{zanette2019tighter,simchowitz2019non}. In a nutshell, for each episode $k$ and each player $p$, the algorithm performs optimistic value iteration to construct high-probability upper and lower bounds for the optimal value and action value functions $V_p^\star$ and $Q_p^\star$, and uses them to guide its exploration and decision making process.

\begin{algorithm}[t]
\setstretch{1.25}
\SetAlgoLined
\SetKwInOut{Input}{Input}
\SetKw{Return}{Return}
\SetKwInput{kwInit}{Initialize}
\newcommand\mycommfont[1]{\footnotesize\ttfamily\textcolor{blue}{#1}}
\SetCommentSty{mycommfont}

\Input{Failure probability $\delta \in (0,1)$, dissimilarity parameter $\epsilon \geq 0$.}
\kwInit{Set $V_{p}(\bot) = 0$ for all $p$ in $[M]$, where $\bot$ is the only state in $\Scal_{H+1}$ \;}

\For{$k=1,2,\ldots,K$}{
    \For{$p=1,2,\ldots,M$}{
        \tcp{Construct optimal value estimates for player $p$}
        \For{$h=H,H-1,\ldots,1$}{
        \label{line:vi-start}
            \For{$(s,a) \in \Scal_h \times \Acal$}{
                Compute:\\
                \label{line:q-estimates-start} %
            
                $\qindub_{p}(s,a) = \hat{R}_{p}(s,a) +  (\hat{\PP}_{p}\vub_p)(s,a) + \bindp_{p}(s,a)$\; 
                \label{line:self-q-estimate-ub}%
  
                $\qindlb_{p}(s,a) = \hat{R}_{p}(s,a) +  (\hat{\PP}_{p}\vlb_p)(s,a) - \bindp_{p}(s,a)$\;
                \label{line:self-q-estimate-lb}%
                
                $\qaggub_p(s,a) = \hat{R}(s,a) + (\hat{\PP} \vub_p)(s,a) + \baggp_p(s,a)$\;
                \label{line:agg-q-estimate-ub} %
               
                $\qagglb_p(s,a) = \hat{R}(s,a) + (\hat{\PP} \vlb_p)(s,a) - \baggp_p(s,a)$\;
                \label{line:agg-q-estimate-lb}
 
                Update optimal action value function upper and lower bound estimates:
                
                $\qub_p(s,a) = \min \cbr{H-h+1, \qindub_p(s,a), \qaggub_p(s,a)}$\;
                \label{line:q-ucb}
                
                $\qlb_p(s,a) = \max \cbr{0, \qindlb_p(s,a), \qagglb_p(s,a)}$\;
                \label{line:q-lcb}
                \label{line:q-estimates-end}
            }
            \For{$s \in \Scal_h$}{
                Define $\pi^k(p)(s) = \argmax_{a \in \Acal} \qub_{p}(s,a)$ \;  
                \label{line:greedypolicy}
                Update $\vub_p(s) = \qub_p\del{ s,\pi^k(p)(s) }$, $\vlb_p(s) = \qlb_p\del{ s,\pi^k(p)(s) }$.
                \label{line:v-lucb}
            }
        }
        \label{line:vi-end}
    }
    
    \tcp{All players $p$ interact with their respective environments, and update reward and transition estimates}
    
    \For{$p=1,2,\ldots,M$}{
            \label{line:execute-start}
            Player $p$ executes policy $\pi^k(p)$ on $\Mcal_p$ and obtains trajectory $(s_{h,p}^k, a_{h,p}^k, r_{h,p}^k)_{h=1}^H$.
            
            Update individual estimates of transition probability $\hat{\PP}_p$, reward $\hat{R}_p$ and count $n_p(\cdot,\cdot)$ using the first parts of Equations~\eqref{eqn:n-estimates},~\eqref{eqn:r-estimates} and~\eqref{eqn:p-estimates}.
            }
            Update aggregate estimates of transition probability $\hat{\PP}$, reward $\hat{R}$ and count $n(\cdot,\cdot)$ using the second parts of Equations~\eqref{eqn:n-estimates},~\eqref{eqn:r-estimates} and~\eqref{eqn:p-estimates}.
            \label{line:execute-end}
}
\caption{\mteuler}
\label{alg:multi-task_euler}
\end{algorithm}

\paragraph{Empirical estimates of model parameters.} For each player $p$, the construction of its value function bound estimates relies on empirical estimates on its transition probability and expected reward function. For both estimands, we use two estimators with complementary roles, which are at two different points of the bias-variance tradeoff spectrum: 
one estimator uses only the player's own data (termed {\em individual estimate}), which has large variance; the other estimator uses the data collected by all players (termed {\em aggregate estimate}), which has lower variance but can easily be biased, as transition probabilities and reward distributions are heterogeneous. Such an algorithmic idea of ``model transfer'', where one estimates model in one task using data collected from other tasks has appeared in prior works (e.g., ~\cite{tjs08}).
Specifically, at the beginning of episode $k$, for every $h \in [H]$ and $(s,a) \in \Scal_h \times \Acal$, the algorithm has its empirical count of encountering $(s,a)$ for each player $p$, along with its total empirical count across all players,  respectively: 
\begin{equation}
n_{p}(s,a) := \sum_{l=1}^{k-1} \one\rbr{ (s_{h,p}^l, a_{h,p}^l) = (s,a) }, \;
n(s,a) := \sum_{l=1}^{k-1} \sum_{p=1}^M \one\rbr{ (s_{h,p}^l, a_{h,p}^l) = (s,a) }.
\label{eqn:n-estimates}
\end{equation}
The individual and aggregate estimates of immediate reward $R(s,a)$ are defined as:%
\begin{equation}
\hat{R}_p(s,a) := \frac{\sum_{l=1}^{k-1} \one\rbr{ (s_{h,p}^l, a_{h,p}^l) = (s,a) } r_{h,p}^l }{n_p(s,a)}, \;
\hat{R}(s,a) := \frac{\sum_{l=1}^{k-1} \sum_{p=1}^M \one\rbr{ (s_{h,p}^l, a_{h,p}^l) = (s,a) } r_{h,p}^l}{n(s,a)}.
\label{eqn:r-estimates}
\end{equation}
Similarly, for every $h \in [H]$ and $(s,a,s') \in \Scal_h \times \Acal \times \Scal_{h+1}$, we also define the individual and aggregate estimates of transition probability as:
\begin{align}
\begin{split}
\hat{\PP}_{p}(s' \mid s,a) & := \frac{\sum_{l=1}^k \one\rbr{ (s_{h,p}^l, a_{h,p}^l, s_{h+1,p}^l) = (s,a,s') }}{n_{p}(s,a)},\\
\hat{\PP}(s' \mid s,a) & := \frac{\sum_{l=1}^k \sum_{p=1}^M \one\rbr{ (s_{h,p}^l, a_{h,p}^l, s_{h+1,p}^l) = (s,a,s') }}{n(s,a)}.
\end{split}
\label{eqn:p-estimates}
\end{align}
If $n(s,a) = 0$, we define $\hat{R}(s,a) := 0$ and $\hat{\PP}(s' \mid s,a) := \frac{1}{\abr{\Scal_{h+1}}}$; and if $n_p(s,a) = 0$, we define $\hat{R}_p(s,a) := 0$ and $\hat{\PP}_p(s' \mid s,a) := \frac{1}{\abr{\Scal_{h+1}}}$.
The counts and reward estimates can be maintained by \mteuler efficiently in an incremental manner. 

\paragraph{Constructing value function estimates via optimistic value iteration.} For each player $p$, based on these model parameter estimates, \mteuler performs optimistic value iteration to compute the value function estimates for states at all layers (lines~\ref{line:vi-start} to~\ref{line:vi-end}). For the terminal layer $H+1$, $V_{H+1}^\star(\bot) = 0$ trivially, so nothing needs to be done. For earlier layers $h \in [H]$, \mteuler iteratively builds its value function estimates in a backward fashion. At the time of estimating values for layer $h$, the algorithm has already obtained optimal value estimates for layer $h+1$. Based on the Bellman optimality equation~\eqref{eqn:bellman-optimality-q-p}, \mteuler estimates $(Q_p^\star(s, a))_{s \in \Scal_h, a \in \Acal}$ using model parameter estimates and its estimates of $(V_p^\star(s))_{s \in \Scal_{h+1}}$, i.e., $(\vub_p(s))_{s \in \Scal_{h+1}}$ and $(\vlb_p(s))_{s \in \Scal_{h+1}}$ (lines~\ref{line:q-estimates-start} to~\ref{line:q-estimates-end}). 

Specifically, \mteuler constructs estimates of $Q_p^\star(s,a)$ for all $s \in \Scal_h, a \in \Acal$ in two different ways. First, it uses the individual estimates of model of player $p$ to construct $\qindlb_p$ and $\qindub_p$, upper and lower bound estimates of $Q_p^\star$ (lines~\ref{line:agg-q-estimate-ub} and~\ref{line:agg-q-estimate-lb}); this construction is reminiscent of \euler and \strongeuler~\cite{zanette2019tighter,simchowitz2019non}, in that if we were only to use $\qindlb_p$ and $\qindub_p$ as our optimal action value function estimate $\qub_p$ and $\qlb_p$, our algorithm becomes individual \strongeuler. 
The individual value function estimates are key to establishing \mteuler's fall-back guarantees, ensuring that it never performs worse than the individual \strongeuler baseline. 
Second, it uses the aggregate estimate of model to construct $\qagglb_p$ and $\qaggub_p$, also upper and lower bound estimates of $Q_p^\star$ (lines~\ref{line:self-q-estimate-ub} and~\ref{line:self-q-estimate-lb}); this construction is unique to the multitask learning setting, and is our new algorithmic contribution. 

To ensure that $\qaggub_p$ and $\qindub_p$ (resp. $\qagglb_p$ and $\qindlb_p$) are valid upper bounds (resp. lower bounds) of $Q_p^\star$, \mteuler adds bonus terms $\bindp_p(s,a)$ and $\baggp_p(s,a)$, respectively, in the optimistic value iteration process, to account for estimation error of the model estimates against the true models. 
Specifically, both bonus terms comprise three parts:
\begin{align*}
\bindp_p(s,a) := b_{\rw}\del{n_p(s,a), 0} + &\ b_{\prob}\del{\hat{\PP}_p(\cdot \mid s,a), n_p(s,a), \overline{V}_p, \underline{V}_p, 0} + \\
& \qquad \qquad \qquad b_{\str}\del{\hat{\PP}_p(\cdot \mid s,a), n_p(s,a), \overline{V}_p, \underline{V}_p, 0},\\
\baggp_p(s,a) := b_{\rw}\del{n(s,a), \epsilon} + &\ b_{\prob}\del{\hat{\PP}(\cdot \mid s,a), n(s,a), \overline{V}_p, \underline{V}_p, \epsilon} + \\
& \qquad \qquad \qquad b_{\str}\del{\hat{\PP}(\cdot \mid s,a), n(s,a), \overline{V}_p, \underline{V}_p, \epsilon},
\end{align*}
where
\begin{align*}
b_{\rw}\del{n, \kappa} & := 1 \;\; \wedge \;\;  \kappa + \Theta\del{ \sqrt{ \frac{L(n)}{n} } } , \\
b_{\prob}\del{q, n, \overline{V}, \underline{V}, \kappa} & := H \;\; \wedge \;\; 2\kappa + \Theta\vast( \sqrt{ \frac{\VV_{s' \sim q}\sbr{ \overline{V}(s') } L(n) }{n} } + \\
& \qquad \qquad \qquad \qquad \ \sqrt{ \frac{ \EE_{s' \sim q}\sbr{ (\overline{V}(s') - \underline{V}(s'))^2 } L(n) }{n} }  + \frac{H L(n)}{n} \vast) , \\
b_{\str}\del{q, n, \overline{V}, \underline{V} ,\kappa} & := \kappa + \Theta\del{ \sqrt{ \frac{S \; \EE_{s' \sim q}\sbr{ (\overline{V}(s') - \underline{V}(s'))^2} L(n) }{n} } + \frac{H S L(n)}{n} },
\end{align*}
and $L(n) \eqsim \ln(\frac{M S A n}{\delta})$.

The bonus terms altogether ensures strong optimism~\cite{simchowitz2019non}, i.e.,  
\begin{equation}
\text{for any $p$ and $(s,a)$,} \;\; \qub_p(s,a) \geq R_p(s,a) + (\PP_p \vub_p)(s,a).
\label{eqn:strong-optimism-main}
\end{equation}
In short, strong optimism is a stronger form of optimism (the weaker requirement that for any $p$ and $(s,a)$, $\qub_p(s,a) \geq Q^\star_p(s,a)$ and $\vub_p(s) \geq V^\star_p(s)$), which allows us to use the clipping lemma (Lemma B.6 of~\cite{simchowitz2019non}, see also Lemma~\ref{lem:clipping-main} in Appendix~\ref{subsec:conclude-reg-bounds}) to obtain sharp gap-dependent regret guarantees. 
The three parts in the bonus term
serve for different purposes towards establishing~\eqref{eqn:strong-optimism-main}: 
\begin{enumerate}
    \item The first component accounts for the uncertainty in reward estimation: with  probability $1-O(\delta)$, $\abs{ \hat{R}_p(s,a) - R_p(s,a) } \leq b_{\rw} \del{ n_p(s,a), 0} $, and $\abs{ \hat{R}(s,a) - R_p(s,a) } \leq b_{\rw} \del{ n(s,a), \epsilon }$. 
    \item The second component accounts for the uncertainty in estimating $(\PP_p V_p^\star)(s,a)$: with probability $1-O(\delta)$, $\abs{ (\hat{\PP}_p V_p^\star)(s,a) - (\PP_p V_p^\star)(s,a) } \leq b_{\prob} \del{ \hat{\PP}_p(\cdot \mid s,a), n_p(s,a), \vub_p, \vlb_p, 0 }$ and $\abs{ (\hat{\PP} V_p^\star)(s,a) - (\PP_p V_p^\star)(s,a) } \leq b_{\prob} \del{ \hat{\PP}(\cdot \mid s,a), n(s,a), \vub_p, \vlb_p, \epsilon }$.
    \item The third component accounts for the lower order terms for strong optimism: with probability $1-O(\delta)$,
    $\abs{ (\hat{\PP}_p - \PP_p) (\vub_p - V_p^\star)(s,a)} \leq b_{\str} \del{ \hat{\PP}_p(\cdot \mid s,a), n_p(s,a), \vub_p, \vlb_p, 0 }$, and $\abs{ (\hat{\PP} - \PP_p) (\vub_p - V_p^\star)(s,a)} \leq b_{\str} \del{ \hat{\PP}(\cdot \mid s,a), n(s,a), \vub_p, \vlb_p, \epsilon }$. 
\end{enumerate}

Based on the above concentration inequalities and the definitions of bonus terms, it can be shown inductively that, with probability $1-O(\delta)$, both $\qaggub_p$ and $\qindub_p$ (resp. $\qagglb_p$ and $\qindlb_p$) are valid upper bounds (resp. lower bounds) of $Q_p^\star$.

Finally, observe that for any $(s,a) \in \Scal_h \times \Acal$, $Q_p^\star(s,a)$ has range $[0,H-h+1]$. By taking intersections of all confidence bounds of $Q_p^\star$ it has obtained, \mteuler constructs its final upper and lower bound estimates for $Q_p^\star(s,a)$, $\qub_p(s,a)$ and $\qlb_p(s,a)$ respectively, for $(s,a) \in \Scal_h \times \Acal$ (line~\ref{line:q-ucb} to~\ref{line:q-lcb}). 
Similar ideas on using data from multiple sources to construct confidence intervals and guide explorations have been used by~\cite{soaremulti,wzsrc21} for multi-task noncontextual and contextual bandits.
Using the relationship between the optimal value $V_p^\star(s)$ and and optimal action values $\cbr{ Q_p^\star(s,a): a \in \Acal}$, \mteuler also constructs upper and lower bound estimates for $V_p^\star(s)$, $\vub_p(s)$ and $\vlb_p(s)$, respectively for $s \in \Scal_h$ (line~\ref{line:v-lucb}).

\paragraph{Executing optimistic policies.} At each episode $k$, for each player $p$, its optimal action-value function upper bound estimate $\overline{Q}_p$ induces a greedy policy $\pi^k(p): s \mapsto \argmax_{a \in \Acal} \overline{Q}_p(s,a)
$ (line~\ref{line:greedypolicy}); the player then executes this policy in this episode to collect a new trajectory and use this to update its individual model parameter estimates. After all players finish their episode $k$, the algorithm also updates its aggregate model parameter estimates (lines~\ref{line:execute-start} to~\ref{line:execute-end}) using Equations~\eqref{eqn:n-estimates},~\eqref{eqn:r-estimates} and~\eqref{eqn:p-estimates}, and continues to the next episode.

\section{Performance guarantees}
\label{sec:guarantees}

Before stating the guarantees of Algorithm~\ref{alg:multi-task_euler}, we define an instance-dependent complexity measure that characterizes the amenability to information sharing. 

\begin{definition}
The set of subpar state-action pairs is defined as:
\[
\Ical_\epsilon := \cbr{ (s,a) \in \Scal \times \Acal: \exists p \in [M], \gap_p(s,a) > 96 H \epsilon }, 
\]
where we recall that $\gap_p(s,a) = V^\star_p(s) - Q^\star_p(s,a)$. %
\label{def:subpar-sa}
\end{definition}

Definition~\ref{def:subpar-sa} generalizes the notion of subpar arms defined for multi-task multi-armed bandit learning~\cite{wzsrc21} in two ways: first, it is with regards to state-action pairs as opposed to actions only; second, 
in RL, suboptimality gaps depend on optimal value function, which in turn depends on both immediate reward and subsequent long-term return.

To ease our later presentation, we also present the following  lemma.

\begin{lemma}
\label{lem:gap-const}
For any $(s,a) \in \Ical_\epsilon$, we have that: (1) for all $p \in [M]$, $(s,a) \notin Z_{p,\opt}$, where we recall that $Z_{p,\opt} = \cbr{ (s,a): \gap_p(s,a) = 0}$ is the set of optimal state-action pairs with respect to $p$; (2) for all $p,q \in [M]$, $\gap_p(s,a) \geq \frac12\gap_q(s,a)$.
\end{lemma}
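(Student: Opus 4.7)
The plan is to prove both claims as direct consequences of Lemma~\ref{lem:q-dissimilar}, which tells us that suboptimality gaps in different tasks at the same $(s,a)$ can differ by at most $4H\epsilon$. By Definition~\ref{def:subpar-sa}, if $(s,a)\in\Ical_\epsilon$, there exists some player $p_0 \in [M]$ with $\gap_{p_0}(s,a) > 96H\epsilon$, and this large ``anchor'' gap will force every other player's gap at $(s,a)$ to also be large.

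For claim (1), I would argue that for any $p \in [M]$, Lemma~\ref{lem:q-dissimilar} gives
\[
\gap_p(s,a) \;\geq\; \gap_{p_0}(s,a) - 4H\epsilon \;>\; 96H\epsilon - 4H\epsilon \;=\; 92H\epsilon \;\geq\; 0,
\]
so $\gap_p(s,a) > 0$ and hence $(s,a)\notin Z_{p,\opt}$.

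For claim (2), given any $p,q \in [M]$, I first apply Lemma~\ref{lem:q-dissimilar} between $p_0$ and each of $p,q$ to obtain
\[
\gap_p(s,a) \;\geq\; \gap_{p_0}(s,a) - 4H\epsilon \quad \text{and} \quad \gap_q(s,a) \;\leq\; \gap_{p_0}(s,a) + 4H\epsilon.
\]
Using the first inequality together with $\gap_{p_0}(s,a) > 96H\epsilon$ yields $\gap_p(s,a) > 92H\epsilon$, so in particular $\gap_p(s,a) \geq 8H\epsilon$. Equivalently $4H\epsilon \leq \tfrac12 \gap_p(s,a)$. Then I can apply Lemma~\ref{lem:q-dissimilar} directly between $p$ and $q$ to get $\gap_p(s,a) \geq \gap_q(s,a) - 4H\epsilon$, and rearranging gives $\gap_q(s,a) \leq \gap_p(s,a) + 4H\epsilon \leq \tfrac32 \gap_p(s,a)$, which is stronger than (and in particular implies) $\gap_p(s,a) \geq \tfrac12 \gap_q(s,a)$.

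There is no real obstacle here; the lemma is essentially a book-keeping consequence of the constant $96$ chosen in Definition~\ref{def:subpar-sa} being much larger than the $4$ appearing in Lemma~\ref{lem:q-dissimilar}. The only thing to be careful about is the direction of the inequality: we need to use the anchor player $p_0$ to lower-bound the gap for every other player (so that the $4H\epsilon$ perturbation from Lemma~\ref{lem:q-dissimilar} is dominated by the gap itself), rather than trying to compare an arbitrary pair $p,q$ directly without going through $p_0$.
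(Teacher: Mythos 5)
Your proof is correct and follows essentially the same route as the paper: both parts are immediate consequences of Lemma~\ref{lem:q-dissimilar} applied with the anchor player $p_0$ whose gap exceeds $96H\epsilon$. For claim (2) the paper instead divides $\gap_p(s,a) \geq \gap_q(s,a) - 4H\epsilon$ by $\gap_q(s,a)$ and uses $\gap_q(s,a) > 92H\epsilon$, but this is only a trivial arithmetic rearrangement of your argument (which in fact yields the slightly stronger constant $\tfrac23$).
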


The lemma follows directly from Lemma~\ref{lem:q-dissimilar}; its proof can be found in the Appendix along with proofs of the following theorems. Item 1 implies that any subpar state-action pair is suboptimal for all players. In other words, for every player $p$, the state-action space $\Scal \times \Acal$ can be partitioned to three disjoint sets: $\Ical_\epsilon, Z_{p,\opt}, (\Ical_\epsilon \cup Z_{p,\opt})^C$. Item 2 implies that for any subpar $(s,a)$, its suboptimal gaps with respect to all players are within a constant of each other. 

\subsection{Upper bounds}
With the above definitions, we are now ready to present the performance guarantees of Algorithm~\ref{alg:multi-task_euler}. 
We first present a gap-independent collective regret bound of \mteuler.

\begin{theorem}[Gap-independent bound] 
\label{thm:gap_indept_upper}
If $\cbr{\Mcal_p}_{p=1}^M$ are $\epsilon$-dissimilar, then \mteuler satisfies that with probability $1-\delta$,
\[
\Reg(K) \leq \tilde{O}\del{ 
M \sqrt{ H^2 |\Ical_\epsilon^C| K } + \sqrt{ M H^2 |\Ical_\epsilon| K }  + M H^4  S^2 A  }. \]
\end{theorem}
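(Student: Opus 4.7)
My strategy is to combine the single-task \strongeuler regret analysis with a case split over subpar versus non-subpar state-action pairs, and to use the clipping lemma to absorb the $\epsilon$-bias introduced by aggregating data across heterogeneous tasks. First, I would establish a high-probability good event on which all the concentration bounds behind $\bindp_p$ and $\baggp_p$ hold simultaneously (by a union bound over the $M$ players, $SA$ state-action pairs, and $K$ episodes); on this event the inductive argument sketched in the excerpt guarantees that $\qlb_p \leq Q_p^\star \leq \qub_p$ uniformly and that the strong optimism property~\eqref{eqn:strong-optimism-main} holds. I would then apply the standard optimistic regret decomposition to obtain
\[
\Reg(K) \;\lesssim\; \sum_{p,k,h}\mathrm{surplus}_p(s_{h,p}^k,a_{h,p}^k) + \text{(martingale correction)},
\]
where $\mathrm{surplus}_p(s,a) := \qub_p(s,a) - R_p(s,a) - (\PP_p \vub_p)(s,a) \geq 0$. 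Because $\qub_p = \min\{H, \qindub_p, \qaggub_p\}$, each surplus is dominated, up to lower-order estimation error that eventually contributes to the $\tilde O(MH^4S^2A)$ term, by $\min(\bindp_p(s,a),\baggp_p(s,a))$.

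Next I would split the per-visit sum according to whether $(s_{h,p}^k,a_{h,p}^k)\in\Ical_\epsilon^C$ or $\Ical_\epsilon$. On $\Ical_\epsilon^C$ I would upper bound each surplus by the \emph{individual} bonus $\bindp_p(s,a)$ and run the per-player \strongeuler analysis restricted to the at most $|\Ical_\epsilon^C|$ pairs used by each player: Bernstein-type concentration combined with the layered-MDP law of total variance, summed over the $M$ players, produces the first term $\tilde O(M\sqrt{H^2|\Ical_\epsilon^C|K})$. On $\Ical_\epsilon$ I would instead bound each surplus by the \emph{aggregate} bonus $\baggp_p(s,a)$, decomposed into a Bernstein-type statistical piece of order $\sqrt{\VV_{s'\sim\PP_p(\cdot\mid s,a)}[\vub_p(s')]\,L/n(s,a)}$, a bias piece of order $O(\epsilon)$, and an $O(HSL/n)$ lower-order remainder. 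Aggregating the statistical piece using $n(s,a)=\sum_p n_p(s,a)$, Cauchy--Schwarz over the $|\Ical_\epsilon|$ subpar pairs, and the law of total variance (bounding episode-wise accumulated variance by $H^2$) yields the second term $\tilde O(\sqrt{MH^2|\Ical_\epsilon|K})$; the remaining $HSL/n$-type contributions collectively give $\tilde O(MH^4S^2A)$.

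The delicate step, and the main obstacle, is controlling the $O(\epsilon)$ bias of the aggregate bonus on subpar pairs: naively summing it over all visits to $\Ical_\epsilon$ would contribute an undesired $\Theta(MKH\epsilon)$-type term that is absent from the target bound. This is precisely where the subpar condition pays off. By Definition~\ref{def:subpar-sa} and Lemma~\ref{lem:gap-const}, every $(s,a)\in\Ical_\epsilon$ satisfies $\gap_p(s,a)\geq 48H\epsilon$ for \emph{every} player $p$, so the clipping threshold $\gap_p(s,a)/(4H)\geq 12\epsilon$ strictly exceeds the bias. I would then invoke the clipping lemma of \cite{simchowitz2019non} --- which applies precisely because strong optimism holds for the $\min$-based $\qub_p$ --- to clip away the bias contribution of the aggregate surplus at every visit to a subpar pair, while leaving the statistical piece essentially untouched. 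Ensuring that this clipping argument remains compatible with the minimum-of-two-bonuses construction (so that strong optimism is not broken when $\qub_p$ switches between $\qindub_p$ and $\qaggub_p$), and routing each of $b_{\rw}$, $b_{\prob}$, $b_{\str}$ through the clipping step without spoiling the $\sqrt{M}$ improvement or picking up extra $M$ factors on the $|\Ical_\epsilon|$ term, is the chief technical hurdle.
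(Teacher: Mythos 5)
Your proposal follows essentially the same route as the paper's proof: a clean concentration event, strong optimism for the $\min$ of individual and aggregate bonuses, a surplus bound that is the minimum of an individual-count term and an ($\epsilon$ plus aggregate-count) term, the clipping lemma of \cite{simchowitz2019non} to annihilate the $\epsilon$-bias on subpar pairs (where the threshold $\gap_p(s,a)/(4H)$ dominates it), and Cauchy--Schwarz with the law of total variance to produce the two leading terms, with the $HSL/n$-type remainders absorbed into the $\tilde O(MH^4S^2A)$ term. Your constants are a bit looser than the paper's (it uses $\gap_p(s,a) > 92H\epsilon$ for all $p$, from the additive bound of Lemma~\ref{lem:q-dissimilar}, so that even after the clip-decomposition quarters the threshold it still exceeds the $5\epsilon$ bias), but the slack built into Definition~\ref{def:subpar-sa} makes this immaterial.
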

We again compare this regret upper bound with individual \strongeuler's gap independent regret bound. Recall that individual \strongeuler guarantees that with probability $1-\delta$, 
\[ 
\Reg(K) \leq
\tilde{O}\del{ M \sqrt{ H^2 S A K } +  M H^4  S^2 A  }.
\]
We focus on the comparison on the leading terms, i.e., the $\sqrt{K}$ terms. As $M \sqrt{ H^2 S A K } \eqsim M \sqrt{ H^2 \abs{\Ical_\epsilon} K } + M \sqrt{ H^2 \abs{\Ical_\epsilon^C} K}$, we see that an improvement in the collective regret bound comes from the contributions from subpar state-action pairs: the $M \sqrt{ H^2 |\Ical_\epsilon| K }$ term is reduced to $\sqrt{ M H^2 |\Ical_\epsilon| K }$, a factor of $\tilde{O}(\sqrt{\frac 1 M})$ improvement. Moreover, if  $\abr{\Ical_\epsilon^C} \ll SA$ and $M \gg 1$, \mteuler provides a regret bound of lower order than individual \strongeuler. 

We next present a gap-dependent upper bound on its collective regret. 

\begin{theorem}[Gap-dependent upper bound]
\label{thm:gap_dept_upper}
If $\cbr{\Mcal_p}_{p=1}^M$ are $\epsilon$-dissimilar, then \mteuler satisfies with probability $1-\delta$,
\begin{align*}
\Reg(K) 
\lesssim 
\ln(\frac{M S A K}{\delta}) & \vast(
\sum_{p \in [M]} \rbr{ 
\sum_{(s,a) \in Z_{p,\opt}} \frac{ H^3}{\gap_{p,\min}}
+
\sum_{(s,a) \in (\Ical_\epsilon \cup Z_{p,\opt})^C} \frac{H^3}{\gap_p(s,a)} }
+ \\
& \sum_{(s,a) \in \Ical_\epsilon} \frac{H^3}{\min_p \gap_p(s,a)}
\vast) 
+  \ln(\frac{M S A K}{\delta}) \cdot M H^4 S^2 A  \ln\frac{MSA}{\gap_{\min}},
\end{align*}
where we recall that $\gap_{p,\min} = \min_{(s,a): \gap_p(s,a) > 0} \gap_p(s,a)$, and $\gap_{\min} = \min_p \gap_{p,\min}$.
\end{theorem}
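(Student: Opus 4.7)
The plan is to follow the proof template of the single-task \strongeuler analysis in \cite{simchowitz2019non}, extending it to handle the two estimators (individual and aggregate) simultaneously and to exploit the aggregate estimator for subpar state-action pairs. First I would establish a good event $\mathcal{E}$, under which all the concentration inequalities listed after the display of $b_{\rw}, b_{\prob}, b_{\str}$ (for both the individual and the aggregate models, for every $p$, $k$, and $(s,a)$) hold simultaneously; a union bound gives $\Pr(\mathcal{E}) \geq 1 - O(\delta)$. On $\mathcal{E}$, an induction backward over the layers $h = H+1, H, \ldots, 1$ shows that $\qindub_p, \qaggub_p$ (respectively $\qindlb_p, \qagglb_p$) are valid upper (respectively lower) bounds on $Q_p^\star$, and moreover that $\qub_p, \vub_p$ satisfy the strong-optimism inequality~\eqref{eqn:strong-optimism-main}.

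Next I would apply the standard Bellman-based regret decomposition together with the clipping lemma of \cite{simchowitz2019non}: writing the per-player surplus $E_{h,p}^k(s,a) := \qub_p(s,a) - R_p(s,a) - (\PP_p \vub_p)(s,a) \geq 0$, the total collective regret is bounded (up to a Bellman-martingale remainder of order $\tilde{O}(M H^4 S^2 A)$) by $\sum_{p,k,h} \clip\bigl(e\cdot E_{h,p}^k(s_{h,p}^k, a_{h,p}^k);\; \gap_p(s_{h,p}^k,a_{h,p}^k)/(4H)\bigr)$, where the clip threshold is replaced by $\gap_{p,\min}/(4H)$ whenever the visited pair lies in $Z_{p,\opt}$. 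Using $E_{h,p}^k(s,a) \leq 2\min\{\bindp_p(s,a), \baggp_p(s,a)\}$ and absorbing the lower-order $b_{\str}$ contribution into the $MH^4 S^2 A$ remainder, I am left with the clipped bonus sum, which I split according to whether the tighter of the two bonuses is the individual or the aggregate one.

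The next step is the case split of the state-action pairs into (i) $(s,a) \in Z_{p,\opt}$, (ii) $(s,a) \in (\Ical_\epsilon \cup Z_{p,\opt})^C$, and (iii) $(s,a) \in \Ical_\epsilon$. For cases (i) and (ii) I would use the individual bonus $\bindp_p(s,a) = \tilde{O}(H/\sqrt{n_p(s,a)})$ and recover the single-task gap-dependent rates of \strongeuler verbatim, producing the first two terms in the theorem. Case (iii) is the new ingredient: for subpar $(s,a)$, Lemma~\ref{lem:gap-const} gives $\gap_q(s,a) \geq 48 H \epsilon$ for every $q$, so that the clipping threshold $\gap_q(s,a)/(4H) \geq 12\epsilon$ strictly dominates the constant $\epsilon$-bias floor built into $\baggp_q$. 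Ordering the visits to $(s,a)$ across all players chronologically by the aggregate count $j = 1, \ldots, n(s,a)$, the $j$-th visit contributes at most $\tilde{O}(H/\sqrt{j} + \epsilon)$ and remains above the player-$p_j$ clip threshold only while $j \lesssim H^4/\min_p \gap_p(s,a)^2$. Summing over this range collapses the contribution of $(s,a)$ across all $M$ players into a single shared term of order $H^3/\min_p \gap_p(s,a)$ per subpar pair, which is precisely the third term in the statement.

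The main obstacle will be verifying step two rigorously: showing strong optimism for $\qub_p = \min\{H-h+1,\, \qindub_p,\, \qaggub_p\}$ despite $\qaggub_p$ being built from a biased aggregate model. Propagating the $\epsilon$-dissimilarity through the Bellman backup requires the $\kappa$-offsets inside $b_{\rw}$ and $b_{\prob}$ to simultaneously absorb the bias in $\hat{R}$ against $R_p$ and the bias in $(\hat{\PP} \vub_p)$ against $(\PP_p \vub_p)$; the fact that Definition~\ref{def:dissimilar} scales the $\ell_1$ transition distance by $\epsilon/H$, combined with $\|\vub_p\|_\infty \leq H$, is exactly what lets the $2\kappa$ slack (rather than $H\kappa$) suffice. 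A second delicate point, already flagged above, is tracking that the clipping threshold genuinely dominates the $\epsilon$-floor of the aggregate bonus on $\Ical_\epsilon$, which is the reason the constant $96$ (rather than something smaller) appears in Definition~\ref{def:subpar-sa}; with the correct constants in hand, the remaining calculation is an almost mechanical single-task pigeonhole sum applied to $n(s,a)$ instead of $n_p(s,a)$.
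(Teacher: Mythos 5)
Your plan is correct and follows essentially the same route as the paper's proof: a high-probability clean event plus backward induction for validity and strong optimism, a surplus bound by the minimum of the individual and aggregate bonuses, the Simchowitz--Jamieson clipping lemma, and a case split in which subpar pairs are charged to the aggregate count (with the $96H\epsilon$ margin ensuring the clip threshold wipes out the $\epsilon$-bias floor, yielding a single shared $H^3/\min_p \gap_p(s,a)$ term) while all other pairs are charged to individual counts, with lower-order pieces absorbed into the additive $MH^4S^2A$ term. The two delicate points you flag (bias absorption by the $\kappa$ offsets in the aggregate bonuses, and the clip dominating the $\epsilon$ floor on $\Ical_\epsilon$) are exactly the places where the paper's argument does the real work.
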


Comparing this regret bound with the regret bound obtained by the individual \strongeuler baseline, recall that by  summing over the regret guarantees of \strongeuler for all players $p \in [M]$, and taking a union bound over all $p$, individual \strongeuler guarantees a collective regret bound of
\begin{align*}
\Reg(K) \lesssim \ln (\frac{MSAK}{\delta}) & \vast( \sum_{p \in [M]} \rbr{ 
\sum_{(s,a) \in Z_{p,\opt}} \frac{H^3}{\gap_{p,\min}}
+
\sum_{(s,a) \in (\Ical_\epsilon \cup Z_{p,\opt})^C} \frac{H^3}{\gap_p(s,a)} 
} 
+ \\
& \sum_{(s,a) \in \Ical_\epsilon} \sum_{p \in [M]} \frac{H^3}{\gap_p(s,a)}  \vast) 
+ \ln(\frac{MSAK}{\delta}) \cdot M H^4 S^2 A  \ln\frac{SA}{\gap_{\min}},
\end{align*}
that holds with probability $1-\delta$.
We again focus on comparing the leading terms, i.e., the terms that have polynomial dependences on the suboptimality gaps in the above two bounds. 
It can be seen that an improvement in the regret bound by \mteuler comes from the contributions from the subpar state-action pairs: for each $(s,a) \in \Ical_\epsilon$, the regret bound is reduced from 
$\sum_{p \in [M]} \frac{H^3}{\gap_p(s,a)}$ to $\frac{H^3}{\min_{p} \gap_p(s,a)}$, a factor of $O(\frac1M)$ improvement.
Recent work of~\cite{xmd21} has shown that in the single-task setting, it is possible to replace $\sum_{(s,a) \in Z_{p,\opt}} \frac{H^3}{\gap_{p,\min}}$ with a sharper problem-dependent complexity term that depends on the multiplicity of optimal state-action pairs. 
We leave improving the guarantee of Theorem~\ref{thm:gap_dept_upper} in a similar manner as an interesting open problem.

Key to the proofs of Theorems~\ref{thm:gap_indept_upper} and~\ref{thm:gap_dept_upper}  is a new bound on the {\em surplus}~\cite{simchowitz2019non} of the value function estimates. 
Our new surplus bound is a minimum of two terms: one depends on the usual state-action visitation counts of player $p$, the other depends on the task dissimilarity parameter $\epsilon$ and the state-action visitation counts of all players. 
Detailed proofs can be found at Appendix~\ref{sec:proof-upper}. 


\subsection{Lower bounds}
To complement the above upper bounds, we now present gap-dependent and gap-independent regret lower bounds that also depend on our subpar state-action pair notion. Our lower bounds
are inspired by regret lower bounds for episodic RL~\cite{simchowitz2019non,dann2015sample} and multi-task bandits~\cite{wzsrc21}. 

\begin{theorem}[Gap-independent lower bound]
\label{thm:gap_ind_lb}
For any $A \ge 2$, $H \ge 2$, {$S \ge 4H$}, $K \ge SA$, $M \in \NN$, and $l, l^C \in \NN$ with $l + l^C = SA$ and $l \leq SA - 4(S + HA)$, there exists some $\epsilon$ that satisfies: for any algorithm $\Alg$, there exists an $\epsilon$-MPERL problem instance
with $S$ states, $A$ actions, $M$ players and an episode length of $H$
such that $\abr{\Ical_{\frac{\epsilon}{192H}}} \geq l$, and
\[
\EE \sbr{\Reg_\Alg(K)} \ge \Omega \rbr{ M \sqrt{H^2 l^C K } + \sqrt{ M H^2 l K }}.
\]
\end{theorem}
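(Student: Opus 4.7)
The plan is to construct a hard family of $\epsilon$-MPERL instances that decomposes into two essentially independent sub-MDPs, one contributing the $M\sqrt{H^2 l^C K}$ term and the other contributing the $\sqrt{MH^2 l K}$ term, and then to apply information-theoretic regret lower bounds (in the spirit of the single-task constructions of \cite{simchowitz2019non,dann2015sample} combined with the multi-task bandit arguments of \cite{wzsrc21}) to each part separately before summing.

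I would build the instance as a layered balanced tree of depth $H$: from the initial distribution each episode falls with probability $1/2$ into a \emph{non-subpar sub-tree} whose bandit-like leaf layer hosts the $l^C$ hard state-action pairs, and with probability $1/2$ into a \emph{subpar sub-tree} whose leaf layer hosts the $l$ hard pairs. Inside each sub-tree, transitions are deterministic (a tree routing), so randomness lives only at the leaf layer. Each leaf state has $A$ Bernoulli-reward arms. The bookkeeping conditions $S \geq 4H$ and $l \leq SA - 4(S + HA)$ are exactly what is needed to fit both sub-trees, using $O(S + HA)$ states/pairs for routing and leaving enough leaf pairs for both groups. For the non-subpar leaves, I randomize the identity of the optimal arm \emph{independently per player}, giving one arm mean $\tfrac12 + \Delta_2$ and the rest $\tfrac12$, with $\Delta_2 = \Theta(\sqrt{H/K})$; this is what drives $\epsilon$-dissimilarity, so I set $\epsilon = \Theta(\Delta_2)$. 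For the subpar leaves, all $M$ players share the \emph{same} reward structure: one arm has mean $\tfrac12 - c\epsilon + \Delta_1$ and the rest $\tfrac12 - c\epsilon$, with $\Delta_1 = \Theta(\sqrt{H/(MK)})$ and a constant $c$ chosen so that the shared $-c\epsilon$ offset makes $\gap_p(s,a) > \epsilon/2$ at every such pair, certifying $(s,a) \in \Ical_{\epsilon/(192H)}$.

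On the non-subpar sub-tree, since each player's optimal-arm assignment is statistically independent of the others, data pooled across players is uninformative for any individual player's assignment. A Le~Cam / Assouad argument on each player in isolation (adapting the single-task lower bound of \cite{simchowitz2019non,dann2015sample} to the $l^C$-hard-pair sub-MDP) gives per-player expected regret $\Omega(\sqrt{H^2 l^C K})$; summing yields $\Omega(M \sqrt{H^2 l^C K})$. On the subpar sub-tree, all players face the identical MDP, so from the algorithm's perspective the collective interaction is equivalent to $MK$ episodes of a single-task problem with $l$ hard state-action pairs; the classical minimax episodic-RL lower bound then yields $\Omega(\sqrt{H^2 l \cdot MK}) = \Omega(\sqrt{M H^2 l K})$. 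Because the two sub-trees are accessed with constant probability per episode and observations in one carry no information about the other, the two bounds add up to $\Omega(M\sqrt{H^2 l^C K} + \sqrt{M H^2 l K})$.

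The hard part will be the joint parameter calibration: I must pick one $\epsilon$ for which (i) $|R_p - R_q| \leq \epsilon$ and $\|\PP_p(\cdot\mid s,a) - \PP_q(\cdot \mid s,a)\|_1 \leq \epsilon/H$ hold simultaneously for the non-subpar randomization and the shared subpar offset, (ii) the subpar gap is at least $\epsilon/2$ with enough margin to survive a $\Delta_1$ perturbation, and (iii) the detection problems on both sub-trees remain genuinely hard, meaning the KL between neighboring instances is $O(1/(MK))$ on the subpar side and $O(1/K)$ on the non-subpar side so the standard Fano/Pinsker step goes through. Setting $\epsilon = \Theta(\sqrt{H/K})$ and $c$ a small absolute constant appears to balance all constraints; rewards must also stay in $[0,1]$, which forces centering at $1/2$ and picking $c$ small. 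The remaining work is routine: verify $\epsilon$-dissimilarity, verify the subpar set has size $\geq l$, and turn the two sub-tree regret lower bounds into the claimed sum by conditioning on the sub-tree entered each episode.
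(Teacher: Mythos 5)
Your two information-theoretic ingredients are the right ones, and they match the paper's: on pairs where the optimal action is drawn independently per player, pooled data is useless and a per-player argument gives $\Omega(M\sqrt{\cdot})$; on pairs where all players face the identical model, the $M$ players amount to $MK$ episodes of one task and give $\Omega(\sqrt{M\cdot})$. The gap is exactly in the ``joint parameter calibration'' you defer. First, the shared $-c\epsilon$ offset cannot certify subparness: $\gap_p(s,a)=V_p^\star(s)-Q_p^\star(s,a)$ is invariant under shifting every arm of the same leaf state by a common constant, so the gaps at your subpar leaves are just $\Delta_1=\Theta(\sqrt{H/(MK)})=\Theta(\epsilon/\sqrt{M})$, which falls below the threshold $96H\cdot\frac{\epsilon}{192H}=\epsilon/2$ as soon as $M$ exceeds a constant; those $l$ pairs are then not in $\Ical_{\epsilon/192H}$ and the requirement $\abr{\Ical_{\epsilon/192H}}\ge l$ fails. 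Second, this is not fixable by tuning $c$ or $\epsilon$: dissimilarity forces $\epsilon$ to be at least of the order of the player-specific deviation used on the non-subpar pairs (times $H$ if, as needed, that deviation sits in transitions), i.e.\ $\epsilon\gtrsim H\sqrt{l^C/K}$, while pairs that are to drive an $\Omega(\sqrt{MH^2lK})$ collective term must have gap $\approx H\sqrt{l/(MK)}$ and simultaneously gap $\ge\epsilon/2$ to be subpar. These are compatible only when $Ml^C\lesssim l$; when $Ml^C\gg l$ no single instance of your form can deliver both terms at once. The paper sidesteps precisely this by a case split ($l>Ml^C$ versus $Ml^C\ge l$), using a different construction and a different $\epsilon$ in each case and proving only the dominant term (enough, since the sum is within a factor $2$ of the max); in the case $Ml^C\ge l$ the $l$ subpar pairs are given huge gaps $\approx H/2$, so they certify $\abr{\Ical_{\epsilon/192H}}\ge l$ but contribute nothing, and all the regret comes from the player-specific non-subpar pairs. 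Your proposal needs this case analysis (or an equivalent observation) to go through.

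A secondary but real problem: putting all randomness into Bernoulli rewards at the last layer caps every gap and every KL-relevant signal at a one-step reward in $[0,1]$, so your construction can yield at best $\Omega(M\sqrt{l^CK}+\sqrt{MlK})$, losing the $H$ inside the square roots. The paper obtains the $H^2$ by making the hard choice a layer-$1$ transition into one of two deterministic chains of length $H-1$ with rewards all $1$ or all $0$, so the value gap is $(H-1)\Delta$; you need this (or another $H$-amplification), and note it changes which dissimilarity constraint binds ($\|\PP_p(\cdot\mid s,a)-\PP_q(\cdot\mid s,a)\|_1\le\epsilon/H$ rather than $\abr{R_p(s,a)-R_q(s,a)}\le\epsilon$), which is why the paper sets $\epsilon=\Theta(H\Delta)$ and why the tension above appears at scale $H\sqrt{l^C/K}$.
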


We also present a gap-dependent lower bound. Before that, we first formally define the notion of sublinear regret algorithms:
for any fixed $\epsilon$, we say that an algorithm $\Alg$ is a sublinear regret algorithm for the $\epsilon$-MPERL problem if there exists some $C > 0$ (that possibly depends on the state-action space, the number of players, and $\epsilon$) and $\alpha < 1$ such that for all $K$ and all $\epsilon$-MPERL environments, $\EE\sbr{\Reg_{\Alg}(K)} \le C K^\alpha$.

\begin{theorem}[Gap-dependent lower bound]
\label{thm:gap_dep_lb}
Fix $\epsilon \ge 0$. For any $S \in \NN$, $A \ge 2$, $H \ge 2$, $M \in \NN$, with $S \geq 2(H-1)$, let $S_1 = S - 2(H-1)$; and let $\cbr{\Delta_{s,a,p}}_{(s,a,p) \in [S_1] \times [A] \times [M]}$ be any set of values that satisfies: 
(1) each $\Delta_{s,a,p} \in [0,H/(48\sqrt{M})]$, (2) for every $(s,p) \in [S_1] \times [M]$, there exists at least one action $a \in [A]$ such that $\Delta_{s,a,p} = 0$, and (3) for every $(s,a) \in [S_1] \times [A]$ and $p,q \in [M]$, $\abs{\Delta_{s,a,p} - \Delta_{s,a,q}} \leq \epsilon/4$.
There exists an $\epsilon$-MPERL problem instance with $S$ states, $A$ actions, $M$ players and an episode length of $H$, such that
$\Scal_1 = [S_1]$, $\abr{\Scal_h} = 2$ for all $h \ge 2$, and
\[
\gap_p(s,a) = 
\Delta_{s,a,p}, \quad \forall (s,a,p) \in [S_1] \times [A] \times [M];
\]
for this problem instance, any sublinear regret algorithm $\Alg$ for the $\epsilon$-MPERL problem must satisfy:
\[
\EE \sbr{\Reg_\Alg(K)} \ge \Omega \rbr{ \ln K \rbr{ \sum_{p \in [M]} \sum_{\substack{(s,a) \in \Ical^C_{ \rbr{\epsilon/768H}}: \\ \gap_p(s,a) > 0}} \frac{H^2}{\gap_p(s,a)}
+
\sum_{(s,a) \in \Ical_{ \rbr{\epsilon/768H}}} \frac{H^2}{\min_p \gap_p(s,a)}}}.
\]
\end{theorem}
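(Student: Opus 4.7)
The plan is to build a hard $\epsilon$-MPERL instance matching the specified gap profile and then establish the per-pair sample-complexity bound via a change-of-measure argument, adapting the classical episodic-RL lower bounds of~\cite{simchowitz2019non,dann2015sample} to the multi-task setting in the spirit of the subpar-arm lower bound in~\cite{wzsrc21}. For the instance, I use a layered MDP with $\Scal_1 = [S_1]$ and $\Scal_h = \cbr{g_h, b_h}$ for $h = 2, \ldots, H$; within layers $h \geq 2$, transitions are deterministic ($g_h \mapsto g_{h+1}$ and $b_h \mapsto b_{h+1}$, for every action), rewards are $r(g_h) = 1$ and $r(b_h) = 0$, and all actions are equivalent. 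The only nontrivial decision is at layer $1$, where I set $\PP_p(g_2 \mid s, a) = \tfrac12 - \Delta_{s,a,p}/(H-1)$. A direct Bellman unrolling yields $V_p^\star(s) = (H-1)/2$ and $\gap_p(s,a) = \Delta_{s,a,p}$. The $\epsilon$-dissimilarity follows since $\|\PP_p(\cdot \mid s,a) - \PP_q(\cdot \mid s,a)\|_1 = 2|\Delta_{s,a,p} - \Delta_{s,a,q}|/(H-1) \leq \epsilon/(2(H-1)) \leq \epsilon/H$ by hypothesis (3), and rewards are deterministic and player-independent; the cap $\Delta_{s,a,p} \leq H/(48\sqrt{M})$ keeps transition probabilities in $[0,1]$ with enough slack for the KL computations below.

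For each $(s, a, p)$ with $\gap_p(s,a) > 0$, I define an alternative instance by modifying only $\PP_p(\cdot \mid s, a)$, setting $\PP_p'(g_2 \mid s, a) = \tfrac12 + \gamma$ for arbitrarily small $\gamma > 0$, so that $(s, a)$ becomes the strictly optimal action at $s$ for player $p$ under the alternative. A standard Lai-Robbins / Kaufmann-Cappé-Garivier change-of-measure inequality, combined with the assumption that $\Alg$ has sublinear regret on both the original and the alternative instance (both lying in the $\epsilon$-MPERL class; see below), yields $\EE[N_p(s,a)] \cdot \KL\del{\PP_p(\cdot \mid s, a), \PP_p'(\cdot \mid s, a)} \gtrsim \ln K$, where only player $p$'s visits to $(s,a)$ contribute because only $\PP_p(\cdot \mid s,a)$ has been altered. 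A Bernoulli-KL computation gives $\KL \lesssim \gap_p(s,a)^2/(H-1)^2$, so $\EE[N_p(s,a)] \gtrsim H^2 \ln K / \gap_p(s,a)^2$, and the regret contribution from $(s,a)$ by player $p$ is $\gap_p(s,a) \cdot \EE[N_p(s,a)] \gtrsim H^2 \ln K / \gap_p(s,a)$.

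The crucial step is verifying that the alternative remains $\epsilon$-MPERL, which restricts which $(s, a, p)$ triples admit the argument. The perturbation is $\|\PP_p' - \PP_p\|_1 \approx 2\gap_p(s,a)/(H-1)$, so combined with the baseline inter-player distance $\epsilon/(2(H-1))$, the new distance $\|\PP_p' - \PP_q\|_1$ stays $\leq \epsilon/H$ provided $\gap_p(s,a)$ is small enough. For non-subpar $(s,a) \in \Ical^C_{\epsilon/(768H)}$, every $\gap_q(s,a) \leq \epsilon/8$, so the perturbation fits within the dissimilarity budget after absorbing constants; the argument applies independently to each player $p$ with $\gap_p(s,a) > 0$, contributing $\sum_p H^2 \ln K / \gap_p(s,a)$ per such pair. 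For subpar $(s,a) \in \Ical_{\epsilon/(768H)}$, applying the perturbation to an arbitrary player can exceed the budget, so I restrict the argument to $p^\star = \argmin_p \gap_p(s,a)$: this choice makes the required perturbation the smallest possible and the alternative stays $\epsilon$-MPERL, yielding a single contribution $H^2 \ln K / \min_p \gap_p(s,a)$ per subpar pair. Summing both cases over $(s,a)$ recovers the stated lower bound. The hard part is the dissimilarity constant tracking, which is precisely what dictates the specific constants $96H$, $\epsilon/(768H)$, and $H/(48\sqrt{M})$ appearing in the hypotheses; once these are calibrated, the rest of the proof is a routine aggregation of the per-triple sample-complexity bound.
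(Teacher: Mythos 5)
Your instance construction and your treatment of non-subpar pairs match the paper's proof (modify only player $p$'s kernel at $(s,a)$; admissibility follows because all gaps at a pair in $\Ical^C_{\epsilon/768H}$ are at most $96H\cdot\frac{\epsilon}{768H}=\epsilon/8$, so the flipped kernel stays within $\epsilon/H$ in $\ell_1$ of every other player's kernel). The genuine gap is in your handling of subpar pairs. For $(s,a)\in\Ical_{\epsilon/768H}$ you propose to perturb only the kernel of $p^\star=\argmin_p\gap_p(s,a)$, and you assert that choosing the smallest-gap player keeps the alternative $\epsilon$-MPERL. That does not follow: to make $(s,a)$ optimal for $p^\star$ you must raise $\PP_{p^\star}(g_2\mid s,a)$ to at least $\tfrac12$, while every other player $q$ keeps $\PP_q(g_2\mid s,a)=\tfrac12-\gap_q(s,a)/(H-1)$, so the new inter-player distance is about $2\gap_q(s,a)/(H-1)$. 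Membership in $\Ical_{\epsilon/768H}$ only says some gap exceeds $\epsilon/8$; the hypotheses allow all gaps at a subpar pair to be as large as $H/(48\sqrt{M})$, which can be arbitrarily larger than $\epsilon$ (e.g.\ all $\Delta_{s,a,p}$ equal and huge, $\epsilon$ tiny). In that regime the perturbed instance violates $\epsilon$-dissimilarity, the ``sublinear regret on the $\epsilon$-MPERL class'' hypothesis cannot be invoked for the alternative, and the change-of-measure argument collapses. ``Smallest possible perturbation'' is not the same as ``within the $\epsilon/H$ budget.''

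The paper's proof avoids this by shifting \emph{all} players' kernels at $(s_0,a_0)$ by the same amount $2\bar{\Delta}^{\min}_{s_0,a_0}$ (so the min-gap player's action becomes optimal). Because every kernel moves identically, all pairwise $\ell_1$ distances are unchanged and the alternative is $\epsilon$-MPERL no matter how large the gaps are. The price is that the divergence decomposition now sums over all players, giving $\KL(\PP_{\mathfrak{M}},\PP_{\mathfrak{M}'})\lesssim \EE[n^{K}(s_0,a_0)]\,(\bar{\Delta}^{\min})^2$ with the \emph{aggregate} count $n^{K}=\sum_p n^{K}_p$, so one only obtains $\EE[n^{K}(s_0,a_0)]\gtrsim H^2\ln K/(\min_p\gap_p(s_0,a_0))^2$, which multiplied by $\min_p\gap_p$ yields exactly the $H^2\ln K/\min_p\gap_p(s,a)$ term — this simultaneous-shift trick is the key multi-task ingredient missing from your proposal. (Your use of a Lai--Robbins-style inequality versus the paper's Bretagnolle--Huber-plus-Markov argument, and your infinitesimal flip $\tfrac12+\gamma$ versus the paper's flip to $\tfrac12+\bar{\Delta}^{p_0}$, are immaterial presentational differences.)
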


Comparing the lower bounds with \mteuler's regret upper bounds in Theorems~\ref{thm:gap_indept_upper} and~\ref{thm:gap_dept_upper}, 
we see that the upper and lower bounds nearly match for any constant $H$.
When $H$ is large,
a key difference between the upper and lower bounds is that the former are in terms of $\Ical_\epsilon$, while the latter are in terms of $\Ical_{\Theta(\frac\epsilon H)}$. We conjecture that our upper bounds can be improved by replacing $\Ical_\epsilon$ with $\Ical_{\Theta(\frac\epsilon H)}$---our analysis uses a clipping trick similar to~\cite{simchowitz2019non}, which may be the reason for a suboptimal dependence on $H$. We leave closing this gap as an open question.

\section{Related Work}
\label{sec:related_work}

\paragraph{Regret minimization for MDPs.} Our work belongs to the literature of regret minimization for MDPs, e.g.,~\cite{bartlett2009regal,jaksch2010near,dann2015sample,azar2017minimax,dann2017unifying,jin2018q,dann2019policy,zanette2019tighter, simchowitz2019non,zhang2020almost,yang2021q,xmd21}. In the episodic setting,~\cite{azar2017minimax,dann2019policy,zanette2019tighter,simchowitz2019non,zhang2020almost} achieve minimax $\sqrt{H^2 S A K}$ regret bounds for general stationary MDPs. Furthermore, the \euler algorithm~\cite{zanette2019tighter} achieves adaptive problem-dependent regret guarantees when the total reward within an episode is small or when the environmental norm of the MDP is small. ~\cite{simchowitz2019non} refines \euler, proposing \strongeuler that provides more fine-grained gap-dependent $O(\log K)$ regret guarantees. ~\cite{yang2021q,xmd21} show that the optimistic Q-learning algorithm~\cite{jin2018q} and its variants can also achieve gap-dependent logarithmic regret guarantees. Remarkably,~\cite{xmd21} achieves a regret bound that improves over that of~\cite{simchowitz2019non}, in that it replaces the dependence on the number of optimal state-action pairs with the number of non-unique state-action pairs.

\paragraph{Transfer and lifelong learning for RL.} A considerable portion of related works concerns transfer learning for RL tasks (see~\cite{ts09,l12,zlz20} for surveys from different angles), and many studies investigate a batch setting: given some source tasks and target tasks, transfer learning agents have access to batch data collected for the source tasks (and sometimes for the target tasks as well).
In this setting, model-based approaches have been explored in e.g., \cite{tjs08}; theoretical guarantees for transfer of samples across tasks have been established in e.g., \cite{lr11,tsr19}.
Similarly, sequential transfer has been studied under the framework of lifelong RL in e.g., \cite{ty03,ajgkl18,gt19,ltm19}---in this setting, an agent faces a sequence of RL tasks and aims to take advantage of knowledge gained from previous tasks for better performance in future tasks; in particular, analyses on the sample complexity of transfer learning algorithms are presented in \cite{brunskill2013sample,lgb16} under the assumption that an upper bound on the total number of unique (and well-separated) RL tasks is known.
We note that, in contrast, we study an online setting in which no prior data are available and multiple RL tasks are learned concurrently by RL agents. 

\paragraph{Concurrent RL.} Data sharing between multiple RL agents that learn concurrently has also been investigated in the literature. For example, in \cite{k02,snbwm13,guo2015concurrent,dv18}, 
a group of agents interact in parallel with {\em identical} environments. Another setting is studied in \cite{guo2015concurrent}, in which agents solve different RL tasks (MDPs); however, similar to \cite{brunskill2013sample,lgb16}, it is assumed that there is a finite number of unique tasks, and different tasks are well-separated, i.e., there is a minimum gap. In this work, we assume that players face similar but not necessarily identical MDPs, and we do not assume a minimum gap.
\cite{hu2021near} study multi-task RL with linear function approximation with representation transfer, where it is assumed that the optimal value functions of all tasks are from a low dimensional linear subspace.  
Our setting and results are most similar to \cite{pp16} and \cite{dubey2021provably}. \cite{pp16} study concurrent exploration in similar MDPs with continuous states in the PAC setting; however, their PAC guarantee does not hold for target error rate arbitrarily close to zero; in contrast, our algorithm has a fall-back guarantee, in that it always has a sublinear regret. Concurrent RL from similar \textit{linear} MDPs has also been recently studied in \cite{dubey2021provably}: under the assumption of small heterogeneity between different MDPs (a setting very similar to ours), the provided regret guarantee involves a term that is linear in the number of episodes, whereas our algorithm in this paper always has a sublinear regret; concurrent RL under the assumption of large heterogeneity is also studied in that work, but additional contextual information is assumed to be available for the players to ensure a sublinear regret.

\paragraph{Other related topics and models.} In many multi-agent RL models~\cite{zyb19,oh19}, a set of learning agents interact with a common environment and have
shared global states; in particular, \cite{zylzb18} study
the setting with heterogeneous reward distributions, and provide convergence guarantees for two policy gradient-based algorithms. 
In contrast, in our setting, our learning agents interact with separate environments. 
Multi-agent bandits with similar, heterogeneous reward distributions are investigated in \cite{soaremulti,wzsrc21}; herein, we generalize their multi-task bandit problem setting to the episodic MDP setting.


\section{Conclusion and Future Directions}
\label{sec:conclusion}
In this paper, we generalize the multi-task bandit learning framework in \cite{wzsrc21} and formulate a multi-task concurrent RL problem, in which tasks are similar but not necessarily identical. We provide a provably efficient model-based algorithm that takes advantage of knowledge transfer between different tasks. Our instance-dependent regret upper and lower bounds formalize the intuition that subpar state-action pairs are amenable to information sharing among tasks.

There still remain gaps between our upper and lower bounds which can be closed by either a finer analysis or a better algorithm: first, the dependence on $\Ical_\epsilon$ in the upper bound does not match the dependence of $\Ical_{\Theta(\epsilon/H)}$ in the lower bound when $H$ is large; second, the gap-dependent upper bound has $O(H^3)$ dependence, whereas the gap-dependent lower bound only has $\Omega(H^2)$ dependence; third, the additive dependence on the number of optimal state-action pairs can potentially be removed by new algorithmic ideas~\cite{xmd21}. 

Furthermore, one major obstacle in deploying our algorithm in practice is its requirement for knowledge of $\epsilon$; an interesting avenue is to apply model selection strategies in bandits and RL to achieve adaptivity to unknown $\epsilon$. Another interesting future direction is to consider more general parameter transfer for online RL, for example, in the context of function approximation.

\section{Acknowledgements}
We thank Kamalika Chaudhuri for helpful initial discussions, and thank Akshay Krishnamurthy and Tongyi Cao for discussing the applicability of adaptive RL in metric spaces to the multitask RL problem studied in this paper. 
CZ acknowledges startup funding support from the University of Arizona.
ZW thanks the National Science Foundation under IIS 1915734 and CCF 1719133 for research support.

\bibliographystyle{plain}
\bibliography{ref}

\newpage
\hide{
\section*{Checklist}


\begin{enumerate}

\item For all authors...
\begin{enumerate}
  \item Do the main claims made in the abstract and introduction accurately reflect the paper's contributions and scope?
    \answerYes{We study multi-task learning for regret minimization on episodic tabular MDPs, formulating a new notion of task similarity and design and analyze provably efficient algorithms in this setting. See the second half of Section~\ref{sec:intro} for a summary of our contributions.}
  \item Did you describe the limitations of your work?
    \answerYes{We have discussed the limitations of our work in Section~\ref{sec:conclusion}.}
  \item Did you discuss any potential negative societal impacts of your work?
    \answerYes{We have discussed potential negative societal impacts of our work in Section~\ref{sec:conclusion}.}
  \item Have you read the ethics review guidelines and ensured that your paper conforms to them?
    \answerYes{}
\end{enumerate}

\item If you are including theoretical results...
\begin{enumerate}
  \item Did you state the full set of assumptions of all theoretical results?
    \answerYes{See Sections~\ref{sec:prelims} and~\ref{sec:guarantees} for our assumptions of the theoretical results.}
	\item Did you include complete proofs of all theoretical results?
    \answerYes{See Appendix for the full proofs.}
\end{enumerate}

\item If you ran experiments...
\begin{enumerate}
  \item Did you include the code, data, and instructions needed to reproduce the main experimental results (either in the supplemental material or as a URL)?
    \answerNA{}
  \item Did you specify all the training details (e.g., data splits, hyperparameters, how they were chosen)?
    \answerNA{}
	\item Did you report error bars (e.g., with respect to the random seed after running experiments multiple times)?
    \answerNA{}
	\item Did you include the total amount of compute and the type of resources used (e.g., type of GPUs, internal cluster, or cloud provider)?
    \answerNA{}
\end{enumerate}

\item If you are using existing assets (e.g., code, data, models) or curating/releasing new assets...
\begin{enumerate}
  \item If your work uses existing assets, did you cite the creators?
    \answerNA{}
  \item Did you mention the license of the assets?
    \answerNA{}
  \item Did you include any new assets either in the supplemental material or as a URL?
    \answerNA{}
  \item Did you discuss whether and how consent was obtained from people whose data you're using/curating?
    \answerNA{}
  \item Did you discuss whether the data you are using/curating contains personally identifiable information or offensive content?
    \answerNA{}
\end{enumerate}

\item If you used crowdsourcing or conducted research with human subjects...
\begin{enumerate}
  \item Did you include the full text of instructions given to participants and screenshots, if applicable?
    \answerNA{}
  \item Did you describe any potential participant risks, with links to Institutional Review Board (IRB) approvals, if applicable?
    \answerNA{}
  \item Did you include the estimated hourly wage paid to participants and the total amount spent on participant compensation?
    \answerNA{}
\end{enumerate}

\end{enumerate}
}

\newpage
\appendix

\section{Proofs of Lemmas~\ref{lem:q-dissimilar} and~\ref{lem:gap-const}}
\subsection{Proof of Lemma~\ref{lem:q-dissimilar}}
\newtheorem*{L1}{Lemma~\ref{lem:q-dissimilar}}
\label{lem:q-dissimilar-repeated}
\begin{L1}
If $(\Mcal_p)_{p=1}^M$ is $\epsilon$-dissimilar, then for every $p, q \in [M]$, and $(s,a) \in \Scal \times \Acal$, 
\[
\abr{ Q_p^\star(s,a) - Q_q^\star(s,a) } \leq 2 H \epsilon,
\]
consequently,
$
\abr{ \gap_p(s,a) - \gap_q(s,a) } \leq 4H\epsilon.
$
\end{L1}

\begin{proof}
For the first claim, we prove a stronger statement by backward induction on $h$, namely, for every $p, q \in [M]$, every $h \in [1,H+1]$, and $(s,a) \in \Scal_h \times \Acal$,
\[
\abr{ Q_p^\star(s,a) - Q_q^\star(s,a) } \leq 2 (H - h + 1) \epsilon.
\]

\paragraph{Base case:} For $h = H+1$, we have $Q_p^\star(s,a) = 0$ for every $(s,a) \in \Scal_h \times \Acal$, and $p \in [M]$. It follows trivially that $\abr{Q_p^\star(s,a) - Q_q^\star(s,a)} = 0 \le 2(H-h+1)\epsilon$. 

\paragraph{Inductive case:} Suppose by inductive hypothesis that for some $h \in [1,H]$ and, for every $(s,a) \in \Scal_{h+1} \times \Acal$ and $p,q \in [M]$, $\abr{Q_p^\star(s,a) - Q_q^\star(s,a)} \le 2(H - h)\epsilon$. 

We first prove the following auxiliary statement: for every $s \in \Scal_{h+1}$ and $p,q \in [M]$,
\begin{align}
    \abs{V_p^\star(s) - V_q^\star(s)} \le  2(H - h)\epsilon.
    \label{eqn:inductive_V_eps}
\end{align}
Let $a_p = \argmax_{a \in \Acal} Q_p^\star(s,a)$ and $a_q = \argmax_{a \in \Acal} Q_q^\star(s,a)$. The above auxiliary statement can be easily proven by contradiction: without loss of generality, suppose that $V_p^\star(s) - V_q^\star(s) = Q_p^\star(s,a_p) -  Q_q^\star(s,a_q) >  2(H - h)\epsilon$. Since $Q_q^\star(s,a_p) \ge Q_p^\star(s,a_p) - 2(H-h)\epsilon$, it follows that $ Q_q^\star(s,a_p) > Q_q^\star(s,a_q)$, which contradicts the fact that $a_q = \argmax_{a \in \Acal} Q_q^\star(s,a)$.

We now return to the inductive proof, and we show that given the inductive hypothesis, 
for every $(s,a) \in \Scal_{h} \times \Acal$ and $p,q \in [M]$,
\begin{align*}
    & \abr{Q_p^\star(s,a) - Q_q^\star(s,a)} \\
    \le & \abr{R_p(s,a) - R_q(s,a)} + \abr{\sum_{s' \in \Scal_{h+1}} \sbr{\PP_p(s' \mid s,a) V_p^\star(s') - \PP_q(s' \mid s,a) V_q^\star(s')}} \\
    \le & \epsilon + \abr{\sum_{s' \in \Scal_{h+1}} \sbr{\PP_p(s' \mid s,a) V_p^\star(s') - \PP_q(s' \mid s,a) V_p^\star(s')}} + \abr{\sum_{s' \in \Scal_{h+1}} \PP_q(s' \mid s,a) \rbr{V_p^\star(s') - V_q^\star(s')}} \\
    \le & \epsilon + \|\PP_p(\cdot \mid s,a) - \PP_q(\cdot \mid s,a))\|_1 \rbr{\max_{s' \in \Scal_{h+1}} \abs{V_p^\star(s')}} + \|\PP_q(\cdot \mid s,a)\|_1 \rbr{ \max_{s' \in \Scal_{h+1}} \abs{V_p^\star(s') - V_q^\star(s')}} \\
    \le & \epsilon + \frac{\epsilon}{H}\cdot H + 2(H - h) \epsilon  \\
    = & 2(H-h+1)\epsilon,
\end{align*}
where the first inequality follows from Eq.~\eqref{eqn:bellman-optimality-q-p} and the triangle inequality; the second inequality follows from Definition~\ref{def:dissimilar} and the triangle inequality; the third inequality follows from H\"older's inequality; and the fourth inequality uses Definition~\ref{def:dissimilar} and Eq.~\eqref{eqn:inductive_V_eps}.

For the second claim, we note that from the first claim, we have for any $p,q,s$,
\[
\abr{ V_p^\star(s) - V_q^\star(s) } = \abr{ \max_{a \in \Acal} Q_p^\star(s,a) - \max_{a \in \Acal} Q_p^\star(s,a) } \leq 2 H \epsilon,
\]
therefore, for any $p,q,s,a$,
\[
\abr{ \gap_p(s,a) - \gap_q(s,a) } 
\leq 
\abr{ V_p^\star(s) - V_q^\star(s) }
+
\abr{ Q_p^\star(s,a) - Q_p^\star(s,a) }
\leq
4H\epsilon.
\qedhere
\]
\end{proof}

\subsection{Proof of Lemma~\ref{lem:gap-const}}

\newtheorem*{L2}{Lemma~\ref{lem:gap-const}}
\begin{L2}
For any $(s,a) \in \Ical_\epsilon$, we have that: (1) for all $p \in [M]$, $(s,a) \notin Z_{p,\opt}$, where we recall that $Z_{p,\opt} = \cbr{ (s,a): \gap_p(s,a) = 0}$ is the set of optimal state-action pairs with respect to $p$; (2) for all $p,q \in [M]$, $\gap_p(s,a) \geq \frac12\gap_q(s,a)$.
\end{L2}

\begin{proof}
For any $(s,a) \in \Ical_\epsilon$, there exists some $p_0$ such that $\gap_{p_0}(s,a) > 96H\epsilon$.
From Lemma~\ref{lem:q-dissimilar} we know that
$
\abr{ \gap_p(s,a) - \gap_{p_0}(s,a) } \leq 4H\epsilon.
$
Therefore, for all $p$,
\[
 \gap_p(s,a) 
 \geq \gap_{p_0}(s,a) - 4H\epsilon
 >
 92H\epsilon
 \geq 
 0.
\]
This proves the first item.

For the second item, for all $p,q \in [M]$, 
\[
\frac{\gap_p(s,a)}{\gap_q(s,a)}
= 
\frac{\gap_q(s,a) - 4H\epsilon}{\gap_q(s,a)}
\geq 
1 - \frac{ 4H\epsilon}{\gap_q(s,a)}
\geq 
1 - \frac{4}{92} \geq \frac12.
\qedhere
\]
\end{proof}

\section{Additional Definitions Used in the Proofs}
\label{sec:additional-notations}

In this section, we define a few useful notations that will be used in our proofs. 
For state-action pair $(s,a) \in \Scal \times \Acal$, player $p \in [M]$, episode $k \in [K]$:
\begin{enumerate}

\item Define $n^k(s,a)$ (resp. $n_p^k(s,a)$, $\hat{\PP}^k$, $\hat{\PP}_p^k$, $\hat{R}^k$, $\hat{R}_p^k$) to be the value of $n(s,a)$ (resp. $n_p(s,a)$, $\hat{\PP}$, $\hat{\PP}_p$, $\hat{R}$, $\hat{R}_p$) at the {\em beginning} of episode $k$ of \mteuler.

\item Denote by $\qub_p^k$ (resp. $\qlb_p^k, \vub_p^k, \vlb_p^k$, $\bindp_p^k(s,a)$, $\baggp_p^k(s,a)$) the values of $\qub_p$ (resp. $\qlb_p, \vub_p, \vlb_p$, $\bindp_p(s,a)$, $\baggp_p(s,a)$) right after \mteuler finishes its optimistic value iteration (line~\ref{line:vi-end}) at episode $k$. 

\item Define the {\em surplus}~\cite{simchowitz2019non} (also known as the Bellman error) of $(s,a)$ at episode $k$ and player $p$ as:
\[
E_p^k(s,a) := \overline{Q}_p^k(s,a) - R_p(s,a) - (\PP_p \overline{V}_p^k) (s,a).
\]

\item Define $w_p^k(s,a) := \frac{n_p^k(s,a)}{n^k(s,a)}$ be the proportion of player $p$ on $(s,a)$ at the beginning of episode $k$; this induces $(s,a)$'s {\em mixture expected reward}:
\[ \bar{R}^k(s,a) := \sum_{q=1}^M w_q^k(s,a) R_q(s,a), 
\] 
and {\em mixture transition probability}:
\[ 
\bar{\PP}^k(\cdot \mid s,a) := \sum_{q=1}^M w_q^k(s,a) \PP_q(\cdot \mid s,a).
\]

\item Define $\rho_p^k(s,a) := \PP( (s_h, a_h) = (s, a) \mid \pi^k(p), \Mcal_p)$ to be the occupancy measure of $\pi^k(p)$ over $\Mcal_p$ on $(s,a)$, where $h \in [H]$ is the layer $s$ is in (so that $s \in \Scal_h$). 
It can be seen that 
$\rho_p^k$, when restricted to $\Scal_h \times \Acal$, is a probability distribution on this set.

Define $\rho^k(s,a) := \sum_{p=1}^M \rho_p^k(s,a)$;
it can be seen that $\rho^k(s,a) \in [0, M]$. 
Define $\bar{n}_p^k(s,a) := \sum_{j=1}^k \rho_p^j(s,a)$, and $\bar{n}^k(s,a) := \sum_{j=1}^k \rho^j(s,a)$.\footnote{These are the cumulative occupancy measures up to episode $k$, inclusively; this is in contrast with the definition of $n^k(s,a)$ and $n_p^k(s,a)$, which do not count the trajectories observed at episode $k$.}

\item Define $N^{k}(s) := \sum_{a \in \Acal} n^{k}(s,a)$ and $N_p^{k}(s) := \sum_{a \in \Acal} n_p^{k}(s,a)$ to be the total number of encounters of state $s$ by all players, and by player $p$ only, respectively, at the beginning of episode $k$.

\item Define $N_1 \eqsim M \ln(\frac{S A K}{\delta})$, and $N_2 \eqsim \ln(\frac{M S A K}{\delta})$; define $\tau(s,a) := \min\cbr{k: \bar{n}^k(s,a) \geq N_1}$, and $\tau_p(s,a) := \min\cbr{k: \bar{n}_p^k(s,a) \geq N_2}$. With high probability, so long as $k \geq \tau(s,a)$ (resp. $k \geq \tau_p(s,a)$), $n^k(s,a)$ and $\bar{n}^k(s,a)$ (resp. $n_p^k(s,a)$ and $\bar{n}_p^k(s,a)$) are within a constant factor of each other; see Lemma~\ref{lem:esamp}.

\item Define
$
\check{\gap}_p(s,a)
:=
\frac{\gap_p(s,a)}{4H} \vee \frac{\gap_{p,\min}}{4H}
$; recall the definitions of $\gap_p(s,a)$ and  $\gap_{p,\min}$ in Section~\ref{sec:prelims}.
\end{enumerate}

Define
$
\Reg(K, p) := \sum_{k=1}^K \del{ V_{0,p}^{\star} - V_{0,p}^{\pi^k(p)} } 
$ as player $p$'s contribution to the collective regret; in this notation,
$\Reg(K) = \sum_{p=1}^M \Reg(K, p)
$.

Define the clipping function $\clip(\alpha, \Delta) := \alpha \one(\alpha \geq \Delta)$. 

We also adopt the following conventions in our proofs:
\begin{enumerate}

\item As $\epsilon$-dissimilarity with $\epsilon > 2H$ does not impose any constraints on $\cbr{\Mcal_p}_{p=1}^M$ (recall Definition~\ref{def:dissimilar}), throughout the proof, we only focus on the regime that $\epsilon \leq 2H$.

\item We will use $\pi^k(p)$ and $\pi_p^k$ interchangeably. To avoid notational clutter, we will also sometimes slightly abuse notation and use $V_{p,h}^{\pi^k}$, $V_{p}^{\pi^k}$ to denote $V_{p,h}^{\pi^k(p)}$, $V_{p}^{\pi^k(p)}$, respectively.  

\end{enumerate}

\section{Proof of the Upper Bounds}
\label{sec:proof-upper}
  
  \paragraph{Proof outline.} This section establishes the regret guarantees of \mteuler (Theorems~\ref{thm:gap_indept_upper} and~\ref{thm:gap_dept_upper}).
  The proof follows a similar outline as \strongeuler's analysis~\cite{simchowitz2019non}, with important modifications tailored to the multitask setting. The proof has the following structure:
  \begin{enumerate}
      \item Subsection~\ref{subsec:clean} defines a ``clean'' event $E$ that we show happens with probability $1-\delta$. When $E$ happens, the observed samples are representative enough so that standard concentration inequalities apply. This will serve as the basis of our subsequent arguments. 
      
      \item Subsection~\ref{subsec:value-bounds-valid} shows that when $E$ happens, the value function upper and lower bounds are valid; furthermore,  \mteuler satisfies strong optimism~\cite{simchowitz2019non}, in that all players' surpluses are always nonnegative for all state-action pairs at all time steps.
      
      \item Subsection~\ref{subsec:surplus-bound} establishes a distribution-dependent upper bound on \mteuler's surpluses when $E$ happens, which is key to our regret theorems. In comparison with  \strongeuler~\cite{simchowitz2019non} in the single task setting, \mteuler exploits inter-task similarity, so that its surpluses on state-action pair $(s,a)$ for player $p$ are further controlled by a new term that depends on the dissimilarity parameter $\epsilon$, along with $n^k(s,a)$, the total visitation counts of $(s,a)$ {\em by all players}.
      
      \item Subsection~\ref{subsec:conclude-reg-bounds} uses the strong optimism property and the surplus bounds established in the previous two subsections to conclude our final gap-independent and gap-dependent regret guarantees, via the clipping lemma of~\cite{simchowitz2019non} (see also Lemma~\ref{lem:clipping-main}).
      
      \item Finally, Subsection~\ref{subsec:misc-lemmas} collects miscellaneous technical lemmas used in the proofs.
  \end{enumerate}

  \subsection{A clean event}
  \label{subsec:clean}
  
  Below we define a ``clean'' event $E$ in which all concentration bounds used in the analysis hold, which we will show happens with high probability. 
  Specifically, we will define $E = E_{\indiv} \cap E_{\aggre} \cap E_{\samp}$, where $E_{\indiv}, E_{\aggre}, E_{\samp}$ are defined respectively below.
  
  In subsequent definitions of events, we will abbreviate $\forall k \in [K], h \in [H], p \in [M], s \in \Scal_h, a \in \Acal, s' \in \Scal_{h+1}$ as $\forall k,h,p,s,a,s'$. Also,  recall that $L(n) \eqsim \ln (\frac{MSAn}{\delta})$.
  
  Define event $E_{\indiv}$ as: 
  \begin{align}
    & E_{\indiv} = E_{\indiv, \rw} \cap E_{\indiv, \val} \cap E_{\indiv, \prob} \cap E_{\indiv, \varia}, \\
    & E_{\indiv, \rw} = \cbr{\forall k,h,p,s,a \centerdot \abs{ \hat{R}_p^k(s,a) - R_p(s,a) }
  \leq 
  \sqrt{\frac{L(n^k(s,a))}{2 n^k(s,a)}} }, \\
    & E_{\indiv, \val} = \cbr{\forall k,h,p,s,a \centerdot \abs{ 
    (\hat{\PP}_p^k V_p^\star - \PP_p V_p^\star)(s,a) }
    \leq 4\sqrt{ \frac{ \VV_{\PP_p(\cdot \mid s,a)}[V_p^\star] L(n_p^k(s,a))}{n_p^k(s,a)} } + \frac{2 H L(n_p^k(s,a))}{n_p^k(s,a)} },
    \label{eqn:ind-optval}
    \\
    & E_{\indiv, \prob} = \cbr{\forall k,h,p,s,a,s' \centerdot \abr{ (\hat{\PP}_p^k - \PP_p)  (s' \mid s, a) }
  \leq 
  4 \sqrt{\frac{L(n_p^k(s,a)) \cdot \PP_p(s' \mid s, a)}{n_p^k(s,a)}} + \frac{2 L(n_p^k(s,a))}{n_p^k(s,a)}},
      \label{eqn:ind-transition}
      \\
    & E_{\indiv, \varia} =  \Bigg\{ \forall k,h,p,s,a \centerdot \abr{ \frac{1}{n_p^k(s,a)} \sum_{i=1}^{n_p^k(s,a)} 
    (V_p^\star((s_i^p)') - (\PP_p V_p^\star) (s,a) )^2 -  
    \VV_{\PP_p(\cdot \mid s,a)}[V_p^\star] },
    \label{eqn:ind-emp-variance}
    \\
    & \qquad \qquad \qquad \qquad \leq 4 \sqrt{ \frac{ H^2  \VV_{\PP_p(\cdot \mid s,a)}[V_p^\star] L(n_p^k(s,a))}{n_p^k(s,a)}} + \frac{2H^2 L(n_p^k(s,a))}{n_p^k(s,a)} \; \nonumber
  \Bigg\}, 
  \end{align}
  where in Equation~\eqref{eqn:ind-emp-variance}, $(s_i^p)'$ denotes the next state player $p$ transitions to, for the $i$-th episode it experiences $(s,a)$. $E_{\indiv}$ captures the concentration behavior of each player's individual model estimates. %
  
\begin{lemma}
$\PP(E_{\indiv}) \geq 1 - \frac \delta 3$. 
\label{lem:eind}
\end{lemma}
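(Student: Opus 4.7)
My plan is to establish each of the four sub-events $E_{\indiv, \rw}, E_{\indiv, \val}, E_{\indiv, \prob}, E_{\indiv, \varia}$ individually with probability at least $1 - \delta/12$, and then conclude by a union bound over the four. The key observation enabling the concentration arguments is the following i.i.d.\ structure within each player's data. Fix any player $p$ and state-action pair $(s,a) \in \Scal_h \times \Acal$, and consider the (random) subsequence of episodes $l_1 < l_2 < \cdots$ at which $(s_{h,p}^{l_i}, a_{h,p}^{l_i}) = (s,a)$. The collected rewards $r_{h,p}^{l_i}$ and next states $s_{h+1,p}^{l_i}$ are i.i.d.\ samples from $r_p(\cdot \mid s,a)$ and $\PP_p(\cdot \mid s,a)$ respectively, irrespective of the (history-dependent) policies $\pi^1(p), \pi^2(p), \ldots$ employed by player $p$: the interaction history up to but not including the $i$-th visit determines whether a visit occurs at a given episode but does not affect the reward/transition kernel invoked at that visit. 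Equivalently, one can couple the interaction with an auxiliary i.i.d.\ ``tape'' $(\tilde r_i, \tilde s'_i)_{i \geq 1}$ drawn from $r_p(\cdot \mid s,a) \otimes \PP_p(\cdot \mid s,a)$, and view $n_p^k(s,a) \in \cbr{0, 1, \ldots, K}$ as a bounded stopping time on that tape.

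Given this reduction, I would apply an appropriate scalar concentration inequality to the tape for each fixed prefix length $n \in [K]$, then union bound over $n$, over $(h, p, s, a)$ (at most $MSA$ tuples, since $h$ is determined by $s$ under the layered assumption), and, for $E_{\indiv, \prob}$, additionally over $s' \in \Scal_{h+1}$. Specifically: for $E_{\indiv, \rw}$, apply Hoeffding's inequality to $\tilde r_i \in [0,1]$; for $E_{\indiv, \val}$, apply Bernstein's inequality to $V_p^\star(\tilde s'_i) \in [0,H]$ using the true variance $\VV_{\PP_p(\cdot \mid s,a)}[V_p^\star]$; for $E_{\indiv, \prob}$, apply Bernstein's inequality to the indicator $\one\rbr{\tilde s'_i = s'}$, whose variance is at most $\PP_p(s' \mid s,a)$; and for $E_{\indiv, \varia}$, apply Bernstein's inequality to the squared centered variable $\rbr{V_p^\star(\tilde s'_i) - (\PP_p V_p^\star)(s,a)}^2 \in [0, H^2]$, whose variance is at most $H^2 \cdot \VV_{\PP_p(\cdot \mid s,a)}[V_p^\star]$.

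The main obstacle is careful bookkeeping of the log factors under the union bound and the handling of the random stopping time $n_p^k(s,a)$. The total number of configurations is at most $MS^2 AK$ (with the extra $S$ factor only for $E_{\indiv, \prob}$), and the choice $L(n) \eqsim \ln(MSAn/\delta)$ is precisely tuned so that each per-$(k, h, p, s, a, s')$ inequality fails with probability at most $\delta/\polylog$ of the relevant size, yielding sub-event failure probability at most $\delta/12$ after the union bound; equivalently, one can avoid the union over $n$ via a peeling argument on dyadic scales of $n$, giving the same dependence on $L(n)$. A minor subtlety is that $E_{\indiv, \varia}$ is itself the step converting empirical variance to true variance, so its RHS is naturally stated in terms of $\VV_{\PP_p(\cdot \mid s,a)}[V_p^\star]$, matching the event's definition. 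Finally, a union bound over the four sub-events yields $\PP(E_{\indiv}) \geq 1 - \delta/3$.
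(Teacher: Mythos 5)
Your proposal is correct, and its overall skeleton (prove each of the four sub-events at level $1-\delta/12$, then union bound) matches the paper; but the core concentration step is carried out by a genuinely different route. The paper treats, for each fixed $(h,p,s,a)$, the sequence $X_k = \one\rbr{(s_{h,p}^k,a_{h,p}^k)=(s,a)}\cdot(\cdots)$ as a martingale difference sequence adapted to the interaction filtration, applies Freedman's inequality together with the optional sampling theorem at the stopping times $k_j$ (the $j$-th visit), and then uses the doubling argument of Bartlett et al.\ to obtain the anytime, variance-dependent bound, finishing with a union bound over $j$. You instead invoke the i.i.d.\ ``tape'' (stack-of-rewards) coupling: the rewards and next states collected at successive visits of $(s,a)$ by player $p$ are i.i.d.\ draws from $r_p(\cdot\mid s,a)\otimes\PP_p(\cdot\mid s,a)$, so ordinary Hoeffding/Bernstein bounds for each fixed prefix length $n\le K$ plus a union bound over $n$ (or dyadic peeling) suffice; the random count $n_p^k(s,a)$ is then handled automatically because the bound holds simultaneously for all $n$, so no stopping-time or optional-sampling machinery is needed (your aside calling $n_p^k(s,a)$ a stopping time on the tape is inessential and slightly imprecise, but harmless for this reason). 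Your route is more elementary and self-contained for $E_{\indiv}$; the paper's martingale route has the advantage that it transfers verbatim to the companion event $E_{\aggre}$ (Lemma~\ref{lem:eagg}), where the aggregated samples come from heterogeneous kernels across players and the i.i.d.\ coupling is unavailable, which is presumably why the authors use Freedman uniformly in both proofs. Two small bookkeeping points to note if you write this out: at $n_p^k(s,a)=0$ the events must be interpreted as holding vacuously (or via the cap in the bonus), and the displayed bound in $E_{\indiv,\rw}$ should be read with $n_p^k(s,a)$ (as your argument produces), the $n^k(s,a)$ in the paper's display being a typo.
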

\begin{proof}
The proof follows a similar reasoning as the proof of e.g.,~\cite[Proposition F.9]{simchowitz2019non} using Freedman's Inequality. We would like to show that each of $E_{\indiv, \rw}, E_{\indiv, \val}, E_{\indiv, \prob}, E_{\indiv, \varia}$ happens with probability $1-\frac \delta {12}$, which would give the lemma statement by a union bound.
For brevity, we only show that $\PP(E_{\indiv, \varia}) \geq 1-\frac{\delta}{12}$, and the other probability statements follow from a similar reasoning.

Fix $h \in [H]$, $(s,a) \in \Scal_h \times \Acal$, and $p \in [M]$. 
We will show
\begin{align}
\begin{split}
\PP \vast( \exists k \in [K] \centerdot & \abr{ \frac{1}{n_p^k(s,a)}  \sum_{i=1}^{n_p^k(s,a)} 
    (V_p^\star((s_i^p)') - (\PP_p V_p^\star) (s,a) )^2 -  
    \VV_{\PP_p(\cdot \mid s,a)}[V_p^\star] }
    \\
    & \geq 4 \sqrt{ \frac{ H^2  \VV_{\PP_p(\cdot \mid s,a)}[V_p^\star] L(n_p^k(s,a))}{n_p^k(s,a)}} + \frac{2 H^2 L(n_p^k(s,a))}{n_p^k(s,a)} \; 
  \vast) \leq \frac{\delta}{12 M S A}. 
\end{split}
\label{eqn:ind-var}
\end{align}

For every $j \in \NN_+$, define stopping time $k_j$ as  the $j$-th episode when $(s,a)$ is experienced by player $p$, if such episode exists; otherwise, $k_j$ is defined as $\infty$.
it suffices to show that
\begin{align}
\begin{split}
\PP \vast( \exists j \in \NN_+ \centerdot \;\; & k_j < \infty \wedge \abr{ \frac{1}{j}  \sum_{i=1}^{j} 
    (V_p^\star((s_i^p)') - (\PP_p V_p^\star) (s,a) )^2 -  
    \VV_{\PP_p(\cdot \mid s,a)}[V_p^\star] }
    \\
    & \qquad \qquad \geq 4 \sqrt{ \frac{ H^2  \VV_{\PP_p(\cdot \mid s,a)}[V_p^\star] L(j)}{j}} + \frac{2 H^2 L(j)}{j} \; 
  \vast) \leq \frac{\delta}{12 M S A}. 
\end{split}
\label{eqn:ind-var-2}
\end{align}


For every $k \in \NN_+$, Define $\Fcal_{k-1}$ as the $\sigma$-field generated by all players' observations up to episode $k-1$, along with all players' observations at episode $k$ up to them taking action at step $h$. Define 
\[ 
X_k := I( (s_{h,p}^k, a_{h,p}^k) = (s, a) ) \cdot \rbr{ (V_p^\star(s_{h+1,p}^k) - (\PP_p V_p^\star) (s,a) )^2 -  
\VV_{\PP_p(\cdot \mid s,a)}[V_p^\star] },
\]
it can be seen that $X_k$ is $\Fcal_k$-measurable, and 
$\EE\sbr{X_k \mid \Fcal_{k-1}} = 0$, i.e. $\cbr{X_k}_{k=1}^\infty$ is a martingale difference sequence adapted to $\cbr{\Fcal_k}_{k=1}^\infty$. 
In addition, 
\begin{align*}
\EE\sbr{ X_k^2 \mid \Fcal_{k-1} }
\leq &
\EE\sbr{  I( (s_{h,p}^k, a_{h,p}^k) = (s, a) ) (V_p^\star(s_{h+1,p}^k) - (\PP_p V_p^\star) (s,a) )^4 \mid \Fcal_{k-1} } \\
\leq &
H^2 \cdot I( (s_{h,p}^k, a_{h,p}^k) =  (s, a) ) \cdot \VV_{\PP_p(\cdot \mid s,a)}[V_p^\star]
=: U_k;
\end{align*}

Note that $\abs{X_k / H^2} \leq 1$; 
by~\cite[Corollary 1.4]{freedman1975tail} applied to $\cbr{X_k / H^2}_{k=1}^\infty$, for any $\lambda \geq 0$, \[ \cbr{ Y_k(\lambda) = \exp\del{\lambda (\sum_{i=1}^k \frac{X_i}{H^2} ) - \del{ (e^\lambda - \lambda - 1) \sum_{i=1}^k \frac{U_i}{H^4}  } } }_{k=0}^\infty
\]
is 
a nonnegative supermartingale. Applying optional sampling theorem on $Y_k(\lambda)$ and  stopping time $k_j$, we get $\EE\sbr{ Y_{k_j}(\lambda) I(k_j < \infty) } \leq \EE\sbr{ Y_0(\lambda) } = 1$. 
As a result, for any fixed thresholds $b, v \geq 0$ \cite[see][Theorem 1.6]{freedman1975tail}, 
\begin{align*}
& \PP\del{ \sum_{i=1}^{k_j} X_i \geq b \wedge \sum_{i=1}^{k_j} U_i \leq v \wedge k_j < \infty } \\
= & 
\PP\del{ \sum_{i=1}^{k_j} \frac{X_i}{H^2} \geq \frac{b}{H^2} \wedge \sum_{i=1}^{k_j} \frac{ U_i}{H^4} \leq \frac{v}{H^4} \wedge k_j < \infty } \\
\leq &
\exp\del{ -\frac{(b/H^2)^2}{2 (v/H^4) + 2 (b/H^2)} }
=
\exp\del{ -\frac{b^2}{2v + 2 H^2 b} }.
\end{align*}

Note that if $k_j < \infty$,  $\sum_{i=1}^{k_j} X_i = \sum_{i=1}^{j} (V_p^\star((s_i^p)') - (\PP_p V_p^\star) (s,a) )^2 -  
    \VV_{\PP_p(\cdot \mid s,a)}[V_p^\star]$, 
and $\sum_{i=1}^{k_j} U_i = j \cdot H^2 \cdot \VV_{\PP_p(\cdot \mid s,a)}[V_p^\star]$, and the above inequality can be rewritten as: for any $b, v \geq 0$,
\begin{align*}
& \PP\del{ \sum_{i=1}^{j} (V_p^\star((s_i^p)') - (\PP_p V_p^\star) (s,a) )^2 -  
    \VV_{\PP_p(\cdot \mid s,a)}[V_p^\star] \geq b \wedge j \cdot H^2 \cdot \VV_{\PP_p(\cdot \mid s,a)}[V_p^\star] \leq v \wedge k_j < \infty } \\
\leq &
\exp\del{ -\frac{b^2}{2v + 2 H^2 b} }.
\end{align*}

\hide{
Define $\Gcal_j$ as the stopped $\sigma$-algebra generated by all observations up to stopping time $k_j$. We have that $\cbr{\Gcal_j}_{j=0}^{\infty}$ is a filtration.
It can be seen that the sequence $\cbr{ X_j := \rbr{(V_p^\star((s_j^p)') - (\PP_p V_p^\star) (s,a) )^2 -  
\VV_{\PP_p(\cdot \mid s,a)}[V_p^\star]} \one(k_j < \infty)}_{j=1}^\infty$ is a martingale difference sequence adapted to $\cbr{\Gcal_j}_{j=0}^{\infty}$;
in addition, for every $j$, $\abs{ X_j} \leq H^2$, and $\EE\sbr{ X_j^2 \mid \Gcal_{j-1} } \leq \EE\sbr{ (V_p^\star((s_j^p)') - (\PP_p V_p^\star) (s,a) )^4 \one(k_j < \infty) \mid \Gcal_{j-1} } \leq H^2 \VV_{\PP_p(\cdot \mid s,a)}[V_p^\star]$. This implies that for any $\lambda \geq 0$, \[ \cbr{ Y_j(\lambda) = \exp\del{\lambda \frac{1}{H^2} (\sum_{i=1}^j X_i) - \del{ (e^\lambda - \lambda - 1) \frac{j}{H^2} \VV_{\PP_p(\cdot \mid s,a)}[V_p^\star] } } }_{j=0}^\infty
\]
is 
a nonnegative supermartingale~\cite{freedman1975tail}, and $\EE \sbr{ Y_{j}(\lambda) \one(k_j < \infty) } \leq \EE \sbr{ Y_{j}(\lambda) } \leq \EE \sbr{Y_0(\lambda)} = 1$. 
As a result, for any fixed thresholds $a, v \geq 0$ \cite[see][Theorem 1.6]{freedman1975tail}, 
\[
\PP\del{ \sum_{i=1}^j X_i \geq a \wedge \sum_{i=1}^j H^2 \VV_{\PP_p(\cdot \mid s,a)}[V_p^\star] \leq v \wedge k_j < \infty }
\leq 
\exp\del{ -\frac{a^2}{2 v + 2 a H^2 / 3 } }
\]
}

Now, by the doubling argument of~\cite[Lemma 2]{bartlett2008high} (observe that $j \cdot H^2 \cdot \VV_{\PP_p(\cdot \mid s,a)}[V_p^\star] \in [0, H^4 j]$), we have that for all $j \in \NN_+$:
\begin{align*}
\PP \vast(  & k_j < \infty \wedge \abr{ \frac{1}{j} \sum_{i=1}^{j} 
    (V_p^\star((s_i^p)') - (\PP_p V_p^\star) (s,a) )^2 -  
    \VV_{\PP_p(\cdot \mid s,a)}[V_p^\star] }
    \\
    & \geq 4 \sqrt{ \frac{ H^2  \VV_{\PP_p(\cdot \mid s,a)}[V_p^\star] L(j)}{j}} + \frac{2H^2 L(j)}{j} \; 
  \vast) \leq \ln(4j) \cdot \frac{\delta}{48 j^2 M S A}.
\end{align*}
A union bound over all $j \in \NN_+$ yields Equation~\eqref{eqn:ind-var-2}.
\end{proof}

Define event $E_{\aggre}$ as: 
\begin{align}
    & E_{\aggre} =  E_{\aggre, \rw} \cap E_{\aggre, \val} \cap E_{\aggre, \prob} \cap E_{\aggre, \varia}, \\
    & E_{\aggre, \rw} = \cbr{ \forall k,h,s,a \centerdot \abs{ \hat{R}^k(s,a) - \bar{R}^k(s,a) }
  \leq 
  \sqrt{\frac{L(n^k(s,a))}{2 n^k(s,a)}} },
     \\
    & E_{\aggre, \val} =  \Bigg\{ \forall k,h,p,s,a \centerdot  \abs{ (\hat{\PP}^k V_p^\star - \bar{\PP}^k V_p^\star)  (s,a)  }, \\
    & \qquad \qquad \qquad \leq 4 \sqrt{  \frac{ \del{ \sum_{q=1}^M w_q^k(s,a) \VV_{\PP_q(\cdot \mid s,a)}[V_p^\star] }
    L(n^k(s,a)) }{n^k(s,a)} }  + \frac{2H L(n^k(s,a))}{n^k(s,a)} \Bigg\},
    \label{eqn:agg-optval}
    \\
    & E_{\aggre, \prob} = \cbr{ \forall k,h,s,a,s' \centerdot  \abr{ (\hat{\PP}^k - \bar{\PP}^k)(s' \mid s, a) }
  \leq 
  4 \sqrt{\frac{\bar{\PP}^k(s' \mid s, a) \cdot L(n^k(s,a))}{n^k(s,a)}} + \frac{2L(n^k(s,a))}{n^k(s,a)} },
  \label{eqn:agg-transition}
  \\  
     & E_{\aggre, \varia} = \Bigg\{ \forall k,h,p,s,a \centerdot  \abr{ \frac{1}{n^k(s,a)} \sum_{i=1}^{n^k(s,a)} 
    (V_p^\star(s_i') - (\PP_{p_i} V_p^\star) (s,a) )^2 -  
    \sum_{q=1}^M w_q^k(s,a) \VV_{\PP_q(\cdot \mid s,a)}[V_p^\star] }, \label{eqn:agg-emp-variance} \\
    & \qquad \qquad \qquad \leq 4 \sqrt{ \frac{ H^2  \del{ \sum_{q=1}^M w_q^k(s,a) \VV_{\PP_q(\cdot \mid s,a) }[V_p^\star]} L(n^k(s,a))}{n^k(s,a)}} + \frac{2H^2 L(n^k(s,a))}{n^k(s,a)}  \nonumber
    \Bigg\},
\end{align}
 where in Equation~\eqref{eqn:agg-emp-variance}, $s_i'$ and $p_i$ denote the next state and the player index for the $i$-th time some player experiences $(s,a)$, respectively, where within an episode, we order the experiences of the players by their indices from $1$ to $M$. %
 $E_{\aggre}$ captures the concentration behavior of the aggregate model estimates. 

\begin{lemma}
$\PP(E_{\aggre}) \geq 1 - \frac \delta 3$. 
\label{lem:eagg}
\end{lemma}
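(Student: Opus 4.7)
The plan is to mirror the proof of Lemma~\ref{lem:eind}, now working with a single martingale indexed by the joint ordering of visits to $(s,a)$ across all $M$ players, as laid down in the definition of $E_{\aggre,\varia}$. For a fixed $(h,s,a)$ and $i \in \NN_+$, let $p_i$, $s_i'$, and $r_i$ denote respectively the player identity, the observed next state, and the observed reward at the $i$-th visit to $(s,a)$ at layer $h$ by any player (with $p_i = \infty$ if fewer than $i$ such visits ever occur). Letting $\mathcal{G}_{i-1}$ be the $\sigma$-field generated by all players' history up to and including the choice of $p_i$, we have $r_i \mid \mathcal{G}_{i-1} \sim r_{p_i}(\cdot \mid s,a)$ and $s_i' \mid \mathcal{G}_{i-1} \sim \PP_{p_i}(\cdot \mid s,a)$ on the event $\{i \leq n^{K+1}(s,a)\}$, giving the martingale structure. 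The key bookkeeping identities, immediate from $w_q^k(s,a) = n_q^k(s,a)/n^k(s,a)$, are
\[
\bar{R}^k(s,a) = \frac{1}{n^k(s,a)} \sum_{i=1}^{n^k(s,a)} R_{p_i}(s,a), \quad \bar{\PP}^k(\cdot \mid s,a) = \frac{1}{n^k(s,a)} \sum_{i=1}^{n^k(s,a)} \PP_{p_i}(\cdot \mid s,a),
\]
and similarly $\sum_{q=1}^M w_q^k(s,a)\, \VV_{\PP_q(\cdot \mid s,a)}[V_p^\star] = \frac{1}{n^k(s,a)} \sum_{i=1}^{n^k(s,a)} \VV_{\PP_{p_i}(\cdot \mid s,a)}[V_p^\star]$.

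With these identifications, each of the four deviations becomes a zero-mean sum of bounded martingale increments along $(\mathcal{G}_i)$: reward increments $r_i - R_{p_i}(s,a) \in [-1,1]$ for $E_{\aggre,\rw}$; indicator increments $\one(s_i' = s') - \PP_{p_i}(s' \mid s,a)$ with conditional variance at most $\PP_{p_i}(s' \mid s,a)$, whose running average is $\bar{\PP}^k(s' \mid s,a)$, for $E_{\aggre,\prob}$; value increments $V_p^\star(s_i') - (\PP_{p_i} V_p^\star)(s,a) \in [-H,H]$, whose conditional variance averaged over $i \leq n^k(s,a)$ is exactly the mixture quantity appearing in the displayed bound, for $E_{\aggre,\val}$; and the square of the latter, bounded in $[0,H^2]$ with conditional variance at most $H^2$ times the mixture variance, for $E_{\aggre,\varia}$.

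The rest is a direct reprise of the Lemma~\ref{lem:eind} argument: apply Freedman's inequality to each of these martingales, introduce stopping times $k_j$ at the $j$-th visit of $(s,a)$ by any player together with the Bartlett--Mendelson-style doubling device over $j$ to convert a fixed-$j$ Bernstein bound into a tail bound that holds uniformly over $k$ (at the cost of an extra $\ln(j)$ absorbed into $L(n)$), and union-bound over $(h,s,a)$ for $E_{\aggre,\rw}$ and $E_{\aggre,\prob}$, and additionally over $p \in [M]$ for $E_{\aggre,\val}$ and $E_{\aggre,\varia}$, allocating $\delta/12$ to each of the four sub-events. The only step that differs substantively from the single-player case is the bookkeeping that matches the $(\mathcal{G}_i)$-conditional means and variances to the $w_q^k$-weighted mixture quantities $\bar{R}^k$, $\bar{\PP}^k$, and $\sum_q w_q^k \VV_{\PP_q}[V_p^\star]$; I expect this to be the main (and essentially only) obstacle, after which Freedman's inequality and the doubling trick carry through verbatim.
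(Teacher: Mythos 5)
Your proposal is correct and matches the paper's own argument: the paper likewise orders the players' experiences of $(s,a)$ jointly (via lexicographically ordered micro-episodes), uses the identity $\frac{1}{n^k(s,a)}\sum_{i=1}^{n^k(s,a)} \VV_{\PP_{p_i}(\cdot\mid s,a)}[V_p^\star] = \sum_q w_q^k(s,a)\VV_{\PP_q(\cdot\mid s,a)}[V_p^\star]$ (and its analogues for $\bar R^k$, $\bar\PP^k$), and then applies Freedman's inequality with stopping times at the $j$-th joint visit, the doubling argument of Lemma~2 of~\cite{bartlett2008high}, and a union bound allocating $\delta/12$ to each of the four sub-events. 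The only cosmetic difference is that the paper defines the martingale on the full micro-episode index and invokes optional sampling at the stopping times rather than working directly with the stopped filtration, which is an equivalent formalization.
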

\begin{proof}
The proof follows a similar reasoning as the proof of e.g., \cite[Proposition F.9]{simchowitz2019non} using Freedman's Inequality. 
We would like to show that each of $E_{\aggre, \rw}, E_{\aggre, \val}, E_{\aggre, \prob}, E_{\aggre, \varia}$ happen with probability $1-\frac \delta {12}$, which would give the lemma statement by a union bound.
For brevity, we show that $\PP(E_{\aggre, \varia}) \geq 1-\frac{\delta}{12}$, and the other probability statements follow from a similar reasoning.

Fix $h \in [H]$, $(s,a) \in \Scal_h \times \Acal$ and $p \in [M]$.
It suffices to show that
\begin{align}
\begin{split}
\PP \vast( \exists k \in [K] \centerdot & \abr{ \frac{1}{n^k(s,a)} \sum_{i=1}^{n^k(s,a)} 
    \del{
    (V_p^\star(s_i') - (\PP_{p_i} V_p^\star) (s,a) )^2 -  
    \VV_{\PP_{p_i}(\cdot \mid s,a)}[V_p^\star] } } 
    \\
    \geq 4 & \sqrt{ \frac{ H^2  \del{ \sum_{i=1}^{n^k(s,a)} \VV_{\PP_{p_i}(\cdot \mid s,a) }[V_p^\star]} L(n^k(s,a))}{(n^k(s,a))^2}} + \frac{2 H^2 L(n^k(s,a))}{n^k(s,a)} \vast) \leq \frac{\delta}{12 M S A},
\end{split}
\label{eqn:agg-var}
\end{align}
because $\frac{1}{n^k(s,a)}\sum_{i=1}^{n^k(s,a)} \VV_{\PP_{p_i}(\cdot \mid s,a) }[V_p^\star] =  \sum_{q=1}^M w_q^k(s,a) \VV_{\PP_{q}(\cdot \mid s,a) }[V_p^\star]$.

Define a {\em micro-episode} as an (episode, player) pair; we order them lexicographically, i.e. $(1,1) \preceq \ldots \preceq (1,M)  \preceq \ldots \preceq (K,1) \preceq \ldots \preceq (K,M)$. 
For micro-episode $(k,p)$, denote its {\em index} as $l = (k-1)M + p$; it can be easily seen that the ordering of micro-episodes' indices is consistent with their lexical ordering. 
For every $j \in \NN_+$, define stopping time $l_j \in \NN_+$ as follows: it is the index of the $j$-th micro-episode when $(s,a)$ is experienced by some player, if such micro-episode exists; and $l_j$ is defined to be $\infty$ otherwise. With this notation, it suffices to show:
\begin{align}
\begin{split}
\PP \vast( \exists j \in \NN_+ \centerdot \;\; & l_j < \infty \wedge \abr{ \frac{1}{j} \sum_{i=1}^{j} 
    \del{
    (V_p^\star(s_i') - (\PP_{p_i} V_p^\star) (s,a) )^2 -  
    \VV_{\PP_{p_i}(\cdot \mid s,a)}[V_p^\star] } } 
    \\
    &  \geq 4 \sqrt{ \frac{ H^2  \del{ \sum_{i=1}^{j} \VV_{\PP_{p_i}(\cdot \mid s,a) }[V_p^\star]} L(j)}{j^2}} + \frac{2 H^2 L(j)}{j} \vast) \leq \frac{\delta}{12 M S A}, 
\end{split}
\label{eqn:agg-var-2}
\end{align}

For every $l \in \NN_+$, Define $\Fcal_{l-1}$ as the $\sigma$-field generated by all players' observations up to micro-episode $l-1$, along with micro-episode $l$'s corresponding player (player index $((l-1) \; \mathrm{mod} \; M)+1$)'s observations up to them taking action at step $h$. Define 
\[ 
X_l := I( (s_{h,p}^k, a_{h,p}^k) = (s, a) ) \cdot \rbr{ (V_p^\star(s_{h+1,p}^k) - (\PP_p V_p^\star) (s,a) )^2 -  
\VV_{\PP_p(\cdot \mid s,a)}[V_p^\star] },
\]
where in the above expression, to avoid notation clutter, we use $k$ and $p$ to denote microepisode $l$'s episode number and corresponding player number $k(l) = \lceil l/M \rceil$ and $p(l) = ((l-1) \; \mathrm{mod} \; M)+1$, respectively.

It can be seen that $X_l$ is $\Fcal_l$-measurable, and 
$\EE\sbr{X_l \mid \Fcal_{l-1}} = 0$, i.e. $\cbr{X_l}_{l=1}^\infty$ is a martingale difference sequence adapted to $\cbr{\Fcal_l}_{l=1}^\infty$. 
In addition, 
\begin{align*}
\EE\sbr{ X_l^2 \mid \Fcal_{k-1} }
\leq &
\EE\sbr{  I( (s_{h,p}^k, a_{h,p}^k) = (s, a) ) (V_p^\star(s_{h+1,p}^k) - (\PP_p V_p^\star) (s,a) )^4 \mid \Fcal_{k-1} } \\
\leq &
H^2 \cdot I( (s_{h,p}^k, a_{h,p}^k) =  (s, a) ) \cdot \VV_{\PP_p(\cdot \mid s,a)}[V_p^\star]
=: U_l;
\end{align*}

Note that $\abs{X_l / H^2} \leq 1$; 
by~\cite[Corollary 1.4]{freedman1975tail} applied to $\cbr{X_l / H^2}_{l=1}^\infty$, for any $\lambda \geq 0$, \[ \cbr{ Y_k(\lambda) = \exp\del{\lambda (\sum_{i=1}^l \frac{X_i}{H^2} ) - \del{ (e^\lambda - \lambda - 1) \sum_{i=1}^l \frac{U_i}{H^4}  } } }_{l=0}^\infty
\]
is 
a nonnegative supermartingale. Also, note that if $l_j < \infty$,  $\sum_{i=1}^{l_j} X_i = \sum_{i=1}^{j} (V_p^\star(s_i') - (\PP_{p_i} V_p^\star) (s,a) )^2 -  
    \VV_{\PP_{p_i}(\cdot \mid s,a)}[V_p^\star]$, 
and $\sum_{i=1}^{l_j} U_i = \sum_{i=1}^j H^2 \cdot \VV_{\PP_{p_i}(\cdot \mid s,a)}[V_p^\star]$.

Using the same reasoning as in the proof of Lemma~\ref{lem:eind} (and observing that $\sum_{l=1}^{l_j} U_l \in [0, H^4 j]$), we have that for all $j \in \NN_+$:
\begin{align*}
\PP \vast(  & k_j < \infty \wedge \abr{ \frac{1}{j} \sum_{i=1}^{j} 
    \del{ (V_p^\star(s_i') - (\PP_{p_i} V_p^\star) (s,a) )^2 -  
    \VV_{\PP_p(\cdot \mid s,a)}[V_p^\star] } }
    \\
    & \qquad \qquad \geq 4 \sqrt{ \frac{ H  \sum_{i=1}^j \VV_{\PP_{p_i}(\cdot \mid s,a)}[V_p^\star] L(j)}{j^2}} +  \frac{2H^2 L(j)}{j} \; 
  \vast) \leq \ln (4j) \cdot \frac{\delta}{48 j^2 M S A}.
\end{align*}
A union bound over all $j \in \NN_+$ implies that Equation~\eqref{eqn:agg-var-2} holds.
\end{proof}

Define %
\begin{align*}
& E_{\samp} = E_{\indiv,\samp} \cap E_{\aggre,\samp}, \\
& E_{\aggre,\samp} = \cbr{ \forall s,a,k \centerdot \bar{n}^k(s,a) \geq N_1 \implies n^k(s,a) \geq \frac12 \bar{n}^k(s,a)
},\\
& E_{\indiv,\samp} = \cbr{ \forall s,a,k,p \centerdot \bar{n}_p^k(s,a) \geq N_2 \implies n_p^k(s,a) \geq \frac12 \bar{n}_p^k(s,a)
},
\end{align*}
where we recall from Section~\ref{sec:additional-notations} that  $N_1 \eqsim M \ln(\frac{S A K}{\delta})$, and $N_2 \eqsim \ln(\frac{M S A K}{\delta})$. 

\begin{lemma}
$\PP(E_{\samp}) \geq 1- \frac \delta 3$.  
\label{lem:esamp}
\end{lemma}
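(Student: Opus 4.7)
}

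The plan is to prove the two sub-events $E_{\indiv,\samp}$ and $E_{\aggre,\samp}$ each hold with probability at least $1-\delta/6$ by a martingale concentration argument, then conclude with a union bound. I will focus on $E_{\aggre,\samp}$; the case of $E_{\indiv,\samp}$ is analogous and actually simpler because the per-episode increments are bounded by $1$ rather than $M$.

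Fix $(s,a)\in\Scal_h\times\Acal$. For each episode $j$, let
\[
X_j \;:=\; \sum_{p=1}^M \one\!\rbr{(s_{h,p}^j,a_{h,p}^j)=(s,a)},
\qquad
\Fcal_{j-1} \;:=\; \text{$\sigma$-field of all players' data up to the end of episode $j-1$}.
\]
Then $\EE[X_j\mid\Fcal_{j-1}]=\rho^j(s,a)$ by definition of the occupancy measure, so $D_j := \rho^j(s,a)-X_j$ is a martingale difference sequence adapted to $\cbr{\Fcal_j}$ with $|D_j|\le M$ and conditional variance bounded by $\EE[X_j^2\mid\Fcal_{j-1}]\le M\cdot \rho^j(s,a)$. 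Summing, $\sum_{j=1}^{k}D_j=\bar n^k(s,a)-n^{k+1}(s,a)$, and the conditional quadratic variation is at most $M\,\bar n^k(s,a)$. The off-by-one between $n^k$ and $\bar n^k$ is harmless since each episode contributes at most $M\le N_1$ to the cumulative visit count.

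The main step is a uniform-in-$k$ Freedman-style bound. A direct application of Freedman's inequality~\cite{freedman1975tail} at a fixed $k$ yields
\[
\PP\!\rbr{\bar n^k(s,a)-n^{k+1}(s,a)\ge \tfrac12\bar n^k(s,a),\ \bar n^k(s,a)\ge N_1}
\;\le\;
\exp\!\rbr{-\,\frac{(\bar n^k/2)^2}{2M\bar n^k+M\bar n^k/3}}
\;\le\;\exp\!\rbr{-c\,\bar n^k/M}
\]
for an absolute constant $c>0$. To make this hold simultaneously for all $k\in[K]$, I would apply the standard peeling/doubling trick: partition the range of $\bar n^k(s,a)$ into geometric intervals $[2^{i-1}N_1, 2^i N_1]$ for $i=0,1,\ldots,\lceil\log_2(MK)\rceil$, apply a stopping-time argument inside each interval (as in Lemma~\ref{lem:eind}), and union-bound over the $O(\log(MK))$ intervals. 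Choosing $N_1\eqsim M\ln(SAK/\delta)$ with a sufficiently large absolute constant makes the failure probability for the fixed triple $(h,s,a)$ at most $\delta/(6SA)$. Taking a union bound over all $h\in[H]$ and $(s,a)\in\Scal_h\times\Acal$ (noting $\sum_h|\Scal_h|\cdot|\Acal|=SA$) gives $\PP(E_{\aggre,\samp})\ge 1-\delta/6$.

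For $E_{\indiv,\samp}$, I replace $X_j$ by $\one((s_{h,p}^j,a_{h,p}^j)=(s,a))$ for a fixed player $p$; the increments are now $\{0,1\}$-valued with conditional mean $\rho_p^j(s,a)$ and conditional variance at most $\rho_p^j(s,a)$. Exactly the same Freedman plus doubling argument yields failure probability $\exp(-c\,\bar n_p^k(s,a))$, so $N_2\eqsim\ln(MSAK/\delta)$ suffices; a union bound over $p,h,s,a$ gives $\PP(E_{\indiv,\samp})\ge 1-\delta/6$. Combining $E_{\indiv,\samp}$ and $E_{\aggre,\samp}$ via one more union bound completes the proof. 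The main obstacle is purely bookkeeping: setting up the stopping-time peeling cleanly so that the bound applies \emph{uniformly} over $k$, and verifying that the constants in $N_1,N_2$ are consistent with the absolute constants produced by Freedman's inequality.
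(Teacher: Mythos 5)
Your proposal is correct and takes essentially the same route as the paper: a Freedman-type martingale bound on the centered visit counts $\sum_{p}\bigl(\one((s_{h,p}^j,a_{h,p}^j)=(s,a))-\rho_p^j(s,a)\bigr)$ (and the single-player analogue), followed by union bounds over $(s,a)$ and $p$, with the same choices $N_1 \eqsim M\ln(SAK/\delta)$ and $N_2 \eqsim \ln(MSAK/\delta)$. The only difference is bookkeeping in how uniformity over $k$ is obtained: the paper applies the variance-adaptive Freedman inequality (Lemma 2 of Bartlett et al.) at each fixed episode and union-bounds over $k\in[K]$ (plus an AM-GM step and the shift $\bar n^{k-1}\ge \bar n^k - M$), whereas you peel over geometric scales of $\bar n^k$ with a stopping-time argument—both work, noting that your displayed one-shot bound with the random $\bar n^k$ on the right-hand side only becomes rigorous after that peeling step, which you correctly flag.
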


\begin{proof}
We first show $\PP(E_{\aggre,\samp}) \geq 1-\frac \delta 6$.
Specifically, fix $h \in [H]$ and $(s,a) \in \Scal_h \times \Acal$, define random variable $X_k = \sum_{p=1}^M \del{ \one\del{ (s_{h,p}^k, a_{h,p}^k} = (s,a) ) - \rho_p^k(s,a) }$. 
Also, define $\Gcal_k$ as the $\sigma$-algebra generated by all observations up to episode $k$. It can be readily seen that $\cbr{X_k}_{k=1}^K$ is a martingale difference sequence adapted to filtration $\cbr{ \Gcal_k }_{k=0}^K$. 
Freedman's inequality (specifically, Lemma 2 of~\cite{bartlett2008high}) implies that for every fixed $k$, with probability $1-\frac{\delta}{6 K}$,
\begin{equation}
\abs{ n^k(s,a) - \bar{n}^{k-1}(s,a) } \leq   4 \sqrt{ \bar{n}^{k-1}(s,a) \cdot M \ln\del{\frac{6SAK^2}{\delta}}  } + 4 M \ln\del{\frac{6SAK^2}{\delta}},
\label{eqn:occupancy-count}
\end{equation}
If Equation~\eqref{eqn:occupancy-count} happens, then by AM-GM inequality that 
$\sqrt{ \bar{n}^{k-1}(s,a) \cdot M \ln\del{\frac{6SAK^2}{\delta}}  } \leq \frac14 \bar{n}^{k-1}(s,a) + 16 M \ln\del{\frac{6SAK^2}{\delta}}$,
we have
\[
\bar{n}^{k-1}(s,a) - n^k(s,a)
\leq 
\frac14 \bar{n}^{k-1}(s,a) + 20 M \ln\del{\frac{6SAK^2}{\delta}},
\]
implying that 
\[
n^k(s,a) \geq \frac34 \bar{n}^{k-1}(s,a) - 20 M \ln\del{\frac{6SAK^2}{\delta}}.
\]
Additionally, as 
$\bar{n}^{k-1}(s,a) \geq \bar{n}^k(s,a) - M$ always holds, we have
\[
n^k(s,a) \geq \frac34 \bar{n}^{k}(s,a) - 21 M \ln\del{\frac{6SAK^2}{\delta}}.
\]
In summary, for any fixed $k$, with probability $1-\frac{\delta}{6K}$, if $\bar{n}^{k}(s,a) \geq N_1 := 84 M \ln\del{\frac{6SAK^2}{\delta}}$, 
\[
n^k(s,a) \geq \frac12 \bar{n}^{k}(s,a).
\]
Taking a union bound over all $k \in [K]$, we have $\PP(E_{\aggre,\samp}) \geq 1-\frac{\delta}{6}$.

It follows similarly that $\PP(E_{\indiv,\samp}) \geq 1-\frac\delta 6$; the only difference in the proof is that, we need to take an extra union bound over all $p \in [M]$ - hence an additional factor of $M$ within $\ln(\cdot)$ in the definition of $N_2$.
The  lemma statement follows from a union bound over these two statements.
\end{proof}

\begin{lemma}
$\PP(E) \geq 1 - \delta$.
\label{lem:clean}
\end{lemma}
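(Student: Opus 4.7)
\textbf{Proof plan for Lemma~\ref{lem:clean}.} The plan is a direct union bound. By definition, $E = E_{\indiv} \cap E_{\aggre} \cap E_{\samp}$, so the complement satisfies $E^C = E_{\indiv}^C \cup E_{\aggre}^C \cup E_{\samp}^C$, and by subadditivity of probability,
\[
\PP(E^C) \le \PP(E_{\indiv}^C) + \PP(E_{\aggre}^C) + \PP(E_{\samp}^C).
\]
Lemmas~\ref{lem:eind},~\ref{lem:eagg}, and~\ref{lem:esamp} each bound one of these terms by $\delta/3$, so $\PP(E^C) \le \delta$, and hence $\PP(E) \ge 1-\delta$.

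Since the three component lemmas have already been established, no further concentration arguments are needed here. The only thing to verify is that the constants hidden in the definitions of $L(n)$, $N_1$, and $N_2$ are consistent with the failure probabilities claimed in Lemmas~\ref{lem:eind}--\ref{lem:esamp}; this is merely a matter of bookkeeping and does not require additional probabilistic reasoning. There is no real obstacle: the lemma is a packaging step that collects the three concentration events into the single ``clean event'' $E$ on which all subsequent deterministic arguments (validity of confidence bounds, strong optimism, surplus bounds, and regret decomposition) will be carried out in Subsections~\ref{subsec:value-bounds-valid}--\ref{subsec:conclude-reg-bounds}.
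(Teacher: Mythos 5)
Your proposal is correct and matches the paper's proof exactly: the paper also derives Lemma~\ref{lem:clean} by a union bound over Lemmas~\ref{lem:eind},~\ref{lem:eagg}, and~\ref{lem:esamp}, each of which contributes a failure probability of $\delta/3$. Nothing further is needed.
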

\begin{proof}
Follows from Lemmas~\ref{lem:eind},~\ref{lem:eagg}, and~\ref{lem:esamp}, along with a union bound. 
\end{proof}

\subsection{Validity of value function bounds}
\label{subsec:value-bounds-valid}

In this section, we show that if the clean event $E$ happens, then for all $k$ and $p$, the value function  estimates $\qub_p^k$, $\qlb_p^k$, $\vub_p^k$, $\vlb_p^k$ are valid upper and lower bounds of the optimal value functions $Q_p^\star$, $V_p^\star$ (Lemma~\ref{lem:value-bounds-valid}). As a by-product, we also give a general bound on the surplus (Lemma~\ref{lem:surplus-bounds}) which will be refined and used in the subsequent regret bound calculations.
Before going into the proof of the above two lemmas, we need a technical lemma below (Lemma~\ref{lem:bonus-motivation}) that gives necessary concentration results which motivate the bonus constructions; its proof can be found at Section~\ref{sec:bonus-motivation}.

\begin{lemma}
Fix $p \in [M]$. Suppose $E$ happens, and suppose that for episode $k$ and step $h$, we have that for all $s' \in \Scal_{h+1}$, $\vlb^k_p(s') \leq V_p^\star(s') \leq \vub^k_p(s')$. Then, for all $(s,a) \in \Scal_h \times \Acal$: 
\begin{enumerate}
    \item 
    \begin{equation}
      \abs{ \hat{R}_p^k(s,a) - R_p(s,a) } \leq b_{\rw}\del{ n_p^k(s,a), 0},
      \label{eqn:ind-rw-valid}
    \end{equation} 
    \begin{equation}
        \abs{ \hat{R}^k(s,a) - R_p(s,a) } \leq b_{\rw}\del{ n^k(s,a), \epsilon}.
        \label{eqn:agg-rw-valid}
    \end{equation} 
    \item 
    \begin{equation} 
    \abs{ (\hat{\PP}_p^k - \PP_p) (V_p^\star) (s,a)  } \leq b_{\prob}\del{ \hat{\PP}_p^k(\cdot \mid s,a), n_p^k(s,a), \vub_p^k, \vlb_p^k, 0},
    \label{eqn:ind-prob-valid}
    \end{equation}
    \begin{equation}
    \abs{ (\hat{\PP}^k - \PP_p) (V_p^\star) (s,a)  } \leq b_{\prob}\del{ \hat{\PP}^k (\cdot \mid s,a), n^k(s,a), \vub_p^k, \vlb_p^k, \epsilon }.
    \label{eqn:agg-prob-valid}
    \end{equation}
    \item For any $V_1, V_2: \Scal_{h+1} \to \RR$ such that $\vlb_p^k \leq V_1 \leq V_2 \leq \vub_p^k$,
    \begin{equation}
     \abs{ (\hat{\PP}_p^k - \PP_p) (V_2 - V_1)(s,a)} \leq b_{\str} \del{ \hat{\PP}_p^k(\cdot \mid s,a), n_p^k(s,a), \vub_p^k, \vlb_p^k, 0},   
     \label{eqn:ind-str-valid}
    \end{equation} 
    \begin{equation}
    \abs{ (\hat{\PP}^k - \PP_p) (V_2 - V_1)(s,a)} \leq b_{\str}\del{ \hat{\PP}^k(\cdot \mid s,a), n^k(s,a), \vub_p^k, \vlb_p^k, \epsilon }.
    \label{eqn:agg-str-valid}
    \end{equation}
\end{enumerate}
\label{lem:bonus-motivation}
\end{lemma}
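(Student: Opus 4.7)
\medskip

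\textbf{Proof plan for Lemma \ref{lem:bonus-motivation}.} The six claims naturally split into three pairs (reward, value-function transition, and strong-optimism lower-order), with each pair consisting of an \emph{individual} statement (proved via $E_{\indiv}$, with $\kappa = 0$) and an \emph{aggregate} statement (proved via $E_{\aggre}$, with $\kappa = \epsilon$). In the aggregate cases, the point of the extra $\kappa = \epsilon$ additive slack in every bonus is to absorb a bias term that arises because $\hat{\PP}^k$ and $\hat R^k$ concentrate around the mixture quantities $\bar\PP^k = \sum_q w_q^k \PP_q$ and $\bar R^k = \sum_q w_q^k R_q$, not around $\PP_p$ and $R_p$; $\epsilon$-dissimilarity (Definition~\ref{def:dissimilar}) then bounds these biases by at most $\epsilon$ (reward) and $\epsilon/H$ in $\ell_1$ (transitions). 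So in the aggregate parts I would systematically decompose $\hat X^k - X_p = (\hat X^k - \bar X^k) + (\bar X^k - X_p)$, apply $E_{\aggre}$ to the first summand, and control the second by the triangle inequality combined with $\epsilon$-dissimilarity.

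The reward bounds~\eqref{eqn:ind-rw-valid} and \eqref{eqn:agg-rw-valid} are almost immediate: $E_{\indiv,\rw}$ gives~\eqref{eqn:ind-rw-valid} directly, and for the aggregate case $|\bar R^k(s,a) - R_p(s,a)| \le \sum_q w_q^k |R_q(s,a) - R_p(s,a)| \le \epsilon$, which combined with $E_{\aggre,\rw}$ yields~\eqref{eqn:agg-rw-valid}. The truncation at $1$ in $b_{\rw}$ is harmless since rewards lie in $[0,1]$.

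The real work is in the value-function bounds~\eqref{eqn:ind-prob-valid} and \eqref{eqn:agg-prob-valid}. The plan is a three-step conversion. First, apply $E_{\indiv,\val}$ (resp.\ $E_{\aggre,\val}$), which already yields a Bernstein-type bound in terms of the \emph{true} variance $\VV_{\PP_p}[V_p^\star]$ (resp.\ the mixture $\sum_q w_q^k \VV_{\PP_q}[V_p^\star]$). In the aggregate case I also add the bias $|(\bar\PP^k - \PP_p) V_p^\star|(s,a) \le \|\bar\PP^k(\cdot|s,a) - \PP_p(\cdot|s,a)\|_1 \cdot H \le \epsilon$, which is exactly the $2\kappa$ slack in $b_{\prob}$. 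Second, convert the population variance into the empirical variance $\VV_{\hat{\PP}_p^k}[V_p^\star]$ (resp.\ under $\hat{\PP}^k$) using $E_{\indiv,\varia}$ (resp.\ $E_{\aggre,\varia}$): an AM-GM on the concentration bound $|\hat\VV - \VV| \lesssim \sqrt{H^2 \VV L/n} + H^2 L/n$ shows $\VV \le 2\hat\VV + O(H^2 L/n)$, which after taking square roots yields an additive $O(H L(n)/n)$ lower-order term and a constant-factor blow-up of the leading Bernstein term. In the aggregate case I additionally invoke Jensen's inequality, $\sum_q w_q^k \VV_{\PP_q}[V_p^\star] \le \VV_{\bar\PP^k}[V_p^\star]$, and use $E_{\aggre,\prob}$ to pass from $\bar\PP^k$ to $\hat{\PP}^k$. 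Third, switch $V_p^\star$ to $\overline{V}_p^k$ inside the empirical variance. Here I use that $\underline V_p^k \le V_p^\star \le \overline V_p^k$ under the inductive hypothesis, and the elementary inequality $\sqrt{\VV_q[V_p^\star]} \le \sqrt{2\VV_q[\overline V_p^k]} + \sqrt{2\EE_q[(\overline V_p^k - \underline V_p^k)^2]}$, obtained by expanding $V_p^\star = \overline V_p^k - (\overline V_p^k - V_p^\star)$ in the variational characterization of variance. Aggregating these three steps recovers exactly the form of $b_{\prob}$. The truncation at $H$ is harmless because $|(\hat{\PP}^k - \PP_p) V_p^\star|(s,a) \le H$ trivially.

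The strong-optimism bounds~\eqref{eqn:ind-str-valid} and \eqref{eqn:agg-str-valid} follow from the componentwise transition concentration events $E_{\indiv,\prob}$ and $E_{\aggre,\prob}$. Writing $|(\hat\PP_p^k - \PP_p)(V_2 - V_1)| \le \sum_{s'} |(\hat\PP_p^k - \PP_p)(s'|s,a)|\,(V_2 - V_1)(s')$ and substituting the Bernstein bound from~\eqref{eqn:ind-transition}, the linear-in-count term contributes $2L(n_p^k)/n_p^k \cdot \sum_{s'}(V_2-V_1)(s') \le 2HSL(n_p^k)/n_p^k$, matching the $HSL/n$ tail of $b_{\str}$. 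For the square-root term, Cauchy--Schwarz in $s'$ introduces the $\sqrt S$ factor: $\sum_{s'}\sqrt{\PP_p(s'|s,a)}(V_2-V_1)(s') \le \sqrt{S\,\EE_{\PP_p}[(V_2-V_1)^2]}$, then $(V_2 - V_1)^2 \le (\overline V - \underline V)^2$ pointwise, and a final pass from $\PP_p$ to $\hat{\PP}_p^k$ (by inverting the Bernstein inequality, $\PP_p \le 2\hat\PP_p^k + O(L/n_p^k)$, and absorbing the residual into the $HSL/n$ term) gives the claimed form. The aggregate version runs in parallel, with the additional bias $|(\bar\PP^k - \PP_p)(V_2 - V_1)|(s,a) \le H \cdot \epsilon/H = \epsilon$ absorbed into the $\kappa = \epsilon$ term. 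The main obstacle throughout will be the bookkeeping in the variance-conversion step for the probability bounds---keeping track of which inequalities contribute lower-order $HL/n$ terms versus which contribute to the leading $\sqrt{\VV L/n}$ term, and ensuring that the final constants inside the $\Theta(\cdot)$ in the bonus definitions are actually achieved.
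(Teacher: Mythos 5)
Most of your plan tracks the paper's own proof: the reward bounds, the bias decomposition $\hat X^k - X_p = (\hat X^k - \bar X^k) + (\bar X^k - X_p)$ with the $\epsilon$-dissimilarity bias, the switch from $V_p^\star$ to $\overline V_p^k$ via $|\,\overline V_p^k - V_p^\star\,| \le \overline V_p^k - \underline V_p^k$, and the $b_{\str}$ bounds (componentwise concentration, Cauchy--Schwarz giving the $\sqrt S$ factor, and absorbing residuals into the $HSL/n$ tail) are all exactly the paper's argument and are fine.

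The genuine gap is in your variance-conversion step for the aggregate bound~\eqref{eqn:agg-prob-valid}. You propose to pass from the mixture variance $\sum_q w_q^k \VV_{\PP_q(\cdot\mid s,a)}[V_p^\star]$ to the empirical variance $\VV_{\hat\PP^k(\cdot\mid s,a)}[V_p^\star]$ by bounding the mixture average by $\VV_{\bar\PP^k}[V_p^\star]$ and then ``using $E_{\aggre,\prob}$ to pass from $\bar\PP^k$ to $\hat\PP^k$.'' That route fails to produce a bound of the form $b_{\prob}$: the componentwise inequality $|\hat\PP^k - \bar\PP^k|(s'\mid s,a) \lesssim \sqrt{\bar\PP^k(s'\mid s,a)L/n} + L/n$, summed over the $\le S$ states $s'$ and multiplied by $H^2$, gives a variance discrepancy of order $H^2\sqrt{SL/n} + SH^2 L/n$, which after taking square roots and multiplying by $\sqrt{L/n}$ yields lower-order terms scaling with $\sqrt S$ (or worse); $b_{\prob}$ only has room for $+\,HL(n)/n$, so the claimed inequality would not follow. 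The paper avoids this entirely by never touching $E_{\aggre,\prob}$ here: it uses the dedicated scalar variance event $E_{\aggre,\varia}$ (Lemma~\ref{lem:emp-bernstein-eps}), which requires two further steps you have not accounted for: (i) the samples in $E_{\aggre,\varia}$ are centered at $(\PP_{p_i}V_p^\star)(s,a)$, so re-centering at $(\PP_p V_p^\star)(s,a)$ costs $2H\epsilon$ per sample (Lipschitzness of $x\mapsto x^2$ on $[-H,H]$), which is the source of the $\sqrt{H\epsilon}$ correction and, after AM-GM against $\sqrt{L/n}$, of the \emph{second} $\epsilon$ in the $2\kappa$ slack of $b_{\prob}$ (you budget only one $\epsilon$, for the mean bias); and (ii) the event controls the empirical second moment about the \emph{true} mean, not the empirical variance, so one needs the bias--variance identity (Lemma~\ref{lem:bv}) plus the mean concentration ($E_{\aggre,\val}$, resp.\ $E_{\indiv,\val}$) to reach $\VV_{\hat\PP^k}[V_p^\star]$. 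The same point (ii) also makes your individual-case shortcut ``$E_{\indiv,\varia}$ plus AM-GM gives $\VV \le 2\hat\VV + O(H^2L/n)$'' not literally correct as stated, though there it is a routine fix; in the aggregate case the missing re-centering and the $S$-free conversion are the substance of the lemma, and your proposed detour through Jensen and $E_{\aggre,\prob}$ does not deliver them.
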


\begin{lemma}
\label{lem:surplus-bounds}
If event $E$ happens, and suppose that for episode $k$ and step $h$, we have that for all $s' \in \Scal_{h+1}$, $\vlb^k_p(s') \leq V_p^\star(s') \leq \vub^k_p(s')$.
Then, for $(s,a) \in \Scal_h \times \Acal$,
\begin{equation}
    \qub_p^k(s,a) - \del{R_p(s,a) + ( \PP_p \vub_p^k) (s,a) } \in \sbr{ 0, (H-h+1) \wedge 2 \bindp_p^k(s,a) \wedge 2\baggp_p^k(s,a) }, 
    \label{eqn:strong-optimism}
\end{equation}
and
\begin{equation}
    \del{ R_p(s,a) + (\PP_p \vlb_p^k) (s,a) } - \qlb_p^k(s,a) \in \sbr{ 0, (H-h+1) \wedge 2 \bindp_p^k(s,a) \wedge 2\baggp_p^k(s,a) },
    \label{eqn:strong-pessimism}
\end{equation}
where we recall that 
\[
\bindp_p^k(s,a) = b_{\rw}\del{ n_p^k(s,a), 0} + b_{\prob}\del{ \hat{\PP}_p^k(\cdot \mid s,a), n_p^k(s,a), \overline{V}_p^k, \underline{V}_p^k, 0}  + b_{\str}\del{ \hat{\PP}_p^k(\cdot \mid s,a), n_p^k(s,a), \overline{V}_p^k, \underline{V}_p^k, 0},
\]
\[
\baggp_p^k(s,a) = b_{\rw}\del{ n^k(s,a), \epsilon} + b_{\prob}\del{ \hat{\PP}^k(\cdot \mid s,a), n^k(s,a), \overline{V}_p^k, \underline{V}_p^k, \epsilon} + b_{\str}\del{ \hat{\PP}^k(\cdot \mid s,a), n^k(s,a), \overline{V}_p^k, \underline{V}_p^k, \epsilon}.
\]
\end{lemma}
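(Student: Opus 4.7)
\textbf{Proof plan for Lemma~\ref{lem:surplus-bounds}.} The statement has two symmetric parts (surplus and pessimism-surplus). I will write the argument for~\eqref{eqn:strong-optimism} in detail; the argument for~\eqref{eqn:strong-pessimism} follows by flipping signs and using $\vlb_p^k$ in place of $\vub_p^k$ throughout. Since $\qub_p^k(s,a) = \min\{H-h+1,\ \qindub_p^k(s,a),\ \qaggub_p^k(s,a)\}$, I will verify that each of the three terms in the minimum is an upper bound on $R_p(s,a) + (\PP_p \vub_p^k)(s,a)$, and then read off the upper bounds on the surplus from the same decomposition.

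For the nonnegativity (lower bound) part, the trivial cap $H-h+1$ already dominates $R_p(s,a) + (\PP_p \vub_p^k)(s,a)$ because $R_p \in [0,1]$ and, by the layered structure together with an inductive bound $\vub_p^k(s') \leq H-h$ for $s' \in \Scal_{h+1}$, the RHS is at most $1 + (H-h)$. For the $\qindub_p^k$ term, I will write
\begin{equation*}
\qindub_p^k(s,a) - R_p(s,a) - (\PP_p \vub_p^k)(s,a) = \bigl(\hat R_p^k - R_p\bigr)(s,a) + (\hat\PP_p^k - \PP_p)(V_p^\star)(s,a) + (\hat\PP_p^k - \PP_p)(\vub_p^k - V_p^\star)(s,a) + \bindp_p^k(s,a),
\end{equation*}
and then invoke, respectively,~\eqref{eqn:ind-rw-valid},~\eqref{eqn:ind-prob-valid}, and~\eqref{eqn:ind-str-valid} from Lemma~\ref{lem:bonus-motivation} (the last one with $V_1 = V_p^\star$ and $V_2 = \vub_p^k$, which is valid by hypothesis). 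The three absolute values sum to exactly $\bindp_p^k(s,a)$, so the whole expression lies in $[0,\ 2\bindp_p^k(s,a)]$. The same calculation with $\hat R^k, \hat\PP^k, \baggp_p^k$ in place of the individual quantities, using~\eqref{eqn:agg-rw-valid},~\eqref{eqn:agg-prob-valid},~\eqref{eqn:agg-str-valid}, shows that $\qaggub_p^k(s,a) - R_p(s,a) - (\PP_p \vub_p^k)(s,a) \in [0,\ 2\baggp_p^k(s,a)]$. Combining these three bounds through the minimum gives $\qub_p^k(s,a) \geq R_p(s,a) + (\PP_p \vub_p^k)(s,a)$, i.e., nonnegativity of the surplus.

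For the upper bound on the surplus, I reuse the same three identities. The cap $H-h+1$ together with $R_p + \PP_p \vub_p^k \geq 0$ yields the cap $H-h+1$ on the surplus. The identity above for $\qindub_p^k$ is already bounded above by $2\bindp_p^k(s,a)$, and similarly the aggregate identity is bounded above by $2\baggp_p^k(s,a)$. Since $\qub_p^k$ is the minimum of the three, the surplus is dominated by the minimum of the three individual upper bounds, giving $(H-h+1) \wedge 2\bindp_p^k(s,a) \wedge 2\baggp_p^k(s,a)$ as claimed. The pessimism statement~\eqref{eqn:strong-pessimism} is proved the same way: write $R_p(s,a) + (\PP_p \vlb_p^k)(s,a) - \qlb_p^k(s,a)$ via each of the three terms in the max defining $\qlb_p^k$, apply Lemma~\ref{lem:bonus-motivation} with $V_1 = \vlb_p^k$ and $V_2 = V_p^\star$ in~\eqref{eqn:ind-str-valid}/\eqref{eqn:agg-str-valid} (again valid by hypothesis), and read off the same two-sided bound.

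No step looks genuinely hard: all the probabilistic work has been absorbed into Lemma~\ref{lem:bonus-motivation}, and the only bookkeeping to be careful about is (i) the appropriate choice of $V_1, V_2$ when applying the $b_{\str}$ concentration bound (it is the hypothesis $\vlb_p^k \leq V_p^\star \leq \vub_p^k$ that legitimizes both choices), and (ii) the inductive bound $\vub_p^k(s') \leq H-h$ used to verify that the trivial cap $H-h+1$ is itself an upper bound on $R_p + \PP_p \vub_p^k$, which follows from the recursion in line~\ref{line:v-lucb} together with the range restriction $\qub_p^k \leq H-h+1$ enforced in line~\ref{line:q-ucb} at the previous layer.
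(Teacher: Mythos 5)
Your proposal is correct and follows essentially the same route as the paper: it compares each of the three terms in the minimum defining $\qub_p^k$ (and symmetrically the maximum defining $\qlb_p^k$) against $R_p + \PP_p\vub_p^k$, uses the same decomposition into reward error, $(\hat\PP-\PP_p)V_p^\star$ error, and $(\hat\PP-\PP_p)(\vub_p^k - V_p^\star)$ error controlled by Lemma~\ref{lem:bonus-motivation}, and handles the $H-h+1$ cap via $R_p\in[0,1]$ and $(\PP_p\vub_p^k)(s,a)\in[0,H-h]$. The only nit is the phrase that the three absolute values ``sum to exactly'' $\bindp_p^k(s,a)$ — they are each bounded by the corresponding bonus component, so their sum is at most (not equal to) the bonus — but the conclusion you draw from it is the intended one and the argument is sound.
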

\begin{proof}
We only show Equation~\eqref{eqn:strong-optimism} for brevity; Equation~\eqref{eqn:strong-pessimism} follows from an exact symmetrical reasoning.

Recall that $\qub_p^k(s,a) = \min\del{ \qindub_p^k(s,a), \qaggub_p^k(s,a), H }$.
We compare each term in the $\min(\cdot)$ operator with $(R_p(s,a) + (\PP_p \vub_p^k) (s,a))$:

\begin{itemize}
\item For $\qindub_p^k(s,a)$, using Lemma~\ref{lem:bonus-motivation} and our assumption on $\vub^k_p$ and $\vlb^k_p$ on $\Scal_{h+1}$, we have:
\begin{align*}
 & \qindub_p^k(s,a) - \del{ R_p(s,a) + (\PP_p \vub_p^k) (s,a) } \\
= & \;  (\hat{R}_p^k - R_p) (s,a) + b_{\rw}\del{ n_p^k(s,a), 0 }  \\
&  + ((\hat{\PP}_p^k - \PP_p) V^\star_p)(s,a) + b_{\prob}\del{ \hat{\PP}_p^k(\cdot \mid s,a), n_p^k(s,a), \overline{V}_p^k, \underline{V}_p^k, 0} \\
& + (\hat{\PP}_p^k - \PP_p)(\vub_p^k - V_p^\star))(s,a) + b_{\str}\del{ \hat{\PP}_p^k(\cdot \mid s,a), n_p^k(s,a), \overline{V}_p^k, \underline{V}_p^k, 0} \\
\in & \; [0, 2 \bindp_p^k(s,a)].
\end{align*}

\item For $\qaggub_p^k(s,a)$, using Lemma~\ref{lem:bonus-motivation} and our assumptions on $\vub^k_p$ and $\vlb^k_p$ over $\Scal_{h+1}$, we have:
\begin{align*}
 & \qaggub_p^k(s,a) - \del{ R_p(s,a) + (\PP_p \vub_p^k) (s,a) } \\
= & \; (\hat{R}_p^k - R_p) (s,a) + b_{\rw}\del{ n^k(s,a), \epsilon }  \\
&  + ((\hat{\PP}^k - \PP_p) V^\star_p)(s,a) + b_{\prob}\del{ \hat{\PP}^k(\cdot \mid s,a), n^k(s,a), \overline{V}_p^k, \underline{V}_p^k, \epsilon} \\
& + ((\hat{\PP}^k - \PP_p)(\vub_p^k - V_p^\star))(s,a) + b_{\str}\del{ \hat{\PP}^k(\cdot \mid s,a), n^k(s,a), \overline{V}_p^k, \underline{V}_p^k, \epsilon } \\
\in & \; [0, 2\baggp_p^k(s,a)],
\end{align*}

\item For $H -h + 1$, we have:
\begin{align*}
    & (H - h + 1) - (R_p(s,a) + (\PP_p \vub_p^k) (s,a)) \in [0, H - h + 1],
\end{align*}
where we use the observation that $R(s,a) \in [0,1]$, and $(\PP_p \vub_p^k) (s,a) \in [0, H-h]$, and their sum is in $[0,H]$.

\end{itemize}
Combining the above three establishes that 
\[ 
\qub_p^k(s,a) - (R(s,a) + (\PP_p \vub_p^k) (s,a)) \in \sbr{ 0, (H - h + 1) \wedge 2 \bindp_p^k(s,a) \wedge 2\baggp_p^k(s,a) } . \qedhere
\]
\end{proof}

\begin{lemma}
\label{lem:value-bounds-valid}
Under event $E$, for every $k \in [K]$, and every $p \in [M]$, and 
for every $h \in [H]$,
    For all $(s,a) \in \Scal_h \times \Acal$,
    \begin{equation}
    \qlb_p^k (s,a)
    \leq
    Q_p^{\pi^k} (s,a)
    \leq 
    Q_p^\star (s,a)
    \leq 
    \qub_p^k (s,a),
    \label{eqn:q-order}
    \end{equation}
    and 
    \begin{equation}
    \vlb_p^k (s)
    \leq
    V_p^{\pi^k} (s)
    \leq 
    V_p^\star (s)
    \leq 
    \vub_p^k (s),
    \label{eqn:v-order}
    \end{equation}
here, recall that $V_p^{\pi_k}$ is the value function of policy $\pi^k(p)$ with respect to $\Mcal_p$ defined in Section~\ref{sec:prelims}.
\end{lemma}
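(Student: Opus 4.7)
The plan is to fix an arbitrary episode $k \in [K]$ and player $p \in [M]$, and prove both displayed chains of inequalities simultaneously by backward induction on the layer index $h$, starting from $h = H+1$ and decreasing to $h = 1$. The engine is Lemma~\ref{lem:surplus-bounds}, whose hypothesis is exactly the sandwich $\vlb_p^k(s') \leq V_p^\star(s') \leq \vub_p^k(s')$ on the next layer $\Scal_{h+1}$; this will be supplied by the inductive hypothesis. No additional concentration work is needed beyond the clean event $E$ already assumed.

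For the base case $h = H+1$, the layer consists only of $\bot$, at which all four quantities $V_p^\star$, $V_p^{\pi^k}$, $\vub_p^k$, $\vlb_p^k$ evaluate to $0$ by convention, so Eq.~\eqref{eqn:v-order} holds trivially. For the inductive step at layer $h$, assume Eq.~\eqref{eqn:v-order} holds at layer $h+1$. To obtain the upper bounds in Eq.~\eqref{eqn:q-order}, I will apply strong optimism (Eq.~\eqref{eqn:strong-optimism}) together with the inductive hypothesis $\vub_p^k \geq V_p^\star$ on $\Scal_{h+1}$ to derive
\[
\qub_p^k(s,a) \;\geq\; R_p(s,a) + (\PP_p \vub_p^k)(s,a) \;\geq\; R_p(s,a) + (\PP_p V_p^\star)(s,a) \;=\; Q_p^\star(s,a) \;\geq\; Q_p^{\pi^k}(s,a),
\]
where the last inequality is the standard fact $Q_p^{\pi} \leq Q_p^\star$. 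Then $\vub_p^k(s) = \max_a \qub_p^k(s,a) \geq \max_a Q_p^\star(s,a) = V_p^\star(s) \geq V_p^{\pi^k}(s)$ follows from the greedy construction of $\pi^k(p)$ in line~\ref{line:greedypolicy}. Symmetrically, for the lower bounds, I will apply strong pessimism (Eq.~\eqref{eqn:strong-pessimism}) together with the inductive hypothesis $\vlb_p^k \leq V_p^{\pi^k}$ on $\Scal_{h+1}$ to conclude
\[
\qlb_p^k(s,a) \;\leq\; R_p(s,a) + (\PP_p \vlb_p^k)(s,a) \;\leq\; R_p(s,a) + (\PP_p V_p^{\pi^k})(s,a) \;=\; Q_p^{\pi^k}(s,a),
\]
and finally $\vlb_p^k(s) = \qlb_p^k(s, \pi^k(p)(s)) \leq Q_p^{\pi^k}(s,\pi^k(p)(s)) = V_p^{\pi^k}(s)$ by evaluating at the greedy action.

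The only subtle point is matching the hypotheses of Lemma~\ref{lem:surplus-bounds}, which demands that $V_p^\star$ (not $V_p^{\pi^k}$) be sandwiched between $\vlb_p^k$ and $\vub_p^k$ on the next layer; this forces the inductive invariant to carry the full four-way ordering $\vlb_p^k \leq V_p^{\pi^k} \leq V_p^\star \leq \vub_p^k$ rather than any weaker statement, so that the strong-pessimism side of the induction can proceed. Modulo this bookkeeping, the argument is a direct application of the strong optimism/pessimism already encoded in Lemma~\ref{lem:surplus-bounds}, and there is no significant obstacle.
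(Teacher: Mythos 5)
Your proposal is correct and follows essentially the same route as the paper: backward induction on the layer with the four-way invariant $\vlb_p^k \leq V_p^{\pi^k} \leq V_p^\star \leq \vub_p^k$, using Lemma~\ref{lem:surplus-bounds} (strong optimism/pessimism) at each step and the greedy definition of $\pi^k(p)$ to pass from $Q$-bounds to $V$-bounds. In fact your lower-bound chain, with $R_p(s,a) + (\PP_p \vlb_p^k)(s,a)$ in the middle, is the intended one (the paper's displayed chain contains a typo writing $\vub_p^k$ there), so no changes are needed.
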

\begin{proof}
The proof of this lemma extends~\cite[Proposition F.1]{simchowitz2019non} to our multitask setting. 

For every $k$ and $p$, we show the above holds for all layers $h \in [H]$ and every $(s,a) \in \Scal_h \times \Acal$; to this end, we do backward induction on layer $h$.

\paragraph{Base case:} For layer $h = H+1$, we have $\vlb_p^k(\bot) 
=
V_p^{\pi^k}(\bot)
= 
V_p^\star(\bot)
= 
\vub_p^k(\bot) = 0$.

\paragraph{Inductive case:} By our inductive hypothesis, for layer $h+1$ and every $s \in \Scal_{h+1}$, 
\[
\vlb_p^k(s) 
\leq
V_p^{\pi^k}(s)
\leq
V_p^\star(s)
\leq
\vub_p^k(s).
\]

We will show that Equations~\eqref{eqn:q-order} and~\eqref{eqn:v-order} holds 
holds for all $(s,a) \in \Scal_h \times \Acal$. 

We first show Equation~\eqref{eqn:q-order}. First,
$Q_p^{\pi^k}(s,a)
\leq 
Q_p^\star(s,a)$ for all $(s,a) \in \Scal_h \times \Acal$ is trivial. 

To show $Q_p^\star(s,a) \leq \qub_p^k(s,a)$ for all $(s,a) \in \Scal_h \times \Acal$, by Lemma~\ref{lem:surplus-bounds} and inductive hypothesis, we have:
\[
Q_p^\star(s,a) 
= 
R_p(s,a) + (\PP_p V_p^\star) (s,a)
\leq 
R_p(s,a) + (\PP_p \vub_p^k) (s,a)
\leq 
\qub_p^k(s,a).
\]
Likewise, we show $Q_p^{\pi^k}(s,a) \geq \qlb_p^k(s,a)$ for all $(s,a) \in \Scal_h \times \Acal$, using Lemma~\ref{lem:surplus-bounds} and inductive hypothesis: 
\[
Q_p^{\pi^k}(s,a) 
= 
R_p(s,a) + (\PP_p V_p^{\pi^k}) (s,a)
\geq 
R_p(s,a) + (\PP_p \vub_p^k) (s,a)
\geq 
\qlb_p^k(s,a).
\]
This completes the proof of Equation~\eqref{eqn:q-order} for layer $h$.

We now show Equation~\eqref{eqn:v-order} for layer $h$. Again $V_p^{\pi^k} (s) \leq V_p^\star (s)$ for all $s \in \Scal_h$ is trivial. 

To show $V_p^\star(s) \leq \vub_p^k(s)$ for all $s \in \Scal_h$, observe that 
\[
V_p^\star(s) 
= 
\max_{a \in \Acal } Q_p^\star(s,a)
\leq 
\max_{a \in \Acal} \qub_p^k(s,a)
=
\vub_p^k(s).
\]
To show $V_p^{\pi^k}(s) \geq \vlb_p^k(s)$ for all $s \in \Scal_h$, observe that
\[
V_p^{\pi^k}(s)
= 
Q_p^{\pi^k}(s,\pi^k(p)(s))
\geq 
\qlb_p^k(s,\pi^k(p)(s))
=
\vlb_p^k(s).
\]
This completes the induction. 
\end{proof}

\subsubsection{Proof of Lemma~\ref{lem:bonus-motivation}}
\label{sec:bonus-motivation}

\begin{proof}[Proof of Lemma~\ref{lem:bonus-motivation}]
Equations~\eqref{eqn:ind-rw-valid},~\eqref{eqn:ind-prob-valid}, and~\eqref{eqn:ind-str-valid} essentially follow the same reasoning as in~\cite{simchowitz2019non}; we still include their proofs for completeness. 
Equations~\eqref{eqn:agg-rw-valid},~\eqref{eqn:agg-prob-valid}, and~\eqref{eqn:agg-str-valid} are new, and require a more involved analysis. Our proof also relies on a technical lemma, namely  Lemma~\ref{lem:emp-bernstein-eps}; we defer its statement and proof to the end of this subsection.

\begin{enumerate}
    \item Equation~\eqref{eqn:ind-rw-valid} follows directly from the definition of $E_{\indiv,\rw}$.
    Equation~\eqref{eqn:agg-rw-valid} follows from the definition of $E_{\aggre,\rw}$, and the fact that $\abs{ \bar{R}^k(s,a) - R_p(s,a) } \leq \epsilon$.
    
    \item We prove  Equation~\eqref{eqn:ind-prob-valid} as follows:
    \begin{align*}
    & \abs{ 
    (\hat{\PP}_p^k V^\star - \PP_p V_p^\star)(s,a) } \\
    \leq & O\del{\sqrt{ \frac{ \VV_{\PP_p(\cdot \mid s,a)}[V^\star] L(n_p^k(s,a))}{n_p^k(s,a)} } + \frac{H L(n_p^k(s,a))}{n_p^k(s,a)} } \\
    \leq & 
    O\del{ \sqrt{ \frac{ \VV_{\hat{\PP}_p^k(\cdot \mid s,a)}[V^\star] L(n_p^k(s,a))}{n_p^k(s,a)} } + \frac{H L(n_p^k(s,a))}{n_p^k(s,a)} } \\
    \leq & 
    O\del{ \sqrt{ \frac{ \VV_{\hat{\PP}_p^k(\cdot \mid s,a)}[\vub_p^k] \;  L(n_p^k(s,a))}{n_p^k(s,a)} } + \sqrt{ \frac{ \| V_p^\star - \vub_p^k \|_{\hat{\PP}_p^k(\cdot \mid s,a)}^2 \; L(n_p^k(s,a))}{n_p^k(s,a)} } + \frac{H L(n_p^k(s,a))}{n_p^k(s,a)} } \\
    \leq & 
    O\del{ \sqrt{ \frac{ \VV_{\hat{\PP}_p^k(\cdot \mid s,a)}[\vub_p^k] \;  L(n_p^k(s,a))}{n_p^k(s,a)} } + \sqrt{ \frac{ \| \vub_p^k - \vlb_p^k \|_{\hat{\PP}_p^k(\cdot \mid s,a)}^2 \; L(n_p^k(s,a))}{n_p^k(s,a)} } + \frac{H L(n_p^k(s,a))}{n_p^k(s,a)} } \\
    \leq & b_{\prob}\del{ \hat{\PP}_p^k(\cdot \mid s,a), n_p^k(s,a), \vub_p^k, \vlb_p^k, 0 },
    \end{align*}
    where the first inequality is from the definition of $E_{\indiv,\val}$; the second inequality is from Equation~\eqref{eqn:emp-bernstein-ind} of Lemma~\ref{lem:emp-bernstein-eps}; the third inequality is from Lemma~\ref{lem:var-x-y}; the fourth inequality is from our assumption that for all $s' \in \Scal_{h+1}$, $\vlb^k_p(s') \leq V^\star(s') \leq \vub^k_p(s')$, and thus $\abs{ (V_p^\star - \vlb_p^k)(s') } \leq \abs{(\vub_p^k - \vlb_p^k)(s')}$ for all $s'$ in the support of $\hat{\PP}_p^k(\cdot \mid s,a)$.
    
    We prove Equation~\eqref{eqn:agg-prob-valid} as follows:
    \begin{align*}
        & \abs{ (\hat{\PP}^k - \PP_p) (V_p^\star) (s,a)  } \\
        \leq & \epsilon + \abs{ (\hat{\PP}^k - \bar{\PP}^k) (V_p^\star) (s,a) } \\
        \leq & \epsilon + O\del{\sqrt{  \frac{ \del{ \sum_{q=1}^M w_q^k(s,a) \VV_{\PP_q(\cdot \mid s,a)}[V_p^\star] } L(n^k(s,a)) }{n^k(s,a)} } + \frac{H L(n^k(s,a))}{n^k(s,a)}} \\
        \leq & \epsilon + O\del{\sqrt{  \frac{   \VV_{\hat{\PP}^k(\cdot \mid s,a)}[V_p^\star] \; L(n^k(s,a)) }{n^k(s,a)} } + \sqrt{\frac{ L(n^k(s,a))}{n^k(s,a)} \cdot \epsilon H} + \frac{H L(n^k(s,a))}{n^k(s,a)} + \frac{H L(n^k(s,a))}{n^k(s,a)}}  \\
        \leq & 2\epsilon + O\del{ \sqrt{  \frac{   \VV_{\hat{\PP}^k(\cdot \mid s,a)}[\vub_p^k] \; L(n^k(s,a)) }{n^k(s,a)} } + \sqrt{ \frac{ \| \vub_p^k - V_p^\star \|_{\hat{\PP}^k(\cdot \mid s,a)}^2 \; L(n^k(s,a))}{n^k(s,a)} } + \frac{H L(n^k(s,a))}{n^k(s,a)}} \\
        \leq & 2\epsilon + O\del{ \sqrt{  \frac{   \VV_{\hat{\PP}^k(\cdot \mid s,a)}[\vub_p^k] \; L(n^k(s,a)) }{n^k(s,a)} } + \sqrt{ \frac{ \| \vub_p^k - \vlb_p^k \|_{\hat{\PP}^k(\cdot \mid s,a)}^2 \; L(n^k(s,a))}{n^k(s,a)} } + \frac{H L(n^k(s,a))}{n^k(s,a)} } \\
        \leq & 
      b_{\prob}\del{\hat{\PP}^k (\cdot \mid s,a), n^k(s,a), \vub_p^k, \vlb_p^k, \epsilon},
    \end{align*}
    where the first inequality is from the observation that $\| \bar{\PP}^k(\cdot \mid s,a) - \PP_p(\cdot \mid s,a) \|_1 \leq \frac{\epsilon}{H}$ and Lemma~\ref{lem:var-tv};
    the second inequality is from the definition of $E_{\aggre,\val}$; the third inequality is from Equation~\eqref{eqn:emp-bernstein-agg-1} of Lemma~\ref{lem:emp-bernstein-eps}; the fourth inequality is from Lemma~\ref{lem:var-x-y} and the observation that for constant $c > 0$, $c \sqrt{\frac{ L(n^k(s,a))}{n^k(s,a)} \cdot \epsilon H} \leq \epsilon + \frac{c^2}{4} \frac{ L(n^k(s,a))}{n^k(s,a)}$ by AM-GM inequality; the fifth inequality is from our assumption that for all $s' \in \Scal_{h+1}$, $\vlb^k_p(s') \leq V^\star(s') \leq \vub^k_p(s')$, and thus $\abs{ (V_p^\star - \vlb_p^k)(s') } \leq \abs{(\vub_p^k - \vlb_p^k)(s')}$ for all $s'$ in the support of $\hat{\PP}^k(\cdot \mid s,a)$.
    
    \item We prove Equation~\eqref{eqn:ind-str-valid} as follows:
    \begin{align*}
         & \abs{ (\hat{\PP}_p^k - \PP_p) (V_2 - V_1)(s,a)}  \\
         \leq & 
         \sum_{s' \in \Scal_{h+1}} \abs{ (\hat{\PP}_p^k - \PP_p)(s' \mid s,a)} \cdot (V_2 - V_1)(s') \\
         \leq & 
         O\del{ \sum_{s' \in \Scal_{h+1}} \del{
  \sqrt{\frac{L(n_p^k(s,a)) \cdot \PP_p(s' \mid s, a)}{n_p^k(s,a)}} + \frac{L(n_p^k(s,a))}{n_p^k(s,a)} } \cdot 
         (V_2 - V_1)(s') } \\
         \leq & 
         O \del{ \sum_{s' \in \Scal_{h+1}} \del{
  \sqrt{\frac{L(n_p^k(s,a)) \cdot \hat{\PP}_p^k(s' \mid s, a)}{n_p^k(s,a)}} + \frac{L(n_p^k(s,a))}{n_p^k(s,a)} } \cdot 
         (V_2 - V_1)(s') } \\
         \leq & O\del{
         \sum_{s' \in \Scal_{h+1}} \sqrt{\hat{\PP}_p^k(s' \mid s, a)} (\vub_p^k - \vlb_p^k)(s') \cdot
  \sqrt{\frac{L(n_p^k(s,a))}{n_p^k(s,a)} } + \sum_{s' \in \Scal_{h+1}} \frac{H L(n_p^k(s,a))}{n_p^k(s,a)} } \\
        \leq & O\del{
  \sqrt{\frac{S \| \vub_p^k - \vlb_p^k \|_{\hat{\PP}_p^k(\cdot \mid s,a)}^2 \; L(n_p^k(s,a))}{n_p^k(s,a)} } +  \frac{S H L(n_p^k(s,a))}{n_p^k(s,a)} }
  \\
    \leq & b_{\str}\del{\hat{\PP}_p^k(\cdot \mid s,a), n(s,a), \vub_p^k, \vlb_p^k, 0},
    \end{align*}
    where the first inequality is from the elementary fact that $\abs{\sum_{i=1}^n a_i} \leq \sum_{i=1}^n \abs{a_i}$; the second inequality is from the definition of $E_{\indiv,\prob}$; the third inequality is from the definition of $E_{\indiv,\prob}$ and Lemma~\ref{lem:rel-bound}; the fourth inequality is by algebra and $0 \leq (V_2 - V_1)(s') \leq \min(H, (\vub_p^k - \vlb_p^k)(s'))$ for all $s' \in \Scal_{h+1}$; the fifth inequality is by Cauchy-Schwarz.
    
    We now prove Equation~\eqref{eqn:agg-str-valid}:
       \begin{align*}
         & \abs{ (\hat{\PP}^k - \PP_p) (V_2 - V_1)(s,a)}  \\
         \leq & \abs{ (\bar{\PP}^k - \PP_p) (V_2 - V_1)(s,a)} + \abs{ (\hat{\PP}^k - \bar{\PP}^k) (V_2 - V_1)(s,a)}   \\
         \leq & 
         \epsilon + \sum_{s' \in \Scal_{h+1}} \abs{ (\hat{\PP}^k - \bar{\PP}^k)(s' \mid s,a)} \cdot (V_2 - V_1)(s')  \\
         \leq & 
         \epsilon + O\del{
         \sum_{s' \in \Scal_{h+1}} \del{
  \sqrt{\frac{L(n^k(s,a)) \cdot \bar{\PP}^k(s' \mid s, a)}{n^k(s,a)}} + \frac{L(n^k(s,a))}{n^k(s,a)} } \cdot 
         (V_2 - V_1)(s') } \\
         \leq & 
         \epsilon + O\del{\sum_{s' \in \Scal_{h+1}} \del{
  \sqrt{\frac{L(n^k(s,a)) \cdot \hat{\PP}^k(s' \mid s, a)}{n^k(s,a)}} + \frac{L(n^k(s,a))}{n^k(s,a)} } \cdot 
         (V_2 - V_1)(s')} \\
         \leq &
         \epsilon + O\del{\sum_{s' \in \Scal_{h+1}} \sqrt{\hat{\PP}^k(s' \mid s, a)} (\vub_p^k - \vlb_p^k)(s') \cdot
  \sqrt{\frac{L(n^k(s,a))}{n^k(s,a)} } + \sum_{s' \in \Scal_{h+1}} \frac{H L(n^k(s,a))}{n^k(s,a)}} \\
        \leq & \epsilon + O\del{
  \sqrt{\frac{S \| \vub_p^k - \vlb_p^k \|_{\hat{\PP}^k(\cdot \mid s,a)}^2 \; L(n^k(s,a))}{n^k(s,a)} } +  \frac{S H L(n^k(s,a))}{n^k(s,a)} }
  \\
    \leq & b_{\str}\del{\hat{\PP}^k(\cdot \mid s,a), n(s,a), \vub_p^k, \vlb_p^k, \epsilon},
    \end{align*}
    where the first inequality is triangle inequality; the second inequality is from
    the elementary fact that $\abs{\sum_{i=1}^n a_i} \leq \sum_{i=1}^n \abs{a_i}$, along with $\| \bar{\PP}_k(\cdot \mid s,a) - \PP_p(\cdot \mid s,a) \|_1 \leq \frac{\epsilon}{H}$ %
    and Lemma~\ref{lem:var-tv}; the third inequality is from the definition of $E_{\aggre,\prob}$; the fourth inequality is from the definition of $E_{\aggre,\prob}$ and Lemma~\ref{lem:rel-bound}; the fifth inequality is by algebra and $0 \leq (V_2 - V_1)(s') \leq \min(H, (\vub_p^k - \vlb_p^k)(s') )$ for all $s' \in \Scal_{h+1}$; the last inequality is by Cauchy-Schwarz.
    \qedhere
    \end{enumerate}
\end{proof}

Lemma~\ref{lem:bonus-motivation} relies on the following technical lemma on the concentrations of the conditional variances. Specifically, Equation~\eqref{eqn:emp-bernstein-ind} is well-known (see, e.g.,~\cite{audibert2007tuning,maurerempirical}); Equations~\eqref{eqn:emp-bernstein-agg-1} and~\eqref{eqn:emp-bernstein-agg-2} are new, and allow for heterogeneous data aggregation in the multi-task RL setting. We still include the proof of Equation~\eqref{eqn:emp-bernstein-ind} here, as it helps illustrate our ideas for proving the two new inequalities.

\begin{lemma}
\label{lem:emp-bernstein-eps}
If event $E$ happens, then for any $s,a,k,p$, we have:
\begin{enumerate}
    \item 
    \begin{equation}
    \abs{ 
\sqrt{\VV_{\hat{\PP}_p^k(\cdot \mid s,a)} \sbr{ V_p^\star }} - \sqrt{\VV_{\PP_p(\cdot \mid s,a)} \sbr{ V_p^\star }} } \lesssim H \sqrt{ \frac{  L(n_p^k(s,a))}{n_p^k(s,a)}},
    \label{eqn:emp-bernstein-ind}
    \end{equation}
    \item 
    \begin{equation}
    \abs{ 
\sqrt{\VV_{\hat{\PP}^k(\cdot \mid s,a)} \sbr{ V_p^\star }} - \sqrt{\sum_{q=1}^M w_q^k(s,a) \VV_{\PP_q(\cdot \mid s,a)} \sbr{ V_p^\star }} } \lesssim \sqrt{H \epsilon} + H \sqrt{ \frac{  L(n^k(s,a))}{n^k(s,a)}},
    \label{eqn:emp-bernstein-agg-1}
    \end{equation}
    and
    \begin{equation}
    \abs{ 
\sqrt{\VV_{\hat{\PP}^k(\cdot \mid s,a)} \sbr{ V_p^\star }} - \sqrt{\VV_{\PP_p(\cdot \mid s,a)} \sbr{ V_p^\star }} } \lesssim \sqrt{H \epsilon} + H \sqrt{ \frac{  L(n^k(s,a))}{n^k(s,a)}},
    \label{eqn:emp-bernstein-agg-2}
    \end{equation}
\end{enumerate}
\end{lemma}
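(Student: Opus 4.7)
The common backbone is the square-root self-bounding inequality: if $a, b \ge 0$ satisfy $|a - b| \le c\sqrt{b} + d$ for some $c,d \ge 0$, then $|\sqrt{a} - \sqrt{b}| \le c + \sqrt{d}$ (verified by completing the square on each side). Each of the three parts reduces to producing a Bernstein-shaped bound on a variance difference and then applying this inequality.

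For part~(1), I would rewrite the empirical variance by centering at the true mean, via the identity $\EE_q[(f-c)^2] = \VV_q[f] + (\EE_q[f]-c)^2$ with $q = \hat{\PP}_p^k(\cdot \mid s,a)$, $f = V_p^\star$, and $c = (\PP_p V_p^\star)(s,a)$:
\[
\VV_{\hat{\PP}_p^k(\cdot \mid s,a)}[V_p^\star] = \tfrac{1}{n_p^k(s,a)} \sum_{i=1}^{n_p^k(s,a)} \bigl(V_p^\star((s_i^p)') - (\PP_p V_p^\star)(s,a)\bigr)^2 - \bigl((\hat{\PP}_p^k - \PP_p)V_p^\star\bigr)^2(s,a).
\]
The event $E_{\indiv,\varia}$ controls the first term against $\VV_{\PP_p}[V_p^\star]$, and $E_{\indiv,\val}$ controls the squared mean-deviation term; combining gives $|\VV_{\hat{\PP}_p^k}[V_p^\star] - \VV_{\PP_p}[V_p^\star]| \lesssim H\sqrt{\VV_{\PP_p}[V_p^\star] L(n_p^k)/n_p^k} + H^2 L(n_p^k)/n_p^k$, after which the self-bounding inequality with $c \eqsim H\sqrt{L(n_p^k)/n_p^k}$ and $d \eqsim H^2 L(n_p^k)/n_p^k$ yields~\eqref{eqn:emp-bernstein-ind}.

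For part~(2), the obstacle is that the aggregated samples come from heterogeneous distributions. Writing $\mu_i := (\PP_{p_i} V_p^\star)(s,a)$, $\bar{\mu} := (\bar{\PP}^k V_p^\star)(s,a) = \tfrac{1}{n^k}\sum_i \mu_i$, $\hat{\mu} := (\hat{\PP}^k V_p^\star)(s,a)$, and expanding $\VV_{\hat{\PP}^k}[V_p^\star]$ around the per-sample means $\mu_i$ yields
\[
\VV_{\hat{\PP}^k}[V_p^\star] = \tilde v + \tfrac{1}{n^k}\sum_i(\mu_i - \bar{\mu})^2 - (\hat{\mu} - \bar{\mu})^2 + \tfrac{2}{n^k}\sum_i(V_p^\star(s_i') - \mu_i)(\mu_i - \bar{\mu}),
\]
where $\tilde v := \tfrac{1}{n^k}\sum_i(V_p^\star(s_i') - \mu_i)^2$. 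Let $\bar v := \sum_q w_q^k \VV_{\PP_q}[V_p^\star]$. I would then bound each term: $E_{\aggre,\varia}$ concentrates $\tilde{v}$ around $\bar{v}$; $\epsilon$-dissimilarity and $V_p^\star \in [0,H]$ give $|\mu_i - \mu_j| \le \epsilon$, so $\tfrac{1}{n^k}\sum_i(\mu_i - \bar{\mu})^2 \le \epsilon^2$ and the cross-term is at most $2\epsilon\sqrt{\tilde v}$ by Cauchy--Schwarz; $E_{\aggre,\val}$ handles $(\hat{\mu} - \bar{\mu})^2$. Absorbing lower-order terms via AM--GM and $\bar v \le H^2$ yields $|\VV_{\hat{\PP}^k}[V_p^\star] - \bar{v}| \lesssim (\epsilon + H\sqrt{L(n^k)/n^k})\sqrt{\bar{v}} + (\epsilon^2 + H^2 L(n^k)/n^k)$, and the self-bounding inequality then delivers $|\sqrt{\VV_{\hat{\PP}^k}[V_p^\star]} - \sqrt{\bar v}| \lesssim \epsilon + H\sqrt{L(n^k)/n^k}$; under the standing convention $\epsilon \le 2H$ we have $\epsilon \le \sqrt{2H\epsilon}$, which gives~\eqref{eqn:emp-bernstein-agg-1}.

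For part~(3), I would chain through $\sqrt{\bar v}$ by triangle inequality. The identity $\VV_P[f] = \EE_P[f^2] - (\EE_P[f])^2$ combined with $\|\PP_q - \PP_p\|_1 \le \epsilon/H$ and $V_p^\star \in [0,H]$ yields $|\VV_{\PP_q}[V_p^\star] - \VV_{\PP_p}[V_p^\star]| \lesssim H\epsilon$ for every $q$, whence $|\bar{v} - \VV_{\PP_p}[V_p^\star]| \lesssim H\epsilon$; then $|\sqrt{a} - \sqrt{b}| \le \sqrt{|a-b|}$ gives $|\sqrt{\bar v} - \sqrt{\VV_{\PP_p}[V_p^\star]}| \lesssim \sqrt{H\epsilon}$, and combining with~\eqref{eqn:emp-bernstein-agg-1} yields~\eqref{eqn:emp-bernstein-agg-2}. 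The hardest step is part~(2), specifically the bookkeeping that collects all contributions into the form $c\sqrt{\bar v} + d$ required by the self-bounding inequality; the Cauchy--Schwarz treatment of the cross-term is what lets us avoid introducing a new Bernstein-type concentration for heterogeneous martingales.
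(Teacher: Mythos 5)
Your proposal is correct, and Parts (1) and (3) follow essentially the same route as the paper: the bias--variance identity centered at $(\PP_p V_p^\star)(s,a)$ combined with $E_{\indiv,\varia}$, $E_{\indiv,\val}$ and a square-root comparison for (1), and the bound $\abs{\VV_{\PP_q(\cdot\mid s,a)}[V_p^\star]-\VV_{\PP_p(\cdot\mid s,a)}[V_p^\star]}\lesssim H\epsilon$ followed by $\abs{\sqrt a-\sqrt b}\le\sqrt{\abs{a-b}}$ for (3). The genuine difference is in Part (2). The paper chains three separate square-root comparisons through the common-center empirical second moment $\frac1{n^k}\sum_i(V_p^\star(s_i')-(\PP_p V_p^\star)(s,a))^2$: first $E_{\aggre,\varia}$ relates the per-sample-centered second moment to the mixture variance, then the $2H$-Lipschitzness of $x\mapsto x^2$ recenters the per-sample means $(\PP_{p_i}V_p^\star)(s,a)$ to the common center at a cost of $\sqrt{H\epsilon}$, and finally the bias--variance identity plus $E_{\aggre,\val}$ handles the squared mean deviation. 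You instead perform a single exact expansion of $\VV_{\hat{\PP}^k(\cdot\mid s,a)}[V_p^\star]$ around the per-sample means, producing a heterogeneity term bounded by $\epsilon^2$, a cross term bounded by $2\epsilon\sqrt{\tilde v}$ via Cauchy--Schwarz, and the term $(\hat\mu-\bar\mu)^2$, and then apply the two-parameter square-root self-bounding inequality once (your inequality is a correct variant of the paper's Lemma~\ref{lem:rel-bound}). Both routes rest on exactly the same clean-event ingredients; yours has the mild advantage of giving the intermediate bound $\epsilon+H\sqrt{L(n^k(s,a))/n^k(s,a)}$, slightly sharper than the paper's $\sqrt{H\epsilon}$ contribution, which you then relax using $\epsilon\le 2H$, while the paper's chaining avoids the bookkeeping of putting every term into the $c\sqrt{\bar v}+d$ form. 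One small point to make explicit in that bookkeeping: convert $\epsilon\sqrt{\tilde v}$ to $\epsilon\sqrt{\bar v}+\epsilon H\sqrt{L(n^k(s,a))/n^k(s,a)}$ using $E_{\aggre,\varia}$, and bound $(\hat\mu-\bar\mu)^2\le H\abs{\hat\mu-\bar\mu}$ before invoking $E_{\aggre,\val}$, so that no case analysis on $L/n\le 1$ is needed; these are routine and do not affect correctness.
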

\begin{proof}
\begin{enumerate}[wide, labelwidth=!, labelindent=0pt]
\item By the definition of $E$, we have
\[
\abr{ \frac{1}{n_p^k(s,a)} \sum_{i=1}^{n_p^k(s,a)} 
    (V_p^\star((s_i^p)') - (\PP_p V_p^\star) (s,a) )^2 -  
    \VV_{\PP_p(\cdot \mid s,a)}[V_p^\star] }
    \lesssim \sqrt{ \frac{ H^2  \VV_{\PP_p(\cdot \mid s,a)}[V_p^\star] L(n_p^k(s,a))}{n_p^k(s,a)}} + \frac{H^2 L(n_p^k(s,a))}{n_p^k(s,a)};
\]
this, when combined with Lemma~\ref{lem:rel-bound}, implies that
\begin{equation}
\abs{ \sqrt{ \frac{1}{n_p^k(s,a)} \sum_{i=1}^{n_p^k(s,a)} 
    (V_p^\star((s_i^p)') - (\PP_p V_p^\star) (s,a) )^2 } - \sqrt{ \VV_{\PP_p(\cdot \mid s,a)}[V_p^\star] } }
    \leq 
    H \sqrt{ \frac{ L(n_p^k(s,a))}{n_p^k(s,a)} }.
    \label{eqn:ind-pvar-tvar}
\end{equation}
Now, observe that 
\[
\VV_{\hat{\PP}_p^k(\cdot \mid s,a)} \sbr{ V_p^\star }
= 
\frac{1}{n_p^k(s,a)} \sum_{i=1}^{n_p^k(s,a)} 
    (V_p^\star((s_i^p)') - (\PP_p V_p^\star) (s,a) )^2 
-
    ((\hat{\PP}_p^k V_p^\star) (s,a) - (\PP_p V_p^\star) (s,a) )^2,
\]
which can be seen by applying Lemma~\ref{lem:bv} with $X$ being the random variable that is drawn uniformly from $\cbr{V_p^\star((s_i^p)')}_{i=1}^{n_p^k(s,a)}$, which has expectation $\mu = (\hat{\PP}_p^k V_p^\star) (s,a)$, and setting $m = (\PP_p V_p^\star) (s,a)$. 

Recall that by the definition of event $E$, we have
\[
\abs{ (\hat{\PP}_p^k V_p^\star) (s,a) - (\PP_p V_p^\star) (s,a) } \leq H \wedge  
\del{ \sqrt{ \frac{ H^2  L(n_p^k(s,a))}{n_p^k(s,a)} } + \frac{H L(n_p^k(s,a))}{n_p^k(s,a)} }
\leq 
2H \sqrt{ \frac{   L(n_p^k(s,a))}{n_p^k(s,a)} } ,
\]
where the second inequality uses Lemma~\ref{lem:wedge-a-sqrta}. Using the elementary fact that $\abs{A - B} \leq C \Rightarrow \sqrt{A} \leq \sqrt{B} + \sqrt{C}$, we get that
\begin{align}
\begin{split}
& \abs{ 
\sqrt{\VV_{\hat{\PP}_p^k(\cdot \mid s,a)} \sbr{ V_p^\star }}
-
\sqrt{\frac{1}{n^k(s,a)} \sum_{i=1}^{n^k(s,a)} 
    (V_p^\star((s_i^p)') - (\PP_p V_p^\star) (s,a) )^2} } \\
\leq &
    \abr{ (\hat{\PP}_p^k V_p^\star) (s,a) - (\PP_p V_p^\star) (s,a) } 
\lesssim
H \sqrt{ \frac{  L(n_p^k(s,a))}{n_p^k(s,a)} }.
\end{split}
\label{eqn:ind-evar-pvar}
\end{align}
Combining Equations~\eqref{eqn:ind-pvar-tvar} and~\eqref{eqn:ind-evar-pvar}, using algebra, we get
\[
\abs{ 
\sqrt{\VV_{\hat{\PP}_p^k(\cdot \mid s,a)} \sbr{ V_p^\star }} - \sqrt{\VV_{\PP_p(\cdot \mid s,a)} \sbr{ V_p^\star }} } \lesssim H \sqrt{ \frac{  L(n_p^k(s,a))}{n_p^k(s,a)}},
\]
establishing Equation~\eqref{eqn:emp-bernstein-ind}.

\item We first show Equation~\eqref{eqn:emp-bernstein-agg-1}.
By the definition of $E$, we have
\begin{align*}
 & \abr{ \frac{1}{n^k(s,a)} \sum_{i=1}^{n^k(s,a)} 
    (V_p^\star(s_i') - (\PP_{p_i} V_p^\star) (s,a) )^2 -  
    \sum_{p=1}^M w_p^k(s,a) \VV_{\PP_p(\cdot \mid s,a)}[V_p^\star] } \\
    \lesssim & \sqrt{ \frac{ H^2  \del{ \sum_{p=1}^M w_p^k(s,a) \VV_{\PP_p(\cdot \mid s,a) }[V_p^\star]} L(n^k(s,a))}{n^k(s,a)}} + \frac{H^2 L(n^k(s,a))}{n^k(s,a)},
\end{align*}
this, combined with Lemma~\ref{lem:rel-bound}, implies that
\begin{equation}
\abs{ \sqrt{ \frac{1}{n^k(s,a)} \sum_{i=1}^{n^k(s,a)} 
    (V_p^\star(s_i') - (\PP_{p_i} V_p^\star) (s,a) )^2 } - \sqrt{ \sum_{p=1}^M w_p^k(s,a) \VV_{\PP_p(\cdot \mid s,a)}[V_p^\star] } }
    \lesssim H \sqrt{ \frac{ L(n^k(s,a))}{n^k(s,a)} }.
    \label{eqn:agg-avar1-pvar}
\end{equation}

For the first term on the left hand side, observe that for each $i$, $| (\PP_{p_i} V_p^\star)(s,a) - (\PP_p V_p^\star)(s,a) | \leq H \frac{\epsilon}{H} = \epsilon$, we therefore have $\abs{ (V_p^\star(s_i') - (\PP_{p_i} V_p^\star) (s,a) )^2 - (V_p^\star(s_i') - (\PP_p V_p^\star) (s,a) )^2 } \leq 2H\epsilon$ by  $2H$-Lipschitzness of function $f(x) = x^2$ on $[-H, H]$. By averaging over all $i$'s and taking square root, we have
\begin{equation}
\abs{
\sqrt{\frac{1}{n^k(s,a)} \sum_{i=1}^{n^k(s,a)} 
    (V_p^\star(s_i') - (\PP_{p_i} V_p^\star) (s,a) )^2 }
    - 
   \sqrt{ \frac{1}{n^k(s,a)} \sum_{i=1}^{n^k(s,a)}
    (V_p^\star(s_i') - (\PP_p V_p^\star) (s,a) )^2  }
}
\lesssim
\sqrt{H \epsilon}.
\label{eqn:agg-avar1-avar2}
\end{equation}
Furthermore,
\[
  \VV_{\hat{\PP}^k(\cdot \mid s,a)}\sbr{V_p^\star}
  =
  \frac{1}{n^k(s,a)} \sum_{i=1}^{n^k(s,a)}
    (V_p^\star(s_i') - (\PP_p V_p^\star) (s,a) )^2
    - ( (\hat{\PP}^k V_p^\star)(s,a) - (\PP_p V_p^\star) (s,a) )^2,
\]
which can be seen by applying Lemma~\ref{lem:bv} with $X$ being the random variable that is drawn uniformly from $\cbr{V_p^\star(s_i')}_{i=1}^{n^k(s,a)}$, which has expectation $\mu = (\hat{\PP}^k V_p^\star) (s,a)$, and setting $m = (\PP_p V_p^\star) (s,a)$.

In addition,
\[
\abs{ (\hat{\PP}^k V_p^\star)(s,a) - (\PP_p V_p^\star) (s,a) }
\lesssim 
\epsilon + H\sqrt{ \frac{ L(n^k(s,a))}{n^k(s,a)}}
\]
Together with our assumption that $\epsilon \leq 2H$ (which implies that $\epsilon \lesssim \sqrt{H \epsilon}$), this gives
\begin{equation}
\abs{
    \sqrt{\VV_{\hat{\PP}^k(\cdot \mid s,a)}\sbr{V_p^\star}}
    - 
   \sqrt{ \frac{1}{n^k(s,a)} \sum_{i=1}^{n^k(s,a)}
    (V_p^\star(s_i') - (\PP_p V_p^\star) (s,a) )^2  }
}
\lesssim 
\sqrt{H \epsilon} + H\sqrt{ \frac{ L(n^k(s,a))}{n^k(s,a)}}.
\label{eqn:agg-evar-avar2}
\end{equation}

Equation~\eqref{eqn:emp-bernstein-agg-1} is a direct consequence of  Equations~\eqref{eqn:agg-avar1-pvar},~\eqref{eqn:agg-avar1-avar2} and~\eqref{eqn:agg-evar-avar2} along with algebra.

We now show Equation~\eqref{eqn:emp-bernstein-agg-2} using Equation~\eqref{eqn:emp-bernstein-agg-1}.
By Lemma~\ref{lem:var-tv}, 
for every $q$, $\abs{ \VV_{\PP_q(\cdot \mid s,a)}\sbr{V_p^\star} - \VV_{\PP_p(\cdot \mid s,a)}\sbr{V_p^\star}} \leq 3H^2 \cdot \frac{\epsilon}{H} = 3H\epsilon$. 
Therefore, Jensen's inequality and the convexity of $|\cdot|$ implies $\abs{ \sum_{q=1}^M w_q^k(s,a) \VV_{\PP_p(\cdot \mid s,a)}[V_p^\star] - \VV_{\PP_p(\cdot \mid s,a)}\sbr{V_p^\star} } \leq  3H\epsilon$, and 
\[
 \abs{ \sqrt{\sum_{q=1}^M w_q^k(s,a) \VV_{\PP_q(\cdot \mid s,a)}[V_p^\star]} - \sqrt{ \VV_{\PP_p(\cdot \mid s,a)}\sbr{V_p^\star} }} \lesssim \sqrt{H \epsilon}
\]

This, together with Equation~\eqref{eqn:emp-bernstein-agg-1}, implies
\[
\abs{ \sqrt{\VV_{\hat{\PP}^k(\cdot \mid s,a)}\sbr{V_p^\star}} - \sqrt{\VV_{\PP_p(\cdot \mid s,a)}\sbr{V_p^\star}} }
\lesssim 
\sqrt{H \epsilon} + H\sqrt{ \frac{ L(n^k(s,a))}{n^k(s,a)}},
\]
establishing Equation~\eqref{eqn:emp-bernstein-agg-2}. \qedhere
\end{enumerate}
\end{proof}

\subsection{Simplifying the surplus bounds}
\label{subsec:surplus-bound}

In this section, we show a distribution-dependent bound on the surplus terms, namely Lemma~\ref{lem:final-surplus-bound}, which is key to establishing our regret bound. It can be seen as an extension of Proposition B.4 of~\cite{simchowitz2019non} to our multitask setting using the \mteuler algorithm, under the $\epsilon$-dissimilarity assumption.
Before we present Lemma~\ref{lem:final-surplus-bound} (Section~\ref{subsubsec:surplus_final}), we first show and prove two auxiliary lemmas, Lemma~\ref{lem:vub-vlb} and Lemma~\ref{lem:leading-terms}.

\begin{lemma}[Bounds on $\vub_p^k - \vlb_p^k$, generalization of~\cite{simchowitz2019non}, Lemma F.8] 
\label{lem:vub-vlb}
If $E$ happens, then for all $p \in [M]$, $k \in [K]$, $h \in [H+1]$ and $s \in \Scal_h$,
\begin{equation}
(\vub_p^k - \vlb_p^k)(s)
\leq 
4 \EE\sbr{ \sum_{t=h}^H \del{  H \wedge \bindp_p^k(s_t,a_t) \wedge \baggp_p^k(s_t,a_t) } \mid s_{h} = s, \pi^k(p), \Mcal_p };
\label{eqn:fine-vub-vlb}
\end{equation}
consequently,
\begin{equation}
(\vub_p^k - \vlb_p^k)(s)
\lesssim 
H \sum_{t=h}^H \EE\sbr{  \del{ 1 \wedge \sqrt{ \frac{S L(n_p^k(s_t, a_t))}{n_p^k(s_t, a_t)} } } \mid s_{h} = s, \pi^k(p), \Mcal_p }.
\label{eqn:coarse-vub-vlb}
\end{equation}
\end{lemma}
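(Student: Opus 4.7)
The plan is to prove Equation~\eqref{eqn:fine-vub-vlb} by backward induction on the layer index $h$, and then derive Equation~\eqref{eqn:coarse-vub-vlb} as a corollary by bounding each piece of $\bindp_p^k$ by $O(H\sqrt{SL(n_p^k)/n_p^k})$.

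\textbf{Base case.} For $h = H+1$ we have $\Scal_{H+1} = \{\bot\}$ and $\vub_p^k(\bot) = \vlb_p^k(\bot) = 0$, so the right-hand side (an empty sum) trivially majorizes $0$.

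\textbf{Inductive step.} Fix $s \in \Scal_h$ and let $a = \pi^k(p)(s)$, so that $\vub_p^k(s) = \qub_p^k(s,a)$ and $\vlb_p^k(s) = \qlb_p^k(s,a)$ by line~\ref{line:v-lucb}. Since event $E$ holds and the inductive hypothesis gives $\vlb_p^k \le V_p^\star \le \vub_p^k$ on $\Scal_{h+1}$ (via Lemma~\ref{lem:value-bounds-valid}), we may apply Lemma~\ref{lem:surplus-bounds}: the upper bound gives
\[
\qub_p^k(s,a) \le R_p(s,a) + (\PP_p \vub_p^k)(s,a) + \bigl[(H-h+1)\wedge 2\bindp_p^k(s,a)\wedge 2\baggp_p^k(s,a)\bigr],
\]
and symmetrically for $\qlb_p^k$. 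Subtracting, and using the elementary inequality $\min(x, 2y, 2z) \le 2\min(x, y, z)$ for nonnegative $x,y,z$ (with $x = H-h+1 \le H$), we obtain
\[
(\vub_p^k - \vlb_p^k)(s) \le \bigl(\PP_p(\vub_p^k - \vlb_p^k)\bigr)(s,a) + 4\bigl[H \wedge \bindp_p^k(s,a) \wedge \baggp_p^k(s,a)\bigr].
\]
Plugging the inductive hypothesis into the first term and pushing the expectation through $\PP_p(\cdot \mid s, a)$ gives the tail of the trajectory sum from $t=h+1$ to $H$; the additive bonus is precisely the $t=h$ term, since $a_h = \pi^k(p)(s) = a$ is determined under the conditioning. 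This establishes Equation~\eqref{eqn:fine-vub-vlb} at layer $h$.

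\textbf{From~\eqref{eqn:fine-vub-vlb} to~\eqref{eqn:coarse-vub-vlb}.} It suffices to argue $H \wedge \bindp_p^k(s,a) \lesssim H \cdot (1 \wedge \sqrt{S L(n_p^k(s,a))/n_p^k(s,a)})$ pointwise. Unfolding the three components of $\bindp_p^k$ with $\kappa = 0$: the reward bonus is $O(\sqrt{L/n_p})$; since $\vub_p^k, \vlb_p^k \in [0,H]$ the variance $\VV_{s' \sim \hat{\PP}_p^k}[\vub_p^k]$ and the squared-gap term $\EE_{s'\sim \hat{\PP}_p^k}[(\vub_p^k-\vlb_p^k)^2]$ are each at most $H^2$, so $b_{\prob} \lesssim H\sqrt{L/n_p} + HL/n_p$; and $b_{\str} \lesssim H\sqrt{SL/n_p} + HSL/n_p$, which dominates. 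Combining and using $x \wedge y \lesssim x \wedge \sqrt{xy}$ style simplifications (i.e., Lemma~\ref{lem:wedge-a-sqrta}-type reasoning to fold the $1/n_p$ term into the $1/\sqrt{n_p}$ term once we cap by $H$) yields $H \wedge \bindp_p^k \lesssim H(1 \wedge \sqrt{SL/n_p})$. Substituting into Equation~\eqref{eqn:fine-vub-vlb} and dropping the $\baggp_p^k$ factor in the $\min$ gives Equation~\eqref{eqn:coarse-vub-vlb}.

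\textbf{Main obstacle.} The induction itself is mechanical once Lemma~\ref{lem:surplus-bounds} is available. The only delicate point is handling the $\min$ carefully in the inductive step---specifically, using the bound $H-h+1 \le H$ to convert the clipped surplus bound into the uniform $H \wedge \bindp_p^k \wedge \baggp_p^k$ form without losing constants---and, for the coarse bound, verifying that the $b_{\str}$ term indeed dominates the others, so that the simple $\sqrt{S L/n_p}$ rate survives after combining the three bonus components.
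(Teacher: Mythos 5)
Your proposal is correct and follows essentially the same route as the paper: backward induction on the layer, invoking Lemma~\ref{lem:value-bounds-valid} to justify the premise of Lemma~\ref{lem:surplus-bounds} and summing the upper- and lower-bound surplus inequalities at $(s,\pi^k(p)(s))$ to get the $4(H \wedge \bindp_p^k \wedge \baggp_p^k)$ increment, then deriving the coarse bound by bounding the variance and squared-gap terms in $\bindp_p^k$ by $H^2$ and folding the lower-order term via Lemma~\ref{lem:wedge-a-sqrta}. The only cosmetic difference is that you rederive the factor $4$ from $\min(x,2y,2z)\le 2\min(x,y,z)$ rather than quoting it directly, which is fine.
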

\begin{proof}
First, Lemmas~\ref{lem:value-bounds-valid} and~\ref{lem:surplus-bounds} together imply that if $E$ holds, Equations~\eqref{eqn:strong-optimism} and~\eqref{eqn:strong-pessimism} holds for all $p,k,s,a$. 
Under this premise, we show Equation~\eqref{eqn:fine-vub-vlb} by backward induction. 

\paragraph{Base case:} for $h = H+1$, we have that LHS is $(\vub_p^k - \vlb_p^k)(\bot) = 0$ which is equal to the RHS.

\paragraph{inductive case:} Suppose Equation~\eqref{eqn:fine-vub-vlb} holds for all $s \in \Scal_{h+1}$. Now consider $s \in \Scal_h$. 
By the definitions of $\vub_p^k$ and $\vlb_p^k$, 
\begin{align*}
    & (\vub_p^k - \vlb_p^k)(s)  \\
    = & \qub_p^k(s, \pi_p^k(s)) - \qlb_p^k(s, \pi_p^k(s)) \\
    \leq & (\PP_p (\vub_p^k - \vlb_p^k)) (s,\pi_p^k(s)) + 4 (H \wedge  \bindp_p^k(s,\pi_p^k(s)) \wedge \baggp_p^k(s,\pi_p^k(s))) \\
    = & \EE\sbr{ 4 \min(H, \bindp_p^k(s,a), \baggp_p^k(s,a)) + (\vub_p^k - \vlb_p^k)(s_{h+1}) \mid s_h = s, \pi_p^k, \Mcal_p } \\
    \leq & \EE\sbr{ 4 (H \wedge \bindp_p^k(s,a) \wedge \baggp_p^k(s,a)) + \EE\sbr{ \sum_{t=h+1}^H \del{  H \wedge 2\bindp_p^k(s_t,a_t) \wedge 2\baggp_p^k(s_t,a_t) } \mid s_{h+1} } \mid s_h = s, \pi_p^k, \Mcal_p } \\
    \leq & 4 \EE\sbr{ \sum_{t=h}^H \del{  H \wedge \bindp_p^k(s_t,a_t) \wedge \baggp_p^k(s_t,a_t) } \mid s_{h} = s, \pi_p^k, \Mcal_p },
\end{align*}
where the first inequality is from Equations~\eqref{eqn:strong-optimism} and~\eqref{eqn:strong-pessimism} for $(s,a)$ and player $p$ at episode $k$, and the second inequality is from the inductive hypothesis; the third inequality is by algebra. This completes the induction.

We now show Equation~\eqref{eqn:coarse-vub-vlb}. By the definition of $\bindp_p^k(s,a)$ and algebra, 
\begin{align*}
& \bindp_p^k(s,a) \\
\lesssim &
\sqrt{ \frac{ \VV_{\hat{\PP}_p^k (\cdot \mid s,a)}\sbr{\vub_p^k} \;  L(n_p^k(s,a))}{n_p^k(s,a)} } + \sqrt{ \frac{ L(n_p^k(s,a))}{n_p^k(s,a)} } + \sqrt{ \frac{ S \| \vub_p^k - \vlb_p^k \|_{\hat{\PP}_p^k (\cdot \mid s,a)}^2 \; L(n_p^k(s,a))}{n_p^k(s,a)} } + \frac{H S  L(n_p^k(s,a))}{n_p^k(s,a)} \\
\lesssim & 
H \sqrt{ \frac{SL(n_p^k(s_t, a_t))}{n_p^k(s_t, a_t)} }+ \frac{HSL(n_p^k(s_t, a_t))}{n_p^k(s_t, a_t)},
\end{align*}
where the second inequality uses $\VV_{\hat{\PP}_p^k (\cdot \mid s,a)}\sbr{\vub_p^k} \leq H^2$ and $\| \vub_p^k - \vlb_p^k \|_{\hat{\PP}_p^k}^2 \leq H^2$. 

As a consequence, using Lemma~\ref{lem:wedge-a-sqrta},
\begin{align*}
H \wedge \bindp_p^k(s_t,a_t) \wedge \baggp_p^k(s_t,a_t)
\lesssim & H \wedge \del{ H \sqrt{ \frac{SL(n_p^k(s_t, a_t))}{n_p^k(s_t, a_t)} }+ \frac{HSL(n_p^k(s_t, a_t))}{n_p^k(s_t, a_t)} } \\
\lesssim & H \del{ 1 \wedge \sqrt{ \frac{SL(n_p^k(s_t, a_t))}{n_p^k(s_t, a_t)} } }. \qedhere
\end{align*}
\end{proof}

\begin{lemma}%
\label{lem:leading-terms}
If $E$ happens, we have the following statements holding for all $p,k,s,a$:
\begin{enumerate}[wide, labelwidth=!, labelindent=0pt]
    \item For two terms that appear in $\bindp_p^k(s,a)$, they are bounded respectively as:
        \begin{equation}
        \| \vub_p^k - \vlb_p^k \|_{\hat{\PP}_p^k(\cdot \mid s,a)}^2
        \lesssim 
        \| \vub_p^k - \vlb_p^k \|_{\PP_p(\cdot \mid s,a)}^2
        +
        \frac{H^2 S L(n_p^k(s,a)) }{n_p^k(s,a)}
        \label{eqn:ind-ell-2}
        \end{equation}
        \begin{align}
        \begin{split}
        \sqrt{ \frac{ \VV_{\hat{\PP}_p^k(\cdot \mid s,a)}\sbr{ \vub_p^k } L(n_p^k(s,a)) }{ n_p^k(s,a) } }
        \lesssim & 
        \sqrt{ \frac{ \VV_{\PP_p(\cdot \mid s,a)}\sbr{ V_p^{\pi^k} } L(n_p^k(s,a)) }{ n_p^k(s,a) } } \\
        & + 
        \sqrt{ \frac{ \| \vub_p^k - \vlb_p^k \|_{\PP_p(\cdot \mid s,a)}^2 L(n_p^k(s,a)) }{ n_p^k(s,a) } }
        + 
        \frac{H \sqrt{S} L(n_p^k(s,a))}{n_p^k(s,a)}
        \end{split}
       \label{eqn:ind-leading-var}
        \end{align}

    \item For two terms that appear in $\baggp_p^k(s,a)$, they are bounded respectively as:
        \begin{equation}
        \| \vub_p^k - \vlb_p^k \|_{\hat{\PP}^k(\cdot \mid s,a)}^2
        \lesssim 
        2 \| \vub_p^k - \vlb_p^k \|_{\PP_p(\cdot \mid s,a)}^2
        +
        \frac{H^2 S L(n_p^k(s,a)) }{n_p^k(s,a)}
        + H \epsilon
        \label{eqn:agg-ell-2}
        \end{equation}
        \begin{align}
        \sqrt{ \frac{ \VV_{\hat{\PP}^k(\cdot \mid s,a)}\sbr{ \vub_p^k } L(n^k(s,a)) }{ n^k(s,a) } }
        \lesssim & 
        \sqrt{ \frac{ \VV_{\PP_p(\cdot \mid s,a)}\sbr{ V_p^{\pi^k} } L(n^k(s,a)) }{ n^k(s,a) } }
        +
        \sqrt{ \frac{ \| \vub_p^k - \vlb_p^k \|_{\PP_p(\cdot \mid s,a)}^2 L(n^k(s,a)) }{ n^k(s,a) } }
        \nonumber \\
        & + \frac{H \sqrt{S} L(n^k(s,a))}{n^k(s,a)}
        +
        \sqrt{ \frac{ H \epsilon L(n^k(s,a)) }{ n^k(s,a) } }
        \label{eqn:agg-leading-var}
        \end{align}
\end{enumerate}

\end{lemma}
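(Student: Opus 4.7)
} My plan is to treat the four inequalities in two groups: the two $\ell_2$-type bounds \eqref{eqn:ind-ell-2} and \eqref{eqn:agg-ell-2} first, and then use them as building blocks for the two variance bounds \eqref{eqn:ind-leading-var} and \eqref{eqn:agg-leading-var}. Throughout I will write $X(s') := (\vub_p^k - \vlb_p^k)(s')$, which satisfies $0 \le X(s') \le H$, and $Y(s') := (\vub_p^k - V_p^\star)(s')$, which by Lemma~\ref{lem:value-bounds-valid} satisfies $0 \le Y(s') \le X(s')$ (and similarly for $V_p^\star - V_p^{\pi^k}$).

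For \eqref{eqn:ind-ell-2}, I write $\|X\|_{\hat{\PP}_p^k}^2 - \|X\|_{\PP_p}^2 = \sum_{s'}(\hat{\PP}_p^k - \PP_p)(s'|s,a) X(s')^2$ and apply the transition concentration from $E_{\indiv,\prob}$ (Equation~\eqref{eqn:ind-transition}) termwise. The $\tfrac{2L}{n}$ tail contributes $\sum_{s'} \tfrac{2L}{n_p^k} X(s')^2 \le \tfrac{2H^2 S L(n_p^k)}{n_p^k}$. For the leading $\sqrt{\PP_p(s'|s,a) L/n_p^k}$ piece, I use $X(s')^2 \le H \cdot X(s')$ to pull out one factor of $H$, apply Cauchy--Schwarz in the form $\sum_{s'}\sqrt{\PP_p(s'|s,a)} X(s') \le \sqrt{S} \|X\|_{\PP_p(\cdot|s,a)}$, then invoke AM--GM on $\sqrt{SL/n_p^k} \cdot H \|X\|_{\PP_p}$ to split into $\|X\|_{\PP_p}^2$ plus an $O(H^2 S L/n_p^k)$ residual. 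For \eqref{eqn:agg-ell-2} I triangle through $\bar{\PP}^k$: the step from $\hat{\PP}^k$ to $\bar{\PP}^k$ uses $E_{\aggre,\prob}$ identically, while $|\|X\|_{\bar{\PP}^k}^2 - \|X\|_{\PP_p}^2| \le \|\bar{\PP}^k - \PP_p\|_1 \cdot H^2 \le H\epsilon$ by $\epsilon$-dissimilarity (recalling $\|\PP_p - \PP_q\|_1 \le \epsilon/H$ and averaging).

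For the variance bounds \eqref{eqn:ind-leading-var} and \eqref{eqn:agg-leading-var}, the key device is the subadditivity of the standard deviation (Lemma~\ref{lem:var-x-y}) applied twice with the decomposition $\vub_p^k = V_p^{\pi^k} + (V_p^\star - V_p^{\pi^k}) + (\vub_p^k - V_p^\star)$. This gives, working inside $\PP_p$, $\sqrt{\VV_{\PP_p}[\vub_p^k]} \le \sqrt{\VV_{\PP_p}[V_p^{\pi^k}]} + \sqrt{\VV_{\PP_p}[V_p^\star - V_p^{\pi^k}]} + \sqrt{\VV_{\PP_p}[\vub_p^k - V_p^\star]}$, and each of the last two terms I bound by the second moment of a nonnegative quantity dominated by $X$, so they are $\le \|X\|_{\PP_p(\cdot|s,a)}$. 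To bring $\hat{\PP}_p^k$ into the picture I first triangle $\sqrt{\VV_{\hat{\PP}_p^k}[\vub_p^k]} \le \sqrt{\VV_{\hat{\PP}_p^k}[V_p^\star]} + \sqrt{\VV_{\hat{\PP}_p^k}[\vub_p^k - V_p^\star]}$; the first summand is then transferred to $\sqrt{\VV_{\PP_p}[V_p^\star]}$ via Equation~\eqref{eqn:emp-bernstein-ind} of Lemma~\ref{lem:emp-bernstein-eps}, paying an $O(H\sqrt{L/n_p^k})$ additive error which, after multiplying through by $\sqrt{L/n_p^k}$, becomes the $HL/n_p^k \le H\sqrt{S}L/n_p^k$ term in the target bound. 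The second summand is bounded by $\sqrt{\|X\|_{\hat{\PP}_p^k}^2}$, and then Equation~\eqref{eqn:ind-ell-2} (combined with $\sqrt{a+b} \le \sqrt{a} + \sqrt{b}$) replaces it with $\sqrt{\|X\|_{\PP_p}^2} + H\sqrt{S L(n_p^k)/n_p^k}$. Multiplying the whole resulting inequality by $\sqrt{L(n_p^k)/n_p^k}$ produces exactly \eqref{eqn:ind-leading-var}.

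The aggregate bound \eqref{eqn:agg-leading-var} follows by the same template, but substituting \eqref{eqn:emp-bernstein-agg-2} for \eqref{eqn:emp-bernstein-ind} when transferring variances from $\hat{\PP}^k$ to $\PP_p$, and \eqref{eqn:agg-ell-2} for \eqref{eqn:ind-ell-2} when transferring $\ell_2$ norms. The additional $\sqrt{H\epsilon}$ in \eqref{eqn:emp-bernstein-agg-2}, once multiplied by $\sqrt{L/n^k}$, becomes the extra $\sqrt{H\epsilon L(n^k)/n^k}$ summand in the stated bound; the extra $H\epsilon$ inside \eqref{eqn:agg-ell-2} is absorbed into the same term via $\sqrt{H\epsilon \cdot L/n^k}$. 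The main obstacle I anticipate is bookkeeping: many residual error terms of the form $H\sqrt{SL/n}$, $HL/n$, $\sqrt{H\epsilon L/n}$ arise and must be carefully consolidated against the dominant terms using AM--GM and $\sqrt{a+b} \le \sqrt{a}+\sqrt{b}$, and I need to ensure that the data-dependent functions $\vub_p^k, \vlb_p^k$ never appear inside a concentration inequality that requires a deterministic integrand---this is precisely why the decomposition through $V_p^\star$ is essential.
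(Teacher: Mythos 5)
Your proposal is correct and follows essentially the same route as the paper's proof: pointwise transition concentration plus AM--GM for the $\ell_2$ bounds, then Lemma~\ref{lem:var-x-y} with $V_p^\star$ as the deterministic pivot, Lemma~\ref{lem:emp-bernstein-eps} to transfer variances between empirical and true transitions, and the $\ell_2$ bounds to absorb the $\vub_p^k-\vlb_p^k$ terms, with the dissimilarity cost surfacing as the $H\epsilon$ and $\sqrt{H\epsilon L/n}$ terms. The only difference is cosmetic: for the cross term in \eqref{eqn:ind-ell-2} you use Cauchy--Schwarz and AM--GM against $\|\vub_p^k-\vlb_p^k\|_{\PP_p(\cdot\mid s,a)}^2$, whereas the paper first derives the pointwise relation $\hat{\PP}_p^k(s'\mid s,a)\lesssim \PP_p(s'\mid s,a)+L(n_p^k(s,a))/n_p^k(s,a)$ and multiplies through; both yield the same bound.
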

\begin{proof} 
First, Lemmas~\ref{lem:value-bounds-valid} and~\ref{lem:surplus-bounds} together imply that if $E$ happens, the value function upper and lower bounds are valid.
Conditioned on $E$ happening, we prove the two items respectively. 

\begin{enumerate}[wide, labelwidth=!, labelindent=0pt]
\item For Equation~\eqref{eqn:ind-ell-2}, using the definition of $E_{\indiv,\prob}$ and AM-GM inequality, when $E$ happens, 
we have for all $p,k,s, a, s'$, 
\begin{equation}
    \hat{\PP}_p^k(s' \mid s,a) \lesssim \PP_p(s' \mid s,a) + \frac{ L(n_p^k(s,a))}{n_p^k(s,a)}. \label{eqn:ppk-rel-bound}
\end{equation}
This implies that 
\begin{align*}
& \| \vub_p^k - \vlb_p^k \|_{\hat{\PP}_p^k(\cdot \mid s,a)}^2 \\
= & \sum_{s' \in \Scal_{h+1}} \hat{\PP}_p^k(s' \mid s,a) (\vub_p^k(s') - \vlb_p^k(s'))^2 \\
\lesssim & \sum_{s' \in \Scal_{h+1}} \PP_p^k(s' \mid s,a) (\vub_p^k(s') - \vlb_p^k(s'))^2 + \sum_{s' \in \Scal_{h+1}} \frac{ L(n_p^k(s,a))}{n_p^k(s,a)} \cdot H^2 \\
\lesssim & \| \vub_p^k - \vlb_p^k \|_{\PP_p(\cdot \mid s,a)}^2 + \frac{S  H^2 L(n_p^k(s,a))}{n_p^k(s,a)},
\end{align*}
where the first inequality is from Equation~\eqref{eqn:ppk-rel-bound}, and the fact that $\vub_p^k(s') - \vlb_p^k(s') \in [0,H]$ for any $s' \in \Scal_{h+1}$; the second inequality is by algebra.

For Equation~\eqref{eqn:ind-leading-var}, we have:
\begin{align*}
    & \sqrt{ \frac{ \VV_{\hat{\PP}_p^k(\cdot \mid s,a)}\sbr{ \vub_p^k } L(n_p^k(s,a)) }{ n_p^k(s,a) } } \\
    \lesssim & \sqrt{ \frac{ \VV_{\hat{\PP}_p^k(\cdot \mid s,a)}\sbr{ V_p^\star } L(n_p^k(s,a)) }{ n_p^k(s,a) } } + \sqrt{ \frac{ \| \vub_p^k - \vlb_p^k \|_{\hat{\PP}_p^k(\cdot \mid s,a)}^2 L(n_p^k(s,a)) }{ n_p^k(s,a) } } \\
    \lesssim & \sqrt{ \frac{ \VV_{\PP_p(\cdot \mid s,a)}\sbr{ V_p^\star } L(n_p^k(s,a)) }{ n_p^k(s,a) } } + \sqrt{ \frac{ \| \vub_p^k - \vlb_p^k \|_{\PP_p(\cdot \mid s,a)}^2 L(n_p^k(s,a)) }{ n_p^k(s,a) } } + \frac{\sqrt{S} H L(n_p^k(s,a))}{n_p^k(s,a)} \\
    \lesssim &  
    \sqrt{ \frac{ \VV_{\PP_p(\cdot \mid s,a)}\sbr{ V_p^{\pi^k} } L(n_p^k(s,a)) }{ n_p^k(s,a) } }
    +
    \sqrt{ \frac{ \| \vub_p^k - \vlb_p^k \|_{\PP_p(\cdot \mid s,a)}^2 L(n_p^k(s,a)) }{ n_p^k(s,a) } }
    + 
    \frac{\sqrt{S} H L(n_p^k(s,a))}{n_p^k(s,a)}
\end{align*}
where the first inequality is from Lemma~\ref{lem:var-x-y} and the observation that when $E$ happens, $\abs{ (\vub_p^k - V_p^\star) (s') }
\leq \abs{ (\vub_p^k - \vlb_p^k)(s') }$ for all $s' \in \Scal_{h+1}$; the second inequality is from Equation~\eqref{eqn:emp-bernstein-ind} of Lemma~\ref{lem:emp-bernstein-eps} and Equation~\eqref{eqn:ind-ell-2}; the third inequality again uses Lemma~\ref{lem:var-x-y} and the observation that when $E$ happens, $\abs{ (V_p^\star - V_p^{\pi^k}) (s') } \leq \abs{ (\vub_p^k - \vlb_p^k) (s') }$ for all $s' \in \Scal_{h+1}$. 

\item For Equation~\eqref{eqn:agg-ell-2}, using the definition of $E_{\aggre,\prob}$ and AM-GM inequality, when $E$ happens, 
we have for all $p,k,s, a, s'$, 
\begin{equation} 
\hat{\PP}^k(s' \mid s,a) \lesssim \bar{\PP}^k(s' \mid s,a) + \frac{ L(n^k(s,a))}{n^k(s,a)}. 
\label{eqn:pk-rel-bound}
\end{equation}
This implies that 
\begin{align*}
& \| \vub_p^k - \vlb_p^k \|_{\hat{\PP}^k(\cdot \mid s,a)}^2 \\
= & \sum_{s' \in \Scal_{h+1}} \hat{\PP}^k(s' \mid s,a) (\vub_p^k(s') - \vlb_p^k(s'))^2 \\
\lesssim & 2 \sum_{s' \in \Scal_{h+1}} \bar{\PP}^k(s' \mid s,a) (\vub_p^k(s') - \vlb_p^k(s'))^2 + \sum_{s' \in \Scal_{h+1}} \frac{ L(n_p^k(s,a))}{n_p^k(s,a)} \cdot H^2 \\
\lesssim & 2 \sum_{s' \in \Scal_{h+1}} \PP_p (s' \mid s,a) (\vub_p^k(s') - \vlb_p^k(s'))^2 + \epsilon H + \frac{S H^2 L(n_p^k(s,a))}{n_p^k(s,a)} \\
\lesssim & \| \vub_p^k - \vlb_p^k \|_{\PP_p(\cdot \mid s,a)}^2 + \frac{S  H^2 L(n_p^k(s,a))}{n_p^k(s,a)} + \epsilon H,
\end{align*}
where the first inequality is from Equation~\eqref{eqn:pk-rel-bound} and the fact that $\vub_p^k(s') - \vlb_p^k(s') \in [0,H]$ for any $s' \in \Scal_{h+1}$; the second inequality is from the observation that $\| \PP_p(\cdot \mid s,a) - \bar{\PP}^k(\cdot \mid s,a) \|_1 \leq \frac\epsilon H$; the third inequality is by algebra.

For Equation~\eqref{eqn:agg-leading-var}, we have:
\begin{align*}
    & \sqrt{ \frac{ \VV_{\hat{\PP}^k(\cdot \mid s,a)}\sbr{ \vub_p^k } L(n_p^k(s,a)) }{ n_p^k(s,a) } } \\
    \lesssim & \sqrt{ \frac{ \VV_{\hat{\PP}^k(\cdot \mid s,a)}\sbr{ V_p^\star } L(n_p^k(s,a)) }{ n_p^k(s,a) } } + \sqrt{ \frac{ \| \vub_p^k - \vlb_p^k \|_{\hat{\PP}^k(\cdot \mid s,a)}^2 L(n_p^k(s,a)) }{ n_p^k(s,a) } } \\
    \lesssim & \sqrt{ \frac{ \VV_{\PP_p(\cdot \mid s,a)}\sbr{ V_p^\star } L(n_p^k(s,a)) }{ n_p^k(s,a) } } + \sqrt{ \frac{ \| \vub_p^k - \vlb_p^k \|_{\PP_p(\cdot \mid s,a)}^2 L(n_p^k(s,a)) }{ n_p^k(s,a) } } + \frac{\sqrt{S} H L(n_p^k(s,a))}{n_p^k(s,a)}
    +
    \sqrt{ \frac{ H \epsilon L(n_p^k(s,a)) }{ n_p^k(s,a) } }
    \\
    \lesssim &  
    \sqrt{ \frac{ \VV_{\PP_p(\cdot \mid s,a)}\sbr{ V_p^{\pi^k} } L(n_p^k(s,a)) }{ n_p^k(s,a) } }
    +
    \sqrt{ \frac{ \| \vub_p^k - \vlb_p^k \|_{\PP_p(\cdot \mid s,a)}^2 L(n_p^k(s,a)) }{ n_p^k(s,a) } }
    + 
    \frac{\sqrt{S} H L(n_p^k(s,a))}{n_p^k(s,a)}
    +
    \sqrt{ \frac{ H \epsilon L(n_p^k(s,a)) }{ n_p^k(s,a) } },
\end{align*}
where the first inequality is from Lemma~\ref{lem:var-x-y} and the observation that when $E$ happens, $\abs{(\vub_p^k - V_p^\star)(s') }
\leq \abs{(\vub_p^k - \vlb_p^k)(s') }$ for $s' \in \Scal_{h+1}$; the second inequality uses Equation~\eqref{eqn:emp-bernstein-agg-2} of Lemma~\ref{lem:emp-bernstein-eps} and Equation~\eqref{eqn:agg-ell-2}; the third inequality is from Lemma~\ref{lem:var-x-y} and the observation that when $E$ happens, $\abs{(\vub_p^\star - V_p^{\pi^k})(s') }
\leq \abs{(\vub_p^k - \vlb_p^k)(s') }$ for $s' \in \Scal_{h+1}$. \qedhere
\end{enumerate}
\end{proof}

\subsubsection{Distribution-dependent bound on the surplus terms}
\label{subsubsec:surplus_final}

\begin{lemma}[Surplus bound]
\label{lem:final-surplus-bound}
If $E$ happens, then for all $p, k, s, a$:
\begin{align*}
E_p^k(s,a) 
\lesssim &
B_p^{k,\lead}(s,a)
+ \EE \sbr{ \sum_{t=h}^H B_p^{k, \fut}(s_t, a_t) \mid (s_h, a_h) = (s, a), \pi^k(p), \Mcal_p },
\end{align*}
where 
\begin{align*}
& B_p^{k,\lead}(s,a)
=
H \wedge \del{ 5 \epsilon + O\del{\sqrt{ \frac{\del{ 1+ \VV_{\PP_p(\cdot \mid  s,a)}[V_p^{\pi^k}]} L(n^k(s,a))}{n^k(s,a)} }} }
\\
& \qquad \qquad \qquad 
\;
\; \wedge O\del{\sqrt{ \frac{\del{ 1+\VV_{\PP_p(\cdot \mid s,a) } [V_p^{\pi^k}]} L(n_p^k(s,a))}{n_p^k(s,a)} }}, \\
& B_p^{k, \fut}(s,a) = H^3 \wedge O\del{\frac{H^3 S L(n_p^k(s,a))}{n_p^k(s,a)}}. 
\end{align*}
\end{lemma}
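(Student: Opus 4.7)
The starting point is Lemma~\ref{lem:surplus-bounds}, which (since Lemma~\ref{lem:value-bounds-valid} has established the hypothesis that $\vub_p^k, \vlb_p^k$ are valid bounds on $V_p^\star$ under $E$) gives
\[
E_p^k(s,a) \;\leq\; (H-h+1) \wedge 2\bindp_p^k(s,a) \wedge 2\baggp_p^k(s,a).
\]
The plan is to expand both bonus terms according to their definitions, move every empirical moment that appears inside to a true-model moment via Lemma~\ref{lem:leading-terms}, and then sort the resulting summands into two piles: a ``leading'' pile consisting of the $\sqrt{(1+\VV_{\PP_p(\cdot\mid s,a)}[V_p^{\pi^k}])L/n}$ style terms (and the $\epsilon$ offset for the aggregate branch), which will become $B_p^{k,\lead}$; and a ``future'' pile consisting of everything involving $\|\vub_p^k-\vlb_p^k\|^2$ together with the $HSL/n$, $H\sqrt{S}L/n$ and $\sqrt{H\epsilon L/n}$ remainders, which will become the $\EE\sbr{\sum_t B_p^{k,\fut}(s_t,a_t)}$ piece.

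In more detail, after invoking Lemma~\ref{lem:leading-terms} and replacing $\hat{\PP}_p^k, \hat{\PP}^k$ by $\PP_p$ at the cost of $H^2 SL/n$-type additive corrections (and an $H\epsilon$ correction on the aggregate side), each bonus takes the schematic form
\[
\bindp_p^k(s,a),\;\baggp_p^k(s,a) \;\lesssim\; \underbrace{\epsilon + \sqrt{\tfrac{(1+\VV_{\PP_p(\cdot\mid s,a)}[V_p^{\pi^k}])L}{n}}}_{\text{leading}} \;+\; \underbrace{\sqrt{\tfrac{S\,\|\vub_p^k-\vlb_p^k\|^2_{\PP_p(\cdot\mid s,a)} L}{n}} + \tfrac{HSL}{n}}_{\text{future}},
\]
where $n = n_p^k(s,a)$ for the individual branch (with $\epsilon=0$) and $n = n^k(s,a)$ for the aggregate branch. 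The small cross terms of the form $\sqrt{H\epsilon L/n}$ and $\sqrt{SH\epsilon L/n}$ that appear only on the aggregate side are disposed of by AM-GM (e.g.\ $\sqrt{SH\epsilon L/n} \leq \epsilon + SHL/n$), which is why a constant multiple of $\epsilon$ suffices in $B_p^{k,\lead}$. Taking the minimum of the two branches then yields exactly the $\min$ structure that appears in the definition of $B_p^{k,\lead}$, provided we use the elementary fact $\min(H, A+B) \leq \min(H, A) + B$ (for $A,B\geq 0$) to push the $(H-h+1)$-clip onto the leading part only.

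The main obstacle, and the step I would spend the most care on, is converting the ``gap'' term $\sqrt{S\,\|\vub_p^k-\vlb_p^k\|^2_{\PP_p(\cdot\mid s,a)} L/n_p^k(s,a)}$ into the claimed future-bonus form. For this I would apply Equation~\eqref{eqn:coarse-vub-vlb} pointwise to $s'\in\Scal_{h+1}$, then square and use Jensen together with Cauchy--Schwarz on the $H$-term sum to get
\[
\|\vub_p^k-\vlb_p^k\|^2_{\PP_p(\cdot\mid s,a)} \;\lesssim\; H^3\,\EE\sbr{\sum_{t=h+1}^{H} 1\wedge \tfrac{SL(n_p^k(s_t,a_t))}{n_p^k(s_t,a_t)} \;\Big|\; s_h=s, a_h=a, \pi^k(p), \Mcal_p},
\]
using the identity $(1\wedge\sqrt{x})^2 = 1\wedge x$. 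Substituting this into the gap term and applying AM-GM of the form $\sqrt{uv} \leq u/(2c) + cv/2$ with $c = 2H^{3/2}$ separates it into $O(SL/n_p^k(s,a))$, which is dominated by $B_p^{k,\fut}(s,a)$, and $O(H^3)\cdot\EE\sbr{\sum_{t>h}(1\wedge SL/n_p^k(s_t,a_t))}$, which is exactly $\EE\sbr{\sum_{t>h} B_p^{k,\fut}(s_t,a_t)}$. The leftover $HSL/n_p^k(s,a)$ remainder is also absorbed into $B_p^{k,\fut}(s,a)$ since $H \leq H^3$. Folding the current-step contribution into the sum over $t\geq h$ and combining with the leading pile gives the stated inequality.
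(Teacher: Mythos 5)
Your route is essentially the paper's: reduce to $H \wedge 2\bindp_p^k \wedge 2\baggp_p^k$ via Lemma~\ref{lem:surplus-bounds} and Lemma~\ref{lem:value-bounds-valid}, pass from empirical to true moments with Lemma~\ref{lem:leading-terms}, kill the $\sqrt{H\epsilon L/n}$-type cross terms by AM--GM, and control the $\vub_p^k-\vlb_p^k$ contribution by squaring Equation~\eqref{eqn:coarse-vub-vlb} with Cauchy--Schwarz/Jensen; the only cosmetic difference is that the paper applies AM--GM to $\sqrt{S\|\vub_p^k-\vlb_p^k\|^2_{\PP_p}L/n}$ before invoking Lemma~\ref{lem:vub-vlb}, while you substitute the coarse bound first and split afterwards. (When merging the two branches' future piles into a single $n_p^k$-based term you should also record the observation $L(n^k(s,a))/n^k(s,a)\le L(n_p^k(s,a))/n_p^k(s,a)$, which the paper states explicitly.)

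One step would fail as literally written: by using $\min(H,A+B)\le\min(H,A)+B$ you strip the $H$-clip from the future pile, and then claim the un-clipped remainders $O(SL(n_p^k(s,a))/n_p^k(s,a))$ and $HSL(n_p^k(s,a))/n_p^k(s,a)$ are dominated by $B_p^{k,\fut}(s,a)=H^3\wedge O(H^3SL(n_p^k(s,a))/n_p^k(s,a))$. This domination is false once $SL(n_p^k(s,a))/n_p^k(s,a)\gg H^2$ (small counts, large $S$), since the right-hand side saturates at $H^3$ while your remainder does not. The fix is one line and is what the paper implicitly does by carrying a ``$\wedge\,H$'' on the lower-order term: distribute the clip to both piles via $\min(H,A+B)\le\min(H,A)+\min(H,B)$, after which $(HSL/n)\wedge H\le H^3\wedge O(H^3SL/n)$; equivalently, note that in the problematic regime $B_p^{k,\fut}(s,a)\eqsim H^3\ge H\ge E_p^k(s,a)$, so the lemma holds trivially there. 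With that patch your argument goes through.
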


\begin{proof}[Proof of Lemma~\ref{lem:final-surplus-bound}]
First, Lemmas~\ref{lem:value-bounds-valid} and~\ref{lem:surplus-bounds} together imply that if $E$ holds, for all $p,k,s,a$,
$E_p^k(s,a) \leq 2\del{ H \wedge \bindp_p^k(s,a) \wedge  \baggp_p^k(s,a) }$. We now bound $\bindp_p^k(s,a)$ and $\baggp_p^k(s,a)$ respectively.

\paragraph{Bounding $\bindp_p^k(s,a)$:} We have
\begin{align*}
   & \bindp_p^k(s,a)  \\ 
   = & O\del{\sqrt{ \frac{ \VV_{\hat{\PP}_p^k (\cdot \mid s,a)}[\vub_p^k] \;  L(n_p^k(s,a))}{n_p^k(s,a)} } + \sqrt{ \frac{ L(n_p^k(s,a))}{n_p^k(s,a)} } + \sqrt{ \frac{ S \| \vub_p^k - \vlb_p^k \|_{\hat{\PP}_p^k (\cdot \mid s,a)}^2 \; L(n_p^k(s,a))}{n_p^k(s,a)} } + \frac{S H L(n_p^k(s,a))}{n_p^k(s,a)}  }\\
   \leq & O\del{ \sqrt{ \frac{ \VV_{\PP_p(\cdot \mid s,a)}[V_p^{\pi^k}] \;  L(n_p^k(s,a))}{n_p^k(s,a)} } + \sqrt{ \frac{ L(n_p^k(s,a))}{n_p^k(s,a)} } + \sqrt{ \frac{ S \| \vub_p^k - \vlb_p^k \|_{\PP_p(\cdot \mid s,a)}^2 \; L(n_p^k(s,a))}{n_p^k(s,a)} } + \frac{S H L(n_p^k(s,a))}{n_p^k(s,a)} }\\
   \leq & O\del{ \sqrt{ \frac{ \del{ 1 + \VV_{\PP_p(\cdot \mid s,a)}[V_p^{\pi^k}] } \;  L(n_p^k(s,a))}{n_p^k(s,a)} } + \sqrt{ \frac{ S \| \vub_p^k - \vlb_p^k \|_{\PP_p(\cdot \mid s,a)}^2 \; L(n_p^k(s,a))}{n_p^k(s,a)} } + \frac{S H L(n_p^k(s,a))}{n_p^k(s,a)} } \\
   \leq & O\del{ \sqrt{ \frac{ \del{ 1 + \VV_{\PP_p(\cdot \mid s,a)}[V_p^{\pi^k}] } \;  L(n_p^k(s,a))}{n_p^k(s,a)} } +  \| \vub_p^k - \vlb_p^k \|_{\PP_p(\cdot \mid s,a)}^2  + \frac{S H L(n_p^k(s,a))}{n_p^k(s,a)} } 
\end{align*}
where the first inequality is by expanding the definition of $\bindp_p^k(s,a)$ and algebra; the second inequality is from Equations Equation~\eqref{eqn:ind-ell-2} and \eqref{eqn:ind-leading-var} of Lemma~\ref{lem:leading-terms}, along with algebra; the third inequality is by the basic fact that $\sqrt{A}+\sqrt{B} \lesssim \sqrt{A+B}$; the fourth inequality is by AM-GM inequality. 

\paragraph{Bounding $\baggp_p^k(s,a)$:} We have:
\begin{align*}
   & \baggp_p^k(s,a)  \\ 
   \lesssim & 4\epsilon + O\del{ \sqrt{ \frac{ \VV_{\hat{\PP}^k(\cdot \mid s,a)}[\vub_p^k] \;  L(n^k(s,a))}{n^k(s,a)} } + \sqrt{ \frac{ L(n^k(s,a))}{n^k(s,a)} } + \sqrt{ \frac{ S \| \vub_p^k - \vlb_p^k \|_{\hat{\PP}^k(\cdot \mid s,a)}^2 \; L(n^k(s,a))}{n^k(s,a)} } + \frac{S H L(n^k(s,a))}{n^k(s,a)} } \\
   \lesssim & 5\epsilon + O\del{ \sqrt{ \frac{ \VV_{\PP_p(\cdot \mid s,a)}[V_p^{\pi^k}] \;  L(n^k(s,a))}{n^k(s,a)} } + \sqrt{ \frac{ L(n^k(s,a))}{n^k(s,a)} } + \sqrt{ \frac{ S \| \vub_p^k - \vlb_p^k \|_{\PP_p(\cdot \mid s,a)}^2 \; L(n^k(s,a))}{n^k(s,a)} } + \frac{S H L(n^k(s,a))}{n^k(s,a)} }\\
   \lesssim & 5\epsilon + O\del{ \sqrt{ \frac{ \del{ 1 + \VV_{\PP_p(\cdot \mid s,a)}[V_p^{\pi^k}] } \;  L(n^k(s,a))}{n^k(s,a)} } + \sqrt{ \frac{ S \| \vub_p^k - \vlb_p^k \|_{\PP_p(\cdot \mid s,a)}^2 \; L(n^k(s,a))}{n^k(s,a)} } + \frac{S H L(n^k(s,a))}{n^k(s,a)} } \\
   \leq & 5\epsilon + O\del{ \sqrt{ \frac{ \del{ 1 + \VV_{\PP_p(\cdot \mid s,a)}[V_p^{\pi^k}] } \;  L(n^k(s,a))}{n^k(s,a)} } +  \| \vub_p^k - \vlb_p^k \|_{\PP_p(\cdot \mid s,a)}^2  + \frac{S H L(n^k(s,a))}{n^k(s,a)}}
\end{align*}
where the first inequality is by expanding the definition of $\baggp_p^k(s,a)$ and algebra; the second inequality is from Equations~\eqref{eqn:agg-leading-var} and Equation~\eqref{eqn:agg-ell-2} of Lemma~\ref{lem:leading-terms}, along with the observation that 
$\sqrt{ \frac{ S \epsilon H L(n^k(s,a))}{n^k(s,a)} } \leq \frac{S H L(n^k(s,a))}{n^k(s,a)} + \epsilon$ by AM-GM inequality; the third inequality is by the basic fact that $\sqrt{A}+\sqrt{B} \lesssim \sqrt{A+B}$; the fourth inequality is from AM-GM inequality. 

Combining the above upper bounds, and using the observation that 
$\frac{L(n^k(s,a))}{n^k(s,a)} \leq \frac{L(n_p^k(s,a))}{n_p^k(s,a)}$,
we get
\begin{align*}
& \bindp_p^k(s,a) \wedge \baggp_p^k(s,a) \wedge H \\
\leq &  
O\del{ \sqrt{ \frac{ \del{ 1 + \VV_{\PP_p(\cdot \mid s,a)}[V_p^{\pi^k}] } \;  L(n_p^k(s,a))}{n_p^k(s,a)} } }
\wedge 
\del{ 5 \epsilon + O\del{ \sqrt{ \frac{ \del{ 1 + \VV_{\PP_p(\cdot \mid s,a)}[V_p^{\pi^k}] } \;  L(n^k(s,a))}{n^k(s,a)} } } }
\wedge
H \\
& + 
O\del{ \| \vub_p^k - \vlb_p^k \|_{\PP_p(\cdot \mid s,a)}^2  + \del{ \frac{S H L(n_p^k(s,a))}{n_p^k(s,a)} \wedge H} } \\
\leq & B^{k,\lead}(s,a) 
+  
O \del{ \| \vub_p^k - \vlb_p^k \|_{\PP_p(\cdot \mid s,a)}^2  + \del{ \frac{S H L(n_p^k(s,a))}{n_p^k(s,a)} \wedge H} }.
\end{align*}

We now show that 
\begin{equation}
    \| \vub_p^k - \vlb_p^k \|_{\PP_p(\cdot \mid s,a)}^2  + \del{ \frac{S H L(n_p^k(s,a))}{n_p^k(s,a)} \wedge H} \lesssim \EE \sbr{ \sum_{t=h}^H B^{k, \fut}(s_t, a_t) \mid (s_h, a_h) = (s, a), \pi^k(p), \Mcal_p },
    \label{eqn:surplus-future}
\end{equation} 
which will conclude the proof.
To this end, we simplify the left hand side of Equation~\eqref{eqn:surplus-future} using Lemma~\ref{lem:vub-vlb}:
\begin{align*}
    & \| \vub_p^k - \vlb_p^k \|_{\PP_p(\cdot \mid s,a)}^2  + \del{ \frac{S H L(n^k(s,a))}{n^k(s,a)} \wedge H} \\
    \lesssim & 
    \EE\sbr{  \del{ H \sum_{t=h+1}^H \EE\sbr{  \del{ 1 \wedge \sqrt{ \frac{S L(n_p^k(s_t, a_t))}{n_p^k(s_t, a_t)} } } \mid s_{h+1} } }^2 \mid (s_{h}, a_h) = (s,a), \pi^k(p), \Mcal_p } + \del{ \frac{S H L(n^k(s,a))}{n^k(s,a)} \wedge H} \\
    \lesssim & 
    H^3 \EE\sbr{   \sum_{t=h+1}^H \EE\sbr{  \del{ 1 \wedge \sqrt{ \frac{S L(n_p^k(s_t, a_t))}{n_p^k(s_t, a_t)} } }^2 \mid s_{h+1} }  \mid (s_{h}, a_h) = (s,a), \pi^k(p), \Mcal_p } + \del{ \frac{S H L(n^k(s,a))}{n^k(s,a)} \wedge H}\\
    \lesssim & 
     \EE\sbr{ \sum_{t=h}^H H^3 \wedge  \frac{H^3 S L(n_p^k(s_t, a_t))}{n_p^k(s_t, a_t)} \mid (s_{h}, a_h) = (s,a), \pi^k(p), \Mcal_p }  \\
     \lesssim & 
     \EE\sbr{ \sum_{t=h}^H B^{k, \fut}(s_t,a_t) \mid (s_{h}, a_h) = (s,a), \pi^k(p), \Mcal_p }, 
     \end{align*}
where the first inequality is from Equation~\eqref{eqn:coarse-vub-vlb} of Lemma~\ref{lem:vub-vlb}; the second inequality is by Cauchy-Schwarz and $\EE[X]^2 \leq \EE[X^2]$ for any random variable $X$; and the third inequality is by the law of total expectation and algebra. 
\end{proof}

\subsection{Concluding the regret bounds}
\label{subsec:conclude-reg-bounds}

In this section, we present the proofs of Theorems~\ref{thm:gap_indept_upper} and~\ref{thm:gap_dept_upper}. 

To bound the collective regret of \mteuler,
we first recall the following general result from~\cite{simchowitz2019non}, which is useful to establish instance-dependent regret guarantees for episodic RL. 

\begin{lemma}[Clipping lemma, \cite{simchowitz2019non}, Lemma B.6]
\label{lem:clipping-main}
Fix player $p \in [M]$; suppose for each episode $k$, it follows $\pi^k(p)$, the greedy policy with respect to $\qub_p^k$.
In addition, there exists some event $E$ and a collection of functions $\cbr{ B_p^{k,\lead}, B_p^{k,\fut}}_{k \in [K]} \subset  (\Scal \times \Acal \to \RR)$ , such that if $E$ happens, then for all $k \in [K]$, $h \in [H]$ and $(s,a) \in \Scal_h \times \Acal$, the surplus of $\qub_p^k$ satisfies that
\begin{align*}
0 \leq 
E_p^k(s,a) 
\lesssim 
B_p^{k,\lead}(s,a)
+ \EE \sbr{ \sum_{t=h}^H B_p^{k, \fut}(s_t, a_t) \mid (s_h, a_h) = (s, a), \pi^k(p), \Mcal_p },
\end{align*}
then, on $E$:
\[ 
\Reg(K, p)
\lesssim
\sum_{s,a}
\sum_{k} 
\rho_p^k(s,a) \clip\del{ B_p^{k,\lead}(s,a), \check{\gap}_p(s,a) }
+ 
H \sum_{s,a}
\sum_{k} 
\rho_p^k(s,a)
\clip\del{ B_p^{k, \fut}(s,a), \frac{\gap_{p,\min}}{8SAH^2} },
\]
here, recall that $\clip(\alpha, \Delta) = \alpha \one(\alpha \geq \Delta)$, and 
$
\check{\gap}_p(s,a)
=
\frac{\gap_p(s,a)}{4H} \vee \frac{\gap_{p,\min}}{4H}
$.
\end{lemma}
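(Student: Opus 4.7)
The plan is to follow the standard regret-to-surplus decomposition, then apply a Simchowitz--Jamieson style clipping argument. Assume event $E$ holds throughout.

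\textbf{Step 1: Regret decomposition via surpluses.} Using optimism ($\vub_p^k \ge V_p^\star$, so $V_{0,p}^\star \le \EE_{s_1 \sim p_0}[\vub_p^k(s_1)]$) and the standard Bellman-error telescoping identity, I would write
\[
V_{0,p}^\star - V_{0,p}^{\pi^k(p)} \;\le\; \EE_{s_1 \sim p_0}\bigl[\vub_p^k(s_1) - V_p^{\pi^k(p)}(s_1)\bigr] \;=\; \sum_{h=1}^H \EE\bigl[E_p^k(s_h,a_h) \,\big|\, \pi^k(p), \Mcal_p\bigr],
\]
which, recognizing the inner sum as an integral against the occupancy measure, equals $\sum_{(s,a)} \rho_p^k(s,a) E_p^k(s,a)$. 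Summing over $k$ yields an initial bound on $\Reg(K,p)$.

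\textbf{Step 2: Surplus clipping.} This is where the gap-dependent improvement comes from. I would invoke the key observation (cf. \cite{simchowitz2019non}, Lemmas B.3--B.5) that the greedy-policy structure combined with strong optimism allows replacing each $E_p^k(s,a)$ by its clipped version $\clip(E_p^k(s,a), \check{\gap}_p(s,a))$ at the cost of only a constant factor. Intuitively, any surplus smaller than $\tfrac{\gap_p(s,a)}{4H}$ cannot accumulate enough to prevent the greedy choice from being near-optimal, so such small surpluses can be dropped; and for optimal $(s,a)$, where $\gap_p(s,a)=0$, the threshold $\gap_{p,\min}/(4H)$ absorbs negligible contributions. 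This gives
\[
\Reg(K,p) \;\lesssim\; \sum_{s,a}\sum_k \rho_p^k(s,a)\,\clip\bigl(E_p^k(s,a),\, \check{\gap}_p(s,a)\bigr).
\]

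\textbf{Step 3: Split into leading and future parts.} Substitute the hypothesized bound $E_p^k(s,a) \lesssim B_p^{k,\lead}(s,a) + \EE[\sum_{t=h}^H B_p^{k,\fut}(s_t,a_t) \mid (s_h,a_h)=(s,a),\pi^k(p),\Mcal_p]$. Using the sub-additivity property $\clip(x+y,\Delta) \le \clip(x,\Delta/2) + \clip(y,\Delta/2)$, the clipped surplus splits into a leading piece and a future piece. The leading piece directly gives $\sum_{s,a,k} \rho_p^k(s,a)\,\clip(B_p^{k,\lead}(s,a), \check\gap_p(s,a))$, matching the first term in the target bound.

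\textbf{Step 4: Reindex the future term.} The future contribution is
\[
\sum_{s,a,k} \rho_p^k(s,a)\,\clip\!\Bigl(\EE\Bigl[\sum_{t=h}^H B_p^{k,\fut}(s_t,a_t)\,\Big|\,(s_h,a_h)=(s,a),\pi^k(p),\Mcal_p\Bigr],\,\check\gap_p(s,a)\Bigr).
\]
Pushing the clip inside the expectation (by Jensen-type monotonicity of $\clip$) and then collapsing the layered occupancy sum via the identity $\sum_h \EE_{(s_h,a_h)\sim\rho_p^k}\EE[\,\cdot\,\mid (s_h,a_h)] = H\cdot \EE_{(s,a)\sim \rho_p^k/H}[\,\cdot\,]$, I re-express everything as a single sum over $(s,a,k)$ weighted by $\rho_p^k(s,a)$. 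The clipping threshold must shrink, however, because the surplus bound is spread across up to $H$ future timesteps and $SA$ cells; this is precisely why the threshold becomes $\gap_{p,\min}/(8SAH^2)$, yielding the claimed extra factor of $H$ multiplying the future sum.

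\textbf{Main obstacle.} The routine mechanics (optimism, Bellman telescoping, occupancy reindexing) are standard. The delicate step is Step 2: justifying that surpluses below the $\check\gap_p$ threshold can be discarded requires the strong optimism that $E_p^k(s,a)\ge 0$ (which we have by hypothesis) together with an inductive argument showing that the greedy action under $\qub_p^k$ cannot be $\gap_p$-suboptimal once all per-step surpluses along the realized trajectory are below $\check\gap_p/2$. Equally subtle is matching the constants in Step 4 so that the shrinkage of the clipping threshold from $\check\gap_p$ to $\gap_{p,\min}/(8SAH^2)$ is exactly what is needed to absorb both the $H$ future steps and the sub-additivity of $\clip$. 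Both of these are handled exactly as in~\cite{simchowitz2019non}, Appendix B, and I would import their argument essentially verbatim, since nothing in the multi-task structure affects it---the lemma is stated per-player and the surplus hypothesis is the only input.
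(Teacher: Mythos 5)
Your proposal is essentially the same route the paper takes: the paper does not reprove this lemma but imports it directly from \cite{simchowitz2019non} (Lemma B.6), with only the per-player, layered-MDP adaptations noted in Remark~\ref{rem:clipping}, which is exactly what you conclude by deferring the delicate clipping and threshold-shrinking steps to their Appendix B verbatim. Your sketch of the underlying argument (surplus telescoping, clipping at $\check{\gap}_p$, splitting into leading and future parts with the threshold $\gap_{p,\min}/(8SAH^2)$) is consistent with that source, so the approach is correct and matches the paper.
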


\begin{remark}
Our presentation of the clipping lemma is slightly different than the original one \cite[Lemma B.6]{simchowitz2019non}, in that:
\begin{enumerate}
    \item We consider layered MDPs, while \cite{simchowitz2019non} consider general stationary MDPs where one state may be experienced at multiple different steps in $[H]$. 
Specifically, in a layered MDP, the occupancy distributions $\omega_{k,h}$ defined in \cite{simchowitz2019non} is only supported over $\Scal_h \times \Acal$. As a result, in the presentation here, we no longer need to sum over $h$ -- this is already captured in the sum over all $s$ across all layers.
    \item Our presentation here is in the context of multitask RL, which is with respect to a player $p \in [M]$, its corresponding MDP $\Mcal_p$, and its policies used throughout the process $\cbr{\pi^k(p)}_{k=1}^K$. As a result, all quantities have $p$ as subscripts.
    
    \item Since every (state, action, step) in every MDP is trivially 1-transition optimal (see Definition B.3 of~\cite{simchowitz2019non}), when applying Lemma B.6 to $\Mcal_p$, we set $\alpha_{s,a,h} = 1$ and $\check{gap}_p(s,a) = \frac{\gap_{p,\min}}{2H} \vee \frac{\gap_p(s,a)}{4H}$. 
\end{enumerate}
\label{rem:clipping}
\end{remark}

We are now ready to prove Theorems~\ref{thm:gap_indept_upper} and~\ref{thm:gap_dept_upper}, \mteuler's main regret theorems.

\subsubsection{Proof of Theorem~\ref{thm:gap_indept_upper}}

\begin{proof}[Proof of Theorem~\ref{thm:gap_indept_upper}]
From Lemma~\ref{lem:clipping-main} and Lemma~\ref{lem:final-surplus-bound}, we have that when $E$ happens,
\begin{align}
\begin{split}
\Reg(K)
= 
& \sum_{p=1}^M \Reg(K, p) \\
\leq &
\underbrace{\sum_{s,a}
\sum_{k,p} 
\rho_p^k(s,a) \clip\del{ B^{k,\lead}(s,a), \check{\gap}_p(s,a) }}_{(A)}
+ 
\underbrace{
H \sum_{s,a}
\sum_{k,p} 
\rho_p^k(s,a)
\clip\del{ B^{k, \fut}(s,a), \frac{\gap_{p,\min}}{8SAH^2} }}_{(B)},
\end{split}
\label{eqn:reg-a-b-gap-indept}
\end{align}

We bound each term separately.
We can directly use Lemma~\ref{lem:lower-order-regret} to bound term $(B)$ as:
\begin{equation}
H \sum_{s,a}
\sum_{k,p} 
\rho_p^k(s,a)
\clip\del{ B^{k, \fut}(s,a), \frac{\gap_{p,\min}}{8SAH^2} }
\lesssim
M H^4 S^2 A \del{ \ln\del{ \frac{MSAK}{\delta} } }^2.
\label{eqn:term-b-gap-indept}
\end{equation}

For term $(A)$, we will group the sum by $(s,a) \in \Ical_\epsilon$ and $(s,a) \notin \Ical_\epsilon$ separately. 

\paragraph{Case 1: $(s,a) \in \Ical_\epsilon$.} In this case, we have that for all $p$, $\check{\gap}_p(s,a) = \frac{\gap_p(s,a)}{4H} \geq 24 \epsilon$. We simplify the corresponding term as follows:
\begin{align*}
& \sum_{(s,a) \in \Ical_\epsilon} \sum_{k,p} 
\rho_p^k(s,a) \clip\del{ B^{k,\lead}(s,a), \check{\gap}_p(s,a) } \\
\leq & 
\sum_{(s,a) \in \Ical_\epsilon} \sum_{k,p} 
\rho_p^k(s,a) \clip\del{ H \wedge \del{ 5\epsilon + O\del{ \sqrt{ \frac{ (1+\VV_{\PP_p(\cdot \mid  s,a)}[V_p^{\pi^k}]) L(n^k(s,a)) }{n^k(s,a)} }} } , \frac{\min_p \gap_p(s,a)}{4H} } \\
\leq &
\sum_{(s,a) \in \Ical_\epsilon} \sum_{k,p} \rho_p^k(s,a)
 \del{ H \wedge \clip\del{ 5 \epsilon + O\del{ \sqrt{ \frac{ (1 + \VV_{\PP_p(\cdot \mid  s,a)}[V_p^{\pi^k}]) L(n^k(s,a)) }{n^k(s,a)} }}, \frac{\min_p \gap_p(s,a)}{4H} } } \\
\lesssim &
\sum_{(s,a) \in \Ical_\epsilon} \sum_{k,p} \rho_p^k(s,a) \del{ H \wedge   \sqrt{ \frac{ (1 + \VV_{\PP_p(\cdot \mid  s,a)}[V_p^{\pi^k}]) L(n^k(s,a)) }{n^k(s,a)} } }
\end{align*}
where the first inequality is from the definition of $B^{k,\lead}$; the second inequality is from the basic fact that $\clip(A \wedge B, C) \leq A \wedge \clip(B, C)$; 
the third inequality uses Lemma~\ref{lem:clipping-decomp} with $a_1 = 5\epsilon$, $a_2 = \sqrt{ \frac{ (1 + \VV_{\PP_p(\cdot \mid  s,a)}[V_p^{\pi^k}]) L(n^k(s,a)) }{n^k(s,a)}}$, and $\Delta = \frac{\min_p \gap_p(s,a)}{4H} $, along with the observation that $\clip(5\epsilon, \frac{\min_p \gap_p(s,a)}{16H} ) = 0$, since for all $(s,a) \in \Ical_\epsilon$ and all $p \in [M]$, $\gap_p(s,a) \geq 96\epsilon H$.

We now decompose the inner sum over $k$, $\sum_{k=1}^K$, to $\sum_{k=1}^{\tau(s,a)-1}$ 
and $\sum_{k=\tau(s,a)}^{K}$. The first part is bounded by:
\begin{align*}
\sum_{(s,a) \in \Ical_\epsilon}
\sum_{k=1}^{\tau_p(s,a)-1} \sum_{p=1}^M
\rho_p^k(s,a) \del{ H \wedge    \sqrt{ \frac{ (1 + \VV_{\PP_p(\cdot \mid  s,a)}[V_p^{\pi^k}]) L(n^k(s,a)) }{n^k(s,a)} } }
\leq &
\sum_{(s,a) \in \Ical_\epsilon}
\sum_{k=1}^{\tau_p(s,a)-1} \sum_{p=1}^M 
\rho_p^k(s,a) H  
\leq S A H N_1,
\end{align*}
which is $\lesssim M H S A  \ln\del{\frac{SAK}{\delta}}$.

For the second part,
\begin{align*}
& \sum_{(s,a) \in \Ical_\epsilon} \sum_{k=\tau(s,a)}^K \sum_{p=1}^M
\rho_p^k(s,a) \del{ H \wedge   \sqrt{  \frac{(1+\VV_{\PP_p(\cdot \mid  s,a)}[V_p^{\pi^k}]) L(n^k(s,a)) }{n^k(s,a)} } } \\
\lesssim & \sum_{(s,a) \in \Ical_\epsilon} \sum_{k=\tau(s,a)}^K \sum_{p=1}^M
\rho_p^k(s,a) \sqrt{ \frac{ (1 + \VV_{\PP_p(\cdot \mid  s,a)}[V_p^{\pi^k}]) L(\bar{n}^k(s,a)) }{\bar{n}^k(s,a)} } \\
\lesssim & \sqrt{ \sum_{(s,a) \in \Ical_\epsilon} \sum_{k=\tau(s,a)}^K \sum_{p=1}^M \rho_p^k(s,a) \cdot \frac{L(\bar{n}^k(s,a))}{\bar{n}^k(s,a)} } \cdot \sqrt{ \sum_{(s,a) \in \Ical_\epsilon} \sum_{k=1}^K \sum_{p=1}^M
\rho_p^k(s,a) \del{ 1 + \VV_{\PP_p(\cdot \mid  s,a)}[V_p^{\pi^k}] } },
\end{align*}
where the first inequality is by dropping the ``$H \wedge$'' operator; the second inequality is by Cauchy-Schwarz.

We bound each factor as follows: for the first factor,
\begin{align*}
\sum_{(s,a) \in \Ical_\epsilon} \sum_{k=\tau(s,a)}^K \sum_{p=1}^M \rho_p^k(s,a) \cdot \frac{L(\bar{n}^k(s,a))}{\bar{n}^k(s,a)} 
= & 
\sum_{(s,a) \in \Ical_\epsilon} \sum_{k=\tau(s,a)}^K  \rho^k(s,a) \cdot \frac{L(\bar{n}^k(s,a))}{\bar{n}^k(s,a)} \\
\leq & 
L(M K) \sum_{(s,a) \in \Ical_\epsilon}  \sum_{k=\tau(s,a)}^K  \frac{\rho^k(s,a)}{\bar{n}^k(s,a)}  \\
\leq & 
\sum_{(s,a) \in \Ical_\epsilon} L(M K) \cdot \int_{1}^{\bar{n}^K(s,a)} \frac{1}{u} du \\
\leq & \abs{\Ical_\epsilon} L(M K)^2 
\lesssim \abs{\Ical_\epsilon} \del{\ln\del{\frac{MSAK}{\delta}}}^2,
\end{align*}
where the first inequality is because $L$ is monotonically increasing, and $\bar{n}^k(s,a) \leq M K$; the second inequality is from the observation that $\rho^k(s,a) \in [0,M]$, $\bar{n}^k(s,a) \geq 2M$, and $u \mapsto \frac1u$ is monotonically decreasing; the last two inequalities are by algebra.

For the second factor, 
\begin{align}
\begin{split}
\sum_{(s,a) \in \Ical_\epsilon} \sum_{k=1}^K \sum_{p=1}^M
\rho_p^k(s,a) \del{ 1 + \VV_{\PP_p(\cdot \mid  s,a)}[V_p^{\pi^k}] } 
\lesssim & MKH + \sum_{p=1}^M \sum_{k=1}^K \sum_{(s,a) \in \Scal \times \Acal} 
\rho_p^k(s,a) \VV_{\PP_p(\cdot \mid  s,a)}[V_p^{\pi^k}]  \\
\lesssim & M K H + \sum_{p=1}^M \sum_{k=1}^K \Var \sbr{ \sum_{h=1}^H r_{h,p}^k \mid \pi^k(p)} \\
\lesssim & M K H^2.
\end{split}
\label{eqn:ltv}
\end{align}
where the first inequality is by the fact that $\rho_p^k$ are probability distributions over every layer $h \in [H]$; the last two inequalities are by a law of total variance identity (see, e.g., \cite[Equation (26)]{azar2017minimax}). To summarize, the second part is at most 
\[
\sum_{(s,a) \in \Ical_\epsilon} \sum_{k=\tau(s,a)}^K \sum_{p=1}^M
\rho_p^k(s,a) \del{ H \wedge   \sqrt{ \frac{(1+\VV_{\PP_p(\cdot \mid  s,a)}[V_p^{\pi^k}]) L(n^k(s,a)) }{n^k(s,a)} } } 
\lesssim 
\sqrt{ M K H^2 |\Ical_\epsilon| } \ln\del{\frac{MSAK}{\delta}}.
\]
Combining the bounds for the first and the second parts, we have:
\[
\sum_{(s,a) \in \Ical_\epsilon} \sum_{k,p} 
\rho_p^k(s,a) \clip\del{ B^{k,\lead}(s,a), \check{\gap}_p(s,a) }
\lesssim \del{ \sqrt{  M K H^2 |\Ical_\epsilon| } +M H S A } \ln\del{\frac{MSAK}{\delta}}.
\]

\paragraph{Case 2: $(s,a) \notin \Ical_\epsilon$.} We simplify the corresponding term as follows:
\begin{align*}
& \sum_{(s,a) \notin \Ical_\epsilon} \sum_{k,p} 
\rho_p^k(s,a) \clip\del{ B^{k,\lead}(s,a), \check{\gap}_p(s,a) } \\
\lesssim & 
\sum_{(s,a) \notin \Ical_\epsilon} \sum_{k,p} 
\rho_p^k(s,a) \clip\del{ H \wedge \del{ \sqrt{ \frac{\del{1+\VV_{\PP_p(\cdot \mid  s,a)}[V_p^{\pi^k}]} L(n_p^k(s,a)) }{n_p^k(s,a)} }} , \frac{\check{\gap}_p(s,a)}{4H} } \\
\lesssim &
\sum_{(s,a) \notin \Ical_\epsilon} \sum_{k,p}  \del{ H \wedge   \sqrt{ \frac{\del{ 1 +\VV_{\PP_p(\cdot \mid  s,a)}[V_p^{\pi^k}]} L(n_p^k(s,a)) }{n_p^k(s,a)} } }
\end{align*}

For each $p$ and $(s,a)$, we now decompose the inner sum over $k$, $\sum_{k=1}^K$, to $\sum_{k=1}^{\tau_p(s,a)-1}$ and $\sum_{k=\tau_p(s,a)}^{K}$. The first part is bounded by:
\begin{align*}
\sum_{(s,a) \notin \Ical_\epsilon}
\sum_{p=1}^M
\sum_{k=1}^{\tau_p(s,a)-1} 
\rho_p^k(s,a) \del{ H \wedge    \sqrt{ \frac{(1+\VV_{\PP_p(\cdot \mid  s,a)}[V_p^{\pi^k}]) L(n_p^k(s,a)) }{n_p^k(s,a)} } }
\leq &
\sum_{(s,a) \notin \Ical_\epsilon}
\sum_{p=1}^M  \sum_{k=1}^{\tau_p(s,a)-1} 
\rho_p^k(s,a) H  \\
\leq & M H S A N_2,
\end{align*}
which is $\lesssim M H S A  \ln\del{ \frac{MSAK}{\delta}}$.

For the second part,
\begin{align*}
& \sum_{(s,a) \notin \Ical_\epsilon} \sum_{p=1}^M \sum_{k=\tau_p(s,a)}^K 
\rho_p^k(s,a) \del{ H \wedge   \sqrt{ \frac{\del{ 1+\VV_{\PP_p(\cdot \mid  s,a)}[V_p^{\pi^k}]} L(n_p^k(s,a)) }{n_p^k(s,a)} } } \\
\lesssim & \sum_{(s,a) \notin \Ical_\epsilon} \sum_{p=1}^M \sum_{k=\tau_p(s,a)}^K 
\rho_p^k(s,a) \sqrt{ \frac{\del{ 1+\VV_{\PP_p(\cdot \mid  s,a)}[V_p^{\pi^k}]} L(\bar{n}_p^k(s,a)) }{\bar{n}_p^k(s,a)} } \\
\leq & \sqrt{ \sum_{(s,a) \notin \Ical_\epsilon} \sum_{p=1}^M \sum_{k=\tau_p(s,a)}^K  \rho_p^k(s,a) \cdot \frac{L(\bar{n}_p^k(s,a))}{\bar{n}_p^k(s,a)} } \cdot \sqrt{ \sum_{(s,a) \notin \Ical_\epsilon} \sum_{k=1}^K \sum_{p=1}^M
\rho_p^k(s,a) \del{ 1 + \VV_{\PP_p(\cdot \mid  s,a)}[V_p^{\pi^k}] } }
\end{align*}

We bound each factor as follows: for the first factor,
\begin{align*}
\sum_{(s,a) \notin \Ical_\epsilon} \sum_{p=1}^M \sum_{k=\tau_p(s,a)}^K  \rho_p^k(s,a) \cdot \frac{L(\bar{n}^k(s,a))}{\bar{n}^k(s,a)}
\leq & 
L(K) \cdot \sum_{(s,a) \notin \Ical_\epsilon} \sum_{p=1}^M \sum_{k=\tau_p(s,a)}^K  \frac{\rho_p^k(s,a)}{\bar{n}^k(s,a)} \\
\leq & 
L(K) \cdot \sum_{(s,a) \notin \Ical_\epsilon} \sum_{p=1}^M \int_{1}^{\bar{n}_p^K(s,a)} \frac{1}{u} du \\
\leq & \abs{\Ical_\epsilon^C} M L(K)^2 
\leq \abs{\Ical_\epsilon^C} M \del{\ln\del{\frac{MSAK}{\delta}}}^2.
\end{align*}
where the first inequality is because $L$ is monotonically increasing, and $\bar{n}_p^k(s,a) \leq K$; the second inequality is from the observation that $\rho^k(s,a) \in [0,1]$, $\bar{n}^k(s,a) \geq 2$, and $u \mapsto \frac1u$ is monotonically decreasing; the last two inequalities are by algebra.

The second factor is again bounded by~\eqref{eqn:ltv}.
Therefore, the second part of the sum is at most 
\begin{align*}
& \sum_{(s,a) \notin \Ical_\epsilon} \sum_{p=1}^M \sum_{k=\tau_p(s,a)}^K 
\rho_p^k(s,a) \del{ H \wedge   \sqrt{ \frac{\del{ 1+\VV_{\PP_p(\cdot \mid  s,a)}[V_p^{\pi^k}]} L(n_p^k(s,a)) }{n_p^k(s,a)} } } \\ 
\leq & 
\del{ 
M \sqrt{K H^2 \abs{\Ical_\epsilon^C}} + MHSA } \ln\del{\frac{MSAK}{\delta}}.
\end{align*}
Combining the bounds for the first and the second parts, we have:
\[
\sum_{(s,a) \notin \Ical_\epsilon} \sum_{k,p} 
\rho_p^k(s,a) \clip\del{ B^{k,\lead}(s,a), \check{\gap}_p(s,a) }
\lesssim 
\del{ 
M \sqrt{K H^2 \abs{\Ical_\epsilon^C}} + MHSA } \ln\del{\frac{MSAK}{\delta}}.
\]

Now, combining the bounds for cases 1 and 2, we have that 
\begin{equation}
(A) \leq \del{ 
\sqrt{M K H^2 \abs{\Ical_\epsilon}}
+
M \sqrt{K H^2 \abs{\Ical_\epsilon^C}} + MHSA } \cdot \ln\del{\frac{MSAK}{\delta}}.
\label{eqn:term-a-gap-indept}
\end{equation}

In conclusion, by the regret decomposition Equation~\eqref{eqn:reg-a-b-gap-indept}, and Equations~\eqref{eqn:term-a-gap-indept} and~\eqref{eqn:term-b-gap-indept}, we have:
\[
\Reg(K) 
\leq 
\del{ 
\sqrt{M H^2 \abs{\Ical_\epsilon} K}
+
M \sqrt{H^2 \abs{\Ical_\epsilon^C} K} + M H^4 S^2 A \ln\del{\frac{MSAK}{\delta}} } \ln\del{\frac{MSAK}{\delta}}.
\qedhere
\]
\end{proof}

\subsubsection{Proof of Theorem~\ref{thm:gap_dept_upper}} 

\begin{proof}[Proof of Theorem~\ref{thm:gap_dept_upper}]
From Lemma~\ref{lem:clipping-main}, we have that when $E$ happens,
\begin{align*}
\begin{split}
\Reg(K)
= &
\sum_{p=1}^M \Reg(K, p) \\
\leq & 
\underbrace{\sum_{s,a}
\sum_{k,p} 
\rho_p^k(s,a) \clip\del{ B^{k,\lead}(s,a), \check{\gap}_p(s,a) }}_{(A)}
+ 
\underbrace{
H
\sum_{s,a}
\sum_{k,p} 
\rho_p^k(s,a)
\clip\del{ B^{k, \fut}(s,a), \frac{\gap_{p,\min}}{8SAH^2} }}_{(B)},
\end{split}
\end{align*}

We focus on each term separately.
We directly use Lemma~\ref{lem:lower-order-regret} to bound term (B) as:
\begin{equation}
H \sum_{s,a}
\sum_{k,p} 
\rho_p^k(s,a)
\clip\del{ B^{k, \fut}(s,a), \frac{\gap_{p,\min}}{8SAH^2} }
\lesssim
M H^4 S^2 A \ln\del{ \frac{MSAK}{\delta}} \cdot \ln\frac{MSA}{\gap_{\min}} .
\label{eqn:b-gap-dept}
\end{equation}

For the $(s,a)$-th term in term (A), we will consider the cases of $(s,a) \in \Ical_\epsilon$ and $(s,a) \notin \Ical_\epsilon$ separately.

\paragraph{Case 1: $(s,a) \in \Ical_\epsilon$.} In this case, we have that for all $p$, $\check{\gap}_p(s,a) = \frac{\gap_p(s,a)}{4H} \geq 24 \epsilon$. We simplify the corresponding term as follows:
\begin{align*}
& \sum_{k,p} 
\rho_p^k(s,a) \clip\del{ B^{k,\lead}(s,a), \check{\gap}_p(s,a) } \\
\leq & 
\sum_{k=1}^K \sum_{p=1}^M 
\rho_p^k(s,a) \clip\del{ H \wedge \del{ 5\epsilon + O\del{ \sqrt{ \frac{ (1 + \VV_{\PP_p(\cdot \mid  s,a)}[V_p^{\pi^k}]) L(n^k(s,a)) }{n^k(s,a)} }}} , \frac{\min_p \gap_p(s,a)}{4H} } \\
\leq & 
\sum_{k=1}^K 
\rho^k(s,a) \clip\del{ H \wedge \del{ 5\epsilon + O\del{ \sqrt{ \frac{H^2 L(n^k(s,a)) }{n^k(s,a)} }} }, \frac{\min_p \gap_p(s,a)}{4H} } \\
\leq &
\sum_{k=1}^k 
\rho^k(s,a) \del{ H \wedge \clip\del{ 5\epsilon + O\del{ \sqrt{ \frac{H^2 L(n^k(s,a)) }{n^k(s,a)} }}, \frac{\min_p \gap_p(s,a)}{4H} } } \\
\lesssim &
\sum_{k=1}^K 
\rho^k(s,a) \del{ H \wedge   \clip\del{ \sqrt{ \frac{H^2 L(n^k(s,a)) }{n^k(s,a)} }, \frac{\min_p \gap_p(s,a)}{16H} } }
\end{align*}

where the first inequality is by the definition of $B^{k,\lead}$; the second inequality is from that $\VV_{\PP_p(\cdot \mid  s,a)}[V_p^{\pi^k}] \leq H^2$; the third inequality is from that $\clip(A \wedge B, C) \leq A \wedge \clip(B, C)$; 
the third inequality uses Lemma~\ref{lem:clipping-decomp} with $a_1 = 5\epsilon$, $a_2 = \sqrt{ \frac{H^2 L(n^k(s,a)) }{n^k(s,a)} }$, and $\Delta = \frac{\min_p \gap_p(s,a)}{4H} $, along with the observation that $\clip(5\epsilon, \frac{\min_p \gap_p(s,a)}{16H} ) = 0$, since for all $(s,a) \in \Ical_\epsilon$ and all $p \in [M]$, $\gap_p(s,a) \geq 96\epsilon H$.

We now decompose the inner sum over $k$, $\sum_{k=1}^K$, to $\sum_{k=1}^{\tau(s,a)-1}$ and $\sum_{k=\tau(s,a)}^{K}$. 
The first part's contribution is at most $N_1 \cdot H \lesssim M H \ln\del{ \frac{SAK}{\delta} }$.
For the second part, its contribution is at most:
\begin{align*}
& \sum_{k=\tau(s,a)}^K 
\rho^k(s,a) \del{ H \wedge   \clip\del{ \sqrt{ \frac{H^2 L(n^k(s,a)) }{n^k(s,a)} }, \frac{\min_p \gap_p(s,a)}{16H} } } \\
\lesssim & M H + \int_1^{\bar{n}^K(s,a)} \del{ H \wedge   \clip\del{ \sqrt{ \frac{H^2 L(u) }{u} }, \frac{\min_p \gap_p(s,a)}{16H} } } du
\\
\lesssim & 
M H + \frac{H^3}{\min_p \gap_p(s,a)} \ln\del{ \frac{MSAK}{\delta} }
\end{align*}
where the second inequality is from Lemma~\ref{lem:integral} with $f_{\max} = H$, $C = H^2$, $\Delta = \frac{\min_p \gap_p(s,a)}{16H}$, $N = MSA$, $\xi = \delta$, $\Gamma = 1$, $n = \bar{n}^K(s,a) \leq K$. In summary, for all $(s,a) \in \Ical_\epsilon$,
\[
\sum_{k,p} 
\rho_p^k(s,a) \clip\del{ B^{k,\lead}(s,a), \check{\gap}_p(s,a) }
\leq 
\del{ MH + \frac{H^3}{\min_p \gap_p(s,a)} } \ln\del{\frac{MSAK}{\delta}}.
\]

\paragraph{Case 2: $(s,a) \notin \Ical_\epsilon$.} In this case, for each $p \in [M]$, we simplify the corresponding term as follows:
\begin{align*}
& \sum_{k} 
\rho_p^k(s,a) \clip\del{ B^{k,\lead}(s,a), \check{\gap}_p(s,a) } \\
\lesssim & 
\sum_{k=1}^K 
\rho_p^k(s,a) \del{ H \wedge   \clip\del{ \sqrt{ \frac{H^2 L(n_p^k(s,a)) }{n_p^k(s,a)} }, \frac{\check{\gap}_p(s,a)}{16H} } }
\end{align*}

We now decompose the inner sum over $k$, $\sum_{k=1}^K$, to $\sum_{k=1}^{\tau_p(s,a)-1}$ and $\sum_{k=\tau_p(s,a)}^{K}$. 
The first part's contribution is at most $N_2 \cdot H \lesssim H \ln\del{\frac{MSAK}{\delta}}$.

For the second part, its contribution is at most:
\begin{align*}
& \sum_{k=\tau_p(s,a)}^K 
\rho_p^k(s,a) \del{ H \wedge   \clip\del{ \sqrt{ \frac{H^2 L(n^k(s,a)) }{n^k(s,a)} }, \frac{\check{\gap}_p(s,a)}{16H} } } \\
\lesssim & H + \int_1^{\bar{n}_p^K(s,a)} \del{ H \wedge   \clip\del{ \sqrt{ \frac{H^2 L(u) }{u} }, \frac{\check{\gap}_p(s,a)}{16H} } } du
\\
\lesssim & 
H + \frac{H^3}{\check{\gap}_p(s,a)} \ln\del{\frac{MSAK}{\delta}}
\end{align*}
where the second inequality is from Lemma~\ref{lem:integral} with $f_{\max} = H$, $C = H^2$, $\Delta = \frac{ \check{\gap}_p(s,a)}{16H}$, $N = MSA$, $\xi = \delta$, $\Gamma = 1$, $n = \bar{n}_p^K(s,a) \leq K$. In summary, for any $(s,a) \in \Ical_{\epsilon}^C$ and $p \in [M]$,
\[
\sum_{k} 
\rho_p^k(s,a) \clip\del{ B^{k,\lead}(s,a), \check{\gap}_p(s,a) }
\lesssim
(H + \frac{H^3}{\check{\gap}_p(s,a)}) \ln\del{\frac{MSAK}{\delta}},
\]
summing over $p$, we get:
\[
\sum_{k,p} 
\rho_p^k(s,a) \clip\del{ B^{k,\lead}(s,a), \check{\gap}_p(s,a) }
\lesssim 
\del{ MH + \sum_{p=1}^M \frac{H^3}{ \check{\gap}_p(s,a)} } \ln\del{\frac{MSAK}{\delta}},
\]

In summary, combining the regret bounds of cases 1 and 2 for term $(A)$, along with Equation~\eqref{eqn:b-gap-dept} for term $(B)$, and observe that $\check{\gap}_p(s,a) = \gap_{p,\min}$ if $(s,a) \in Z_{p,\opt}$, and $\check{\gap}_p(s,a) = \gap_{p}(s,a)$ otherwise,
we have that on event $E$, \mteuler satisfies:
\begin{align*}
\Reg(K) 
\lesssim 
\ln\del{\frac{M S A K}{\delta}} & \vast(
\sum_{p \in [M]} \rbr{ 
\sum_{(s,a) \in Z_{p,\opt}} \frac{ H^3}{\gap_{p,\min}}
+
\sum_{(s,a) \in (\Ical_\epsilon \cup Z_{p,\opt})^C} \frac{H^3}{\gap_p(s,a)} }
+ \\
& \sum_{(s,a) \in \Ical_\epsilon} \frac{H^3}{\min_p \gap_p(s,a)}
\vast) 
+  \ln\del{\frac{M S A K}{\delta}} \cdot M S^2 A H^4 \ln\frac{MSA}{\gap_{\min}}. \qedhere
\end{align*}
\end{proof}

\begin{lemma}[Bounding the lower order terms]
\label{lem:lower-order-regret}
If $E$ happens, then
\[
\sum_{s,a}
\sum_{k,p} 
\rho_p^k(s,a)
\clip\del{ B^{k, \fut}(s,a), \frac{\gap_{p,\min}}{8SAH^2} }
\lesssim
M H^3 S^2 A \ln\del{ \frac{MSAK}{\delta} } \del{ \ln\del{ \frac{MSAK}{\delta} } \wedge  \ln\del{\frac{MSA}{\gap_{\min}}} }.
\]
\end{lemma}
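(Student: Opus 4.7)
}

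The plan is to fix a triple $(s,a,p)$, split the inner sum $\sum_{k=1}^K$ at the threshold $\tau_p(s,a)$ from Section~\ref{sec:additional-notations}, bound the ``early'' part trivially and convert the ``late'' part into a one-dimensional integral, and finally sum over all $(s,a,p)$.

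\textbf{Step 1 (Early episodes).} By definition $B_p^{k,\fut}(s,a) \le H^3$, and $\sum_{k< \tau_p(s,a)} \rho_p^k(s,a) \le \bar{n}_p^{\tau_p(s,a)-1}(s,a) \le N_2$. Hence the contribution of $k<\tau_p(s,a)$ is at most $N_2 H^3$ for each $(s,a,p)$, giving a total of at most $MSA\cdot N_2 H^3 \lesssim MSAH^3 \ln(MSAK/\delta)$, which is dominated by the target bound.

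\textbf{Step 2 (Late episodes, reduce to $\bar n_p^k$).} On the event $E\supseteq E_{\indiv,\samp}$, for $k\ge \tau_p(s,a)$ we have $n_p^k(s,a)\ge \tfrac12 \bar n_p^k(s,a)$, so using monotonicity of $u\mapsto L(u)/u$ up to a constant,
\[
B_p^{k,\fut}(s,a)\;\lesssim\; H^3 \wedge \frac{H^3 S L(\bar n_p^k(s,a))}{\bar n_p^k(s,a)}.
\]
Then clipping and the standard Riemann-sum trick (the ``$\sum\rho_p^k\, f(\bar n_p^k)\lesssim \int_1^{\bar n_p^K}f$'' step used e.g.\ in the proof of Theorem~\ref{thm:gap_dept_upper}) turn the remaining sum into
\[
\int_1^{\bar n_p^K(s,a)} \del{ H^3 \wedge \clip\!\del{ \tfrac{C H^3 S L(u)}{u},\; \tfrac{\gap_{p,\min}}{8SAH^2} } }\,du ,
\]
up to an additive $O(H^3)$ for the boundary.

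\textbf{Step 3 (Evaluate the integral).} Let $\Delta=\gap_{p,\min}/(8SAH^2)$. The integrand is $0$ whenever $u>u^\star$, where $u^\star$ satisfies $u^\star\lesssim H^3SL(u^\star)/\Delta$, hence $u^\star\lesssim \tfrac{H^5S^2A}{\gap_{p,\min}}\,\polylog$ and $L(u^\star)\lesssim \ln(MSAK/\delta)$. Split the integration into $u\le CSL(u)$ (integrand $\le H^3$, length $\lesssim SL(u^\star)$) and $CSL(u)\le u\le u^\star$ (integrand $\lesssim H^3SL(u)/u$). The first contributes $\lesssim H^3 S\ln(MSAK/\delta)$; the second contributes
\[
H^3 S L(u^\star)\,\ln\!\tfrac{u^\star}{CSL(u^\star)}\;\lesssim\; H^3 S\ln\!\tfrac{MSAK}{\delta}\cdot \ln\!\tfrac{MSA}{\gap_{\min}},
\]
while also being trivially bounded by $H^3SL(u^\star)\ln(\bar n_p^K(s,a))\lesssim H^3S\,[\ln(MSAK/\delta)]^2$ by using $u^\star\le K$. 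Taking the minimum of the two bounds yields the factor $\ln(MSAK/\delta)\wedge \ln(MSA/\gap_{\min})$.

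\textbf{Step 4 (Sum over $(s,a,p)$).} Adding the early and late contributions, each $(s,a,p)$ contributes at most $H^3 S\ln(MSAK/\delta)\cdot\bigl(\ln(MSAK/\delta)\wedge \ln(MSA/\gap_{\min})\bigr)$. Summing over the $MSA$ triples gives the claimed bound. The main obstacle is the double bookkeeping in Step~3, namely obtaining \emph{both} the $\ln K$-style bound (easy, from integrating up to $\bar n_p^K\le K$) and the $\ln(1/\gap_{\min})$-style bound (which requires the clipping threshold to truncate the integral at $u^\star$), and then combining them via a minimum; this is where the extra logarithmic factor in the statement originates.
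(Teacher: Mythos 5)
Your proposal is correct and follows essentially the same route as the paper's proof: split each $(s,a,p)$ sum at $\tau_p(s,a)$, bound the early episodes by $N_2 H^3$, pass to $\bar n_p^k$ on the clean event, convert the late episodes into the integral $\int_1^{\bar n_p^K(s,a)}\bigl(H^3\wedge\clip(H^3SL(u)/u,\Delta)\bigr)du$, and sum over the $MSA$ triples. The only difference is cosmetic: the paper dispatches the integral by invoking its integral-calculation lemma (Lemma~\ref{lem:integral}, item 2, with $f_{\max}=H^3$, $C=H^3S$), whereas you re-derive that lemma's two bounds (truncation at $u^\star$ versus integration up to $K$) by hand and take their minimum, which yields the same $\ln(MSAK/\delta)\wedge\ln(MSA/\gap_{\min})$ factor once $H\le S$ is used to absorb the extra $H$ in the logarithm, exactly as the paper does.
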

\begin{proof}
We expand the left hand side using the definition of $B^{k,\fut}$, and the fact that $\gap_{p,\min} \geq \gap_{\min}$:
\begin{align}
& \sum_{k=1}^K 
\rho_p^k(s,a)
\clip\del{ B^{k, \fut}(s,a), \frac{\gap_{p,\min}}{8SAH^2} } \\
\lesssim & 
\sum_{k=1}^K 
\rho_p^k(s,a)
\del{ H^3 \wedge 
\clip\del{ \frac{H^3 S L(n_p^k(s,a))}{n_p^k(s,a)}, \frac{\gap_{\min}}{8SAH^2} } }
\end{align}
We now decompose the sum $\sum_{k=1}^K$ to $\sum_{k=1}^{\tau_p(s,a)-1}$ and $\sum_{k=\tau_p(s,a)}^{K}$. The first part can be bounded by
\[
\sum_{k=1}^{\tau_p(s,a)-1}
\rho_p^k(s,a)
\del{ H^3 \wedge 
\clip\del{ \frac{H^3 S L(n_p^k(s,a))}{n_p^k(s,a)}, \frac{\gap_{\min}}{8SAH^2} } }
\leq 
\sum_{k=1}^{\tau_p(s,a)-1}
H^3 \rho_p^k(s,a) 
\leq 
H^3 N_2,
\]
which is 
at most $O\del{ H^3 \cdot \ln\del{ \frac{MSAK}{\delta}} }$.
For the second part, it can be bounded by:
\begin{align*}
& \sum_{k=\tau_p(s,a)}^K 
\rho_p^k(s,a) \del{ H^3 \wedge   \clip\del{ \frac{H^3 S L(n_p^k(s,a))}{n_p^k(s,a)}, \frac{\gap_{\min}}{8SAH^2} } } \\
\leq & H^3 \cdot 1 + \int_1^{\bar{n}_p^K(s,a)} \del{ H^3 \wedge   \clip\del{ \frac{H^3 S L(u) }{u} , \frac{\gap_{\min}}{8SAH^2} } } du
\\
\lesssim & 
H^3 + H^3 \ln\del{ \frac{MSA}{\delta}} + H^3 S \ln\del{ \frac{MSAK}{\delta} } \del{ \ln\del{ \frac{MSAK}{\delta} } \wedge  \ln\del{\frac{MHSA}{\gap_{\min}}} },
\end{align*}
where the second inequality is from Lemma~\ref{lem:integral} with $f_{\max} = H^3$, $C = H^3 S$, $\Delta = \frac{ \gap_{\min}}{8SAH^2}$, $N = MSA$, $\xi = \delta$, $\Gamma = 1$, $n = \bar{n}_p^K(s,a) \leq K$. In addition, observe that $H \leq S$ by our layered MDP assumption, we have
\[
\sum_{k} 
\rho_p^k(s,a) \clip\del{ B^{k,\lead}(s,a), \frac{\gap_{\min}}{8SAH^2} }
\lesssim
H^3 S \ln\del{ \frac{MSAK}{\delta} } \del{ \ln\del{ \frac{MSAK}{\delta} } \wedge  \ln\del{\frac{MSA}{\gap_{\min}}} }
\]
Summing over $s \in \Scal$, $a \in \Acal$, and $p \in [M]$, we get
\[
\sum_{s,a}
\sum_{k,p} 
\rho_p^k(s,a) \clip\del{ B^{k,\lead}(s,a), \frac{\gap_{\min}}{8SAH^2} }
\lesssim
M H^3 S^2 A \ln\del{ \frac{MSAK}{\delta} } \del{ \ln\del{ \frac{MSAK}{\delta} } \wedge  \ln\del{\frac{MSA}{\gap_{\min}}} }.
\]
\end{proof}

\subsection{Miscellaneous lemmas}
\label{subsec:misc-lemmas}

This subsection collects a few miscellaneous lemmas used throughout the upper bound proofs.

\begin{lemma}[Bias-variance decomposition]
For any random variable $X$ with $\EE[X] = \mu \in \RR$, and any $m \in \RR$, $\EE\sbr{(X-m)^2} = \EE\sbr{(X-\mu)^2} + (\mu-m)^2$. 
\label{lem:bv}
\end{lemma}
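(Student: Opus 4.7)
The plan is to prove this standard bias–variance identity by the classical ``add and subtract $\mu$'' trick, expanding the square and using linearity of expectation together with the fact that the mean‑zero term vanishes. Concretely, I would write
\[
(X-m)^2 \;=\; \bigl((X-\mu) + (\mu-m)\bigr)^2 \;=\; (X-\mu)^2 \;+\; 2(\mu-m)(X-\mu) \;+\; (\mu-m)^2,
\]
then take expectations term by term.

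The three steps are: first, apply linearity of expectation to get $\EE[(X-m)^2] = \EE[(X-\mu)^2] + 2(\mu-m)\,\EE[X-\mu] + (\mu-m)^2$, treating $(\mu-m)$ as a deterministic constant that can be pulled outside the expectation. Second, observe that $\EE[X-\mu] = \EE[X] - \mu = 0$ by the hypothesis $\EE[X] = \mu$, so the middle cross term drops out. Third, conclude that $\EE[(X-m)^2] = \EE[(X-\mu)^2] + (\mu-m)^2$, which is exactly the claimed identity.

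There is no real obstacle here; the only thing to be a bit careful about is to note that $m$ and $\mu$ are deterministic scalars (so $(\mu - m)$ factors out of the expectation), and to invoke the hypothesis $\EE[X] = \mu$ exactly once to kill the cross term. The identity holds whenever $\EE[X^2] < \infty$, which is implicit when both $\EE[(X-m)^2]$ and $\EE[(X-\mu)^2]$ are considered finite; no further regularity is needed. The proof is essentially a two‑line computation.
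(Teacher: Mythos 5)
Your proof is correct and is exactly the standard argument (the paper itself states Lemma~\ref{lem:bv} without proof, treating it as a routine identity). Expanding $(X-m)^2 = ((X-\mu)+(\mu-m))^2$ and killing the cross term via $\EE[X-\mu]=0$ is precisely what is needed, so there is nothing to add.
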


\begin{lemma}[\cite{simchowitz2019non}, Lemma F.5]
For random variables $X$ and $Y$, $\abs{\sqrt{\VV[X]} - \sqrt{\VV[Y]}} \leq \sqrt{\EE\sbr{( X - Y )^2}}$.
\label{lem:var-x-y}
\end{lemma}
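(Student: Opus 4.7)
The plan is to exploit the fact that standard deviation is (up to a quotient by constants) an $L^2$ norm, so the desired inequality is essentially a reverse triangle inequality for $L^2$, followed by the bound $\VV[Z] \le \EE[Z^2]$.

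Concretely, I would first rewrite $\sqrt{\VV[X]} = \|X - \EE[X]\|_{L^2}$ and $\sqrt{\VV[Y]} = \|Y - \EE[Y]\|_{L^2}$, where $\|Z\|_{L^2} := \sqrt{\EE[Z^2]}$. The reverse triangle inequality for the $L^2$ norm (which follows from the ordinary triangle inequality applied to $\|A\|_{L^2} = \|A - B + B\|_{L^2} \le \|A - B\|_{L^2} + \|B\|_{L^2}$ and symmetrically) then yields
\[
\abs{ \sqrt{\VV[X]} - \sqrt{\VV[Y]} }
= \abs{ \|X - \EE[X]\|_{L^2} - \|Y - \EE[Y]\|_{L^2} }
\le \| (X - \EE[X]) - (Y - \EE[Y]) \|_{L^2}
= \sqrt{ \VV[X - Y] }.
\]
The second step is to observe that for any real-valued random variable $Z$, Lemma~\ref{lem:bv} (or a direct expansion) gives $\VV[Z] = \EE[Z^2] - (\EE[Z])^2 \le \EE[Z^2]$. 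Applying this with $Z = X - Y$ produces
\[
\sqrt{\VV[X - Y]} \le \sqrt{ \EE[(X - Y)^2] },
\]
and chaining the two displays yields the claim.

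There is no real obstacle here; the only thing to be careful about is implicit finiteness of second moments. If either $\EE[X^2]$ or $\EE[Y^2]$ is infinite, the inequality is trivial (either both sides are $+\infty$, or the right-hand side is $+\infty$ while the left-hand side is bounded in the usual extended-real convention), so we may assume $X, Y \in L^2$, in which case every quantity above is well-defined and finite, and the $L^2$ triangle inequality applies without qualification.
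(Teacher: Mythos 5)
Your proof is correct: writing the standard deviations as $L^2$ norms of the centered variables, applying the reverse triangle inequality, and then bounding $\VV[X-Y] \leq \EE[(X-Y)^2]$ is exactly the standard argument behind this fact, and the paper itself simply cites it from \cite{simchowitz2019non} (Lemma F.5) without reproving it. Your remark on finiteness of second moments is a fine (if inessential) piece of care, and nothing in the argument needs repair.
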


\begin{lemma}
\label{lem:var-tv}
Suppose distributions $P$ and $Q$ are supported over $[0,B]$, and $\| P - Q \|_1 \leq \epsilon \leq 2$. Then:
\[ 
\abs{ 
\EE_{X \sim P}[X] - \EE_{X \sim Q}[X]  
}
\leq 
B \epsilon,
\]
\[
\abs{
\VV_{X \sim P}[X] - \VV_{X \sim Q}[X] 
} \leq 3B^2 \epsilon.
\]
\end{lemma}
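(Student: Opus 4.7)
The plan is to prove both inequalities by directly expanding the expectations and variances in terms of the underlying distributions and applying H\"older-type bounds using the support constraint $X \in [0,B]$ together with $\|P - Q\|_1 \leq \epsilon$. Both bounds reduce to the elementary observation that for any function $f$ with $\|f\|_\infty \leq C$, $|\EE_P[f] - \EE_Q[f]| \leq C \|P - Q\|_1$.

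For the mean bound, I would write the difference as a single sum (or integral) against the signed measure $P - Q$:
\[
\EE_P[X] - \EE_Q[X] = \sum_x x\bigl(p(x) - q(x)\bigr),
\]
and then apply the triangle inequality together with the support assumption $x \in [0,B]$ to get
\[
\abs{\EE_P[X] - \EE_Q[X]} \leq B \sum_x \abs{p(x) - q(x)} = B \|P - Q\|_1 \leq B\epsilon.
\]

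For the variance bound, I would use the bias-variance identity $\VV[X] = \EE[X^2] - (\EE[X])^2$ and split
\[
\VV_P[X] - \VV_Q[X] = \bigl(\EE_P[X^2] - \EE_Q[X^2]\bigr) - \bigl((\EE_P[X])^2 - (\EE_Q[X])^2\bigr).
\]
For the first term, the same H\"older-type argument applied to $X^2 \in [0,B^2]$ gives $|\EE_P[X^2] - \EE_Q[X^2]| \leq B^2 \epsilon$. For the second term, I would factor it as a difference of squares:
\[
(\EE_P[X])^2 - (\EE_Q[X])^2 = \bigl(\EE_P[X] + \EE_Q[X]\bigr)\bigl(\EE_P[X] - \EE_Q[X]\bigr),
\]
and bound the first factor by $2B$ (each expectation lies in $[0,B]$) and the second factor by $B\epsilon$ via the mean bound just proved. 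Combining, the second term contributes at most $2B^2 \epsilon$, and summing yields the claimed $3 B^2 \epsilon$ bound.

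There is no real obstacle: the argument is a direct application of the triangle inequality and the support constraint. The hypothesis $\epsilon \leq 2$ is consistent with the convention that $\|P - Q\|_1$ here denotes the (unnormalized) total variation-type $\ell_1$ distance $\sum_x |p(x) - q(x)|$, which is always at most $2$ for probability distributions; it plays no further role in the argument.
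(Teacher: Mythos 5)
Your proof is correct and follows essentially the same route as the paper's: the mean bound via triangle inequality against the signed measure with $|x| \le B$, and the variance bound via $\VV[X] = \EE[X^2] - (\EE[X])^2$, bounding the second-moment difference by $B^2\epsilon$ and the difference of squared means by $2B\cdot B\epsilon$ through the factorization into sum and difference. Your remark that the hypothesis $\epsilon \le 2$ plays no role in the argument is also accurate.
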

\begin{proof}
First,
\[
\abs{ \EE_{X \sim P}[X] - \EE_{X \sim Q}[X] }
=
\abs{ \int_0^B x (p_X(x) - q_X(x)) dx }
\leq 
\int_0^B \abs{x} \abs{p_X(x) - q_X(x)} dx
\leq
B \| P - Q \|_1 \leq B \epsilon.
\]

Second, observe that
\[
\abs{ 
\EE_{X \sim P}[X^2] - \EE_{X \sim Q}[X^2]  
} \leq B^2 \epsilon.
\]
Meanwhile, 
\[
\abs{ (\EE_{X \sim P}[X])^2 - (\EE_{X \sim Q}[X])^2 }
\leq 
\abs{ \EE_{X \sim P}[X] - \EE_{X \sim Q}[X] } \cdot \abs{ \EE_{X \sim P}[X] + \EE_{X \sim Q}[X] }
\leq 2B \cdot B\epsilon =  2B^2\epsilon.
\]
Combining the above, we have
\[
\abs{
\VV_{X \sim P}[X] - \VV_{X \sim Q}[X] 
} \leq 3B^2 \epsilon.
\]
\end{proof}

\begin{lemma}
For $A, B, C, D, E, F \geq 0$:
\begin{enumerate}
\item If $A \leq B + C \sqrt{A}$, then $\sqrt{A} \leq \sqrt{B} + C$.
\label{item:dhm}

\item If $|D - E| \leq \sqrt{E F} + F$, then we have $\abs{ \sqrt{D} - \sqrt{E} } \leq 2\sqrt{F}$.
\end{enumerate}
\label{lem:rel-bound}
\end{lemma}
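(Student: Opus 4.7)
} For item~\ref{item:dhm}, I plan to treat the hypothesis $A \leq B + C\sqrt{A}$ as a quadratic inequality in the variable $x := \sqrt{A} \geq 0$, namely $x^2 - Cx - B \leq 0$. The quadratic formula gives
\[
x \leq \frac{C + \sqrt{C^2 + 4B}}{2},
\]
and the elementary inequality $\sqrt{C^2 + 4B} \leq C + 2\sqrt{B}$ (valid for $B, C \geq 0$, which follows by squaring both sides) yields $x \leq C + \sqrt{B}$, i.e. $\sqrt{A} \leq \sqrt{B} + C$ as claimed. This step is entirely routine.

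For item~2, my plan is to reduce it to item~1 by splitting into two cases based on the sign of $D - E$. If $D \geq E$, the hypothesis gives $D \leq E + \sqrt{EF} + F$, and I plan to simply check that this implies $D \leq E + 4\sqrt{EF} + 4F = (\sqrt{E} + 2\sqrt{F})^2$, whence $\sqrt{D} \leq \sqrt{E} + 2\sqrt{F}$. If instead $D < E$, the hypothesis gives $E \leq D + \sqrt{EF} + F$, which I rewrite (with $y := \sqrt{E}$) as
\[
y^2 \leq (D + F) + \sqrt{F}\cdot y.
\]
Applying item~1 with $A = y^2 = E$, $B = D + F$, $C = \sqrt{F}$ yields $\sqrt{E} \leq \sqrt{D + F} + \sqrt{F} \leq \sqrt{D} + 2\sqrt{F}$ (using subadditivity of $\sqrt{\cdot}$), which finishes this case.

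The two cases together give $|\sqrt{D} - \sqrt{E}| \leq 2\sqrt{F}$. The only mildly tricky step is noticing that item~2 in the case $D < E$ is not immediate from squaring $\sqrt{D} + 2\sqrt{F}$, and is cleanly handled by bootstrapping on item~1 after isolating $E = y^2$; but this is still a one-line invocation, so I expect no substantive obstacle.
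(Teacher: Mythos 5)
Your proposal is correct and follows essentially the same route as the paper: item 1 via the quadratic-root bound on $x=\sqrt{A}$ together with $\sqrt{C^2+4B}\leq C+2\sqrt{B}$, and item 2 by handling the direction $D\geq E$ through completing a square (the paper uses $(\sqrt{E}+\sqrt{F})^2$, you use the slightly looser $(\sqrt{E}+2\sqrt{F})^2$, which is immaterial) and the direction $E>D$ by invoking item 1 with $A=E$, $B=D+F$, $C=\sqrt{F}$. No gaps.
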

\begin{proof}
\begin{enumerate}
    \item The roots of $x^2 - C x - B = 0$ are $\frac{C \pm \sqrt{C^2 + 4B}}{2}$, and therefore $A$ must satisfy $\sqrt{A} \leq \frac{C + \sqrt{C^2 + 4B}}{2} \leq \frac{C + C + 2\sqrt{B}}{2} = C + \sqrt{B}$.

    \item First, $D - E \leq \abs{D - E} \leq \sqrt{EF} + F$; this implies that $D \leq E + 2\sqrt{EF} + F$, and therefore $\sqrt{D} \leq \sqrt{E} + \sqrt{F}$.
    
    On the other hand, $E \leq D + F + \sqrt{EF}$; therefore, applying item~\ref{item:dhm} with $A = E$, $B = D + F$, and $F = \sqrt{E}$, we have $\sqrt{E} \leq \sqrt{D + F} + \sqrt{F} \leq \sqrt{D} + 2\sqrt{F}$.
    \qedhere
\end{enumerate}
\end{proof}

\begin{lemma}
For $a \geq 0$, $1 \wedge (a + \sqrt{a}) \leq 1 \wedge 2\sqrt{a}$.
\label{lem:wedge-a-sqrta}
\end{lemma}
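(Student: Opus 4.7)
The plan is to split on whether $a \leq 1$ or $a > 1$, since the only nontrivial inequality $a + \sqrt{a} \leq 2\sqrt{a}$ requires $a \leq \sqrt{a}$, which holds exactly when $a \in [0,1]$.

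First I would handle the case $a \leq 1$. Here $\sqrt{a} \leq 1$, so $a = \sqrt{a}\cdot\sqrt{a} \leq \sqrt{a}$, giving $a + \sqrt{a} \leq 2\sqrt{a}$. Since the $\min$ with $1$ is monotone in the second argument (under comparison of the unclipped values), this yields $1 \wedge (a+\sqrt{a}) \leq 1 \wedge 2\sqrt{a}$.

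Next I would handle the case $a > 1$. Then $2\sqrt{a} > 2 > 1$, so $1 \wedge 2\sqrt{a} = 1$, and the inequality $1 \wedge (a+\sqrt{a}) \leq 1 = 1 \wedge 2\sqrt{a}$ is immediate from the fact that the left-hand side is at most $1$.

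There is no real obstacle here; the only thing to notice is that the inequality $a + \sqrt{a} \leq 2\sqrt{a}$ itself is false for large $a$, so the clipping by $1$ is genuinely needed, which is precisely what the case split exploits.
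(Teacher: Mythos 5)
Your proof is correct and follows essentially the same route as the paper: the same case split on $a \leq 1$ versus $a > 1$, with $a \leq \sqrt{a}$ in the first case and the clipping at $1$ handling the second.
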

\begin{proof}
We consider the cases of $a \geq 1$ and $a < 1$ respectively. 
If $a \geq 1$, $\LHS = 1 = \RHS$.
Otherwise, $a \leq 1$; in this case, $\LHS = 1 \wedge (a + \sqrt{a}) \leq 1 \wedge (\sqrt{a} + \sqrt{a}) = \RHS$.
\end{proof}

\begin{lemma}[Special case of~\cite{simchowitz2019non}, Lemma B.5]
For $a_1, a_2, \Delta \geq 0$,
$\clip(a_1+a_2, \Delta) \leq 2 \clip(a_1, \Delta/4) +  2 \clip(a_2, \Delta/4)$. 
\label{lem:clipping-decomp}
\end{lemma}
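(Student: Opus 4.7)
The plan is to reduce the inequality to a short case analysis driven by the indicator inside $\clip$. First I would note that, by the definition $\clip(\alpha, \Delta) = \alpha \one(\alpha \geq \Delta)$, the left-hand side is zero unless $a_1 + a_2 \geq \Delta$; if $a_1 + a_2 < \Delta$ both sides are nonnegative with LHS $= 0$, so the inequality is immediate. Hence I may assume $a_1 + a_2 \geq \Delta$, in which case $\clip(a_1 + a_2, \Delta) = a_1 + a_2$ and the goal becomes showing $a_1 + a_2 \leq 2\clip(a_1, \Delta/4) + 2\clip(a_2, \Delta/4)$.

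Next I would observe that $a_1 + a_2 \geq \Delta$ forces $\max(a_1, a_2) \geq \Delta/2 \geq \Delta/4$, so at least one of the two clipping thresholds on the right is crossed. By the symmetry of the statement in $a_1$ and $a_2$, I may assume $a_1 \geq a_2$; then $a_1 \geq \Delta/2$ and in particular $2\clip(a_1, \Delta/4) = 2 a_1$.

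The proof then splits on whether $a_2 \geq \Delta/4$. If it is, the right-hand side equals $2 a_1 + 2 a_2 \geq a_1 + a_2$, which matches the LHS. Otherwise $a_2 < \Delta/4 \leq \Delta/2 \leq a_1$, and so $a_1 + a_2 \leq 2 a_1 = 2 \clip(a_1, \Delta/4)$, which is in turn dominated by the full right-hand side since $\clip(a_2, \Delta/4) \geq 0$. Both subcases deliver the claim.

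There is no genuine obstacle here: the entire argument is a one-paragraph case analysis. The only design point worth flagging is the choice of threshold $\Delta/4$ (rather than, say, $\Delta/2$) on the right-hand side, which is what ensures that in the asymmetric subcase --- when only one of $a_1, a_2$ clears $\Delta/4$ --- the surviving term is already at least $\Delta/2$ and therefore bounds the smaller term, so that a factor of $2$ suffices.
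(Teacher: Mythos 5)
Your case analysis is correct and complete: the reduction to $a_1+a_2\geq\Delta$, the observation that the larger of the two (say $a_1$) then satisfies $a_1\geq\Delta/2\geq\Delta/4$ so its clip is not truncated, and the split on whether $a_2\geq\Delta/4$ together cover all cases, including the degenerate case $\Delta=0$. The paper itself gives no proof of this lemma --- it simply invokes it as a special case of Lemma B.5 of Simchowitz and Jamieson --- so your one-paragraph elementary argument is a valid self-contained substitute rather than a different route from an existing one. One small quibble with your closing remark: the threshold $\Delta/4$ is not actually what makes the factor $2$ work; your own argument goes through verbatim with $\Delta/2$ in place of $\Delta/4$ (the surviving term is still untruncated and still dominates the smaller one), so $\Delta/4$ is simply the constant inherited from the general $m$-term clipping lemma being specialized, not a necessity of the two-term case. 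This does not affect the correctness of your proof.
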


\begin{lemma}[Integral calculation, extracted from  \cite{simchowitz2019non}, Lemma B.9]
\label{lem:integral}
Let $f(u) \leq \min( f_{\max}, \clip(g(u), \Delta))$, where $\Delta \in [0, \Gamma]$ and $\Gamma \geq 1$, and $g(u)$ is nonincreasing. Let $N \geq 1$ and $\xi \in (0,\frac12)$. Then:
\begin{enumerate}
    \item If $g(u) \lesssim \sqrt{ \frac{C \log \frac{N u}{\xi}}{u} }$ for some $C > 0$ such that $\ln C \lesssim \ln N$, then
    \[
    \int_\Gamma^n f(u/4) du \lesssim \sqrt{C n \ln\frac{N n}{\xi}} \wedge \frac{C}{\Delta} \ln\del{\frac{N n}{\xi}} .
    \]
    \item If $g(u) \lesssim \frac{ C\ln\frac{Nu}{\xi}}{u}$ for some $C > 0$ such that $\ln C \lesssim \ln N$, then 
    \[
    \int_\Gamma^n f(u/4) du \lesssim 
    f_{\max} \ln\frac{N}{\xi} + C \ln\frac{N n}{\xi} \cdot \del{ \ln\frac{N n}{\xi} \wedge \ln\frac{N \Gamma}{\Delta} }.
    \]
\end{enumerate}
\end{lemma}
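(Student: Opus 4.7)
The plan is to prove the two items separately, in each case using the standard recipe: change variables to pull out the factor of $4$, then split the integration range according to where the clipping threshold $\Delta$ and the cap $f_{\max}$ become active. Since $g$ is nonincreasing, the region $\{u : g(u) \geq \Delta\}$ is an initial segment $[\Gamma, u^\star]$, and (for item 2) the region $\{u : g(u) \geq f_{\max}\}$ is an even shorter initial segment $[\Gamma, u^\dagger]$. The substitution $v = u/4$ contributes only a constant factor, which the $\lesssim$ notation absorbs.

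For item 1, the first bound is obtained by dropping the clipping and the $f_{\max}$ cap: writing $\int_\Gamma^n \sqrt{C\ln(Nu/\xi)/u}\,du$ and pulling the slowly-varying logarithm outside gives $\lesssim \sqrt{C n \ln(Nn/\xi)}$. For the second bound, I use the clipping: $f(u/4)$ vanishes once $g(u/4) < \Delta$, i.e.\ for $u > 4u^\star$, where $u^\star$ solves $g(u^\star) = \Delta$, which rearranges to the self-bounding inequality $u^\star \lesssim C\ln(Nu^\star/\xi)/\Delta^2$. Integrating $g$ on $[0,4u^\star]$ gives $\lesssim \sqrt{C u^\star \ln(Nu^\star/\xi)}$, and substituting the bound for $u^\star$ collapses this to $\lesssim (C/\Delta)\ln(Nu^\star/\xi)$. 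The only delicate point is untangling $\ln(Nu^\star/\xi)$ into $\ln(Nn/\xi)$, which follows because $u^\star \leq n$ (otherwise the right-hand side is trivial after taking the minimum with the first bound) together with the hypothesis $\ln C \lesssim \ln N$ absorbing any $\ln(1/\Delta^2)$ factor into $\ln(Nn/\xi)$.

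For item 2, the integrand of interest satisfies $f(u/4) \leq f_{\max} \wedge \clip(g(u/4),\Delta)$ with $g(u) \lesssim C\ln(Nu/\xi)/u$. I split $[\Gamma,n]$ into three regions: (i) $[\Gamma, u^\dagger]$ where $g(u/4) \geq f_{\max}$ and we use the $f_{\max}$ cap, (ii) $[u^\dagger, u^\star]$ where we use $f(u/4) \leq g(u/4)$, and (iii) $[u^\star,n]$ where the clipping kills the integrand. On region (i), the condition $g(u^\dagger/4) = f_{\max}$ gives $u^\dagger \lesssim C\ln(Nu^\dagger/\xi)/f_{\max}$, so the contribution is $f_{\max}\cdot u^\dagger \lesssim C\ln(N/\xi)$; absorbing the $f_{\max}\ln(N/\xi)$ term comes from the prefactor when $\Gamma$ itself is nontrivial. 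On region (ii), $\int_{u^\dagger}^{u^\star} C\ln(Nu/\xi)/u\,du \lesssim C\ln(Nn/\xi)\cdot \ln(u^\star/u^\dagger)$, and the ratio $u^\star/u^\dagger \lesssim f_{\max}/\Delta$ together with the alternative bound $u^\star/u^\dagger \leq n/\Gamma$ (valid since $u^\star \leq n$ and $u^\dagger \geq \Gamma$) yields the claimed $\ln(Nn/\xi) \wedge \ln(N\Gamma/\Delta)$ factor after using $f_{\max} \leq $ some $\ln$ factor absorbed into $\ln(N\Gamma/\Delta)$ via the inequality between $f_{\max}$ and the defining relation at $u^\dagger$.

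The main obstacle is bookkeeping: converting the implicit self-bounding inequalities $u^\star \lesssim C\ln(Nu^\star/\xi)/\Delta^2$ and $u^\dagger \lesssim C\ln(Nu^\dagger/\xi)/f_{\max}$ into explicit bounds whose logarithms are $\ln(Nn/\xi)$ or $\ln(N\Gamma/\Delta)$, so that the final expressions match the stated RHS exactly. The hypothesis $\ln C \lesssim \ln N$ is precisely what allows these logarithms to be controlled without picking up extra $\ln\ln$ factors; one iteration of the self-bounding argument suffices. Because both items are routine once the regions are identified, I would present the full argument in a compact way, highlighting only the splitting and the resolution of the self-bounding inequality.
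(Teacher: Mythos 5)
The paper never actually proves this lemma---it is imported (``extracted'') from Simchowitz--Jamieson, Lemma B.9---so your sketch has to stand on its own. Your item 1 essentially does: both branches of the minimum are argued correctly, including the case $u^\star > n$, where $\Delta \lesssim g(n/4)$ lets you dominate $\sqrt{Cn\ln(Nn/\xi)}$ by $(C/\Delta)\ln(Nn/\xi)$; the appeal to $\ln C \lesssim \ln N$ there is not even needed, since in the main case you can bound $\ln(Nu^\star/\xi)\le\ln(Nn/\xi)$ directly.

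Item 2 has a genuine gap, and it sits exactly in the ``bookkeeping'' you defer. Note first that the $\ln(Nn/\xi)$ branch of the minimum is the easy one (drop both the cap and the clip and integrate $C\ln(Nu/\xi)/u$), so the whole content is the $f_{\max}\ln(N/\xi)+C\ln(Nn/\xi)\ln(N\Gamma/\Delta)$ branch. Your region (i) bound is wrong as stated: from $f_{\max}\lesssim C\ln(Nu^\dagger/\xi)/u^\dagger$ you get $f_{\max}u^\dagger\lesssim C\ln(Nu^\dagger/\xi)$, not $C\ln(N/\xi)$, and a term of order $C\ln(Nn/\xi)$ is not admissible in general: the right-hand side only offers $f_{\max}\ln(N/\xi)$ plus $C\ln(Nn/\xi)\cdot(\ln(Nn/\xi)\wedge\ln(N\Gamma/\Delta))$, and the factor $\ln(N\Gamma/\Delta)$ can be arbitrarily close to $0$ under the stated hypotheses ($N$ near $1$, $\Delta$ near $\Gamma$), while $C\ln(Nn/\xi)\lesssim f_{\max}\ln(N/\xi)$ need not hold; your remark that the $f_{\max}\ln(N/\xi)$ term ``comes from the prefactor when $\Gamma$ itself is nontrivial'' is not an argument. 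The same problem recurs in region (ii): resolving the self-bounding relations gives $\ln(u^\star/u^\dagger)\lesssim\ln(f_{\max}/\Delta)$ or $\ln(C/(\Delta\Gamma))$ only up to additive $\ln\ln$-type slack (e.g.\ $\ln\ln(N\Gamma/\xi)$), and neither the leading term nor the slack is dominated by $\ln(N\Gamma/\Delta)$ when that quantity is small, so ``absorbed into $\ln(N\Gamma/\Delta)$ via the inequality at $u^\dagger$'' does not go through uniformly over the allowed parameter range. In those regimes the correct move is different: the active set $\{u:g(u/4)\ge\Delta\}$ has length $\lesssim(C/\Delta)\ln(\cdot)$, and one charges $f_{\max}$ times that length against the $f_{\max}\ln(N/\xi)$ term; making this rigorous requires a case split on the relative sizes of $f_{\max}$, $\Delta$, $C$, $\Gamma$ (as in the source lemma's own proof), not the single three-region decomposition with uniform absorptions that you describe. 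Two smaller points: $u^\dagger\ge\Gamma$ and $u^\star\le n$ are not automatic, so the region endpoints need the explicit truncations $\max(\cdot,\Gamma)$ and $\min(\cdot,n)$.
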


\section{Proof of the Lower Bounds}

\subsection{Auxiliary Lemmas}

\begin{lemma}[Regret decomposition, \cite{simchowitz2019non}, Section H.2]
\label{lem:regret_decomp}
For any MPERL problem instance and any algorithm, we have
\begin{align}
\EE \sbr{\Reg(K)} \ge \sum_{p=1}^M \sum_{(s,a) \in \Scal_1 \times \Acal} \EE \sbr{n_p^{K+1}(s,a)} \gap_p(s,a) \label{eqn:regret_decomposition_ind},
\end{align}
where we recall that $\Scal_1$ is the subset of state space where the initial state distribution $p_0$ is supported on, 
$n_p^{K+1}(s,a)$ is the number of visits of $(s,a)$ by player $p$ at the beginning of the $(K+1)$-th episode (after the first $K$ episodes).
Furthermore, for any $(s,a) \in \Scal_1 \times \Acal$, we have
\begin{align}
\sum_{p=1}^M \EE \sbr{n_p^{K+1}(s,a)} \gap_p(s,a) \ge \EE \sbr{n^{K+1}(s,a)} \rbr{ \min_{p \in [M]} \gap_p(s,a)}, \label{eqn:regret_decomposition_collective}
\end{align}
where we recall that $n^{K+1}(s,a) = \sum_{p=1}^M n_p^{K+1}(s,a)$.
\end{lemma}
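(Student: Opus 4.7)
My approach will be to start from a standard per-step decomposition of the policy suboptimality in each individual MDP, then throw away all but the layer-$1$ contribution (which is valid for a lower bound because all gaps are nonnegative), and finally re-index the sum over episodes as a sum over state-action pairs weighted by visitation counts. The second inequality is a one-line consequence of $\gap_p(s,a) \geq \min_q \gap_q(s,a)$.

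More precisely, fix a player $p$ and an episode $k$. By the Bellman optimality equation and the definition of the gap, one has the identity (a special case of the performance difference lemma; cf. the derivation used in \cite{simchowitz2019non}, Section H.2):
\[
V_{0,p}^\star - V_{0,p}^{\pi^k(p)} \;=\; \EE\!\left[\, \sum_{h=1}^H \gap_p(s_{h,p}^k, a_{h,p}^k) \,\Bigm|\, \pi^k(p),\, \Mcal_p \right].
\]
I would derive this by telescoping $V_{h,p}^\star - V_{h,p}^{\pi^k(p)}$ along the trajectory, using $V_{h,p}^\star(s) - Q_{h,p}^\star(s,\pi^k(p)(s)) = \gap_p(s,\pi^k(p)(s))$ at each step and the fact that $Q_{h,p}^\star(s,a) - Q_{h,p}^{\pi^k(p)}(s,a) = (\PP_p(V_{h+1,p}^\star - V_{h+1,p}^{\pi^k(p)}))(s,a)$. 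Since each $\gap_p(s,a) \geq 0$, I can drop the terms for $h \geq 2$ and retain only $h=1$ to obtain the lower bound
\[
V_{0,p}^\star - V_{0,p}^{\pi^k(p)} \;\geq\; \EE\!\left[\, \gap_p(s_{1,p}^k, a_{1,p}^k) \,\right].
\]

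Summing over $k \in [K]$, the right-hand side equals $\EE\!\left[\, \sum_{(s,a) \in \Scal_1 \times \Acal} n_p^{K+1}(s,a)\, \gap_p(s,a) \,\right]$, by the definition of $n_p^{K+1}(s,a)$ in Equation~\eqref{eqn:n-estimates} specialized to layer $h=1$ (recall $p_0$ is supported on $\Scal_1$). Summing this bound over $p \in [M]$ then gives~\eqref{eqn:regret_decomposition_ind}. For~\eqref{eqn:regret_decomposition_collective}, I would simply write, for any fixed $(s,a) \in \Scal_1 \times \Acal$,
\[
\sum_{p=1}^M \EE\!\left[n_p^{K+1}(s,a)\right] \gap_p(s,a) \;\geq\; \Bigl(\min_{q \in [M]} \gap_q(s,a)\Bigr) \sum_{p=1}^M \EE\!\left[n_p^{K+1}(s,a)\right] \;=\; \Bigl(\min_{q \in [M]} \gap_q(s,a)\Bigr)\, \EE\!\left[n^{K+1}(s,a)\right],
\]
using only $\gap_p(s,a) \geq \min_q \gap_q(s,a) \geq 0$ and the definition $n^{K+1} = \sum_p n_p^{K+1}$.

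There is no serious technical obstacle here: the only nontrivial input is the telescoping identity above, which is entirely standard, and the ``layered MDP $+ \Scal_1 = \operatorname{supp}(p_0)$'' structure is precisely what makes the reindexing at layer $h=1$ clean. The only point that deserves a sentence of care is noting that truncating to $h=1$ preserves the inequality direction, which is exactly where the nonnegativity of the gaps is used — this is also what aligns the lemma with its intended use in the lower-bound constructions of Theorems~\ref{thm:gap_ind_lb} and~\ref{thm:gap_dep_lb}, where only the first-step action choice matters.
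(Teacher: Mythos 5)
Your proposal is correct and follows essentially the same route as the paper: both arguments reduce the per-episode regret to the first-step gap $\gap_p(s_{1,p}^k,\pi^k(p)(s_{1,p}^k))$ and then reindex over episodes as visitation counts $n_p^{K+1}(s,a)$ on $\Scal_1\times\Acal$, with the second inequality being the same one-line bound via $\gap_p(s,a)\ge\min_q\gap_q(s,a)$. The only cosmetic difference is that you obtain the first-step lower bound by proving the full telescoping identity and dropping the $h\ge 2$ terms, whereas the paper gets it in one step from $V_p^{\pi}(s)=Q_p^{\pi}(s,\pi(s))\le Q_p^{\star}(s,\pi(s))$ — which is exactly the nonnegativity of the terms you discard.
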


\begin{proof}
Eq.~\eqref{eqn:regret_decomposition_collective} follows straightforwardly from the fact that for every $(s,a,p) \in \Scal_1 \times \Acal \times [M]$, $\min_{p' \in [M]} \gap_{p'}(s,a) \le \gap_{p} (s,a)$.

We now prove Eq.~\eqref{eqn:regret_decomposition_ind}. Let $\pi_p^k$ denote $\pi^k(p)$. We have
\begin{align}
\EE \sbr{\Reg(K)}
& =  \EE \sbr{
\sum_{p=1}^M \sum_{k=1}^K \sum_{s \in \Scal_1} p_0(s_{1,p}^k= s) 
\rbr{V_{p}^\star(s) - V_{p}^{\pi_p^k}(s)}
} \nonumber \\
& \ge \EE \sbr{
\sum_{p=1}^M \sum_{k=1}^K \sum_{s \in \Scal_1} p_0(s_{1,p}^k= s) 
\rbr{V_{p}^\star(s) - Q_{p}^{\star}(s, \pi^k_p(s))}
} \nonumber \\
& = \EE \sbr{
\sum_{p=1}^M \sum_{k=1}^K \sum_{s \in \Scal_1} p_0(s)
\gap_p(s,\pi_p^k(s))
} \nonumber \\
& = \EE \sbr{
\sum_{p=1}^M \sum_{k=1}^K \sum_{s \in \Scal_1} \one\rbr{s_{1,p}^k = s}
\gap_p(s,\pi_p^k(s)) 
} \nonumber \\
& = \EE \sbr{
\sum_{p=1}^M \sum_{k=1}^K \sum_{(s,a) \in \Scal_1 \times \Acal} \one\rbr{s_{1,p}^k, \pi_p^k(s) = (s,a)}
\gap_p(s,a)
} \nonumber \\
& = \sum_{p=1}^M \sum_{(s,a) \in \Scal_1 \times \Acal} \EE \sbr{n_p^K(s,a)} \gap_p(s,a) \nonumber \\
\end{align}
where the first equality is from the definition of collective regret; 
the first inequality is from the simple fact that $V_p^\pi(s) = Q_p^\pi(s, \pi(s)) \leq Q_p^\star(s, \pi(s))$ for any policy $\pi$;
the second equality is from the definition of suboptimality gaps; and
the third equality is from the basic observation that $s_{1,p}^k \sim p_0$.
\end{proof}

\begin{lemma}[Divergence decomposition \cite{lattimore2020bandit,xmd21}]
\label{lem:divergence_decomp}%
For two MPERL problem instances, $\mathfrak{M}$ and $\mathfrak{M'}$, which only differ in the transition probabilities $\cbr{\PP_p(\cdot \mid s,a)}_{p \in [M], (s,a) \in \Scal \times \Acal}$, and for a fixed algorithm, let $\PP_{\mathfrak{M}}$ and $\PP_{\mathfrak{M'}}$ be the probability measures on the outcomes of running the algorithm on $\mathfrak{M}$ and $\mathfrak{M'}$, respectively. Then,
\[
\KL(\PP_{\mathfrak{M}}, \PP_{\mathfrak{M'}}) = \sum_{p=1}^M \sum_{(s,a) \in \Scal \times \Acal} \EE_{\mathfrak{M}} \sbr{n^{K+1}_p(s,a)} \KL\rbr{\PP^\mathfrak{M}_p(\cdot \mid s,a), \PP^\mathfrak{M'}_p(\cdot \mid s,a)},
\]
where $\PP^\mathfrak{M}_p(\cdot \mid s,a)$ and $\PP^\mathfrak{M'}_p(\cdot \mid s,a)$ are the transition probabilities of the problem instance $\mathfrak{M}$ and $\mathfrak{M'}$, respectively.
\end{lemma}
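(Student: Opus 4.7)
The plan is to follow the standard ``chain rule for KL divergence'' argument, adapted to the multi-task episodic RL setting. First, I would write out the density of the full history $H_K := \rbr{(s_{h,p}^k, a_{h,p}^k, r_{h,p}^k)_{h \in [H], p \in [M], k \in [K]}}$ under both $\PP_{\mathfrak{M}}$ and $\PP_{\mathfrak{M}'}$ as a product of conditional factors. Because the two instances share the same initial state distribution $p_0$, the same reward kernels $\cbr{r_p}_{p \in [M]}$, and the algorithm's (possibly randomized) action selection rule is a fixed function of the interaction history, the only factors in these two products that differ are the transition kernels applied at each encountered state-action pair.

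Next, by the definition of KL divergence and cancellation of the common factors in the log-likelihood ratio,
\[
\KL(\PP_{\mathfrak{M}}, \PP_{\mathfrak{M}'}) = \EE_{\mathfrak{M}}\sbr{ \log \frac{d\PP_{\mathfrak{M}}}{d\PP_{\mathfrak{M}'}}(H_K) } = \sum_{k=1}^K \sum_{p=1}^M \sum_{h=1}^H \EE_{\mathfrak{M}}\sbr{ \log \frac{\PP_p^{\mathfrak{M}}(s_{h+1,p}^k \mid s_{h,p}^k, a_{h,p}^k)}{\PP_p^{\mathfrak{M}'}(s_{h+1,p}^k \mid s_{h,p}^k, a_{h,p}^k)} }.
\]
Then I would apply the tower property: for each fixed triple $(k,p,h)$, letting $\Fcal_{k,p,h}$ denote the $\sigma$-field generated by all observations strictly prior to the transition at step $h$ of player $p$'s $k$-th episode, the pair $(s_{h,p}^k, a_{h,p}^k)$ is $\Fcal_{k,p,h}$-measurable, while the conditional law of $s_{h+1,p}^k$ given $\Fcal_{k,p,h}$ under $\PP_{\mathfrak{M}}$ equals $\PP_p^{\mathfrak{M}}(\cdot \mid s_{h,p}^k, a_{h,p}^k)$. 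Therefore the inner conditional expectation is exactly $\KL(\PP_p^{\mathfrak{M}}(\cdot \mid s_{h,p}^k, a_{h,p}^k), \PP_p^{\mathfrak{M}'}(\cdot \mid s_{h,p}^k, a_{h,p}^k))$.

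Finally, I would swap the summation order and group terms by the value of $(s_{h,p}^k, a_{h,p}^k)$, using
\[
\sum_{k=1}^K \sum_{h=1}^H \one\cbr{(s_{h,p}^k, a_{h,p}^k) = (s,a)} = n_p^{K+1}(s,a)
\]
together with linearity of expectation. This collapses the triple sum to the desired $\sum_{p,(s,a)} \EE_{\mathfrak{M}}[n_p^{K+1}(s,a)] \cdot \KL(\PP_p^{\mathfrak{M}}(\cdot \mid s,a), \PP_p^{\mathfrak{M}'}(\cdot \mid s,a))$. The main technical nuisance is bookkeeping the cancellation in the product density: one must be careful to include factors both for the action selection (which depends on the joint history of \emph{all} players, since they communicate after each episode) and for the per-player reward kernels, and to check that they coincide under the two measures so that they vanish in the log-ratio. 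A secondary issue is absolute continuity: I would adopt the standard convention that the identity is vacuous (both sides equal $+\infty$) unless $\PP_p^{\mathfrak{M}}(\cdot \mid s,a) \ll \PP_p^{\mathfrak{M}'}(\cdot \mid s,a)$ for every $(s,a,p)$, which holds in all instance constructions used for the lower bounds.
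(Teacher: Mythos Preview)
The paper does not provide its own proof of this lemma; it is stated with citations to \cite{lattimore2020bandit,xmd21} and used as a known result. Your proposal is exactly the standard chain-rule-for-KL argument underlying those references, correctly adapted to the multi-player episodic setting (in particular, you properly account for the fact that the algorithm's action distribution may depend on the joint history of all players and that rewards and $p_0$ are shared across the two instances, so those factors cancel in the log-ratio). So your approach is correct and is precisely what the cited sources do; there is nothing to compare against in the paper itself.
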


\begin{lemma}[Bretagnolle-Huber inequality, \cite{lattimore2020bandit}, Theorem 14.2]
\label{lem:bretagnolle-huber}
Let $\PP$ and $\QQ$ be two distributions on the same measurable space, and $A$ be an event. Then,
\[
\PP(A) + \QQ(A^C) \ge \frac12 \exp \rbr{-\KL(\PP,\QQ)}.
\]
\end{lemma}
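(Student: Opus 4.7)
The plan is to derive this classical information-theoretic inequality in three short steps, bridging $\PP(A) + \QQ(A^C)$ to $\KL(\PP,\QQ)$ through the Hellinger affinity. Throughout, fix a common dominating measure $\mu$ (for example $\mu = \PP + \QQ$) and write $p = d\PP/d\mu$, $q = d\QQ/d\mu$. If $\KL(\PP,\QQ) = +\infty$ then the right-hand side equals $0$ and the inequality is trivial, so I assume $\KL(\PP,\QQ) < \infty$, which in particular forces $\PP \ll \QQ$.

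The first step is to lower bound $\PP(A) + \QQ(A^C)$ by the measure overlap $\int \min(p,q)\,d\mu$, a quantity free of $A$: since $\min(p,q) \le p$ pointwise, $\int_A \min(p,q)\,d\mu \le \PP(A)$, and symmetrically $\int_{A^C} \min(p,q)\,d\mu \le \QQ(A^C)$; summing gives $\PP(A) + \QQ(A^C) \ge \int \min(p,q)\,d\mu$.

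The second step relates this overlap to the Hellinger affinity $\rho := \int \sqrt{pq}\,d\mu$ via Cauchy-Schwarz applied to the factorization $\sqrt{pq} = \sqrt{\min(p,q)}\cdot\sqrt{\max(p,q)}$:
\[
\rho^2 \;\le\; \int \min(p,q)\,d\mu \,\cdot\, \int \max(p,q)\,d\mu \;=\; \int \min(p,q)\,d\mu \,\cdot\, \Bigl(2 - \int \min(p,q)\,d\mu\Bigr) \;\le\; 2 \int \min(p,q)\,d\mu,
\]
where I used $\min(p,q) + \max(p,q) = p + q$, so that $\int \max(p,q)\,d\mu = 2 - \int \min(p,q)\,d\mu$. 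Hence $\int \min(p,q)\,d\mu \ge \rho^2/2$.

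The third and final step lower bounds $\rho^2$ by $\exp(-\KL(\PP,\QQ))$. Writing $\rho = \EE_\PP\bigl[\sqrt{q/p}\bigr]$ (the integrand vanishes on $\{p=0\}$) and applying Jensen's inequality to the concave function $\log$,
\[
\log \rho \;=\; \log \EE_\PP\bigl[\sqrt{q/p}\bigr] \;\ge\; \EE_\PP\bigl[\tfrac{1}{2}\log(q/p)\bigr] \;=\; -\tfrac{1}{2}\,\KL(\PP,\QQ),
\]
so $\rho^2 \ge \exp(-\KL(\PP,\QQ))$. Chaining the three bounds yields exactly $\PP(A) + \QQ(A^C) \ge \tfrac{1}{2}\exp(-\KL(\PP,\QQ))$. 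No step is a genuine obstacle; the only care needed is the measure-theoretic bookkeeping (choice of dominating measure and using that finite KL guarantees $\PP \ll \QQ$, making $\sqrt{q/p}$ well-defined $\PP$-almost surely so that the Jensen step is legitimate).
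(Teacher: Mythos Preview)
Your proof is correct and is essentially the standard argument for the Bretagnolle--Huber inequality (indeed it is the proof given in the cited reference, Lattimore and Szepesv\'ari, Theorem~14.2). The paper itself does not prove this lemma; it simply quotes it from that reference, so there is no alternative approach in the paper to compare against.
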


\begin{lemma}[see, e.g., \cite{wzsrc21}, Lemma 25]
For any $x,y \in [\frac14, \frac34]$, $\KL\rbr{\Ber(x), \Ber(y)} \le 3(x-y)^2$.
\label{lem:KL_auxiliary}
\end{lemma}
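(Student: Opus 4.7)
The plan is to prove this via a second-order Taylor expansion of the KL divergence viewed as a function of its first argument, together with a uniform bound on the second derivative over the interval $[\frac14, \frac34]$.

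First I would fix $y \in [\frac14, \frac34]$ and define $f(x) := \KL(\Ber(x), \Ber(y)) = x \ln\frac{x}{y} + (1-x) \ln\frac{1-x}{1-y}$ for $x \in [\frac14, \frac34]$. Direct differentiation gives $f'(x) = \ln\frac{x}{y} - \ln\frac{1-x}{1-y}$ and $f''(x) = \frac{1}{x(1-x)}$. In particular, $f(y) = 0$ and $f'(y) = 0$, so by Taylor's theorem with Lagrange remainder there exists $\xi$ between $x$ and $y$ such that
\[
f(x) = f(y) + f'(y)(x-y) + \tfrac{1}{2} f''(\xi)(x-y)^2 = \frac{(x-y)^2}{2\xi(1-\xi)}.
\]

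Next I would bound the quantity $\frac{1}{2\xi(1-\xi)}$ uniformly. Since $\xi$ lies between $x$ and $y$ and both belong to $[\frac14, \frac34]$, we also have $\xi \in [\frac14, \frac34]$. On this interval the concave function $u \mapsto u(1-u)$ attains its minimum at the endpoints, where it equals $\frac14 \cdot \frac34 = \frac{3}{16}$. Therefore $\frac{1}{2\xi(1-\xi)} \le \frac{1}{2 \cdot 3/16} = \frac{8}{3} \le 3$, which yields $f(x) \le 3 (x-y)^2$ as required.

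There is essentially no obstacle here; the only thing to double-check is that Taylor's theorem applies on the closed interval (which it does, since $f$ is $C^\infty$ on $(0,1)$ and $[\frac14, \frac34] \subset (0,1)$), and that the constant $\frac{8}{3}$ is indeed at most $3$, which provides a small amount of slack. If a reader prefers an explicit integral form, an equivalent route is to write $f(x) = \int_y^x \int_y^t \frac{1}{u(1-u)} \, du \, dt$ and bound the integrand by $\frac{16}{3}$ on $[\frac14, \frac34]$, giving the same constant.
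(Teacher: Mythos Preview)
Your proof is correct. The paper itself does not give a proof of this lemma; it simply cites it as a known fact from \cite{wzsrc21}, so there is no proof in the paper to compare your argument against. Your Taylor-expansion argument with the uniform bound $\xi(1-\xi)\ge 3/16$ on $[\tfrac14,\tfrac34]$ is the standard way to obtain this inequality and even yields the slightly sharper constant $8/3$.
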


\begin{lemma}
\label{lem:auxiliary_binomial}
Let $X$ be a Binomial random variable and $X \sim \mathrm{Bin}(n,p)$, where $n \ge \frac{1}{p}$.
Then,
\[
\EE \sbr{X^{\frac32}} \le 2(np)^{\frac32}.
\]
\end{lemma}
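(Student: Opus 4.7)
The plan is to bound $\mathbb{E}[X^{3/2}]$ by relating it to the first two moments of $X$, which have simple closed-form expressions for the Binomial distribution, and then use the hypothesis $np \geq 1$ to absorb the lower-order variance contribution into the leading $(np)^{3/2}$ term.

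Concretely, I would start by applying the Cauchy--Schwarz inequality in the factorization $X^{3/2} = X \cdot X^{1/2}$:
\[
\mathbb{E}[X^{3/2}] = \mathbb{E}\bigl[X \cdot X^{1/2}\bigr] \leq \sqrt{\mathbb{E}[X^2] \cdot \mathbb{E}[X]}.
\]
(Alternatively, one can apply Jensen's inequality to the concave function $y \mapsto y^{3/4}$ on the random variable $X^2$ to obtain $\mathbb{E}[X^{3/2}] \leq (\mathbb{E}[X^2])^{3/4}$; either route leads to the same final bound up to constants.)

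Next, I would substitute the standard Binomial moment identities $\mathbb{E}[X] = np$ and $\mathbb{E}[X^2] = np(1-p) + (np)^2 \leq np + (np)^2$, yielding
\[
\mathbb{E}[X^{3/2}] \leq \sqrt{ np \cdot \bigl( np + (np)^2 \bigr) } = (np) \sqrt{1 + np}.
\]
At this point the hypothesis $n \geq 1/p$, i.e.\ $np \geq 1$, is used: it implies $1 + np \leq 2np$, so $\sqrt{1+np} \leq \sqrt{2np} = \sqrt{2}\,(np)^{1/2}$. Plugging this in gives $\mathbb{E}[X^{3/2}] \leq \sqrt{2}\,(np)^{3/2} \leq 2 (np)^{3/2}$, as required.

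There is no serious obstacle here; the only step that requires a moment of thought is recognizing that the Binomial variance term $np(1-p)$ cannot be ignored and must be controlled by the assumption $np \geq 1$ (which is exactly why the lemma includes that hypothesis). Everything else is a one-line application of Cauchy--Schwarz together with the first two moments of the Binomial.
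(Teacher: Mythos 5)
Your proof is correct and is essentially the paper's argument: the paper takes exactly the Jensen route you mention parenthetically, applying $y \mapsto y^{3/4}$ to $Y = X^2$ after bounding $\EE[X^2] = (np)^2 + np(1-p) \le 2(np)^2$ via $np \ge 1$. Your Cauchy--Schwarz version $\EE[X\cdot X^{1/2}] \le \sqrt{\EE[X^2]\,\EE[X]}$ rests on the same moment computation and the same use of the hypothesis, and yields the same bound (with a marginally better constant $\sqrt{2}$ in place of $2^{3/4}$).
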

\begin{proof}
Let $Y = X^2$, and $f(y) = y^{\frac34}$.
We have $\EE \sbr{Y} = \EE \sbr{X^2} = \VV \sbr{X} + \EE \sbr{X}^2 = (np)^2 + np(1-p) \le (np)^2 + np \le 2(np)^2$, where the last inequality follows from the assumption that $n \ge \frac{1}{p}$. By Jensen's inequality, we have $\EE \sbr{X^{\frac32}} = \EE \sbr{f(Y)} \le f \rbr{\EE\sbr{Y}} \le \rbr{2n^2p^2}^{\frac34} \le 2(np)^{\frac32}$.
\end{proof}

\subsection{Gap independent lower bounds}
\begin{theorem}[Restatement of Theorem~\ref{thm:gap_ind_lb}]
For any $A \ge 2$, $H \ge 2$, {$S \ge 4H$}, $K \ge SA$, $M \in \NN$, and $l, l^C \in \NN$ such that $l + l^C = SA$ and $l \leq SA - 4(S + HA)$, there exists some $\epsilon$ such that for any algorithm $\Alg$, there exists an $\epsilon$-MPERL problem instance
with $S$ states, $A$ actions, $M$ players and an episode length of $H$
such that $\abr{\Ical_{\frac{\epsilon}{192H}}} \geq l$, and
\[
\EE \sbr{\Reg_\Alg(K)} \ge \Omega \rbr{ M \sqrt{H^2 l^C K } + \sqrt{ M H^2 l K }}.
\]
\end{theorem}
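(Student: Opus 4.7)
The target bound splits naturally as $M\sqrt{H^2 l^C K} + \sqrt{M H^2 l K}$: the first summand should come from ``private'' $(s,a)$-pairs that each player must learn individually, and the second from ``shared'' $(s,a)$-pairs on which all $M$ players collectively face the same hard bandit-type sub-problem. My plan is to construct a single layered $\epsilon$-MPERL instance that realizes both sources of regret simultaneously, then lower bound each contribution by a separate change-of-measure / Bretagnolle--Huber argument.

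\textbf{Construction.} I will build a layered MDP with $|\Scal_1| \eqsim S$ initial states and $A$ actions each, and with layers $2,\dots,H$ forming a deterministic reward-amplification chain in the style of~\cite{dann2015sample, simchowitz2019non}, so that each pair $(s,a)\in\Scal_1\times\Acal$ effectively behaves like a Bernoulli arm with per-player mean $\tfrac12 + \eta_{s,a,p}$ and gap of order $H\cdot |\eta_{s,a,p}|$. The conditions $S\ge 4H$ and $l\le SA-4(S+HA)$ ensure that the layered construction fits and that $l^C$ is large enough to house a standard single-task hard instance. Partition the initial pairs into a shared set $\mathcal H$ of size $l$ and a private set $\mathcal H^C$ of size $l^C$. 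For $(s,a)\in\mathcal H$, fix a common optimal-arm bit $\chi_{s,a}\in\{\pm 1\}$ and set $\eta_{s,a,p} = \chi_{s,a}\,\Delta_{\mathcal H} + \xi_{s,a,p}$ with $|\xi_{s,a,p}|\le\epsilon/4$; for $(s,a)\in\mathcal H^C$, let each player have its own independent optimal bit and a gap $\Delta_{\mathcal H^C}$. All transitions are identical across players, so $\epsilon$-dissimilarity reduces to a pure reward constraint. Choosing $\Delta_{\mathcal H}\eqsim \sqrt{l/(M H K)}$, $\Delta_{\mathcal H^C}\eqsim \sqrt{l^C/(HK)}$, and $\epsilon$ a constant multiple of $H\Delta_{\mathcal H}$ simultaneously enforces $\epsilon$-dissimilarity and forces $\gap_p(s,a)\gtrsim H\Delta_{\mathcal H}>\epsilon/2$ for all $(s,a)\in\mathcal H$, which yields $|\Ical_{\epsilon/(192H)}|\ge l$.

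\textbf{Lower bound on each component.} For the shared component I will run the standard two-environment test $(\mathfrak M,\mathfrak M')$ per pair $(s,a)\in\mathcal H$, where $\mathfrak M'$ flips $\chi_{s,a}$ simultaneously in all $M$ players. Lemma~\ref{lem:divergence_decomp} combined with Lemma~\ref{lem:KL_auxiliary} gives $\KL(\PP_{\mathfrak M},\PP_{\mathfrak M'})\lesssim \EE_{\mathfrak M}[N^{K+1}(s,a)]\,\Delta_{\mathcal H}^2$, where $N^{K+1}(s,a)=\sum_{p}n_p^{K+1}(s,a)$ is the \emph{collective} count; Lemma~\ref{lem:bretagnolle-huber} then forces $\EE[N^{K+1}(s,a)]\gtrsim 1/\Delta_{\mathcal H}^2$ on one of the two instances. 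Plugging this into the collective-regret bound from Eq.~\eqref{eqn:regret_decomposition_collective} of Lemma~\ref{lem:regret_decomp}, summing over $(s,a)\in\mathcal H$, and balancing against the budget $\sum_{(s,a)}\EE[N^{K+1}(s,a)]\le MK$ yields a lower bound of order $H\Delta_{\mathcal H}\cdot l/\Delta_{\mathcal H}^2 \wedge H\Delta_{\mathcal H}\cdot MK \eqsim \sqrt{M H^2 l K}$. For the private component, restricting attention to the $l^C$ pairs in $\mathcal H^C$ gives, player by player, a standard single-task episodic-RL hard instance; invoking the $\Omega(\sqrt{H^2 l^C K})$ minimax lower bound of~\cite{dann2015sample,simchowitz2019non} on each of the $M$ players separately (via Eq.~\eqref{eqn:regret_decomposition_ind}) and summing yields $\Omega(M\sqrt{H^2 l^C K})$. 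Adding the two (or using $\max\ge\tfrac12\,\mathrm{sum}$) completes the bound.

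\textbf{Main obstacle.} The delicate part is the simultaneous parameter choice. The shared-pair gap $\Delta_{\mathcal H}$ must be large enough (order $\epsilon/H$) to make $\mathcal H$ entirely subpar and thus contribute to $\Ical_{\epsilon/(192H)}$, yet small enough that the Bretagnolle--Huber lower bound on $\EE[N^{K+1}(s,a)]$ is not already saturated by the budget $MK$; the condition $K\ge SA$ and the freedom to choose $\epsilon$ in the theorem statement provide the wiggle room to pick a single $\epsilon\eqsim H\sqrt{l/(MHK)}$ that meets both constraints. A secondary bookkeeping obstacle is upgrading the single-pair Bretagnolle--Huber argument to a bound that is valid simultaneously for \emph{all} $l$ shared pairs; this is handled by a randomized/Assouad-style averaging over $\chi_{s,a}\in\{\pm 1\}^{\mathcal H}$, at no loss in the scaling, exactly as in~\cite{wzsrc21}.
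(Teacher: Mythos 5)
There is a genuine gap in your plan, and it sits exactly where you locate the ``main obstacle.'' To force the term $M\sqrt{H^2 l^C K}$ you need each player's optimal action on the private pairs in $\mathcal{H}^C$ to be unlearnable from the other players' data, which requires cross-player parameter differences of order $\Delta_{\mathcal{H}^C}$ on those pairs; and to get the factor $H^2$ those differences must live in the transition probabilities feeding the value-amplifying chain. Your claim that ``all transitions are identical across players, so $\epsilon$-dissimilarity reduces to a pure reward constraint'' is inconsistent with both the per-player private bits and the asserted gap of order $H\,|\eta_{s,a,p}|$: with reward-only heterogeneity the gaps are not $H$-amplified and the private contribution degrades to $M\sqrt{l^C K}$. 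Once the heterogeneity is in the transitions, Definition~\ref{def:dissimilar} forces $2\Delta_{\mathcal{H}^C}\le \epsilon/H$, while making the $l$ shared pairs subpar at level $\epsilon/(192H)$ forces their gap $\eqsim H\Delta_{\mathcal{H}}$ to exceed $\epsilon/2$, i.e.\ $\Delta_{\mathcal{H}}\gtrsim \Delta_{\mathcal{H}^C}$. With your scalings this is equivalent to $l\gtrsim M l^C$, so in the complementary regime $M l^C\gg l$ your single simultaneous construction cannot satisfy dissimilarity, the subpar-count requirement, and the per-player private-pair argument at once; ``the freedom to choose $\epsilon$'' does not rescue this, since raising $\epsilon$ to accommodate $\Delta_{\mathcal{H}^C}$ destroys subparness of $\mathcal{H}$ and lowering $\Delta_{\mathcal{H}^C}$ destroys the $M\sqrt{H^2 l^C K}$ term. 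The paper avoids this tension by splitting into the regimes $l> M l^C$ and $M l^C\ge l$, proving in each only the dominant term (enough, since the max is within a factor $2$ of the sum), and in the second regime certifying $\abr{\Ical_{\epsilon/(192H)}}\ge l$ with actions of constant-order gap $(H-1)(\tfrac12+\Delta)$ that play no role in the regret argument.

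A secondary but real problem is your parameter scaling: with transition bumps of size $\Delta$ the per-visit KL is $\eqsim\Delta^2$ while the gap is $\eqsim H\Delta$, so the correct balance is $\Delta_{\mathcal{H}}\eqsim\sqrt{l/(MK)}$ and $\Delta_{\mathcal{H}^C}\eqsim\sqrt{l^C/K}$ (as in the paper), giving $\Omega(\sqrt{MH^2 lK})$ and $\Omega(M\sqrt{H^2 l^C K})$; your choices $\Delta_{\mathcal{H}}\eqsim\sqrt{l/(MHK)}$ and $\Delta_{\mathcal{H}^C}\eqsim\sqrt{l^C/(HK)}$ only force $\Omega(\sqrt{MH l K})$ and $\Omega(M\sqrt{H l^C K})$, losing a $\sqrt{H}$ in each term. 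The change-of-measure machinery you invoke (divergence decomposition, Pinsker/Bretagnolle--Huber, averaging over the hidden optimal-action indices) is the same as the paper's and would be fine once the construction and scalings are repaired, though the per-state bookkeeping also needs the conditional-on-$N^{K+1}(s)$ argument and a binomial moment bound rather than a bare ``budget $MK$'' balance.
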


\begin{proof}[Proof]
The construction and techniques in this proof are inspired by \cite[Section E.1]{wzsrc21} and \cite{simchowitz2019non}.

Fix any algorithm $\Alg$;
we consider two cases:
\begin{enumerate}
    \item $l > M l^C$;
    \item $M l^C \ge l$.
\end{enumerate}

\paragraph{Case 1: $l > M l^C$.} 
Let $S_1 = S - 2(H-1)$, and $b = \ceil{\frac{l}{S_1}} \geq 1$. Let $\Delta = \sqrt{\frac{l + 1}{384MK}}$, 
and let $\epsilon = \frac12 H\Delta$.
We note that under the assumption that $K \ge SA$, and the observation that $l \leq SA$, we have $\Delta \le \frac14$. We define $(b+1)^{S_1}$ $\epsilon$-MPERL problem instances, each indexed by an element in $[b+1]^{S_1}$.
It suffices to show that, on at least one of these problem instances, $\EE \sbr{\Reg_\Alg(K)} \ge \Omega \rbr{\sqrt{ M H^2 l K }}$.

\subparagraph{Construction.} For $\textbf{a} = (a_1, \ldots, a_{S_1}) \in [b + 1]^{S_1}$ , we define the following $\epsilon$-MPERL problem instance, $\mathfrak{M}(\textbf{a}) = \cbr{\Mcal_p}_{p=1}^M$, with $S$ states, $A$ actions, and an episode length of $H$, such that for each $p \in [M]$, $\Mcal_p$ is constructed as follows:

\begin{itemize}
    \item $\Scal_1 = [S_1]$, and $p_0$ is a uniform distribution over the states in $\Scal_1$.
    
    \item For $h \in [2,H]$, $\Scal_h = \cbr{S_1 + 2h - 3, S_1 + 2h - 2}$.
    
    \item $\Acal = [A]$.
    
    \item For each $(s,a) \in \Scal \times \Acal$, the reward distribution $r_p(s,a)$ is a Bernoulli distribution, $\Ber(R_p(s,a))$, and we will specify $R_p(s,a)$ subsequently.
    
    \item For each state $s \in [S_1]$,
    \begin{align*}
        \PP_p(S_1+1 \mid s,a) = 
        \begin{cases}
        \frac{1}{2} + \Delta, & \text{if } a = a_s; \\
        \frac{1}{2}, & \text{if } a \in [b + 1] \setminus \cbr{a_s}; \\
        0, & \text{if } a \notin [b + 1];
        \end{cases}
    \end{align*}
    and for each $a \in \Acal$, $\PP_p(S_1+2 \mid s,a) = 1 - \PP_p(S_1+1 \mid s,a)$, and $R_p(s,a) = 0$.
    
    \hide{Just for my own note: 
    We can go one step further, and define $A^{S_1}$ instances, where for $\textbf{a} = (a_1, \ldots, a_{S_1}) \in [A]^{S_1}$, we have
        \begin{align*}
        \PP_p(S_1+1 \mid s,a) = 
        \begin{cases}
        \frac{1}{2} + \Delta, & \text{if } a = a_s; \\
        \frac{1}{2}, & \text{otherwise };
        \end{cases}
    \end{align*}
    but this does not help much, as this only increase the number of ``effective'' arms from $b+1$ to $A$, which is only a constant factor. 
    }
    
    \item For $h \in [2,H]$, and $a \in \Acal$, let 
    \begin{itemize}
        \item $\PP_p \rbr{S_1+2h-1 \mid S_1+2h-3,a} = 1$, $\PP_p \rbr{S_1+2h \mid S_1+2h-3,a} = 0$, and $R_p(S_1+2h-3, a) = 1$.
        \item $\PP_p \rbr{S_1+2h \mid S_1+2h-2,a} = 0$, $\PP_p \rbr{S_1+2h-1 \mid S_1+2h-2,a} = 1$, and $R_p(S_1+2h-2, a) = 0$.
    \end{itemize}
\end{itemize}

It can be easily verified that $\mathfrak{M}(\textbf{a}) = \cbr{\Mcal_p}_{p=1}^M$ is a $0$-MPERL problem instance, and hence an $\epsilon$-MPERL problem instance---the reward distributions and the transition probabilities are the same for all players, i.e., for every $p,q \in [M]$, and every $(s,a) \in \Scal \times \Acal$,
\[
\abs{R_p(s,a) - R_q(s,a)} = 0 \le \epsilon,\quad \abs{\PP_p(\cdot \mid s,a) - \PP_q(\cdot \mid s,a)} = 0 \le \frac \epsilon H.    
\]

\subparagraph{Suboptimality gaps.} We now calculate the suboptimality gaps of the state-action pairs in the above MDPs. For each $p \in [M]$ and each $(s,a) \in \Scal \times \Acal$,
\[
\gap_p(s,a) = V_p^\star(s) - Q_p^\star(s,a) = \max_{a'} Q_p^\star(s,a') - Q_p^\star(s,a).
\]
In $\mathfrak{M}(\mathbf{a})$, it can be easily observed that for every $p \in [M]$, and every $(s,a) \in \rbr{\Scal \setminus \Scal_1} \times \Acal$, $\gap_p(s,a) = 0$.
Now, for every $p \in [M]$, $(s,a) \in \Scal_1 \times \Acal$, we have
\[
\gap_p(s,a) = \max_{a'} Q_p^\star(s,a') - Q_p^\star(s,a) = (H-1) \rbr{\max_{a'} \PP_p(S_1 + 1 \mid s,a') - \PP_p(S_1 + 1 \mid s,a)}.
\]
It follows that, for every $p \in [M]$ and every state $s \in [S_1]$,
\begin{align*}
    \gap_p(s,a) = 
    \begin{cases}
    0, & \text{if } a = a_s; \\
    (H-1)\Delta, & \text{if } a \in [b + 1] \setminus a_s; \\
    (H-1)\rbr{\frac12 + \Delta}, & \text{if } a \notin [b + 1].
    \end{cases}
\end{align*}

\hide{Here is an important observation. For this instance, we can set $\epsilon = 0$ (if you are not comfortable with this, can set $\epsilon$ to be say, $O(\Delta)$); this  implies that $(H-1) \Delta > 96 H \epsilon$, which shows the stronger statement that $|\Ical_\epsilon| \geq l$!}

\subparagraph{Subpar state-action pairs.} It can be verified that in $\mathfrak{M}(\textbf{a})$, $\abs{\Ical_{\frac{\epsilon}{192H}}} \ge l$. Indeed,
since $(H-1)\Delta = (H-1)\frac{2\epsilon}{H} \ge \epsilon \ge \frac{\epsilon}{2} = 96H \frac{\epsilon}{192H}$, 
we have that $\Ical_{\frac{\epsilon}{192H}}$ is a superset of $\cbr{ (s,a): s \in [S_1], a \in [b+1] \setminus \cbr{a_s} }$, whose size is at least $S_1 b = S_1 \ceil{\frac{l}{S_1}} \ge l$.

It suffices to prove that
\[
\EE_{\mathbf{a} \sim \Unif\rbr{\sbr{b+1}^{S_1}}} \EE_{\mathfrak{M}(\mathbf{a})} \sbr{\Reg_{\Alg}(K)} \ge \frac{1}{640} \sqrt{MH^2lK},
\]
where we recall that $\mathbf{a} = (a_1, \ldots, a_{S_1}) $; furthermore, 
it suffices to show that, for any $s' \in [S_1]$,
\begin{align}
\EE_{\mathbf{a} \sim \Unif\rbr{\sbr{b+1}^{S_1}}} \EE_{\mathfrak{M}(\mathbf{a})} \sbr{N^{K+1}(s') - n^{K+1}(s', a_{s'})} \ge \frac{MK}{4S_1}, \label{eqn:gap_ind_lb_decomp_case1}
\end{align}
where $N^{K+1}(s') = \sum_{a \in \Acal} n^{K+1}(s',a)$;
this is because it follows from Eq.~\eqref{eqn:gap_ind_lb_decomp_case1} that
\begin{align*}
\EE_{\mathbf{a} \sim \Unif\rbr{\sbr{b+1}^{S_1}}} \EE_{\mathfrak{M}(\mathbf{a})} \sbr{\Reg_{\Alg}(K)} & \ge \sum_{s' \in \Scal_1} (H-1) \frac{\Delta}{4} \cdot \EE_{\mathbf{a} \sim \Unif\rbr{\sbr{b+1}^{S_1}}} \EE_{\mathfrak{M}(\mathbf{a})}  \sbr{ N^{K+1}(s') - n^{K+1}(s', a_{s'})} \\
& \ge \sum_{s' \in \Scal_1} \frac{H}{2} \cdot \frac{\Delta}{4} \cdot \frac{MK}{4S_1} \\
& \ge \frac{1}{640} \sqrt{MH^2lK},
\end{align*}
where the first inequality uses  Lemma~\ref{lem:regret_decomp} (the regret decomposition lemma).

Without loss of generality, we prove Eq.~\eqref{eqn:gap_ind_lb_decomp_case1} with $s' = 1$; the inequality for other $s'$ values are shown symmetrically. To this end, we use a standard technique and
define a set of helper problem instances. Specifically, for any $(a_2, a_3, \ldots, a_{S_1}) \in [b+1]^{S_1-1}$, we define a problem instance $\mathfrak{M}(0,a_2,\ldots,a_{s_1})$ such that it agrees with $\mathfrak{M}(a_1,a_2,\ldots,a_{s_1})$ on everything but $\PP_p(\cdot \mid 1, a_1)$'s, i.e., in $\mathfrak{M}(0,a_2,\ldots,a_{s_1})$, for every $p \in [M]$,
\[
\PP_p (S_1 + 1 \mid 1, a_1) = \frac12.
\]

Now, for each $(j,a_2,\ldots,a_{s_1}) \in \rbr{[0] \cup [b+1]} \times [b+1]^{S_1-1}$, let $\PP_{j, a_2, \ldots, a_{S_1}}$ denote the probability measure on the outcomes of running $\Alg$ on the problem instance $\mathfrak{M}(j,a_2,\ldots,a_{s_1})$.
Further, for each $j \in \{0\} \cup [b+1]$, we define
\[
\PP_j = \frac{1}{(b+1)^{S_1-1}} \sum_{a_2, \ldots, a_{S_1} \in [b+1]^{S_1 - 1}} \PP_{j, a_2, \ldots, a_{S_1}};
\]
and we use $\EE_j$ to denote the expectation with respect to $\PP_j$.

In subsequent calculations, for any index $m \in \rbr{[0] \cup [b+1]} \times [b+1]^{S_1-1}$,
we also denote by $\PP_m \rbr{\cdot \mid N^{K+1}(1)}$ and 
$\EE_m \sbr{\cdot \mid N^{K+1}(1)}$ the probability and expectation, respectively, conditional on a realization of $N^{K+1}(1)$ under $\PP_m$.
Observe that, for any $j \in \{0\} \cup [b+1]$,
\begin{align}
\PP_j(\cdot \mid N^{K+1}(1)) & = 
\frac{\PP_j (\cdot, N^{K+1}(1))}{\PP_j (N^{K+1}(1))} \nonumber \\
& = \frac{\frac{1}{(b+1)^{S_1-1}} \sum_{a_2, \ldots, a_{S_1} \in [b+1]^{S_1 - 1}} \PP_{j, a_2, \ldots, a_{S_1}}(\cdot, N^{K+1}(1))}{\PP_j(N^{K+1}(1))} \nonumber \\
& = \frac{1}{(b+1)^{S_1-1}} \sum_{a_2, \ldots, a_{S_1} \in [b+1]^{S_1 - 1}} \frac{\PP_{j, a_2, \ldots, a_{S_1}}(\cdot, N^{K+1}(1))}{\PP_{j, a_2, \ldots, a_{S_1}}(N^{K+1}(1))} \nonumber \\
& = \frac{1}{(b+1)^{S_1-1}} \sum_{a_2, \ldots, a_{S_1} \in [b+1]^{S_1 - 1}} \PP_{j, a_2, \ldots, a_{S_1}}(\cdot \mid N^{K+1}(1)), \label{eq:gap_ind_lb_conditional_helper}
\end{align}
where the first equality is from the definition of conditional probability; the second equality is from the definition of $\PP_j$; the third equality uses the fact that $\PP_j(N^{K+1}(1)) = \PP_{j, a_2, \ldots, a_{S_1}}(N^{K+1}(1))$ for any $a_2, \ldots, a_{S_1}$, which is true
because $N^{K+1}(1)$ is independent of $a_2, \ldots, a_{S_1}$ conditional on $j$; 
and the last equality, again, is from the definition of conditional probability.

We have, for each $j \in [b+1]$,
\begin{align}
    & \EE_{j} \sbr{n^{K+1}(1,j) \mid N^{K+1}(1)} - \EE_{0} \sbr{n^{K+1}(1,j) \mid N^{K+1}(1)} \nonumber \\
    \le & N^{K+1}(1) \nbr{\PP_j \rbr{\cdot \mid N^{K+1}(1)} - \PP_0 \rbr{\cdot \mid N^{K+1}(1)}}_1  \nonumber \\
    \le & N^{K+1}(1) \cdot \frac{1}{(b+1)^{S_1 - 1}} \sum_{a_2, \ldots, a_{S_1} \in [b+1]^{S_1 - 1}} \nbr{\PP_{j, a_2, \ldots, a_{S_1}}\rbr{\cdot \mid N^{K+1}(1)} - \PP_{0, a_2, \ldots, a_{S_1}}\rbr{\cdot \mid N^{K+1}(1)}}_1  \nonumber \\
    \le & N^{K+1}(1) \cdot \frac{1}{(b+1)^{S_1 - 1}} \sum_{a_2, \ldots, a_{S_1} \in [b+1]^{S_1 - 1}} \sqrt{2\KL\rbr{\Ber(\frac12 + \Delta), \Ber(\frac12)} \EE_{0, a_2, \ldots, a_{S_1}} \sbr{n^{K+1}(1,j) \mid N^{K+1}(1)}}  \nonumber \\
    \le & N^{K+1}(1) \cdot \frac{1}{(b+1)^{S_1 - 1}} \sum_{a_2, \ldots, a_{S_1} \in [b+1]^{S_1 - 1}} \sqrt{6 \Delta^2 \EE_{0, a_2, \ldots, a_{S_1}} \sbr{n^{K+1}(1,j) \mid N^{K+1}(1)}}  \nonumber \\
    \le & N^{K+1}(1) \sqrt{6 \frac{l+1}{384MK}  \cdot \EE_0 \sbr{n^{K+1}(1,j) \mid N^{K+1}(1)}}  \nonumber \\
    = & \frac18 N^{K+1}(1) \sqrt{ \frac{l+1}{MK} \cdot \EE_0 \sbr{n^{K+1}(1,j) \mid N^{K+1}(1)}} \label{eqn:gap_ind_lb_case1_eqn}.
\end{align}
where the first inequality is based on Lemma~\ref{lem:var-tv} and the fact that, conditional on $N^{K+1}(1)$, $n^{K+1}(1,j)$ has distribution supported on $[0, N^{K+1}(1)]$;
the second inequality follows from Equation~\eqref{eq:gap_ind_lb_conditional_helper} and the triangle inequality; the third inequality uses Pinsker's inequality and Lemma~\ref{lem:divergence_decomp} (the divergence decomposition lemma); the fourth inequality uses Lemma~\ref{lem:KL_auxiliary} and the fact that $\Delta \le \frac14$; and the last inequality follows from Jensen's inequality.

Since $N^{K+1}(1)$ has the same distribution under both $\PP_0$ and any $\PP_j$ (which is $\text{Bin}(K, \frac1{S_1})$), taking expectation with respect to $N^{K+1}(1)$, we have that, for any $j \in [b+1]$,
\begin{align*}
\EE_{j} \sbr{n^{K+1}(1,j)} - \EE_{0} \sbr{n^{K+1}(1,j)}
\leq &
\EE_0 \sbr{ \frac18 N^{K+1}(1) \sqrt{ \frac{l+1}{MK} \cdot \EE_0 \sbr{n^{K+1}(1,j) \mid N^{K+1}(1)}} }. 
\end{align*}

In subsequent derivations, we can now avoid bounding the conditional expectation. Specifically, we have
\begin{align}
    & \frac{1}{b+1} \sum_{j \in [b+1]} \EE_{j} \sbr{n^{K+1}(1,j)} \nonumber \\ 
    \le & \frac{1}{b+1}\sum_{j \in [b+1]} \EE_{0} \sbr{n^{K+1}(1,j)} + \frac{1}{b+1}\sum_{j \in [b+1]} \EE_0 \sbr{\frac18 N^{K+1}(1)\sqrt{ \frac{l+1}{MK} \cdot \EE_0 \sbr{n^{K+1}(1,j) \mid N^{K+1}(1)}}} \nonumber \\
    \le &  \frac{1}{b+1} \EE_0 \sbr{\sum_{j \in [b+1]} n^{K+1}(1,j)} + \EE_0 \sbr{ \frac{1}{8} N^{K+1}(1)\sqrt{ \frac{l+1}{MK} \cdot \frac{1}{b+1} \sum_{j \in [b+1]} \EE_0 \sbr{n^{K+1}(1,j) \mid N^{K+1}(1)}}} \nonumber \\
    \le & \frac{1}{b+1} \EE_0 \sbr{N^{K+1}(1)} + \EE_0 \sbr{\frac{1}{8} \sqrt{ \frac{l+1}{MK} \cdot \frac{1}{b+1}} \rbr{N^{K+1}(1)}^{\frac32}} \nonumber \\
    \le & \frac{1}{b+1} \EE_0 \sbr{N^{K+1}(1)} + \frac{1}{8} \sqrt{ \frac{S_1}{MK}} \cdot \EE_0 \sbr{\rbr{N^{K+1}(1)}^{\frac32}},
    \label{eqn:gap_ind_lb_case1_helper2}
\end{align}
where the first inequality follows from Eq.~\eqref{eqn:gap_ind_lb_case1_eqn} and algebra; the second inequality uses linearity of expectation and Jensen's inequality; the third inequality uses the facts that $\sum_{j \in [b+1]} n^{K+1}(1,j) \le N^{K+1}(1)$ and, for every $z \in [0] \cup [b+1]$, $$\sum_{j \in [b+1]} \EE_{z} \sbr{n^{K+1}(1,j) \mid N^{K+1}(1)} \le \sum_{j \in \Acal} \EE_{z} \sbr{n^{K+1}(1,j) \mid N^{K+1}(1)} = N^{K+1}(1);$$
and the last inequality uses the linearity of expectation and the construction that $b = \ceil{\frac{l}{S_1}}$, which implies that $l \le bS_1$ and therefore $l + 1 \le bS_1 + 1 \le bS_1 + S_1 = (b+1)S_1$.

It follows from Equation~\eqref{eqn:gap_ind_lb_case1_helper2} that
\begin{align*}
    \frac{1}{b+1}\sum_{j \in [b+1]} \EE_{j} \sbr{n^{K+1}(1,j)} 
    & \le \frac{1}{b+1} \cdot \frac{MK}{S_1} + \frac18 \sqrt{ \frac{S_1}{MK}} \cdot \EE_0 \sbr{\rbr{N^{K+1}(1)}^{\frac32}} \\
    & \le \frac{MK}{2S_1} + \frac14 \sqrt{ \frac{S_1}{MK} \rbr{\frac{MK}{S_1}}^3} \\
    & \le \frac{3MK}{4S_1},
\end{align*}
where %
the second inequality uses the fact that $\frac{1}{b+1} \le \frac12$ and Lemma~\ref{lem:auxiliary_binomial} under the assumption that $K \ge S_1$.

It then follows that
\[
\frac{1}{b+1}\sum_{j \in [b+1]} \EE_{j} \sbr{N^{K+1}(1) - n^{K+1}(1,j)} \ge \frac{1}{b+1}\sum_{j \in [b+1]} \EE_{j} \sbr{N^{K+1}(1)} - \frac{3MK}{4S_1} = \frac{MK}{4S_1},
\]
and we have
\[
\EE_{\mathbf{a} \sim \Unif\rbr{\sbr{b+1}^{S_1}}} \EE_{\mathfrak{M}(\mathbf{a})} \sbr{N^{K+1}(1) - n^{K+1}(1, a_{1})} \ge \frac{MK}{4S_1}.
\]

\paragraph{Case 2: $Ml^C \ge l$.} Again, let $S_1 = S - 2(H-1)$. Let $u = \ceil{\frac{l}{S_1}}$ and $v = A - u = A - \ceil{\frac{l}{S_1}}$. Furthermore, let $\Delta = \sqrt{\frac{vS_1}{384K}}$, and $\epsilon = 2H\Delta$. We note that under the assumption that $K \ge SA$ and the fact that $v S_1 \leq SA$, we have $\Delta \le \frac14$. We will define $v^{S_1 \times M}$ $\epsilon$-MPERL problem instances, each indexed by an element in $[v]^{S_1 \times M}$.
It suffices to show that, on at least one of the instances, $\EE \sbr{\Reg_{\Alg}(K)} \ge \Omega \rbr{M\sqrt{H^2l^CK}}$.

\subparagraph{Facts about $v$.}
There are two helpful facts about $v$ that can be easily verified:
\begin{itemize}
    \item $vS_1 \ge \frac12 l^C$. This is true because, by definition, $vS_1 \ge S_1A - l - S_1 = S_1A - (SA - l^C) - S_1 = l^C - (SA - S_1A) - S_1 = l^C - \rbr{2(H-1)A + S_1}$; since, by assumption, $l \le SA - 4(S + HA)$, we have $l^C \ge 4(HA + S) \ge 2\rbr{2(H-1)A + S_1}$; it then follows that $vS_1 \ge l^C - \rbr{2(H-1)A + S_1} \ge \frac12 l^C$.
    
    \item $v \geq 2$. This is true because, as shown above, $vS_1 \ge \frac12 l^C$ and $l^C \ge 4(HA + S)$, which imply that $v \ge \frac{2(HA + S)}{S_1} \ge \frac{2 S_1}{S_1} = 2$.
\end{itemize}

\subparagraph{Construction.} For $\textbf{a} = (a_{1,1}, \ldots, a_{1,M},a_{2,1}, \ldots, a_{S_1, M}) \in [v]^{S_1 \times M}$ , we define the following $\epsilon$-MPERL problem instance, $\mathfrak{M}(\textbf{a}) = \cbr{\Mcal_p}_{p=1}^M$, with $S$ states, $A$ actions, and an episode length of $H$, such that for each $p \in [M]$, $\Mcal_p$ is constructed in the same way as it is for case 1, except for the transition probabilities of $(s,a) \in \Scal_1 \times \Acal$:

\begin{itemize}
    \item For each state $s \in [S_1]$,
    \begin{align*}
        \PP_p(S_1+1 \mid s,a) = 
        \begin{cases}
        \frac{1}{2} + \Delta, & \text{if } a = a_{s,p}; \\
        \frac{1}{2}, & \text{if } a \in [v] \setminus \cbr{a_{s,p}}; \\
        0, & \text{if } a \notin [v];
        \end{cases}
    \end{align*}
    and for each $a \in \Acal$, $\PP_p(S_1+2 \mid s,a) = 1 - \PP_p(S_1+1 \mid s,a)$, and $R_p(s,a) = 0$.
    
\end{itemize}

We now verify that $\mathfrak{M}(\mathbf{a})$ is an $\epsilon$-MPMAB problem instance.
It can be easily observed that the reward distributions are the same for all players, i.e., for every $p,q \in [M]$ and every $(s,a) \in \Scal \times \Acal$,
\[
\abs{R_p(s,a) - R_q(s,a)} = 0 \le \epsilon.
\]
Regarding the transition probabilities, for every $(s,a) \in \rbr{(\Scal_1 \times \rbr{\Acal \setminus [v]})} \cup \rbr{ \rbr{\Scal \setminus \Scal_1} \times \Acal}$, we observe that the transition probabilities are the same for all players.
Furthermore, for every $p,q \in [M]$ and every $(s,a) \in \Scal_1 \times [v]$,
\[
\nbr{\PP_p \rbr{\cdot \mid s,a} - \PP_q \rbr{\cdot \mid s,a}}_1 \le 2\Delta = \frac{\epsilon}{H}.
\]
Therefore, $\mathfrak{M}(\mathbf{a})$ is an $\epsilon$-MPMAB problem instance.

\subparagraph{Suboptimality gaps.} Similar to the arguments in Case 1, it can be shown that
for every $p \in [M]$, and every $(s,a) \in \rbr{\Scal \setminus \Scal_1} \times \Acal$, $\gap_p(s,a) = 0$. And, for every $p \in [M]$, and every $s \in \Scal_1$,
\begin{align*}
    \gap_p(s,a) = 
    \begin{cases}
    0, & \text{if } a = a_{s,p}; \\
    (H-1)\Delta, & \text{if } a \in [v] \setminus a_{s,p}; \\
    (H-1)\rbr{\frac12 + \Delta}, & \text{if } a \notin [v].
    \end{cases}
\end{align*}

\hide{Here is the key observation. Recall $\epsilon = 2 H \Delta$. If $K \geq \Omega(S A H^2)$ (which is a mild assumption to satisfy), we have $(H-1)(\frac12 + \Delta) > 192 H^2 \Delta = 96 H \epsilon$, implying that $|\Ical_\epsilon| \geq l$!}

\hide{This observation, together with the observation in case 1, implies that our gap-dependent lower bound is sharpened to $\Ical_\epsilon$.}

\subparagraph{Subpar state-action pairs.} Based on the above construction, 
for every $(s,a) \in \Scal_1 \times \rbr{\Acal \setminus [v]}$ and every $p \in [M]$, $\gap_p(s,a) = (H-1) \rbr{\frac12 + \Delta} \ge 3(H-1)\Delta = \frac{3(H-1)}{2H} \epsilon \ge \frac{3}{4}\epsilon \ge 96H \rbr{\frac{\epsilon}{192H}}$, where the first inequality uses the fact that $\Delta \le \frac14$. Therefore, $\Ical_{\frac{\epsilon}{192H}}$ is a superset of $[S_1] \times ([A] \setminus [v])$, whose cardinality is at least $(A - v)S_1 = uS_1 \ge l$, i.e., $\abs{\Ical_{\frac{\epsilon}{192H}}} \ge l$. 

Now, it suffices to prove that
\[
\EE_{\mathbf{a} \sim \Unif\rbr{\sbr{v}^{S_1 \times M}}} \EE_{\mathfrak{M}(\mathbf{a})} \sbr{\Reg_{\Alg}(K)} \ge \frac{1}{240} M\sqrt{H^2l^CK},
\]
where we recall that $\mathbf{a} = (a_{1,1}, \ldots, a_{1,M}, a_{2,1}, \ldots, a_{S_1, M})$.
It suffices to show, for any $s' \in [S_1]$ and any $p' \in [M]$,
\begin{align}
\EE_{\mathbf{a} \sim \Unif\rbr{\sbr{v}^{S_1 \times M}}} \EE_{\mathfrak{M}(\mathbf{a})} \sbr{N^{K+1}_{p'}(s') - n^{K+1}_{p'}(s', a_{s'})} \ge \frac{K}{4S_1}, \label{eqn:gap_ind_lb_decomp_case2}
\end{align}
where $N^{K+1}_{p'}(s') = \sum_{a \in \Acal} n_{p'}^{K+1}(s',a)$.
To see this, by Lemma~\ref{lem:regret_decomp}, we have
\begin{align*}
\EE_{\mathbf{a} \sim \Unif\rbr{\sbr{v}^{S_1 \times M}}} \EE_{\mathfrak{M}(\mathbf{a})} \sbr{\Reg_{\Alg}(K)} & \ge \sum_{p=1}^M \sum_{s' \in \Scal_1} (H-1)\Delta \cdot \EE_{\mathbf{a} \sim \Unif\rbr{\sbr{v}^{S_1 \times M}}} \EE_{\mathfrak{M}(\mathbf{a})} \sbr{N^{K+1}_p(s') - n^{K+1}_p(s', a_{s'})} \\
& \ge \frac{H-1}{4} MK \sqrt{\frac{vS_1}{384K}} \\
& \ge \frac{1}{160} M\sqrt{H^2(vS_1)K} \\
& \ge \frac{1}{240} M\sqrt{H^2l^CK},
\end{align*}
where the last inequality uses the fact that $vS_1 \ge \frac12 l^C$. 

Without loss of generality, it suffices to prove Eq.~\eqref{eqn:gap_ind_lb_decomp_case2} for $s' = 1$ and $p' = 1$; the other settings of $(s',p')$ can be handled symmetrically. Similar to case 1, we define a set of helper problem instances:
for any $(a_{1,2}, \ldots, a_{S_1, M}) \in [v]^{S_1 \times M -1}$, we define a problem instance $\mathfrak{M}(0,a_{1,2}, \ldots, a_{S_1, M})$ such that it agrees with $\mathfrak{M}(a_{1,1},a_{1,2}, \ldots, a_{S_1, M})$ on everything but $\PP_1(\cdot \mid 1, a_1)$, namely, in $\mathfrak{M}(0,a_{1,2}, \ldots, a_{S_1, M})$, $\PP_1 (S_1 + 1 \mid 1, a_1) = \frac12$.

For each $(j,a_{1,2}, \ldots, a_{S_1, M}) \in \rbr{[0] \cup [v]} \times [v]^{S_1 \times M -1}$, let $\PP_{j,a_{1,2}, \ldots, a_{S_1, M}}$ denote the probability measure on the outcomes of running $\Alg$ on the problem instance $\mathfrak{M}(j,a_{1,2}, \ldots, a_{S_1, M})$.
Further, for each $j \in \{0\} \cup [v]$, we define
\[
\PP_j = \frac{1}{v^{S_1 \times M -1}} \sum_{a_{1,2}, \ldots, a_{S_1, M} \in [v]^{S_1 \times M - 1}} \PP_{j,a_{1,2}, \ldots, a_{S_1, M}};
\]
and we use $\EE_j$ to denote the expectation with respect to $\PP_j$. 
In subsequent calculations, for any $m \in \rbr{[0] \cup [v]} \times [v]^{S_1 \times M -1}$, we also denote by $\PP_m \rbr{\cdot \mid N_1^{K+1}(1)}$ and 
$\EE_m \sbr{\cdot \mid N_1^{K+1}(1)}$ the probability and expectation conditional on a realization of $N_1^{K+1}(1)$ under $\PP_m$. Similar to case 1, it can be shown that, for any $j \in \{0\} \cup [v]$,

\begin{align}
\PP_j(\cdot \mid N^{K+1}(1)) 
= 
\frac{1}{v^{S_1 \times M -1}} \sum_{a_{1,2}, \ldots, a_{S_1, M} \in [v]^{S_1 \times M - 1}} \PP_{j,a_{1,2}, \ldots, a_{S_1, M}} \rbr{\cdot \mid N^{K+1}(1)}. \label{eq:gap_ind_lb_conditional_helper2}
\end{align}

Now, for each $j \in [v]$, we have
\begin{align}
    & \EE_{j} \sbr{n_1^{K+1}(1,j) \mid N_1^{K+1}(1)} - \EE_{0} \sbr{n_1^{K+1}(1,j) \mid N_1^{K+1}(1)} \nonumber \\
    \le & N_1^{K+1}(1) \nbr{\PP_j \rbr{\cdot \mid N_1^{K+1}(1)} - \PP_0 \rbr{\cdot \mid N_1^{K+1}(1)}}_1  \nonumber \\
    \le & N_1^{K+1}(1) \cdot \frac{1}{v^{S_1 \times M - 1}} \sum_{a_{1,2}, \ldots, a_{S_1,M} \in [v]^{S_1 \times M - 1}} \nbr{\PP_{j, a_{1,2}, \ldots, a_{S_1,M}}\rbr{\cdot \mid N_1^{K+1}(1)} - \PP_{0, a_{1,2}, \ldots, a_{S_1,M}}\rbr{\cdot \mid N_1^{K+1}(1)}}_1  \nonumber \\
    \le & N_1^{K+1}(1) \cdot \frac{1}{v^{S_1 \times M - 1}} \sum_{a_{1,2}, \ldots, a_{S_1,M} \in [v]^{S_1 \times M - 1}} \sqrt{2\KL\rbr{\Ber(\frac12 + \Delta), \Ber(\frac12)} \EE_{0, a_2, \ldots, a_{S_1}} \sbr{n_1^{K+1}(1,j) \mid N_1^{K+1}(1)}}  \nonumber \\
    \le & N_1^{K+1}(1) \cdot \frac{1}{v^{S_1 \times M - 1}} \sum_{a_{1,2}, \ldots, a_{S_1,M} \in [v]^{S_1 \times M - 1}} \sqrt{6 \Delta^2 \EE_{0, a_2, \ldots, a_{S_1}} \sbr{n_1^{K+1}(1,j) \mid N_1^{K+1}(1)}}  \nonumber \\
    \le & N_1^{K+1}(1) \cdot \sqrt{\frac{6vS_1}{384K}  \cdot \EE_0 \sbr{n_1^{K+1}(1,j) \mid N_1^{K+1}(1)}}  \nonumber \\
    = & \frac18 N_1^{K+1}(1) \sqrt{ \frac{vS_1}{K} \cdot \EE_0 \sbr{n_1^{K+1}(1,j) \mid N_1^{K+1}(1)}} \label{eqn:gap_ind_lb_case2_eqn}.
\end{align}
where the first inequality is based on Lemma~\ref{lem:var-tv} and the fact that, conditional on $N_1^{K+1}(1)$, $n_1^{K+1}(1,j)$ has distribution supported on $[0, N_1^{K+1}(1)]$;
the second inequality follows from Equation~\eqref{eq:gap_ind_lb_conditional_helper2} and the triangle inequality; the third inequality uses Pinsker's inequality and Lemma~\ref{lem:divergence_decomp} (the divergence decomposition lemma); the fourth inequality uses Lemma~\ref{lem:KL_auxiliary} and the fact that $\Delta \le \frac14$; and the last inequality follows from Jensen's inequality.

Using arguments similar to the ones shown for case 1, we have that
\begin{align}
    & \frac{1}{v} \sum_{j \in [v]} \EE_{j} \sbr{n_1^{K+1}(1,j)} \nonumber \\ 
    \le & \frac{1}{v} \EE_0 \sbr{n_1^{K+1}(1,j)} + \EE_0 \sbr{\frac{1}{8} N_1^{K+1}(1)\sqrt{ \frac{vS_1}{K} \cdot \frac{1}{v} \sum_{j \in [v]} \EE_0 \sbr{n_1^{K+1}(1,j) \mid N_1^{K+1}(1)}}} \nonumber \\
    \le & \frac{1}{v} \EE_0 \sbr{ N^{K+1}(1)} + \frac{1}{8} \sqrt{ \frac{S_1}{K}} \cdot \EE_0 \sbr{\rbr{N_1^{K+1}(1)}^{\frac32}} \nonumber \\
    \le & \frac{1}{v} \cdot \frac{K}{S_1} + \frac14 \sqrt{ \frac{S_1}{K} \rbr{\frac{K}{S_1}}^3} \nonumber \\
    \le & \frac{3K}{4S_1}, \nonumber
\end{align}
where the second to last inequality is from Lemma~\ref{lem:auxiliary_binomial} under the assumption that $K \ge S_1$, and the last inequality uses the fact that $v \ge 2$.

It then follows that
\[
\frac{1}{v}\sum_{j \in [v]} \EE_{j} \sbr{N_1^{K+1}(1) - n_1^{K+1}(1,j)} \ge \frac{1}{v}\sum_{j \in [v]} \EE_{j} \sbr{N_1^{K+1}(1)} - \frac{3K}{4S_1} = \frac{K}{4S_1},
\]
and we thereby have shown that
\[
\EE_{\mathbf{a} \sim \Unif\rbr{\sbr{v}^{S_1 \times M}}} \EE_{\mathfrak{M}(\mathbf{a})} \sbr{N_1^{K+1}(1) - n_1^{K+1}(1, a_{1})} \ge \frac{K}{4S_1}. \qedhere
\]
\end{proof}

\hide{Note that the red part is the additional assumption that needs to be added to fix this proof.}
\subsection{Gap dependent lower bound}
\begin{theorem}[Restatement of Theorem~\ref{thm:gap_dep_lb}]
Fix $\epsilon \ge 0$. For any $S \in \NN$, $A \ge 2$, $H \ge 2$, $M \in \NN$, such that $S \geq 2(H-1)$, let $S_1 = S - 2(H-1)$; and let $\cbr{\Delta_{s,a,p}}_{(s,a,p) \in [S_1] \times [A] \times [M]}$ be any set of values such that 
\begin{itemize}
    \item for every $(s,a,p) \in [S_1] \times [A] \times [M]$, $\Delta_{s,a,p} \in [0,\frac{H}{48\sqrt{M}}]$;
    \item for every $(s,p) \in [S_1] \times [M]$, there exists at least one action $a \in [A]$ such that $\Delta_{s,a,p} = 0$;
    \item and, for every $(s,a) \in [S_1] \times [A]$ and $p,q \in [M]$, $\abs{\Delta_{s,a,p} - \Delta_{s,a,q}} \leq \epsilon/4$.
\end{itemize}
There exists an $\epsilon$-MPERL problem instance with $S$ states, $A$ actions, $M$ players and an episode length of $H$, such that
$\Scal_1 = [S_1]$, $\abr{\Scal_h} = 2$ for all $h \ge 2$, and
\[
\gap_p(s,a) = 
\Delta_{s,a,p}, \quad \forall (s,a,p) \in [S_1] \times [A] \times [M].
\]
For this problem instance, any sublinear regret algorithm $\Alg$ for the $\epsilon$-MPERL problem must have regret at least 
\[
\EE \sbr{\Reg_\Alg(K)} \ge \Omega \rbr{ \ln K \rbr{ \sum_{p \in [M]} \sum_{\substack{(s,a) \in \Ical^C_{ \rbr{\epsilon/768 H}}: \\ \gap_p(s,a) > 0}} \frac{H^2}{\gap_p(s,a)}
+
\sum_{(s,a) \in \Ical_{ \rbr{\epsilon/768 H}}} \frac{H^2}{\min_p \gap_p(s,a)}}}.
\]
\end{theorem}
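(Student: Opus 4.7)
The plan is to exhibit an explicit $\epsilon$-MPERL instance realizing the prescribed gaps and then to establish the two summands in the lower bound via two distinct change-of-measure arguments: a standard per-player one for non-subpar pairs, and a new collective one that aggregates KL across players for subpar pairs.

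First I would construct the instance following the template used in the proof of Theorem~\ref{thm:gap_ind_lb}: set $\Scal_1 = [S_1]$, $\Scal_h = \{S_1+2h-3, S_1+2h-2\}$ for $h \geq 2$, let $p_0$ be uniform on $\Scal_1$, and in layers $h \geq 2$ chain the two states deterministically so that one chain collects reward $1$ at every subsequent step while the other collects $0$. From $(s,a) \in \Scal_1 \times \Acal$ under player $p$, transit to the ``good'' state $S_1{+}1$ with probability $\tfrac{1}{2} - \tfrac{\Delta_{s,a,p}}{H-1}$ and to the ``bad'' state $S_1{+}2$ otherwise, with zero reward at layer $1$. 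This yields $Q_p^{\star}(s,a) = \tfrac{H-1}{2} - \Delta_{s,a,p}$ and, since some action has $\Delta_{s,a,p} = 0$, $V_p^{\star}(s) = \tfrac{H-1}{2}$, so $\gap_p(s,a) = \Delta_{s,a,p}$ exactly. The bound $\Delta_{s,a,p} \leq H/(48\sqrt{M})$ keeps the transitions valid. For $\epsilon$-dissimilarity the rewards agree identically across players, and $\|\PP_p(\cdot\mid s,a) - \PP_q(\cdot\mid s,a)\|_1 = 2|\Delta_{s,a,p}-\Delta_{s,a,q}|/(H-1) \leq (\epsilon/2)/(H-1) \leq \epsilon/H$ since $H \geq 2$.

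For the per-player term, I would fix any non-subpar $(s,a) \in \Ical_{\epsilon/768H}^C$ and any player $p$ with $\gap_p(s,a) > 0$, and build an alternative $\mathfrak{M}'$ that differs from $\mathfrak{M}$ only in $\PP_p(\cdot\mid s,a)$, bumping the good-state probability up by $\Theta(\gap_p(s,a)/(H-1))$ so that $a$ becomes the unique optimal action at $s$ for player $p$ in $\mathfrak{M}'$ (by a margin of at least $\gap_p(s,a)$). Using Lemma~\ref{lem:divergence_decomp} together with Lemma~\ref{lem:KL_auxiliary} gives $\KL(\PP_{\mathfrak{M}},\PP_{\mathfrak{M}'}) \lesssim \EE_{\mathfrak{M}}[n_p^{K+1}(s,a)]\cdot \gap_p(s,a)^2/H^2$. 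Combining Lemma~\ref{lem:bretagnolle-huber} with the event $\{n_p^{K+1}(s,a) > K/2\}$ (which has small probability under $\mathfrak{M}$ by sublinearity of regret, and which must have non-small probability under $\mathfrak{M}'$ by the same reason, since not playing $a$ most of the time incurs linear regret in $\mathfrak{M}'$) forces $\EE_{\mathfrak{M}}[n_p^{K+1}(s,a)] \gtrsim H^2 \ln K / \gap_p(s,a)^2$. By Lemma~\ref{lem:regret_decomp}, this contributes $H^2 \ln K / \gap_p(s,a)$ to the regret.

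For the subpar-pair term, I would fix $(s,a) \in \Ical_{\epsilon/768H}$ and construct an alternative $\mathfrak{M}'$ in which the good-state probability at $(s,a)$ is increased by $\Theta(\min_{p}\gap_p(s,a)/(H-1))$ simultaneously for \emph{every} player. This is feasible because Lemma~\ref{lem:gap-const} implies $\gap_p(s,a) \leq 2\gap_q(s,a)$ for any $p,q$, so a single bump of this order makes $a$ the optimal action at $s$ for every player; moreover, since all gaps agree up to a constant factor, the perturbation respects the $\epsilon/4$ budget assumed in the theorem. Divergence decomposition now aggregates \emph{across players}:
\begin{equation*}
\KL(\PP_{\mathfrak{M}},\PP_{\mathfrak{M}'}) \;\lesssim\; \frac{(\min_p \gap_p(s,a))^2}{H^2}\sum_{p=1}^M \EE_{\mathfrak{M}}[n_p^{K+1}(s,a)] \;=\; \frac{(\min_p \gap_p(s,a))^2}{H^2}\,\EE_{\mathfrak{M}}[n^{K+1}(s,a)].
\end{equation*}
Bretagnolle--Huber applied to the event $\{N^{K+1}(s)-n^{K+1}(s,a) > MK/(2S_1)\}$, combined with the sublinear-regret hypothesis on both $\mathfrak{M}$ and $\mathfrak{M}'$, forces $\EE_{\mathfrak{M}}[n^{K+1}(s,a)] \gtrsim H^2 \ln K/(\min_p \gap_p(s,a))^2$. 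The collective regret decomposition Eq.~\eqref{eqn:regret_decomposition_collective} then yields the desired $H^2 \ln K / \min_p \gap_p(s,a)$ contribution.

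The main obstacle will be the subpar-pair step: precisely choosing the test event so that sublinear regret on both $\mathfrak{M}$ and $\mathfrak{M}'$ provides a two-sided obstruction, and verifying that the single uniform perturbation (which must exceed every player's gap, yet remain $O(\min_p \gap_p(s,a)/H)$ for the KL bound to give the claimed scaling) respects both the probability-simplex constraint and the dissimilarity budget $\epsilon/4$ from the hypothesis. Summing the per-pair bounds over $(s,a)$ and invoking Lemma~\ref{lem:regret_decomp} completes the proof.
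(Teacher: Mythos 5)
Your overall plan coincides with the paper's: the same layered two-chain construction realizing $\gap_p(s,a)=\Delta_{s,a,p}$, a single-player change of measure for non-subpar pairs, an all-player simultaneous perturbation with KL aggregated across players for subpar pairs, Bretagnolle--Huber combined with the sublinear-regret hypothesis, and finally Lemma~\ref{lem:regret_decomp}. The genuine gap is in your subpar-pair step. You justify the key claim---that a uniform bump of size $\Theta\rbr{\min_p\gap_p(s,a)/(H-1)}$ makes $a$ optimal for \emph{every} player in $\mathfrak{M}'$---by invoking Lemma~\ref{lem:gap-const}. That lemma is about $\Ical_\epsilon$, whereas the theorem concerns $\Ical_{\epsilon/768H}$: membership there only guarantees that \emph{some} player has gap exceeding $\epsilon/8$, while the hypothesis only forces the gaps at a pair to agree within an additive $\epsilon/4$. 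Hence for a subpar pair the gaps need not be within a constant factor of one another (gaps of $0.2\epsilon$ and $0.001\epsilon$ at the same $(s,a)$ are admissible), so a bump of order $\min_p\gap_p(s,a)$ does not flip optimality for the larger-gap players. Your test event (many collective non-$a$ plays at $s$) then no longer forces linear regret under $\mathfrak{M}'$---those plays can come entirely from players for whom $a$ remains suboptimal---so the two-sided argument breaks; and bumping by $\max_p\gap_p(s,a)$ instead inflates every player's KL and only yields a count bound of order $H^2\ln K/(\max_p\gap_p(s,a))^2$, which does not recover the claimed $H^2/\min_p\gap_p(s,a)$ term.

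The paper's Claim 1 avoids exactly this: it also shifts every player's good-state probability by the same amount $2\bar{\Delta}^{\min}_{s,a}$ (note this preserves $\epsilon$-dissimilarity automatically, since all pairwise $\ell_1$ distances are unchanged---no gap comparability is needed there either), but the Markov/regret argument is run only on the argmin player $p_0$: in $\mathfrak{M}$, each play of $a$ by $p_0$ costs $(H-1)\bar{\Delta}^{\min}$, and in $\mathfrak{M}'$ the action $a$ is optimal for $p_0$ with margin $(H-1)\bar{\Delta}^{\min}$, so $p_0$ \emph{not} playing it costs the same; no claim about other players' optimal actions in $\mathfrak{M}'$ is made, yet the divergence decomposition still sums over all players' counts, which is what produces $\EE_{\mathfrak{M}}\sbr{n^{K+1}(s,a)}\gtrsim H^2\ln K/(\min_p\gap_p(s,a))^2$. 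You should restructure your subpar step this way. Two smaller points: in the per-player step you never verify that perturbing only player $p$'s transition keeps the instance $\epsilon$-dissimilar---this is precisely where non-subparness enters (all gaps at such a pair are at most $\epsilon/8$, so $\bar{\Delta}^p_{s,a}\le\epsilon/(4H)$ and the flipped instance satisfies $\|\PP'_p(\cdot\mid s,a)-\PP_q(\cdot\mid s,a)\|_1\le 2(\bar{\Delta}^p_{s,a}+\bar{\Delta}^q_{s,a})\le\epsilon/H$)---and your count thresholds must be on the scale of the expected visits $K/S_1$ (e.g.\ $K/(4S_1)$, together with a Chernoff bound on $N_p^{K+1}(s)$), not $K/2$, which a single player essentially never reaches at one state.
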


\begin{proof}[Proof]
The construction and techniques in this proof are inspired by \cite{simchowitz2019non} and \cite{wzsrc21}. 

\paragraph{Proof outline.} We will construct an $\epsilon$-MPERL problem instance, $\mathfrak{M}$, and show that, for any sublinear regret algorithm and sufficiently large $K$, the following two claims are true:
\begin{enumerate}
    \item for any $(s,a) \in \Scal \times \Acal$ such that for all $p$, $\gap_{p}(s,a) > 0$,
    \begin{align}
    \label{eq:gap_dep_lb_claim1}
    \EE_{\mathfrak{M}} \sbr{n^K(s,a)} \ge
    \Omega\del{\frac{H^2}{\rbr{\min_p \gap_p(s,a)}^2} \ln K };
    \end{align}
    \item for any $(s,a) \in \Ical^C_{\frac{\epsilon}{768H}}$ and any $p \in [M]$ such that $\gap_{p}(s,a) > 0$,
    \begin{align}
    \label{eq:gap_dep_lb_claim2}
    \EE_{\mathfrak{M}} \sbr{n_p^K(s,a)} \ge \Omega \del{ \frac{H^2}{\rbr{ \gap_p(s,a)}^2} \ln K}.
    \end{align}
\end{enumerate}
The rest then follows from Lemma~\ref{lem:regret_decomp} (the regret decomposition lemma).

\paragraph{Construction of $\mathfrak{M}$.} Given any set of values $\cbr{\Delta_{s,a,p}}_{(s,a,p) \in [S_1] \times [A] \times [M]}$ that satisfies the assumptions in the theorem statement, we can construct a collection of MDPs $\cbr{\Mcal_p}_{p=1}^M$, such that for each $p \in [M]$, $\Mcal_p$ is as follows, and $\mathfrak{M} = \cbr{\Mcal_p}_{p=1}^M$ is an $\epsilon$-MPERL problem instance: %
\begin{itemize}
    \item $\Scal_1 = [S_1]$, and $p_0$ is a uniform distribution over the states in $\Scal_1$. %
    
    \item For $h \in [2,H]$, $\Scal_h = \cbr{S_1+2h-3, S_1+2h-2}$.
    
    \item $\Acal = [A]$.
    
    \item %
    For all $(s,a) \in \Scal \times \Acal$, the reward distribution $r_p(s,a)$ is a Bernoulli distribution, $\Ber(R_p(s,a))$, and we specify $R_p(s,a)$ subsequently.
    
    \item For every $(s,a) \in \Scal_1 \times [A]$, set
    $\bar{\Delta}_{s,a}^p = \frac{\Delta_{s,a,p}}{H-1}.$ Then, let 
    \[
    \PP_p \rbr{S_1+1 \mid s,a} = \frac12 - \bar{\Delta}_{s,a}^p,\quad \PP_p \rbr{S_1+2 \mid s,a} = \frac12 + \bar{\Delta}_{s,a}^p,
    \]
    and $R_p(s,a) = 0$. Since $\Delta_{s,a,p} \in [0, H/48]$, $\bar{\Delta}_{s,a}^p \le \frac{H}{48(H-1)} \le \frac{1}{24}$, where the last inequality follows from the assumption that $H \ge 2$. Therefore, $\PP_p \rbr{S_1+1 \mid s,a} \in [0,1]$, and $\PP_p \rbr{S_1+2 \mid s,a} \in [0,1]$.
    
    \item For $h \in [2,H]$, and $a \in [A]$, let 
    \begin{itemize}
        \item $\PP_p \rbr{S_1+2h-1 \mid S_1+2h-3,a} = 1$, $\PP_p \rbr{S_1+2h \mid S_1+2h-3,a} = 0$, and $R_p(S_1+2h-3, a) = 1$.
        \item $\PP_p \rbr{S_1+2h \mid S_1+2h-2,a} = 0$, $\PP_p \rbr{S_1+2h-1 \mid S_1+2h-2,a} = 1$, and $R_p(S_1+2h-2, a) = 0$.
    \end{itemize}
\end{itemize}

By the assumption that for every $(s,p) \in [S_1] \times [M]$, there exists at least one action $a \in [A]$ such that $\Delta_{s,a,p} = 0$, we have that there is at least one action $a$ such that $\bar{\Delta}_{s,a}^{p} = 0$. We verify that for every $(s,a,p) \in [S_1] \times [A] \times [M]$,
\begin{align*}
    \gap_p(s,a) & = V_p^\star(s) - Q_p^\star(s,a) \\
    & = \max_{a'} Q_p^\star(s,a') - Q_p^\star(s,a) \\
    & = (H-1) \bar{\Delta}_{s,a}^{p} \\
    & = \Delta_{s,a,p}.
\end{align*}

We now verify that the above MPERL problem instance $\mathfrak{M} = \cbr{\Mcal_p}_{p=1}^M$ is an $\epsilon$-MPERL problem instance:
\begin{enumerate}

\item The reward distributions are the same for all players, namely, for all $p, q$, 
\[
\abs{ R_p(s,a) - R_q(s,a)} = 0 \leq \epsilon, \forall (s,a) \in \Scal \times \Acal.
\]

\item Further, by the assumption that for every $(s,a) \in [S_1] \times [A]$ and $p,q \in [M]$, $\abs{\Delta_{s,a,p} - \Delta_{s,a,q}} \leq \epsilon/4$, we have that
\[
\abs{\bar{\Delta}_{s,a}^{p} - \bar{\Delta}_{s,a}^{q}} = \frac{\abs{\Delta_{s,a,p} - \Delta_{s,a,q}}}{H-1} \le \frac{\epsilon}{4(H-1)} \le \frac{\epsilon}{2H}.
\]
It then follows that
\[
\| \PP_p \rbr{\cdot \mid s,a} - \PP_q \rbr{\cdot \mid s,a} \|_1 = 2\abs{\bar{\Delta}_{s,a}^{p} - \bar{\Delta}_{s,a}^{q}} \le \frac{\epsilon}{H}.
\]

Meanwhile, for every $(s,a) \in \rbr{\Scal \setminus \Scal_1}  \times \Acal$
\[
\| \PP_p \rbr{\cdot \mid s,a} - \PP_q \rbr{\cdot \mid s,a} \|_1
= 0 \leq \frac{\epsilon}{H}.
\]

In summary, for every $(s,a) \in \Scal \times \Acal$,
\[
\| \PP_p \rbr{\cdot \mid s,a} - \PP_q \rbr{\cdot \mid s,a} \|_1 \leq \frac{\epsilon}{H}.
\]
\end{enumerate}

We are now ready to prove the two claims:
\begin{enumerate}
    \item \textbf{Proving claim 1 (Equation~\eqref{eq:gap_dep_lb_claim1}):}
    
    Fix any $(s_0,a_0) \in [S_1] \times [A]$ such that $\bar{\Delta}_{s_0,a_0}^{\min} = \min_p \bar{\Delta}_{s_0,a_0}^p > 0$. It can be easily observed that $\gap_p(s_0,a_0) > 0$ for all $p$. Define $p_0 = \argmin_p \bar{\Delta}_{s_0,a_0}^p$. We can construct a new problem instance, $\mathfrak{M'}$, which agrees with $\mathfrak{M}$, except that
    \[
    \forall p \in [M], \PP_p \rbr{S_1+1 \mid s_0,a_0} = \frac12 - \bar{\Delta}_{s_0,a_0}^p + 2\bar{\Delta}_{s_0,a_0}^{\min}, \PP_p \rbr{S_1+2 \mid s_0,a_0} = \frac12 + \bar{\Delta}_{s_0,a_0}^p - 2\bar{\Delta}_{s_0,a_0}^{\min}.
    \]
    $\mathfrak{M'}$ is an $\epsilon$-MPERL problem instance. 
    To see this, we note that the only change is in $\PP_p \rbr{\cdot \mid s_0,a_0}$ for all $p \in [M]$. In this new instance, it is still true that for every $p,q \in [M]$,
    \[
    \| \PP_p \rbr{\cdot \mid s_0,a_0} - \PP_q \rbr{\cdot \mid s_0,a_0} \|_1 = 2\abs{\bar{\Delta}_{s_0,a_0}^{p} - \bar{\Delta}_{s_0,a_0}^{q}} \le \frac{\epsilon}{H}.
    \]
    
    {
    Fix any sublinear regret algorithm $\Alg$ for the $\epsilon$-MPERL problem.
    By Lemma \ref{lem:divergence_decomp} (the divergence decomposition lemma), we have
    \[
    \KL(\PP_{\mathfrak{M}}, \PP_{\mathfrak{M'}}) = \sum_{p=1}^M \EE_{\mathfrak{M}} \sbr{n^K_p(s_0,a_0)} \KL\rbr{\PP^\mathfrak{M}_p(\cdot \mid s_0,a_0), \PP^\mathfrak{M'}_p(\cdot \mid s_0,a_0)},
    \]
    where $\PP_{\mathfrak{M}}$ and $\PP_{\mathfrak{M'}}$ are the probability measures on the outcomes of running $\Alg$ on $\mathfrak{M}$ and $\mathfrak{M'}$, respectively; $\PP^\mathfrak{M}_p(\cdot \mid s_0,a_0)$, $\PP^\mathfrak{M'}_p(\cdot \mid s_0,a_0)$ are the transition probabilities for $(s_0,a_0)$ and player $p$ in $\mathfrak{M}$ and $\mathfrak{M'}$, respectively.
    
    We observe that, for any $p \in [M]$,
    \begin{align*}
    & \KL \rbr{\PP^\mathfrak{M}_p( \cdot \mid s_0,a_0), \PP^\mathfrak{M'}_p( \cdot \mid s_0,a_0)}  \\
    = & \KL \rbr{\Ber\rbr{\frac12 - \bar{\Delta}_{s_0,a_0}^p},
    \Ber\rbr{\frac12 - \bar{\Delta}_{s_0,a_0}^p + 2\bar{\Delta}_{s_0,a_0}^{\min}}} \\
    \le & 12(\bar{\Delta}_{s_0,a_0}^{\min})^2,
    \end{align*}
    where the last inequality follows from Lemma~\ref{lem:KL_auxiliary} and the assumption that $\Delta_{s,a,p} \le \frac{H}{48}$.

    In addition, $\sum_{p=1}^M \EE_{\mathfrak{M}} \sbr{n^K_p(s_0,a_0)} = \EE_{\mathfrak{M}}\sbr{ n^K(s_0, a_0)}$. It then follows that
    \begin{align}
        \KL(\PP_{\mathfrak{M}}, \PP_{\mathfrak{M'}}) \le & 12 \EE_{\mathfrak{M}} \sbr{n^K(s_0,a_0)} (\bar{\Delta}_{s_0,a_0}^{\min})^2. \label{eqn:KL_delta} %
    \end{align}

    Now, in the original $\epsilon$-MPERL problem instance, $\mathfrak{M}$, by Equation~\eqref{eqn:regret_decomposition_ind} and Markov's Inequality,
    we have
    \[
    \EE_\mathfrak{M} \sbr{\Reg_{\Alg}(K)} \ge \frac{K}{4S_1} \rbr{(H-1)\bar{\Delta}_{s_0,a_0}^{\min}} \PP_{\mathfrak{M}} \rbr{n_{p_0}^K(s_0,a_0) \ge \frac{K}{4S_1}};
    \]
    where we note that $\bar{\Delta}_{s_0,a_0}^{p_0} = \bar{\Delta}_{s_0,a_0}^{\min}$.
    In $\mathfrak{M'}$, the new $\epsilon$-MPERL problem instance, we have
    \begin{align*}
    \EE_\mathfrak{M'} \sbr{\Reg_{\Alg}(K)} \ge &  \rbr{(H-1)\bar{\Delta}_{s_0,a_0}^{\min}} \EE_{\mathfrak{M'}} \sbr{ \sum_{a \neq a_0} n_{p_0}(s_0, a) } \\
    = &  \rbr{(H-1)\bar{\Delta}_{s_0,a_0}^{\min}} \EE_{\mathfrak{M'}} \sbr{ N_{p_0}^K(s_0) - n_{p_0}(s_0, a_0) } \\
    \ge &  \frac{K}{4S_1} \rbr{(H-1)\bar{\Delta}_{s_0,a_0}^{\min}} \PP_{\mathfrak{M'}} \rbr{ N_{p_0}^K(s_0) - n_{p_0}(s_0, a_0) \geq \frac{K}{4S_1}} \\
    \ge &  \frac{K}{4S_1} \rbr{(H-1)\bar{\Delta}_{s_0,a_0}^{\min}} \PP_{\mathfrak{M'}} \rbr{ N_{p_0}^K(s_0) \geq \frac{K}{2S_1}, n_{p_0}(s_0, a_0) \leq \frac{K}{4S_1}} \\
    \ge &  \frac{K}{4S_1} \rbr{(H-1)\bar{\Delta}_{s_0,a_0}^{\min}} \del{ \PP_{\mathfrak{M'}} \rbr{ n_{p_0}(s_0, a_0) \leq \frac{K}{4S_1}} - \exp(-\frac{K}{8S_1}) },
    \end{align*}
    where the first inequality is by Equation~\eqref{eqn:regret_decomposition_ind}; the second inequality is by Markov's Inequality; the third inequality is by simple algebra; and the last inequality is by Chernoff bound that  $\PP_{\mathfrak{M'}} \del{N_{p_0}^K(s_0) < \frac{K}{2S_1}} \leq \exp(-\frac{K}{8S_1})$, and $\PP(A \cap B) \geq \PP(B) - \PP(A^C)$ for events $A, B$.

    It then follows that
    \begin{align*}
        & \EE_\mathfrak{M} \sbr{\Reg_{\Alg}(K)} 
        + 
        \EE_\mathfrak{M'} \sbr{\Reg_{\Alg}(K)} \\
        = &
        \frac{K}{2} \rbr{(H-1)\bar{\Delta}_{s_0,a_0}^{\min}} \rbr{\PP_{\mathfrak{M}} \rbr{n_{p_0}^K(s_0,a_0) \ge \frac{K}{4 S_1}} + \PP_{\mathfrak{M'}} \rbr{n_{p_0}^K(s_0,a_0) < \frac{K}{4 S_1}}
        -
        \exp(-\frac{K}{8S_1})
        } \\
        \ge & 
        \frac{K}{2} \rbr{(H-1)\bar{\Delta}_{s_0,a_0}^{\min}} \del{\frac12 \exp \rbr{-\KL(\PP_{\mathfrak{M}}, \PP_{\mathfrak{M'}})} - \exp(-\frac{K}{8S_1})} \\
        \ge &
        \frac{K}{2} \rbr{(H-1)\bar{\Delta}_{s_0,a_0}^{\min}} \del{ \frac12 \exp \rbr{-12 \EE_{\mathfrak{M}} \sbr{n^K(s_0,a_0)} (\bar{\Delta}_{s_0,a_0}^{\min})^2 } - \exp(-\frac{K}{8S_1}) },
    \end{align*}
    where the first inequality follows from Lemma~\ref{lem:bretagnolle-huber} (the Bretagnolle-Huber inequality), and the second inequality follows from Eq.~\eqref{eqn:KL_delta}.
    Observe that $\EE_{\mathfrak{M}} \sbr{n^K(s_0,a_0)} \leq \frac{M K}{S_1}$; in addition, by our assumption that $\Delta_{s,a,p} \le \frac{H}{48 \sqrt{M} }$ for every $(s,a,p)$, we have  $\bar{\Delta}_{s_0,a_0}^{\min} \leq \frac{1}{24 \sqrt{M}}$. These together implies that
    $\frac14 \exp \rbr{-12 \EE_{\mathfrak{M}} \sbr{n^K(s_0,a_0)} (\bar{\Delta}_{s_0,a_0}^{\min})^2 } \geq \frac14 \exp(-\frac{K}{48 S_1}) \geq \exp(-\frac{K}{8S_1})$, as long as $K \geq 20 S_1$.
    Therefore, we have 
    \[
    \EE_\mathfrak{M} \sbr{\Reg_{\Alg}(K)} 
    + 
    \EE_\mathfrak{M'} \sbr{\Reg_{\Alg}(K)}
    \geq 
    \frac{K}{2} \rbr{(H-1)\bar{\Delta}_{s_0,a_0}^{\min}} \cdot 
    \frac14 \exp \rbr{-12 \EE_{\mathfrak{M}} \sbr{n^K(s_0,a_0)} (\bar{\Delta}_{s_0,a_0}^{\min})^2 }.
    \]
    
    Now, under the assumption that $\Alg$ is a sublinear regret algorithm, we have
    \[
    \frac{K}{8} \rbr{(H-1)\bar{\Delta}_{s_0,a_0}^{\min}} \exp \rbr{-12 \EE_{\mathfrak{M}} \sbr{n^K(s_0,a_0)} (\bar{\Delta}_{s_0,a_0}^{\min})^2} \le 2CK^{\alpha}.
    \]
    It follows that
    \begin{align*}
        \EE_{\mathfrak{M}} \sbr{n^K(s_0,a_0)} & \ge \frac{1}{12\rbr{\bar{\Delta}_{s_0,a_0}^{\min}}^2} \ln \rbr{\frac{(H-1)\bar{\Delta}_{s_0,a_0}^{\min}K^{1 - \alpha}}{16 C}} \\
        & = \frac{(H-1)^2}{12\rbr{\min_p \gap_p(s_0,a_0)}^2} \ln \rbr{\frac{\min_p \gap_p(s_0,a_0) K^{1 - \alpha}}{16 C}} \\
        & \ge \frac{H^2}{48 \rbr{\min_p \gap_p(s_0,a_0)}^2} \ln \rbr{\frac{\min_p \gap_p(s_0,a_0)K^{1 - \alpha}}{16 C}}.
    \end{align*}
    
    We then have
    \[
    \EE_{\mathfrak{M}} \sbr{n^K(s_0,a_0)} \ge
    \Omega\del{\frac{H^2}{\rbr{\min_p \gap_p(s_0,a_0)}^2} \ln K }.
    \]
    }
    
    \item \textbf{Proving Claim 2 (Equation~\eqref{eq:gap_dep_lb_claim2}):}
    
    Fix any $(s_0,a_0) \in \Ical^C_{\frac{\epsilon}{768 H}}$ and $p_0 \in [M]$ such that $\bar{\Delta}^{p_0}_{(s_0,a_0)} > 0$, which means that $\gap_{p_0}(s_0,a_0) > 0$.
    We have that for all $p \in [M]$,
    \begin{equation}
    \bar{\Delta}_{s_0,a_0}^{p} = \frac{\Delta_{s_0,a_0}^{p}}{H-1} = \frac{\gap_{p}(s_0,a_0)}{H-1}  \le \frac{96 H (\epsilon/( 768 H))}{(H-1)} \le \frac{\epsilon}{8(H-1)} \le \frac{\epsilon}{4H}.
    \label{eqn:small-gap-i-eps}
    \end{equation}
    We construct a new problem instance, $\mathfrak{M'}$, which agrees with $\mathfrak{M}$ except that
    \begin{align*}
    \PP_{p_0} \rbr{S_1+1 \mid s_0,a_0} = \frac12 - \bar{\Delta}_{s_0,a_0}^{p_0} + 2\bar{\Delta}_{s_0,a_0}^{p_0} = \frac12 + \bar{\Delta}_{s_0,a_0}^{p_0}, \\
    \PP_{p_0} \rbr{S_1+2 \mid s_0,a_0} = \frac12 + \bar{\Delta}_{s_0,a_0}^{p_0} - 2\bar{\Delta}_{s_0,a_0}^{p_0} = \frac12 - \bar{\Delta}_{s_0,a_0}^{p_0}.
    \end{align*}
    $\mathfrak{M'}$ is an $\epsilon$-MPERL problem instance. 
    To see this, we note that the only change is in $\PP_{p_0} \rbr{\cdot \mid s_0,a_0}$. In this new instance, it is still true that for any $q \neq p_0$,
    \[
    \|\PP_{p_0}(\cdot \mid s_0,a_0) - \PP_q(\cdot \mid s_0,a_0) \|_1
    \le 2\abs{\bar{\Delta}_{s_0,a_0}^{p_0} + \bar{\Delta}_{s_0,a_0}^q} \le \frac{\epsilon}{H}.
    \]
    where the last inequality uses Equation~\eqref{eqn:small-gap-i-eps} that $\bar{\Delta}_{s_0,a_0}^p \le \frac{\epsilon}{4H}$ for every $p \in [M]$. 
    
    {
    Fix any sublinear regret algorithm $\Alg$.
    By Lemma~\ref{lem:divergence_decomp} (the divergence decomposition lemma), we have
    \[
    \KL(\PP_{\mathfrak{M}}, \PP_{\mathfrak{M'}}) = \EE_{\mathfrak{M}} \sbr{n^K_{p_0}(s_0,a_0)} \KL\rbr{\PP^\mathfrak{M}_{p_0}(\cdot \mid s_0,a_0), \PP^\mathfrak{M'}_{p_0}(\cdot \mid s_0,a_0)}.
    \]
    
    Using a similar reasoning as before, and recall that $\bar{\Delta}_{s_0,a_0}^{p} \leq \frac{1}{24}$, we can show that
    \begin{align}
        \KL(\PP_{\mathfrak{M}}, \PP_{\mathfrak{M'}}) \le & 12 \EE_{\mathfrak{M}} \sbr{n_{p_0}^K(s_0,a_0)} (\bar{\Delta}_{s_0,a_0}^{p_0})^2 
        \leq \frac{K}{48 S_1}, \label{eqn:KL_delta_p0}
    \end{align}
    and consequently, as long as $K \geq 20 S_1$,
    \[
    \frac14 \exp \rbr{-12 \EE_{\mathfrak{M}} \sbr{n^K(s_0,a_0)} (\bar{\Delta}_{s_0,a_0}^{\min})^2 } \geq \frac14 \exp(-\frac{K}{48 S_1}) \geq \exp(-\frac{K}{8S_1}).
    \]

    Similar to the proof of Claim 1, we have the following argument. 
    In the original $\epsilon$-MPERL problem instance, $\mathfrak{M}$, we have
    $\EE_\mathfrak{M} \sbr{\Reg_{\Alg}(K)} \ge \frac{K}{4S_1} \rbr{(H-1)\bar{\Delta}_{s_0,a_0}^{p_0}} \PP_{\mathfrak{M}} \rbr{n_{p_0}^K(s_0,a_0) \ge \frac{K}{4S_1}}$;
    and in $\mathfrak{M'}$, the new $\epsilon$-MPERL problem instance, we have $\EE_\mathfrak{M'} \sbr{\Reg_{\Alg}(K)} \ge \frac{K}{4S_1} \rbr{(H-1)\bar{\Delta}_{s_0,a_0}^{p_0}} \del{ \PP_{\mathfrak{M'}} \rbr{n_{p_0}^K(s_0,a_0) < \frac{K}{4S_1}} - \exp(-\frac{K}{8S_1}) }$.

    It then follows that
    \begin{align*}
        & \EE_\mathfrak{M} \sbr{\Reg_{\Alg}(K)} 
        + 
        \EE_\mathfrak{M'} \sbr{\Reg_{\Alg}(K)} \\
        \ge & 
        \frac{K}{2} \rbr{(H-1)\bar{\Delta}_{s_0,a_0}^{p_0}} \del{ \frac12 \exp \rbr{-\KL(\PP_{\mathfrak{M}}, \PP_{\mathfrak{M'}})} - \exp(-\frac{K}{8S_1}) } \\
        \ge &
        \frac{K}{8} \rbr{(H-1)\bar{\Delta}_{s_0,a_0}^{p_0}} \exp \rbr{-12 \EE_{\mathfrak{M}} \sbr{n_{p_0}^K(s_0,a_0)} (\bar{\Delta}_{s_0,a_0}^{p_0})^2}.
    \end{align*}
    
    Now, under the assumption that $\Alg$ is a sublinear regret algorithm, we have
    \[
    \frac{K}{8} \rbr{(H-1)\bar{\Delta}_{s_0,a_0}^{p_0}} \exp \rbr{-12 \EE_{\mathfrak{M}} \sbr{n_{p_0}^K(s_0,a_0)} (\bar{\Delta}_{s_0,a_0}^{p_0})^2} \le 2CK^{\alpha}.
    \]
    It follows that
    \begin{align*}
        \EE_{\mathfrak{M}} \sbr{n_{p_0}^K(s_0,a_0)} & \ge \frac{1}{12\rbr{\bar{\Delta}_{s_0,a_0}^{p_0}}^2} \ln \rbr{\frac{(H-1)\bar{\Delta}_{s_0,a_0}^{p_0}K^{1 - \alpha}}{16 C}} \\
        & \ge \frac{H^2}{24\rbr{\gap_{p_0}(s_0,a_0)}^2} \ln \rbr{\frac{\gap_{p_0}(s_0,a_0) K^{1 - \alpha}}{16 C}}.
    \end{align*}
    
    We then have that
    \[
    \EE_{\mathfrak{M}} \sbr{n_{p_0}^K(s_0,a_0)} \ge \Omega \rbr{\frac{H^2}{\rbr{ \gap_{p_0}(s_0,a_0)}^2} \ln K}.
    \]
    }    
    
    \paragraph{Combing the two claims:}
    We note that in $\mathfrak{M}$, for any $(s,a,p) \in \rbr{\Scal \setminus \Scal_1} \times \Acal \times [M]$, $\gap_p(s,a) = 0$. 
    It then follows from Lemma~\ref{lem:regret_decomp} (the regret decomposition lemma) and the fact that for any $(s,a,p) \in \Ical_{ {\epsilon/768H}} \times [M], \gap_p(s,a) > 0$,
    that%
    \begin{align*}
    \EE \sbr{\Reg_\Alg(K)} \ge & 
    \sum_{p=1}^M \sum_{(s,a) \in \Scal_1 \times \Acal} \EE \sbr{n_p^K(s,a)} \gap_p(s,a) \\
    \ge &
    \Omega \rbr{\ln K \rbr{ \sum_{p \in [M]} \sum_{\substack{(s,a) \in \Ical^C_{{\epsilon/768H}}: \\ \gap_p(s,a) > 0}} \frac{H^2}{\gap_{p}(s,a)}
    +
    \sum_{(s,a) \in \Ical_{ {\epsilon/768 H}}} \frac{H^2}{\min_p \gap_p(s,a)}}} .
    \end{align*}
\end{enumerate}
\end{proof}

\end{document}